\documentclass[12pt]{article}
\usepackage{comment}
\usepackage{enumerate}
\usepackage{url} 

\usepackage{authblk}

\def\removenotes{1}


\usepackage{jcd}
\usepackage[numbers]{natbib} 
\usepackage{authblk} 
\usepackage{xspace}
\usepackage{overpic}
\usepackage{tikz}
\usepackage{algorithm,algorithmic}

\bluehyperref

\newcommand{\VC}{\textup{VC}}
\newcommand{\WCQuantile}{\textup{Quantile}^\wc}
\newcommand{\Quantile}{\textup{Quantile}}

\newcommand{\cdfrob}{F^\wc}

\newcommand{\score}{s}  
\newcommand{\scorerv}{S}  
\newcommand{\scoreest}{\score_n}  
\newcommand{\wcoverage}{\textup{WC}}
\newcommand{\wc}{\wcoverage}
\newcommand{\stocuo}{\succeq_{\textup{uo}}}

\newcommand{\eqdist}{\stackrel{\textup{dist}}{=}}

\newcommand{\levelv}{\alpha_{\mathsf{v}}} 
\newcommand{\Pv}{\P_{\mathsf{v}}} 
\newcommand{\Pvemp}{\hat{\P}_{\mathsf{v},k}} 

\newcommand{\empP}{\hat{P}_n}  
\newcommand{\empQ}{\hat{Q}_n}  

\newcommand{\vvar}{v_{\textup{var}}}  
\newcommand{\empPone}{\hat{P}_{n_1}}  
\newcommand{\empPtwo}{\hat{P}_{n_2}}
\newcommand{\empQone}{\hat{Q}_{n_1}}
\newcommand{\empQtwo}{\hat{Q}_{n_2}}
\newcommand{\law}{\mc{L}}

\newcommand{\ltwopx}[1]{\norm{#1}_{L^2(Q_0)}}
\newcommand{\ltwopxs}[1]{\norm{#1}_{L^2(Q_0)}}

\newcommand{\train}{\textrm{train}}
\newcommand{\val}{\textrm{val}}
\newcommand{\test}{\textrm{test}}
\newcommand{\Ptest}{P_{\textup{test}}}
\providecommand{\toto}{\rightrightarrows}

\def\eg{e.g.}

\newcommand{\noise}{\varepsilon}

\DeclareMathOperator*{\esssup}{ess\,sup}
\DeclareMathOperator*{\SF}{SF}
\DeclareMathOperator*{\MC}{M}

\graphicspath{{figs/}}

\newcommand{\blind}{1}


\addtolength{\oddsidemargin}{-.5in}%
\addtolength{\evensidemargin}{-1in}%
\addtolength{\textwidth}{1in}%
\addtolength{\textheight}{1.7in}%
\addtolength{\topmargin}{-1in}%

\setlength{\affilsep}{2pt}

\begin{document}

\def\spacingset#1{\renewcommand{\baselinestretch}%
{#1}\small\normalsize} \spacingset{1}


\if1\blind
{
  \title{\bf Robust Validation: Confident Predictions \\
  Even When Distributions Shift\thanks{
    Research supported by the NSF under CAREER Award CCF-1553086
    and HDR 1934578 (the Stanford Data Science Collaboratory),
    Office of Naval Research YIP Award N00014-19-2288,
    and the Stanford DAWN Consortium.}}
  \author{
  	Maxime Cauchois \\Department of Statistics, Stanford University
    \and Suyash Gupta \\
    Department of Statistics, Stanford University
    \and Alnur Ali \\
    Department of Statistics and Electrical Engineering, Stanford University
    \and  John C. Duchi \\
    Department of Statistics and Electrical Engineering, Stanford University
    }
  \maketitle
} \fi

\if0\blind
{
  \bigskip
  \bigskip
  \bigskip
  \begin{center}
    {\Large \bf Robust Validation: Confident Predictions \\
  Even When Distributions Shift}
\end{center}
  \medskip
} \fi

\bigskip

\begin{abstract}
  While the traditional viewpoint in machine learning and statistics assumes
  training and testing samples come from the same population, practice
  belies this fiction. One strategy---coming from robust statistics and
  optimization---is thus to build a model robust to distributional
  perturbations. In this paper, we take a different approach to describe
  procedures for robust predictive inference, where a model provides
  uncertainty estimates on its predictions rather than point predictions. We
  present a method that produces prediction sets (almost exactly) giving the
  right coverage level for any test distribution in an $f$-divergence ball
  around the training population. The method, based on conformal inference,
  achieves (nearly) valid coverage in finite samples, under only the
  condition that the training data be exchangeable. An essential component
  of our methodology is to estimate the amount of expected future data shift
  and build robustness to it; we develop estimators and prove their
  consistency for protection and validity of uncertainty estimates under
  shifts.  By experimenting on several large-scale benchmark datasets,
  including Recht et al.'s CIFAR-v4 and ImageNet-V2 datasets, we provide
  complementary empirical results that highlight the importance of robust
  predictive validity.
\end{abstract}

\noindent%
{\it Keywords:}  Conformal inference, Confidence sets, Coverage validity, $f$-divergences, Robust statistics
\vfill

\newpage
\spacingset{1.9} 


\section{Introduction}
\label{sec:intro}

The central conceit of statistical machine learning is that data comes from a
population, and that a model fit on a training set and validated on a held-out
validation set will generalize to future data.  Yet this conceit is at
best debatable: indeed, \citet*{RechtRoScSh19} create new test sets
for the central image recognition CIFAR-10 and ImageNet benchmarks, and
they observe
that published accuracies drop by between 3--15\% on CIFAR and more than 11\% on
ImageNet (increases in error rate of 50--100\%), even though the authors follow
the original dataset creation processes. Given this drop in accuracy---even in
carefully reproduced experiments---shift in the data generating distribution is
inevitable, and should be an essential focus, given the growing applications of
machine learning.

To address such distribution shifts and related challenges, a growing
literature advocates fitting predictive models that adapt to changes in the
data generating distribution. For example, researchers suggest reweighting
data to match new test distributions when covariates
shift~\cite{SugiyamaKrMu07}, while work
on distributional robustness~\cite{BertsimasGuKa18,
  DuchiNa21} considers fitting models
that optimize losses under worst-case distribution changes.  Yet the
resulting models often are conservative, appear to sacrifice
accuracy for robustness, and even more, they may not be robust to
natural distribution shifts~\cite{TaoriDaShCaReSc19}. The models also come
with few tools for validating their performance on new data.

Instead of seeking robust models, we instead advocate focusing on models
that provide \emph{validity} in their predictions: a model
should be able to provide some calibrated notion of its confidence, even in
the face of distribution shift. Consequently, in this paper we revisit cross
validation, validity, and conformal inference~\cite{VovkGaSh05} from the
perspective of robustness, advocating for more robust approaches to cross
validation and equipping predictors with valid confidence sets. We present a
method for robust predictive inference under distributional shifts,
borrowing tools both from conformal inference~\cite{VovkGaSh05} and
distributional robustness. Our method can allow valid inferences even when
training and test distributions are distinct, and we provide a (in our view
well-motivated, but still heuristic) methodology to estimate plausible
amounts of shift to which we should be robust.

To formalize, consider a supervised learning problem of predicting
labels $y \in \mc{Y}$ from data $x \in \mc{X}$, where we assume we have a
putative predictive model that outputs scores $\score(x, y)$ measuring
error (so that $\score(x, y) < \score(x, y')$ means that the
model assigns higher likelihood to $y$ than $y'$ given $x$). For example,
for a probabilistic model $p(y \mid x)$, a typical choice is the negative
log likelihood $\score(x, y) = -\log(p(y \mid x))$. For a distribution $Q_0$
on $\mc{X} \times \mc{Y}$,\footnote{We always write
  $Q$ for a probability on $\mc{X} \times \mc{Y}$ and $P$ for the induced
  distribution on $\score(X, Y)$ for $(X, Y) \sim Q$.}  we observe $(X_i,
Y_i)_{i = 1}^n \simiid Q_0$. Future data may come from $Q_0$ or a
distribution $Q$ near---in some appropriate sense, deriving from distribution
shift---to $Q_0$, and we wish to
output valid predictions for future instances $(X, Y) \sim Q$, where $Q$ is
unknown.
The goal of this paper is twofold: first, given a level $\alpha
\in (0, 1)$ and an uncertainty set
$\mc{Q}$ of plausible shifted distributions, we wish to construct
\emph{uniformly valid} confidence set mappings
$\what{C} : \mc{X} \toto \mc{Y}$ of the form
$\what{C}(x) = \{y \in \mc{Y} \mid \score(x, y) \le q\}$ for
a threshold $q$, which provide $1 - \alpha$ coverage, satisfying
\begin{equation}
  \label{eqn:uniform-coverage}
  Q(Y \in \what{C}(X)) \ge 1 - \alpha
  ~~ \mbox{for~all~} Q \in \mc{Q}.
\end{equation}
Second, we propose a methodology for finding a collection $\mc{Q}$ of
plausible shifts, providing convergence theory and a
concomitant empirical validiation on real distribution shift problems. Further, we propose methodology to study sensitivity of coverage under various covariate shifts. This helps the user identify the type of shifts, the coverage is sensitive to as protecting against all possible shifts may lead to very conservative predictive sets.


\subsection{Background: split conformal
  inference under exchangeability}
\label{sec:split-conformal-intro}

To set the stage, we review conformal predictive
inference~\cite{VovkGaSh05, LeiGSRiTiWa18}.
The setting here is a supervised learning problem where we have exchangeable
data $\{ (X_i, Y_i) \}_{i = 1}^{n+1} \subset \mc{X} \times \mc{Y}$, and for
a given confidence level $1-\alpha \in (0, 1)$ we wish to provide a confidence
set $\what{C}(X_{n+1})$ such that $\P(Y_{n + 1} \in \what{C}(X_{n+1})) \ge 1 -
\alpha$. Standard properties of quantiles make such a construction
possible. Indeed, assume that $S_1, \ldots, S_{n+1} \in \R$ are exchangeable
random variables; then, the rank $\rank(S_j)$ of any $S_j$ among
$\{S_i\}_{i=1}^{n+1}$---its position if we sort the values of the $S_i$---is
evidently uniform on $\{1, \ldots, n+1\}$, assuming ties are broken
randomly.  Thus, for probability distributions $P$ on $\R$, defining the
familiar quantile
\begin{equation}
  \Quantile(\beta; P)
  \defeq \inf \big\{ s \in \R : P(S \leq s) \geq \beta \big\},
  \label{eqn:quantile-defn}
\end{equation}
and $\Quantile(\beta; \{S_i\}_{i=1}^n)$ to be the corresponding
empirical quantile on $\{S_i\}_{i=1}^n$, we have
\begin{equation*}
\P\left(S_{n+1} \le \Quantile\left(\left(1 + n^{-1}\right)
(1 - \alpha), \{S_i\}_{i=1}^n\right)\right)  \ge \P( \rank(S_{n+1}) \le  \ceil{(n+1)(1-\alpha)}) \ge 1 - \alpha.
\end{equation*}

Using this idea to provide confidence sets is now
straightforward~\cite{VovkGaSh05, LeiGSRiTiWa18}.  Let $\{ (X_i, Y_i)
\}_{i=1}^n$ be a validation set---we assume here and throughout that we have
already fit a model on training data independent of the validation set
$\{(X_i, Y_i)\}_{i=1}^n$---and assume we have a scoring function $\score :
\mc{X} \times \mc{Y} \to \R$, where a large value of $\score(x, y)$
indicates that the point $(x, y)$ is \emph{non-conforming}. In typical
supervised learning tasks, such a function is easy to construct. Indeed,
assume we have a predictor function $\mu$ (fit on an independent training
set); in the case of regression, $\mu : \mc{X} \to \R$ predicts $\E[Y \mid
  X]$, while for a multiclass classification problem $\mu : \mc{X} \to
\R^k$, and $\mu_y(x)$ is large when the model predicts class $y$ to be
likely given $x$. Then natural nonconformity scores are $\score(x, y) =
|\mu(x) - y|$ for regression and $\score(x, y) = -\mu_y(x)$ for
classification. As long as $\{(X_i, Y_i)\}_{i=1}^{n+1}$ are
exchangeable, if we define $\what{\mc{Q}}_{n,1-\alpha} \defeq
\Quantile\left((1 + n^{-1})(1- \alpha); \{\score(X_i, Y_i)\}_{i=1}^n\right)$,
the confidence set
\begin{equation}
  \label{eqn:confidence-set}
  \what{C}_n(x) \defeq \left\{y \in \mc{Y} \mid \score(x, y) \le
 \what{\mc{Q}}_{n,1-\alpha}
  \right\},
\end{equation}
immediately satisfies
\begin{equation}
 \P(Y_{n + 1} \in \what{C}_n(X_{n+1}))
    = \P\left(\score(X_{n+1}, Y_{n+1}) \le
   \what{\mc{Q}}_{n,1-\alpha}
    \right) \ge 1 - \alpha,
 \label{eqn:exchangeable-coverage}
\end{equation}
whatever the scoring function $\score$ and distribution on $(X_i,
Y_i)$~\cite{VovkGaSh05, LeiGSRiTiWa18}.  The coverage
statement~\eqref{eqn:exchangeable-coverage} depends critically (as we shall
see) on the exchangeability of the samples, failing if even the marginal
distribution over $X$ changes, and it does not imply conditional coverage:
we have no guarantee that $\P(Y \in \what{C}(X) \mid X) \ge 1 - \alpha$.

\subsection{Related work}
\label{sec:related}

The machine learning community has long identified distribution shift as a
challenge, with domain adaptation strategies and covariate shift two
major foci~\cite{SugiyamaKrMu07, Quionero-CandelaSuSc09}, though much of this work
focuses on model estimation and selection strategies, and one often assumes
access to data (or at least likelihood ratios) of data from the new
distribution. We argue that a model should instead provide robust and valid
estimates of its confidence rather than simply predictions that may or may
not be robust. There is a growing body of work on distributionally
robust optimization (DRO), which considers worst-case dataset shifts in
neighborhoods of the training distribution; these have been
important in finance and operations research, where one wishes to guard
against catastrophic losses~\cite{RockafellarUr00,
  BertsimasGuKa18}. In DRO in statistical
learning~\cite{BlanchetMu19, DuchiNa21}, the focus has also been on improving
estimators rather than inferential predictive tasks.  We extend
this distributional robustness to apply in predictive
inference.

\citet{VovkGaSh05}'s conformal inference provides an important tool for
valid predictions.  The growing applications of machine learning and
predictive analytics have renewed interest in predictive validity, and
recent papers attempt to move beyond the standard exchangeability
assumptions upon which conformalization reposes~\cite{TibshiraniBaCaRa19,
  ChernozhukovWuZh18, CauchoisGuAlDu22, DuchiGuJiSu24, Gupta22}, though this typically requires some additional
assumptions for strict validity.  Of particular relevance to our setting is
\citeauthor{TibshiraniBaCaRa19}'s work~\cite{TibshiraniBaCaRa19}, which
considers conformal inference under covariate shift, where the marginal over
$X$ changes while $P(Y \mid X)$ remains fixed. Validity in this setting
requires knowing a likelihood ratio of the shift, which in high dimensions
is challenging.  In addition, as \citet{Jordan19} argues, in typical
practice covariate shifts are no more plausible than other (more general)
shifts, especially in situations with unobserved confounders. For this
reason, we take a more general approach and do not restrict to specific
structured shifts.

In the existing literature on sensitivity analysis in causal inference \citep{Imbens03,VeitchZa20,HsuSm13}, researchers use the sensitivity parameter to gauge the influence of unobserved confounders on treatment allocation and outcomes. One essence is that the odds of receiving treatment, considering both observed covariates and the confounder U, can differ by a factor of some constant $\Gamma$ when juxtaposed against odds based solely on observed covariates, with a value near 1 indicating minimal influence. Mirroring this, we employ f-divergence, especially the expected log-likelihood ratio in KL divergence offset by a factor $\rho$, to understand distribution shifts between training and test distributions, comparable to the odds ratio in causal inference. Our study in Section 4 assesses the intensity of such shifts and hints at calibrating $\rho$, reminiscent of using observed covariates to adjust $\Gamma$ in causal inference.

\subsection{A few motivating examples}
\label{sec:motivation-exp}

Standard validation methodology randomly splits data into
train/validation/test sets, artificially enforcing exchangeabilty).  Thus,
to motivate the challenges in predictive validity even
under simple covariate shifts---we only modify the
distribution of $X$, returning later to more sophisticated real-world
scenarios---we experiment on nine regression datasets from the UCI
repository~\cite{DuaGr17}. We repeat the following 50 times.  We randomly
partition each dataset into disjoint sets $D_\train, D_\val, D_\test$,
each consisting of $1/3$ of the data. We fit a random forest predictor $\mu$
using $D_\train$ and construct conformal intervals of the
form~\eqref{eqn:confidence-set} with $\score(x, y) = |\mu(x)- y|$, so that
$\what{C}_n(x) = \{y \mid |\mu(x) - y| \le \hat{t}\}$ for a threshold
$\hat{t}$ achieving coverage at nominal level $\alpha = .05$ on $D_\val$, as
is standard in split-conformal prediction~\cite{VovkGaSh05}.
We evaluate coverage on tiltings of varying strength on $D_\test$: letting
$v$ be the top eigenvector of the test $X$-covariance $\Sigma_\test$
and $\wb{x}_\test$ be the mean of $X$ over $D_\test$, we reweight $D_\test$
by probabilities proportional to $w(x) = \exp(a v^T (x -
\wb{x}_\test))$ for tilting parameters $a \in \pm \{0, .02, .04, .08, .16,
.32, .64\}$. Essentially, this shift asks the following question: why
would we \emph{not} expect a shift along the principal directions of
variation in $X$ on future data?

\begin{figure}[h!]
  \centering
  \includegraphics[width=0.3\linewidth]{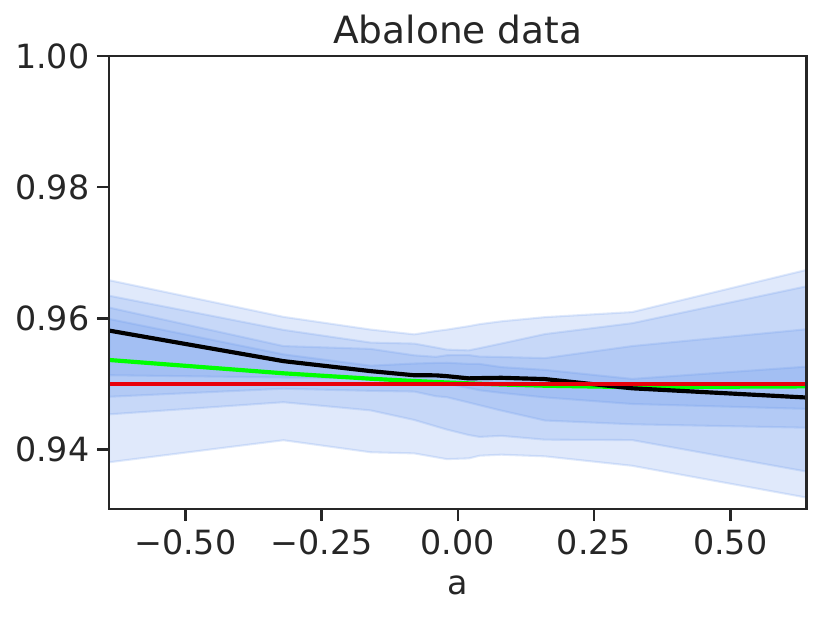}
  \hfill
  \includegraphics[width=0.3\linewidth]{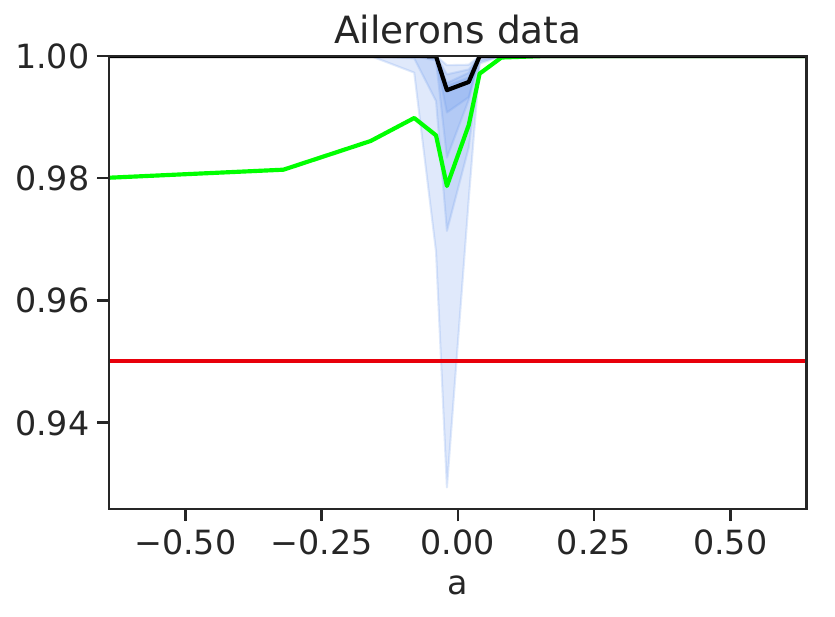}
  \hfill
  \includegraphics[width=0.3\linewidth]{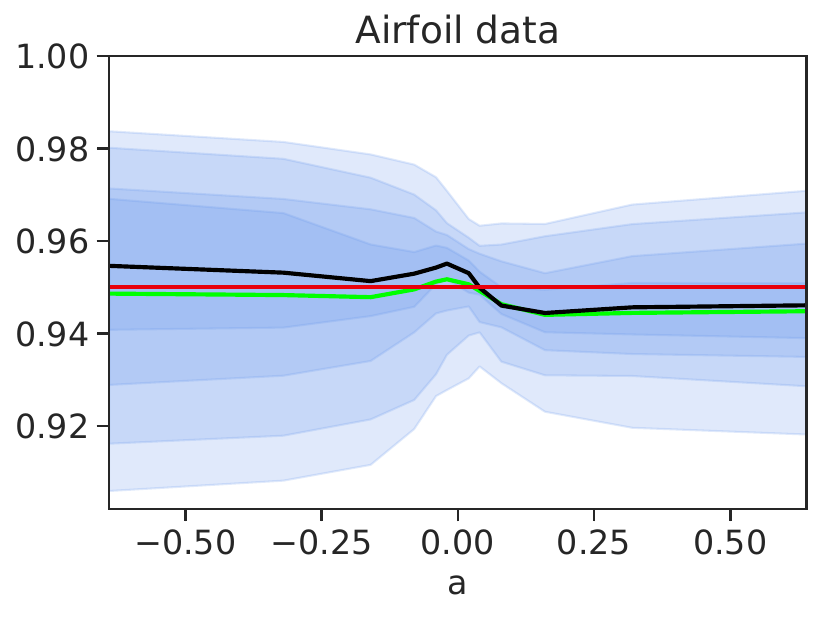}
  \\
  \includegraphics[width=0.3\linewidth]{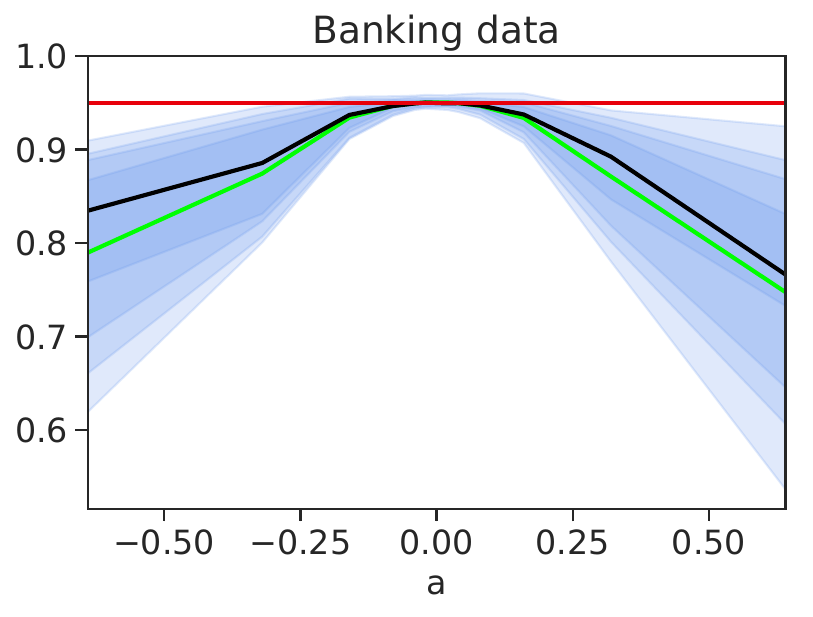}
  \hfill
  \includegraphics[width=0.3\linewidth]{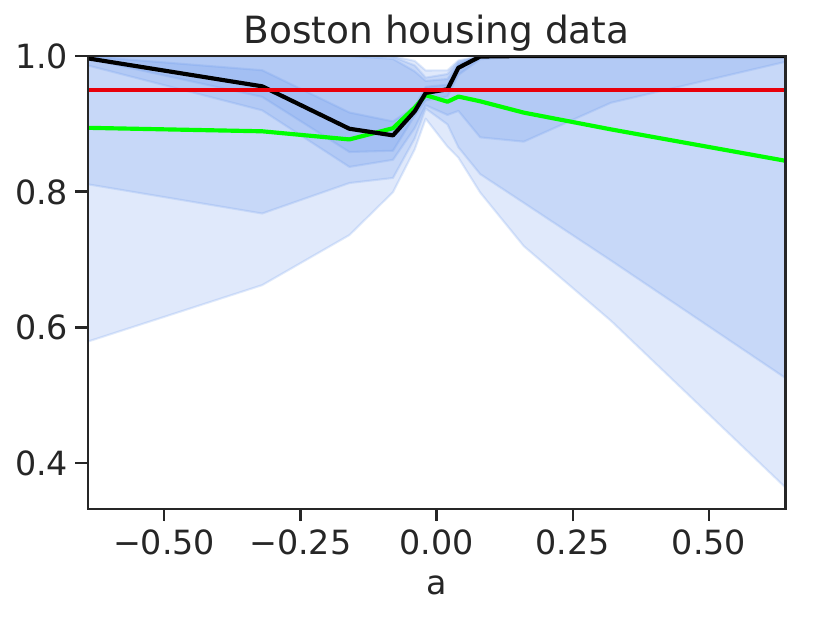}
  \hfill
  \includegraphics[width=0.3\linewidth]{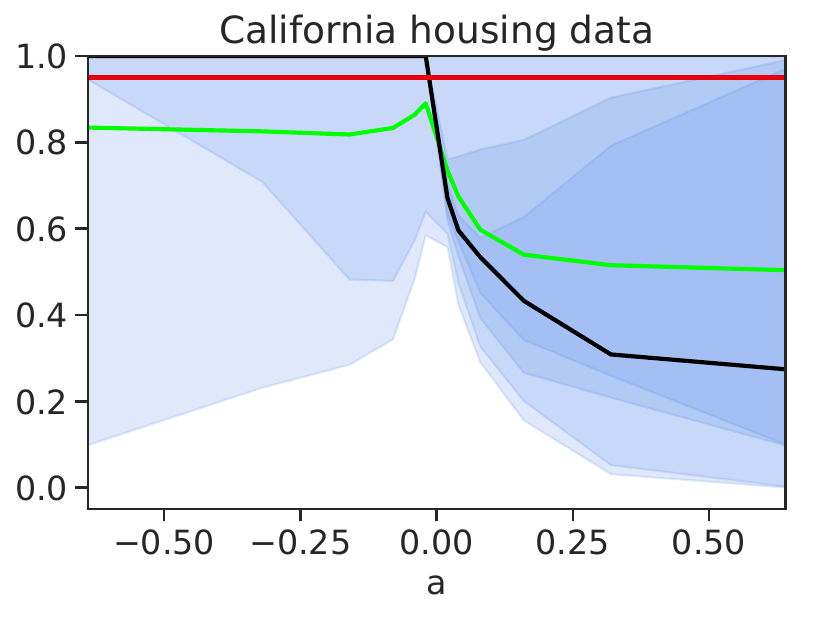}
  \\
  \includegraphics[width=0.3\linewidth]{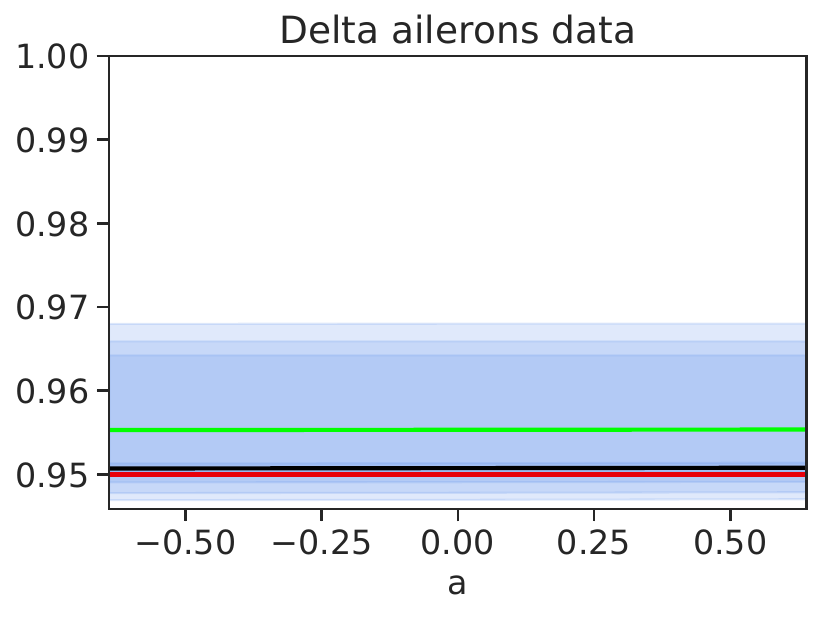}
  \hfill
  \includegraphics[width=0.3\linewidth]{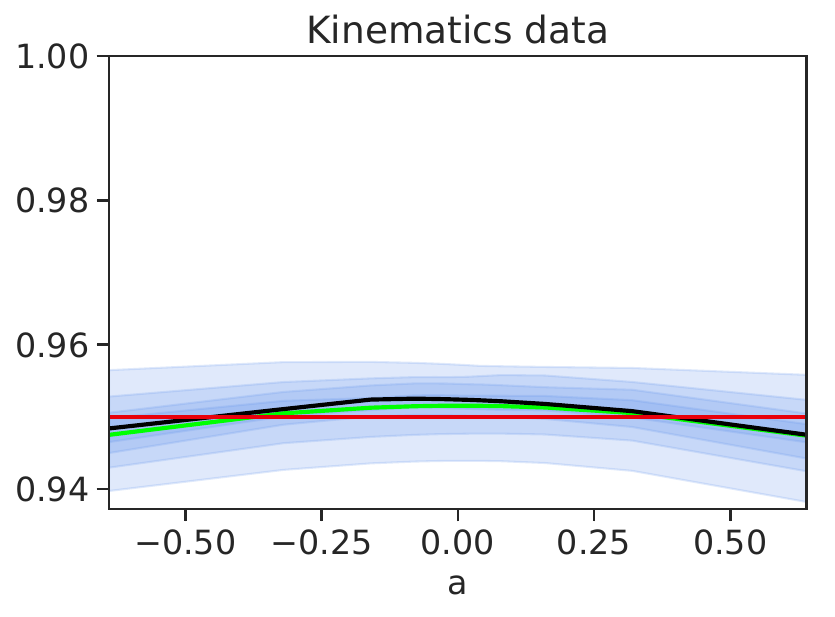}
  \hfill
  \includegraphics[width=0.3\linewidth]{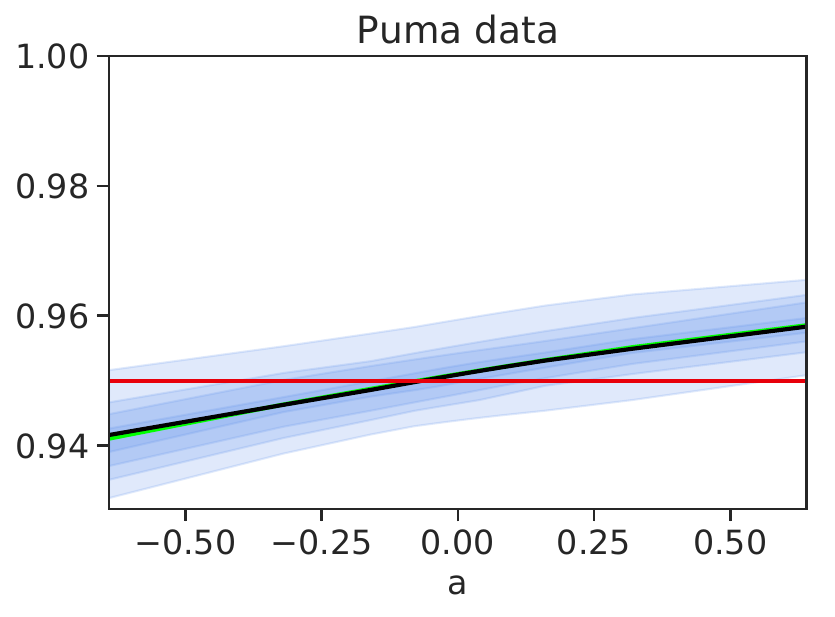}  
  \caption{Empirical coverage for the prediction sets generated by the standard
    conformal methodology across nine regression data sets and 50
    random splits of each data set, with an exponential tilting in $X$ space
    along the first principal component of $X$. The horizontal axis gives the
    value of the tilting parameter $a$; the vertical the coverage level. A green
    line marks the average coverage, a black line marks the median coverage, and
    the horizontal red line marks the nominal coverage $.95$. The blue bands
    show the coverage at deciles over 50 splits.}
  \label{fig:cvgs_only_std}
\end{figure}

Figure~\ref{fig:cvgs_only_std} presents the results: even when the covariate
shifts are small, which corresponds to tilting parameters $a$ with small
magnitude, prediction intervals from the standard conformal methodology
frequently fail to cover (sometimes grossly) the true response values. While
this is but a simple motivation, if we expect some
shift in future data---say along the directions of principal variation in
$X$, as the data itself is already variable along that axis---it seems that
standard validation approaches~\cite{HastieTiFr09} provide too rosy
of a picture of future validity~\cite{RechtRoScSh19}, as they \emph{enforce}
exchangeability by randomly splitting data.


\section{Robust predictive inference}
\label{sec:robustpredinference}

Of course, standard cross validation and conformalization methodology makes
no claims of validity without exchangeability~\cite{VovkGaSh05,
  BarberCaRaTi19a}, so their potential failure even under simple
covariate shifts is not completely surprising.  The
coverage~\eqref{eqn:exchangeable-coverage} relies on the exchangeability
assumption between the training and test data and can quickly
collapse when the test distribution violates that assumption, as
Section~\ref{sec:motivation-exp} shows.  These observations thus call for a
notion of confidence more robust to potential future shifts.


Assume as usual that we have a score function $\score: \mc{X} \times \mc{Y}
\to \R$, and observe data $\{(X_i, Y_i) \}_{i=1}^n$ such that $\{
\scorerv_i\}_{i=1}^n \defeq \{ \score(X_i, Y_i) \}_{i=1}^n \simiid P_0$,
so that $P_0$ is the push-forward of $(X, Y) \sim Q_0$ under $\score(X, Y)$.
For a set $\mc{P}(P_0)$ of potential future \emph{score} distributions on
$\R$, our goal is to achieve
coverage~\eqref{eqn:uniform-coverage} for all distributions
$Q$ on pairs $(X,Y)$ that induce a distribution $P$ on $\score(X, Y)$
such that $P \in \mc{P}(P_0)$, that is,
\begin{align*}
  Q \in \mc{Q}(\score, \mc{P}(P_0)) \defeq \left\{ Q
  ~ \mbox{s.t.~for}~
  (X,Y) \sim Q, \mbox{the score}~ \score(X,Y) \sim P \in \mc{P}(P_0) \right\}. 
\end{align*}
Our focus is exclusively on validating our predictive model, not changing
it, so we follow standard practice~\cite{VovkGaSh05,
  BarberCaRaTi19a} and use confidence sets $\what{C}(x)$ to be of the form
$\what{C}(x) = \{y \in \mc{Y} \mid \score(x,y) \le t \}$ for a threshold $t \in
\R$.  
For such confidence sets, the choice $t \defeq \max_{P \in
  \mc{P}(P_0)} \Quantile(1-\alpha,P) $ is the smallest $q \in \R$ such that
$P(\scorerv \le q) \ge 1-\alpha$ for every distribution $P \in \mc{P}(P_0)$
of the scores.  Our general problem to achieve
coverage~\eqref{eqn:uniform-coverage} with uncertainty set $\mc{Q}(\score,
\mc{P}(P_0))$ thus reduces to the optimization problem
\begin{equation}
\label{eqn:robust-quantile-prob}
\maximize ~ \Quantile( 1-\alpha; \, P ) ~~~
\subjectto ~ P \in \mc{P}(P_0).
\end{equation}
In the next section, we characterize solutions to this problem,
showing in Section~\ref{sec:get-coverage-for-divs} how
to use the characterizations to achieve coverage on future data.

\subsection{Characterizing and computing
  quantiles over $f$-divergence balls}
\label{sec:robust-coverage-div-balls}

It remains to specify a set of distributions $\mc{P}(P_0)$ that makes
problem~\eqref{eqn:robust-quantile-prob} computationally tractable and
statistically meaningful. We thus consider various restrictions on the
likelihood ratio $dP / dP_0$ for $P \in \mc{P}(P_0)$. Following the
distributionally robust optimization literature
(DRO)~\cite{ BlanchetMu19,
  DuchiNa21}, we consider $f$-divergence
balls. Given a closed convex function $f: \R \to \R$
satisfying $f(1)=0$ and $f(t) = +\infty$ for $t<0$,
the $f$-divergence~\cite{Csiszar67} between probability distributions
$P$ and $Q$ on a set $\mc{Z}$ is
\begin{equation*}
  \fdiv{P}{Q} \defeq \int_{z \in \mc{Z}} f\left( \frac{dP(z)}{dQ(z)}
  \right) dQ(z).
\end{equation*}
Jensen's inequality guarantees that $\fdiv{P}{Q} \ge 0$ always, and
familiar examples include $f(z) = z \log z$, which induces the KL-divergence,
and $f(t) = \half (t - 1)^2$, which gives the $\chi^2$-divergence.
We study problem~\eqref{eqn:robust-quantile-prob} in the case where
$\mc{P}(P_0)$ is an $f$-divergence ball
of radius $\rho$ around $P_0$:
\begin{align}
  \label{eqn:P-constraint-set}
  \mc{P}_{f,\rho}(P_0) \defeq \left\{ P ~\mbox{s.t.}~
  \fdiv{P}{P_0}  \le \rho \right\}.
\end{align}
Unlike most work in the DRO literature, instead of trying to build a
model minimizing a DRO-type loss, we assume we already have a model and wish
to robustly validate it: to
provide predictive confidence sets that are valid and robust to distribution
shifts no matter the model's form.
  By the data processing inequality, all distributions $Q$ on $(X,Y)$
  satisfying $\fdivs{Q}{Q_0} \le \rho$ induce a
  distribution $P$ on $\score(X, Y)$ satisfying $\fdivs{P}{P_0} \le \rho$,
  so solving
  problem~\eqref{eqn:robust-quantile-prob} with $\mc{P}_{f,\rho}(P_0)$
  provides coverage for all sufficiently small shifts on
  $(X, Y) \sim Q_0$.

We show how to solve problem~\eqref{eqn:robust-quantile-prob} for fixed
$f$ and $\rho$ defining the constraint~\eqref{eqn:P-constraint-set} by
characterizing worst-case quantiles, essentially reducing the problem to
a one-parameter (Bernoulli) problem. The choice of $f$ and $\rho$ determine
plausible amounts of shift---appropriate choices are a longstanding
problem~\cite{DuchiNa21}---and we defer approaches for selecting them to the
sequel.
For $\alpha \in (0, 1)$ and any distribution $P$ on the real line,
we define the $(\alpha, \rho, f)$-worst-case quantile
\begin{equation}
  \label{eqn:wc-quantile}
  \WCQuantile_{f,\rho}(\alpha; P) \defeq 
  \sup_{\fdivs{P_1}{P} \le \rho} \Quantile(\alpha; \, P_1).
\end{equation}
Key to our results on valid coverage in
Section~\ref{sec:get-coverage-for-divs} is that this worst-case quantile is
a standard quantile of $P$ at a  level that
depends only on $f, \rho$, and $\alpha$, but not on $P$.
\begin{proposition} 
  \label{proposition:rho-vs-alpha}
  Define the function $g_{f, \rho} : [0, 1] \to [0, 1]$ by
  \begin{equation*}
    g_{f,\rho}(\beta) \defeq \inf\left\{z \in [0, 1]
    : \beta f\left(\frac{z}{\beta}\right) + (1 - \beta) f\left(\frac{1 - z}{
      1 - \beta}\right) \le \rho \right\}.
  \end{equation*}
  Then the inverse
  \begin{equation*}
    g_{f,\rho}^{-1}(\tau) = \sup\{ \beta \in [0, 1] : g_{f,\rho}(\beta) \le \tau\}
  \end{equation*}
  guarantees that for all distributions $P$ on $\R$ and $\alpha \in (0, 1)$,
  \begin{align*}
    \WCQuantile_{f,\rho}(\alpha; P) =
    \Quantile (g_{f,\rho}^{-1}(\alpha); P ).
  \end{align*}
\end{proposition}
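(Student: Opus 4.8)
The plan is to reduce the worst-case quantile over the entire $f$-divergence ball to a one-parameter Bernoulli problem --- which is exactly what the function $g_{f,\rho}$ encodes --- and then to convert the resulting statement, about how far the CDF can be pushed down under an $f$-divergence perturbation, into a statement about a single quantile of $P$.

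\textbf{Step 1 (two-point reduction).} Fix a threshold $q \in \R$, let $A = \{S \le q\}$ and $\beta = P(S \le q)$. The key claim is
\[
  \inf_{P_1 \,:\, \fdiv{P_1}{P} \le \rho} P_1(S \le q) = g_{f,\rho}(\beta).
\]
The lower bound is the data-processing inequality: pushing $P$ and $P_1$ forward under $s \mapsto \mathbf{1}\{s \le q\}$ does not increase the $f$-divergence, so with $z = P_1(S \le q)$ one gets $\fdiv{P_1}{P} \ge \beta f(z/\beta) + (1-\beta) f((1-z)/(1-\beta))$; hence $\fdiv{P_1}{P}\le\rho$ forces $z$ into the set defining $g_{f,\rho}(\beta)$, i.e.\ $z \ge g_{f,\rho}(\beta)$. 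For the matching achievability, given any $z$ in that set, let $dP_1/dP$ equal $z/\beta$ on $A$ and $(1-z)/(1-\beta)$ on $A^c$; this is a probability measure with $P_1(S\le q) = z$ and $\fdiv{P_1}{P} = \beta f(z/\beta) + (1-\beta) f((1-z)/(1-\beta)) \le \rho$. (The degenerate cases $\beta \in \{0,1\}$, i.e.\ $q$ below or above the support of $P$, are treated separately, using $1$-coercivity to get $g_{f,\rho}(0)=0$ and $g_{f,\rho}(1)=1$.) In particular the attainable values of $P_1(S\le q)$ form the closed interval $\{z\in[0,1] : \beta f(z/\beta)+(1-\beta)f((1-z)/(1-\beta))\le\rho\}$, which is nonempty since it contains $\beta$.

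\textbf{Steps 2--3 (from the ball to a quantile of $P$).} Since $s \mapsto P_1(S\le s)$ is nondecreasing and right-continuous, $\Quantile(\alpha;P_1) = \inf\{s : P_1(S\le s) \ge \alpha\} > q$ if and only if $P_1(S \le q) < \alpha$. Taking the supremum over $P_1$ in the ball and interchanging ``$\exists P_1$'' with the infimum from Step 1,
\[
  \WCQuantile_{f,\rho}(\alpha;P) \;=\; \sup\big\{q : \textstyle\inf_{\fdiv{P_1}{P}\le\rho} P_1(S\le q) < \alpha \big\} \;=\; \sup\big\{q : g_{f,\rho}(P(S\le q)) < \alpha \big\}.
\]
It remains to recognize the right-hand side as $\Quantile(g_{f,\rho}^{-1}(\alpha); P)$. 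For this one uses that $g_{f,\rho}$ is nondecreasing and continuous on $[0,1]$ --- monotonicity being a short convexity computation for $\beta \mapsto \beta f(z/\beta)+(1-\beta)f((1-z)/(1-\beta))$, and continuity up to $\beta=1$ being exactly where $1$-coercivity enters (it drives that Bernoulli divergence to $+\infty$ for every $z<1$ as $\beta\uparrow 1$, so $g_{f,\rho}(\beta)\uparrow 1$). These give $g_{f,\rho}(\beta) < \alpha \iff \beta < g_{f,\rho}^{-1}(\alpha)$ for $\alpha\in(0,1)$, hence $\{q : g_{f,\rho}(P(S\le q)) < \alpha\} = \{q : P(S\le q) < g_{f,\rho}^{-1}(\alpha)\}$, and then the generalized-inverse identity $\sup\{q : P(S\le q) < t\} = \inf\{q : P(S\le q) \ge t\} = \Quantile(t;P)$ (valid for $t\in(0,1)$ by monotonicity and right-continuity of the CDF) with $t = g_{f,\rho}^{-1}(\alpha)$ finishes the proof.

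\textbf{Main obstacle.} The conceptual heart, Step 1, is routine once one thinks of data processing plus an explicit ``flattened'' perturbation. The delicate part is the last step: establishing the needed regularity of $g_{f,\rho}$ and, above all, making the strict-versus-weak inequalities line up so that composing the two generalized inverses ($g_{f,\rho}^{-1}$ and the quantile) yields an exact equality rather than just an inequality in one direction. This is also the only place where $1$-coercivity is used in an essential way --- through continuity of $g_{f,\rho}$ at $\beta = 1$, which is needed precisely when the target level $1-\alpha$ is close to $1$.
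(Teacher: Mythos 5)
Your proof is correct and follows essentially the same route as the paper's: Step 1 is precisely the paper's Lemma~\ref{lemma:effective-cdf} (data-processing for the lower bound, the explicit ``flattened'' density ratio for achievability, and $1$-coercivity to handle the boundary cases $\beta\in\{0,1\}$), and Steps 2--3 reproduce the paper's generalized-inverse manipulation via Lemma~\ref{lemma:properties-robust-cdf}(d), merely stated in its contrapositive form (strict inequalities and a supremum, instead of weak inequalities and an infimum). One small point deserves tightening: you justify the equivalence $g_{f,\rho}(\beta) < \alpha \iff \beta < g_{f,\rho}^{-1}(\alpha)$ by saying $g_{f,\rho}$ is ``nondecreasing and continuous,'' but that alone only gives the ``$\Rightarrow$'' direction; the ``$\Leftarrow$'' direction requires knowing that $g_{f,\rho}$ cannot be flat at any positive level (i.e.\ it is strictly increasing once positive). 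This follows from convexity together with $g_{f,\rho}(0)=0$ --- a flat piece at height $\alpha>0$ would force the left slope on $[0,\beta]$, which averages $\alpha/\beta>0$, to exceed the later slope of zero, contradicting convexity --- and is exactly what the paper records as Lemma~\ref{lemma:properties-robust-cdf}(c) before deducing part (d). You gesture at convexity as the source of monotonicity, but should make the strictness explicit.
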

\noindent
See Appendix~\ref{sec:proof-rho-vs-alpha} for a proof of the proposition.

Proposition~\ref{proposition:rho-vs-alpha} shows that it is easy to compute
$g_{f,\rho}$ and $g_{f,\rho}^{-1}$, as they are both solutions to
one-dimensional convex optimization problems and therefore admit efficient
binary search procedures. In some cases, we have closed forms; for example
 $f(t) = (t - 1)^2$ gives $g_{f,\rho}(\beta) = \hinges{\beta - \sqrt{2
    \rho \beta(1 - \beta)}}$, while $f(t) = \left|t-1\right|$ yields $g_{f,\rho}(\beta) = (\beta - \rho/2)_+$.
Another example:

\begin{example}[Total variation distances]
  \label{example:tv-distances}
  The total variation distance $\tvnorm{P - Q}$ corresponds
  to the choice $f(t) = |t - 1|$ via the identity
  $2 \tvnorm{P - Q} = \fdiv{P}{Q}$. For this case, we see
  immediately that $g_{f,\rho}^{-1}(\tau) = \min\{\tau + \frac{\rho}{2},
  1\}$, and then $g_{f,\rho}(\beta) = \hinge{\beta - \rho/2}$.
\end{example}
\noindent
Letting $g = g_{f,\rho}$ for shorthand,
we sketch how to compute $g^{-1}$ efficiently in more generality.
Computing the inverse $g^{-1}(\tau)$ is equivalent to solving the
optimization problem
\begin{align*}
  \maximize_{0 \le \beta, z \le 1} ~ \beta
  ~~~ \subjectto ~ z \le \tau, ~~
  \beta f\left(\frac{z}{\beta}\right) + (1 - \beta) f\left(\frac{1 - z}{
    1 - \beta}\right) \le \rho.
\end{align*}
We seek the largest $\beta \ge \tau$ feasible for this problem (as $\beta =
\tau$ is feasible); because $h(\beta, z) = \beta f(z/\beta) + (1 -
\beta) f((1 - z) / (1 - \beta))$ is convex and minimized at any $z = \beta$
with $h(z,z) = 0$,
for $\beta \ge \tau$ it is evident that
$\inf_{0 \le z \le \tau} h(\beta, z) = h(\beta, \tau)$. Thus
may equivalently write
\begin{align*}
  g^{-1}_{f,\rho}(\tau)
  = \sup\left\{\beta \in [\tau, 1] \mid
  \beta f\left(\frac{\tau}{\beta}\right) + (1 - \beta)
  f\left(\frac{1 - \tau}{1 - \beta}\right) \le \rho \right\},
\end{align*}
which a binary search over feasible $\beta \in [\tau, 1]$ solves
to accuracy $\epsilon$ in time $\log \frac{1 - \tau}{\epsilon}$.

\subsection{Achieving coverage with empirical estimates}
\label{sec:get-coverage-for-divs}

With the characterization of $\WCQuantile$, we can define
the corresponding prediction set
\begin{align}
  \label{eqn:robust-prediction-set}
  C_{f,\rho}(x; P)
  \defeq \{ y \in \mc{Y} \mid \score(x,y) \le \WCQuantile_{f,\rho}(1-\alpha; P)
  \}.
\end{align}
As we observe only a sample $\{\scorerv_i\}_{i=1}^n \simiid P_0$, we
use the empirical plug-in to develop confidence
sets~\eqref{eqn:robust-prediction-set} (and therefore in
problem~\eqref{eqn:robust-quantile-prob}), considering
$\what{C}_{n,f,\rho}(x) \defeq C_{f,\rho}(x; \empP)$.  By doing this,
Proposition~\ref{proposition:rho-vs-alpha} allows us to derive guarantees
for the prediction set~\eqref{eqn:robust-prediction-set} from standard
quantile statistics.  In particular, the next proposition, whose proof we
give in Appendix~\ref{sec:proof-cvg-only-test}, lower bounds future coverage
conditionally on the validation set $\{(X_i,Y_i)\}_{i=1}^n$ and relates
future test coverage to the amount of shift.
\begin{proposition}
  \label{proposition:cvg-only-test}
  Let $\scorerv_{n+1} = \score(X_{n+1}, Y_{n+1}) \sim P_{\textup{test}}$ be
  independent of $\{\scorerv_i\}_{i=1}^n \simiid P_0$, and let $\rho\opt =
  \fdivs{P_{\textup{test}}}{P_0} \in \openright{0}{\infty}$.  Let $F_0$ be
  the c.d.f.\ of $P_0$.  Then the confidence set $\what{C}_{n,f,
    \rho}(x) \defeq C_{f,\rho}(x; \empP)$ satisfies
  \begin{align*}
    \P \left( Y_{n+1} \in \what{C}_{n,f, \rho}(X_{n+1}) \mid
    \{(X_i,Y_i)\}_{i=1}^n \right)
    & \geq
    g_{f,\rho^\star} \left( F_0 \big( \WCQuantile_{f,\rho}(1-\alpha; \hat P_n) \big) \right) \\
    & =  g_{f,\rho^\star} \left( F_0 \big( \Quantile ( g_{f,\rho}^{-1}(1-\alpha); \hat P_n ) \big) \right).
  \end{align*}
\end{proposition}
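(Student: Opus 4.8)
The plan is to condition on the validation set so that the threshold defining the prediction set becomes a fixed real number, and then bound the test-distribution tail probability of the score at that threshold using only the constraint $\fdivs{\Ptest}{P_0} \le \rho\opt$. Write $\mc{F}_n \defeq \{(X_i,Y_i)\}_{i=1}^n$. Given $\mc{F}_n$, the empirical distribution $\empP$ is fixed, so the threshold $\hat q \defeq \WCQuantile_{f,\rho}(1-\alpha;\empP)$ is a constant, and Proposition~\ref{proposition:rho-vs-alpha} (at level $1-\alpha$) gives $\hat q = \Quantile(g_{f,\rho}^{-1}(1-\alpha);\empP)$, which already establishes the second equality in the statement no matter what. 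Since $\scorerv_{n+1} = \score(X_{n+1},Y_{n+1}) \sim \Ptest$ is independent of $\mc{F}_n$ and the event $\{Y_{n+1}\in\what{C}_{n,f,\rho}(X_{n+1})\}$ is by definition of $C_{f,\rho}$ in~\eqref{eqn:robust-prediction-set} the event $\{\scorerv_{n+1}\le\hat q\}$, we obtain
\[
  \P\left(Y_{n+1}\in\what{C}_{n,f,\rho}(X_{n+1})\mid\mc{F}_n\right) = \Ptest((-\infty,\hat q]).
\]
The claim therefore reduces to the distribution-free inequality $\Ptest((-\infty,\hat q]) \ge g_{f,\rho\opt}(F_0(\hat q))$.

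I would establish this via the lemma: if $P,P_0$ are distributions on $\R$ with $\fdivs{P}{P_0}\le\rho$, then $P(A)\ge g_{f,\rho}(P_0(A))$ for every Borel $A\subseteq\R$; the needed inequality is the case $A=(-\infty,\hat q]$, $P=\Ptest$, $\rho=\rho\opt$, using $P_0((-\infty,\hat q])=F_0(\hat q)$. For the lemma, set $\beta_0=P_0(A)$ and $z=P(A)$. Applying the data-processing inequality for $f$-divergences to the map $\mathbf 1_A:\R\to\{0,1\}$ gives $\rho \ge \fdivs{P}{P_0} \ge \beta_0 f(z/\beta_0)+(1-\beta_0)f((1-z)/(1-\beta_0))$, so $z$ lies in the sublevel set $\{z'\in[0,1] : \beta_0 f(z'/\beta_0)+(1-\beta_0)f((1-z')/(1-\beta_0))\le\rho\}$. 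This set is convex (a sublevel set of a convex function of $z'$), closed (the map is lower semicontinuous since $f$ is closed convex), and contains $\beta_0$ (where the map vanishes because $f(1)=0$); hence it is a closed subinterval of $[0,1]$ whose left endpoint is, by definition, $g_{f,\rho}(\beta_0)$. Thus $z\ge g_{f,\rho}(\beta_0)=g_{f,\rho}(P_0(A))$. This is precisely the two-point reduction behind Proposition~\ref{proposition:rho-vs-alpha}, so in the write-up I would either cite Appendix~\ref{sec:proof-rho-vs-alpha} or restate it in this few-line form.

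The main obstacle is confined to the lemma and its degenerate cases rather than to the reduction. The one delicate point is whether the binary data-processing step loses anything: it does not, because the constraint it yields for $z$ is by construction exactly the feasible set over which $g_{f,\rho}$ takes its infimum, so no separate argument (e.g.\ exhibiting a worst-case two-point $P$) is needed for the inequality itself. The rest is bookkeeping: $\rho\opt=+\infty$ makes the bound vacuous, consistent with $g_{f,\rho}$ being $0$ below $1$ in that limit; $F_0(\hat q)\in\{0,1\}$ requires only checking $g_{f,\rho}(0)=0$ and $g_{f,\rho}(1)=1$ (the latter agreeing with the fact that $\fdivs{\Ptest}{P_0}<\infty$ forces $\Ptest\ll P_0$, hence $\Ptest((-\infty,\hat q])=1$ once $F_0(\hat q)=1$); and atoms of $P_0$ at $\hat q$ are harmless since $F_0$ is the full c.d.f.\ and the lemma is stated for arbitrary Borel sets. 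One also implicitly uses that $g_{f,\rho}$ is nondecreasing when invoking the $g_{f,\rho}^{-1}$-rewriting from Proposition~\ref{proposition:rho-vs-alpha}, but that is not needed for the coverage inequality.
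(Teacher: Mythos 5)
Your proof is correct and follows essentially the same route as the paper's: conditioning on the validation set reduces the claim to $F_{\textup{test}}(\hat q) \ge g_{f,\rho\opt}(F_0(\hat q))$, which is exactly the lower-bound direction of the paper's Lemma~\ref{lemma:effective-cdf}, and your derivation of $P(A) \ge g_{f,\rho}(P_0(A))$ via data processing applied to $\mathbf{1}_A$ is the same two-point reduction the paper carries out (via a Markov kernel) inside that lemma.
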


With the two preceding propositions, we turn to the main coverage
theorem and a few corollaries, which provide the
validity of coverage as long as the true shift between $P_0$ and
$P_{\textup{test}}$ is no more than our guess.
We provide the proof of the theorem in
Appendix~\ref{sec:proof-robust-coverage-marginal}.
\begin{theorem}
  \label{theorem:robust-coverage-marginal}
  Assume that $\scorerv_{n+1} = \score(X_{n+1}, Y_{n+1}) \sim \Ptest$ is
  independent of $\{S_i\}_{i=1}^n \simiid P_0$, and let $\rho\opt
  = \fdivs{\Ptest}{P_0} < \infty$.
  Then
  \begin{equation*}
    \P\left(Y_{n + 1} \in \what{C}_{n,f,\rho}(X_{n+1})\right)
    \ge g_{f,\rho\opt}\left(\frac{\ceil{n g_{f,\rho}^{-1}(1 - \alpha)}}{n+1}
    \right).
  \end{equation*}
\end{theorem}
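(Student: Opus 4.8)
The plan is to feed the conditional coverage bound of Proposition~\ref{proposition:cvg-only-test} into a quantitative probability-integral-transform argument, after recording two convex-analytic properties of the auxiliary function $g_{f,\rho}$. Write $\beta \defeq g_{f,\rho}^{-1}(1-\alpha)$ and $k \defeq \ceil{n\beta}$. By Proposition~\ref{proposition:rho-vs-alpha}, $\WCQuantile_{f,\rho}(1-\alpha; \empP) = \Quantile(\beta; \empP)$, and since $\empP$ is the empirical distribution of the $n$ scores $S_1, \dots, S_n$, this equals the order statistic $S_{(k)}$. Proposition~\ref{proposition:cvg-only-test} then says that, conditionally on $\{(X_i, Y_i)\}_{i=1}^n$, the coverage is at least $g_{f,\rho\opt}(F_0(S_{(k)}))$, where $F_0$ is the c.d.f.\ of $P_0$; so by the tower property it suffices to prove
\[
  \E\big[g_{f,\rho\opt}(F_0(S_{(k)}))\big] \ge g_{f,\rho\opt}\Big(\tfrac{k}{n+1}\Big),
  \qquad S_1,\dots,S_n \simiid P_0 .
\]

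The second step establishes that $g_{f,\rho}$ is (i) convex and nondecreasing on $[0,1]$ and (ii) satisfies $g_{f,\rho}(\beta) \le \beta$. Property (ii) is immediate because $h(\beta,z) \defeq \beta f(z/\beta) + (1-\beta) f((1-z)/(1-\beta))$ vanishes at $z = \beta$. For convexity, note that $(\beta,z)\mapsto \beta f(z/\beta)$ and $(\beta,z)\mapsto (1-\beta)f((1-z)/(1-\beta))$ are perspectives of the convex function $f$, hence jointly convex, so $h$ is jointly convex and $K_\rho \defeq \{(\beta,z): h(\beta,z)\le\rho\}$ is convex; then $g_{f,\rho}(\beta) = \inf\{z:(\beta,z)\in K_\rho\}$ is the lower envelope of a convex set, which is a convex function (take near-optimal $z_i$ at $\beta_i$, convex-combine, stay in $K_\rho$). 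For monotonicity, observe that $h(\beta,z) = \fdivs{\mathrm{Bern}(z)}{\mathrm{Bern}(\beta)}$, so $\beta \mapsto h(\beta,z)$ is convex and minimized at $\beta = z$; given $\beta_1 < \beta_2$ with $z_2 \defeq g_{f,\rho}(\beta_2) \le \beta_2$, either $z_2 \le \beta_1$ (so $h(\beta_1,z_2) \le h(\beta_2,z_2) \le \rho$, giving $g_{f,\rho}(\beta_1)\le z_2$) or $\beta_1 < z_2$ (so $g_{f,\rho}(\beta_1) \le \beta_1 < z_2$ by (ii)); either way $g_{f,\rho}(\beta_1)\le g_{f,\rho}(\beta_2)$.

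The third step is the probabilistic core. Using the generalized inverse $F_0^{-1}(u) = \inf\{s : F_0(s)\ge u\}$, represent $S_i = F_0^{-1}(U_i)$ with $U_1,\dots,U_n \simiid \mathrm{Uniform}[0,1]$; then $F_0(S_i) \ge U_i$ almost surely, and since $F_0$ is nondecreasing the $k$-th order statistic satisfies $F_0(S_{(k)}) \ge U_{(k)}$ almost surely, with $U_{(k)} \sim \mathrm{Beta}(k, n-k+1)$ and $\E[U_{(k)}] = k/(n+1)$. Monotonicity of $g_{f,\rho\opt}$ then gives $\E[g_{f,\rho\opt}(F_0(S_{(k)}))] \ge \E[g_{f,\rho\opt}(U_{(k)})]$, and convexity of $g_{f,\rho\opt}$ together with Jensen's inequality gives $\E[g_{f,\rho\opt}(U_{(k)})] \ge g_{f,\rho\opt}(k/(n+1))$. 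Substituting $k = \ceil{n g_{f,\rho}^{-1}(1-\alpha)}$ yields the theorem.

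The main obstacle is the second step: pinning down exactly the convexity-plus-monotonicity of $g_{f,\rho}$ and checking it for every $1$-coercive $f$ — the monotonicity in particular hinges on recognizing $h(\cdot,\cdot)$ as a two-point $f$-divergence. The rest is the standard order-statistics-versus-uniforms comparison, where the only delicate point is not to assume $P_0$ is atomless; routing everything through $F_0^{-1}$ handles that cleanly.
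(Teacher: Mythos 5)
Your proof is correct and follows essentially the same route as the paper: start from the conditional bound of Proposition~\ref{proposition:cvg-only-test}, use convexity and monotonicity of $g_{f,\rho}$ (the paper records these in Lemma~\ref{lemma:properties-robust-cdf}) to pass expectations through, and invoke a quantile-expectation bound to arrive at $\ceil{n\beta}/(n+1)$ inside $g_{f,\rho\opt}$.

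The one place you diverge is the quantile-expectation step. The paper isolates this as Lemma~\ref{lemma:conformal-inference-coverage-cdf}, proved by drawing a fresh $S_{n+1}\sim P_0$ and computing $\E[F_0(\Quantile(\beta;\hat P_n))] = \P(S_{n+1}\le\Quantile(\beta;\hat P_n)) \ge \P(\rank(S_{n+1})\le\ceil{n\beta}) = \ceil{n\beta}/(n+1)$, a clean exchangeability/rank argument. You instead identify $\Quantile(\beta;\hat P_n)$ with the order statistic $S_{(k)}$, write $S_i = F_0^{-1}(U_i)$, and use $F_0(S_{(k)})\ge U_{(k)}\sim\mathrm{Beta}(k,n-k+1)$ with $\E[U_{(k)}]=k/(n+1)$. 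Both yield the same inequality; your Beta-moment route is slightly more explicit and correctly handles atoms in $P_0$ via the generalized inverse, but it is a cosmetic, not substantive, difference. You also apply monotonicity before Jensen rather than after; both orders work since $F_0(S_{(k)})\ge U_{(k)}$ is an a.s.\ domination. Your inline verification of convexity and monotonicity of $g_{f,\rho}$ (via perspectives and the Bernoulli $f$-divergence interpretation) is sound and matches the content of Lemma~\ref{lemma:properties-robust-cdf}(a),(c).
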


The theorem as stated is a bit unwieldy, so we develop a few
corollaries, whose proofs we provide in
Appendix~\ref{sec:proof-alpha-coverages}. In each, we assume that the
$\rho$ we use to construct the confidence
sets~\eqref{eqn:robust-prediction-set} satisfies $\rho \ge \rho\opt =
\fdivs{\Ptest}{P_0}$, which guarantees validity.
\begin{corollary}
  \label{corollary:almost-alpha-coverage}
  Let the conditions of Theorem~\ref{theorem:robust-coverage-marginal}
  hold, but additionally assume that
  $\rho\opt = \fdivs{\Ptest}{P_0} \le \rho$. Then
  for $c_{\alpha,\rho,f} \defeq g_{f,\rho}^{-1}(1 - \alpha) g_{f,\rho}'(
  g_{f,\rho}^{-1}(1 - \alpha)) < \infty$, we have
  \begin{equation*}
    \P\left(Y_{ n +1} \in \what{C}_{n,f,\rho}(X_{n+1})\right)
    \ge 1 - \alpha - \frac{c_{\alpha,\rho,f}}{n + 1}.
  \end{equation*}
\end{corollary}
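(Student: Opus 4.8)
\textbf{Proof proposal for Corollary~\ref{corollary:almost-alpha-coverage}.}
The plan is to start from Theorem~\ref{theorem:robust-coverage-marginal}, which gives the lower bound $g_{f,\rho\opt}\left(\frac{\ceil{n g_{f,\rho}^{-1}(1-\alpha)}}{n+1}\right)$ on coverage, and then to show this quantity is at least $1 - \alpha - \frac{c_{\alpha,\rho,f}}{n+1}$ whenever $\rho\opt \le \rho$. First I would use monotonicity of $g_{f,\rho\opt}$ to pass to the crude lower bound $\frac{ng_{f,\rho}^{-1}(1-\alpha)}{n+1}$ inside the argument (dropping the ceiling), and then invoke the hypothesis $\rho\opt \le \rho$ to argue that $g_{f,\rho\opt}(\beta) \ge g_{f,\rho}(\beta)$ pointwise: a smaller divergence radius shrinks the feasible set in the infimum defining $g_{f,\rho}$, hence raises the value. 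Composing these two reductions, coverage is at least $g_{f,\rho}\!\left(\frac{n}{n+1} g_{f,\rho}^{-1}(1-\alpha)\right)$, so it suffices to prove
\begin{equation*}
  g_{f,\rho}\!\left(\frac{n}{n+1}\, g_{f,\rho}^{-1}(1-\alpha)\right)
  \ge 1 - \alpha - \frac{c_{\alpha,\rho,f}}{n+1}.
\end{equation*}

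The core of the argument is then a one-dimensional convexity/Lipschitz estimate on $g_{f,\rho}$ near the point $\beta_\alpha \defeq g_{f,\rho}^{-1}(1-\alpha)$, at which $g_{f,\rho}(\beta_\alpha) = 1-\alpha$. Writing $\delta \defeq \beta_\alpha - \frac{n}{n+1}\beta_\alpha = \frac{\beta_\alpha}{n+1}$, I want $g_{f,\rho}(\beta_\alpha - \delta) \ge g_{f,\rho}(\beta_\alpha) - g_{f,\rho}'(\beta_\alpha)\,\delta$, i.e.\ that $g_{f,\rho}$ lies above its tangent line at $\beta_\alpha$ on the relevant interval. This is exactly convexity of $g_{f,\rho}$, which I would establish from its definition: $g_{f,\rho}(\beta)$ is the smallest $z$ with $h(\beta,z) \le \rho$ where $h(\beta,z) = \beta f(z/\beta) + (1-\beta) f((1-z)/(1-\beta))$ is the jointly convex perspective-type function already used in the text; a standard partial-minimization/sublevel-set argument (the infimal projection of a convex set onto a coordinate is convex, and $g_{f,\rho}$ is this projection reparametrized) yields convexity of $\beta \mapsto g_{f,\rho}(\beta)$. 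Plugging in $\delta = \beta_\alpha/(n+1)$ gives the bound with constant $c_{\alpha,\rho,f} = \beta_\alpha\, g_{f,\rho}'(\beta_\alpha) = g_{f,\rho}^{-1}(1-\alpha)\, g_{f,\rho}'(g_{f,\rho}^{-1}(1-\alpha))$, matching the statement.

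The main obstacle I anticipate is controlling the derivative $g_{f,\rho}'(\beta_\alpha)$ and verifying it is finite, so that $c_{\alpha,\rho,f} < \infty$ as claimed. Since $g_{f,\rho}$ is convex and finite-valued on $[0,1]$ with $g_{f,\rho}(1)=1$, it is locally Lipschitz on the interior, so finiteness of the one-sided derivative at any interior point $\beta_\alpha \in (0,1)$ is automatic; the only delicate cases are the endpoints, but for $\alpha \in (0,1)$ and $\rho$ such that a nontrivial ball is used, $\beta_\alpha$ is interior, and I would note this explicitly (if $g_{f,\rho}^{-1}(1-\alpha)$ hits $1$, the confidence set is the whole space and the claim is trivial). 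A secondary technical point is justifying the differentiability needed to write the tangent-line inequality with a genuine derivative rather than a subgradient; using the right-hand derivative of the convex function $g_{f,\rho}$ and the supporting-line inequality sidesteps this, and one can then identify it with $g_{f,\rho}'$ wherever $g_{f,\rho}$ is differentiable (which is all but countably many points, and in the closed-form $\chi^2$ case everywhere on the interior). I would also double-check the direction of the monotonicity $\rho\opt \le \rho \Rightarrow g_{f,\rho\opt} \ge g_{f,\rho}$ against the definition to make sure the inequality chain composes correctly.
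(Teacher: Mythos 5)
Your proposal matches the paper's argument essentially step for step: pass from $g_{f,\rho\opt}$ to $g_{f,\rho}$ via the monotonicity in $\rho$ (using $\rho\opt \le \rho$), drop the ceiling by monotonicity in $\beta$, and then apply the convexity (tangent-line) inequality for $g_{f,\rho}$ at $\beta_\alpha = g_{f,\rho}^{-1}(1-\alpha)$ with increment $\beta_\alpha/(n+1)$, yielding exactly the constant $c_{\alpha,\rho,f}$. The only cosmetic difference is order of operations (you drop the ceiling before swapping $\rho\opt$ for $\rho$; the paper does the reverse), and you flesh out the finiteness of the one-sided derivative slightly more than the paper does, both of which are immaterial.
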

\noindent
If instead
we replace $\alpha$ in the definition~\eqref{eqn:robust-prediction-set}
of the confidence set $C_{f,\rho}(x; P)$ with
\begin{equation*}
  \alpha_n \defeq 1 - g_{f,\rho}\left((1 + 1/n) g_{f,\rho}^{-1}(1 - \alpha)
  \right)
  = \alpha - O(1/n),
\end{equation*}
we can construct the corrected empirical confidence set
\begin{equation*}
  \what{C}_{n,f,\rho}^{\textup{corr}}(x)
  \defeq \left\{y \in \mc{Y} \mid \score(x, y)
  \le \WCQuantile_{f,\rho}(1 - \alpha_n; \empP) \right\}.
\end{equation*}
We then have the correct level $\alpha$ coverage:
\begin{corollary}
  \label{corollary:corrected-alpha-coverage}
  Let the conditions of Corollary~\ref{corollary:almost-alpha-coverage}
  hold. Then
  \begin{equation*}
    \P\left(Y_{n + 1} \in \what{C}_{n,f,\rho}^{\textup{corr}}(X_{n + 1})\right)
    \ge 1 - \alpha.
  \end{equation*}
\end{corollary}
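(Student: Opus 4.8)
The plan is to reduce the claim to Theorem~\ref{theorem:robust-coverage-marginal} applied with $\alpha$ replaced by $\alpha_n$, and then show the resulting lower bound is at least $1 - \alpha$. First I would observe that running the construction with level $\alpha_n$ in place of $\alpha$ produces exactly the set $\what{C}_{n,f,\rho}^{\textup{corr}}$, since $\WCQuantile_{f,\rho}(1-\alpha_n; \empP) = \Quantile(g_{f,\rho}^{-1}(1-\alpha_n); \empP)$ by Proposition~\ref{proposition:rho-vs-alpha}. Therefore Theorem~\ref{theorem:robust-coverage-marginal}, whose only inputs are that $\scorerv_{n+1}$ is independent of the exchangeable sample and $\rho\opt \le \rho$ (both inherited from Corollary~\ref{corollary:almost-alpha-coverage}'s hypotheses), gives
\begin{equation*}
  \P\left(Y_{n+1} \in \what{C}_{n,f,\rho}^{\textup{corr}}(X_{n+1})\right)
  \ge g_{f,\rho\opt}\left(\frac{\ceil{n\, g_{f,\rho}^{-1}(1-\alpha_n)}}{n+1}\right).
\end{equation*}

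Next I would simplify the argument of $g_{f,\rho\opt}$. By the definition of $\alpha_n$ we have $1 - \alpha_n = g_{f,\rho}\big((1 + 1/n)\, g_{f,\rho}^{-1}(1-\alpha)\big)$, so applying $g_{f,\rho}^{-1}$ and using that $g_{f,\rho}^{-1}$ is a left-inverse of $g_{f,\rho}$ on the relevant range (equivalently, that $g_{f,\rho}^{-1}\big(g_{f,\rho}(\beta)\big) \ge \beta$, which follows from the $\sup$ definition of $g_{f,\rho}^{-1}$ and monotonicity of $g_{f,\rho}$) yields $g_{f,\rho}^{-1}(1-\alpha_n) \ge (1 + 1/n)\, g_{f,\rho}^{-1}(1-\alpha) = \frac{n+1}{n} g_{f,\rho}^{-1}(1-\alpha)$. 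Consequently
\begin{equation*}
  \ceil{n\, g_{f,\rho}^{-1}(1-\alpha_n)} \ge n\, g_{f,\rho}^{-1}(1-\alpha_n)
  \ge (n+1)\, g_{f,\rho}^{-1}(1-\alpha),
\end{equation*}
so the fraction inside $g_{f,\rho\opt}$ is at least $g_{f,\rho}^{-1}(1-\alpha)$.

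Finally I would invoke monotonicity of $g_{f,\rho\opt}$ (nondecreasing, as is immediate from its infimum definition) together with the fact that $\rho\opt \le \rho$ implies $g_{f,\rho\opt}(\beta) \ge g_{f,\rho}(\beta)$ pointwise (a smaller divergence radius only enlarges the feasible set in the infimum defining $g$, hence decreases $g$; this monotonicity in $\rho$ is precisely what makes $\rho \ge \rho\opt$ the validity condition). Chaining these,
\begin{equation*}
  g_{f,\rho\opt}\!\left(\frac{\ceil{n\, g_{f,\rho}^{-1}(1-\alpha_n)}}{n+1}\right)
  \ge g_{f,\rho\opt}\!\left(g_{f,\rho}^{-1}(1-\alpha)\right)
  \ge g_{f,\rho}\!\left(g_{f,\rho}^{-1}(1-\alpha)\right) = 1-\alpha,
\end{equation*}
where the last equality uses that $g_{f,\rho}$ is continuous and $g_{f,\rho}^{-1}$ is its right-inverse at the value $1-\alpha$. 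This completes the argument. The one step requiring care — the main (minor) obstacle — is the bookkeeping around the inverse relations between $g_{f,\rho}$ and $g_{f,\rho}^{-1}$: because these are defined by one-sided optimization problems rather than as literal inverses, one must check which composition gives an equality and which gives only an inequality, and confirm that continuity of $g_{f,\rho}$ (which holds since $f$ is $1$-coercive and the Bernoulli constraint function is continuous in its arguments) makes $g_{f,\rho}(g_{f,\rho}^{-1}(1-\alpha)) = 1-\alpha$ rather than merely $\le$. Everything else is monotonicity and the ceiling bound.
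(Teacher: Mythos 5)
Your proof is correct and follows essentially the same route as the paper: both apply the coverage bound from Theorem~\ref{theorem:robust-coverage-marginal} with the adjusted level $\alpha_n$, then unwind $g_{f,\rho}^{-1}(1-\alpha_n)$ using the one-sided inverse identity $g_{f,\rho}^{-1}(g_{f,\rho}(\beta)) \ge \beta$ to absorb the $(1+1/n)$ factor against the $\ceil{\cdot}/(n+1)$ and land on $g_{f,\rho}(g_{f,\rho}^{-1}(1-\alpha)) \ge 1-\alpha$. The only cosmetic difference is that you carry $g_{f,\rho\opt}$ one step further before invoking $g_{f,\rho\opt} \ge g_{f,\rho}$, whereas the paper performs that monotonicity step upfront by quoting its intermediate display~\eqref{eqn:no-burritos-in-portland}; the careful point you flag about which composition of $g_{f,\rho}$ and $g_{f,\rho}^{-1}$ gives an inequality versus an equality is exactly the right thing to check, and your resolution is correct.
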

\noindent
An easier corollary is immediate via Example~\ref{example:tv-distances},
which shows that when the data distribution changes in variation
distance by at most $\rho$, we have (nearly) correct coverage by
an identical increase in the choice of quantile level:
\begin{corollary} \label{corollary:total-variation}
  Let $f(t) = |t - 1|$. Then
  \begin{equation*}
    \what{C}_{n,f,\rho}(x) \defeq
    \left\{y \in \mc{Y} \mid \score(x, y) \le
    \Quantile\left(1 - \alpha + \frac{\rho}{2}; \hat{P}_n\right)\right\}
  \end{equation*}
  and if $2 \tvnorm{\Ptest - P_0} \le \rho$, then
  \begin{equation*}
    \P\left(Y_{n + 1} \in \what{C}_{n,f,\rho}(X_{n+1})\right)
    \ge 1 - \alpha - \frac{1}{n}.
  \end{equation*}
\end{corollary}

Summarizing, the empirical prediction sets $\what{C}_{n,f,\rho}$ and
$\what{C}^{\textup{corr}}_{n,f,\rho}$ achieve nearly or better than
$1-\alpha$ coverage if the $f$-divergence between the new distribution
$\Ptest$ and the current distribution $P_0$ remains below $\rho$. When this
fails, Theorem~\ref{theorem:robust-coverage-marginal} shows graceful
degradation in coverage as long as the divergence between $\Ptest$ and the
validation population $P_0$ is not too large.

\ifdefined\removenotes
\else
\subsection{Towards more general uncertainty sets}
We now provide an alternative characterization of $\WCQuantile$ that also adapts to other types of uncertainty sets.  
Our jumping off point is the quantile definition~\eqref{eqn:wc-quantile}: for any continuous distribution $P$,  observe that
\begin{align}
    \sup_{\fdivs{P_1}{P} \le \rho} \Quantile(1-\alpha; \, P_1) & = \sup_{\fdivs{P_1}{P} \le \rho} \inf_{t \in \R} \Bigg\{ t : P_1(s(X,Y) \leq t) \geq 1-\alpha \Bigg\} \notag \\
    & \overset{(i)}= \sup_{\fdivs{P_1}{P} \le \rho} \sup_{t \in \R} \Bigg\{ t : P_1(s(X,Y) > t) > \alpha \Bigg\} \notag \\
    & = \sup_{t \in \R} \Bigg\{ t : \sup_{\fdivs{P_1}{P} \le \rho} \Big\{ \mathbb{E}_{P_1} \big[ \indic{s(X,Y) > t} \big] \Big\} > \alpha \Bigg\}, \label{eqn:wc-quantile-simple}
\end{align}
where equality $(i)$ follows from the fact that for all $t \in \R$,  $t < \Quantile(1-\alpha, P)$ if and only if $P( \scorerv \le t) < 1-\alpha$, i.e.\ if and only if $P( \scorerv > t) > \alpha$.

\textbf{Efficient computation for Wasserstein balls.}  This simple duality result~\eqref{eqn:wc-quantile-simple} gives us an avenue for computing the worst-case quantile w.r.t.~different uncertainty sets.  For example, if we replace the $f$-divergence ball $\fdivs{P_1}{P}$ appearing in our derivations above with the $p$-Wasserstein distance ball $W_p(P_1, P)$, then we need only replace the above inner maximization problem in~\eqref{eqn:wc-quantile-simple} with
\begin{align}
\label{eqn:wasserstein-inner-max}
\maximize_{W_p(P_1, P) \le \rho} ~ P_1\left[ \scorerv > t \right],
\end{align}
and we can then still perform the outer maximization via bisection. 
Problem~\eqref{eqn:wasserstein-inner-max} aims to transport as much mass as possible to $t+ \varepsilon$ for some infinitesimal $\varepsilon>0$, and hence is equivalent to the maximization program
\begin{align*}
\maximize_{w : \R \to [0,1]} ~ \E_P\left[ w(S) \right] ~ \subjectto ~ \E_P \left[ (t - S)_+^p w(S)\right] \le \rho^p.
\end{align*}
Letting $t^\star \in \R \cup \{ - \infty\}$ and $w \in [0,1]$ solve the equation
\begin{align*}
\E_P\left[ (t - S)_+^p \left( \indic{S > t^\star} + w \indic{S = t^\star} \right) \right] = \rho^p,
\end{align*}
we see that the solution is $w^\star(s) \defeq \indic{s > t^\star}$ for $s \neq t^\star$ and $w^\star(t^\star) = w$. 

\fi


\section{Procedures for estimating future distribution shift}
\label{sec:futureshiftestimation}

While the results in the previous section apply for a fixed shift amount
$\rho$, a fundamental challenge is---given a validation data set---to
determine the amount of shift against which to protect.  We suggest a
methodology to identify shifts motivated by two
(somewhat oppositional) perspectives: first, the variability in predictions
in current data is suggestive of the amount of variability we might expect
in the future; second, from the perspective of protection against future
shifts, that there is no reason future data would \emph{not} shift as much
as we can observe in a given validation set.
%
As a motivating thought experiment, consider the case that the data is a
mixture of distinct sub-populations.  Should we provide valid coverage for
each of these sub-populations, we expect our coverage to remain valid if the
future (test) distribution remains any mixture of the same sub-populations.
In empirical risk minimization (ERM)-based models, we expect rarer
sub-populations to have higher non-conformity scores than average, and
building on this intuition, our procedures look for regions in validation
data with high non-conformity scores, choosing $\rho$ to give valid coverage
in these regions.

We adopt a two-step procedure to describe the set of shifts we consider.
Abstractly, let $\mc{V}$ be a (potentially infinite) set indexing
``directions'' of possible shifts, and to each $v \in \mc{V}$ associate a
collection $\mc{R}_v$ of subsets of $\mc{X}$.  (Typically, we either take
$\mc{V} \subset \R^d$ when $\mc{X} \subset \R^d$, or $\mc{V}$ a subset of
functions of $\mc{X}$, with each $\mc{R}_v$ then a collection of level
sets).  For each $R \in \mc{R} \defeq \bigcup_{v \in \mc{V}} \mc{R}_v
\subset \mc{P}(\mc{X})$, we consider the shifted distribution
\begin{align}
  \label{eqn:distribution-shifts-covariate}
  dQ_{R}(x,y) = \frac{\indic{x \in R }}{Q_0(X \in R)}dQ_0(x,y)
  = dQ_0(x,y \mid x \in R),
\end{align}
which restricts $X$ to a smaller subset $R$ of the feature space without
changing the conditional distribution of $Y \mid X$.  The intuition
behind the approach is twofold: first, conditionally
valid predictors remain valid under covariate shifts of only
$X$ (so that we hope to identify failures of validity under such shifts),
and second, there may exist 
privileged directions of shift in the
$\mc{X}$-space (e.g.\ time in temporal data or protected
attributes in data with sensitive features) for which we wish to provide
appropriate $1-\alpha$ coverage.

\begin{example}[Slabs and Euclidean balls]
  \label{ex:slab-euclideanballs}
  Our prototypical example is slabs (hyperplanes) and Euclidean
  balls, where we take $\mc{V} \subset \R^d$, both of which
  have VC-dimension $O(d)$. In the slab case, for
  $v \in \R^d$ we define the collection
  of slabs orthogonal to $v$,
  \begin{equation*}
    \mc{R}_v = \left\{ \{x \in \R^d \mid a \le v^T x \le b \}
    ~\mbox{s.t.}~ a < b \right\}.
  \end{equation*}
  In the Euclidean ball case, we consider $\mc{R}_v = \{ \{x
  \in \R^d \mid \ltwo{x-v} \le r \} ~\mbox{s.t.}~ r>0 \}$, the collection of
  $\ell_2$-balls centered at $v \in \mc{V} = \R^d$.
\end{example}

\begin{example}[Upper-level functional sets]
  \label{ex:level-sets}
  A more general example takes $\mc{V}$ be a collection of real-valued
  functions, for instance, a reproducing kernel Hilbert space (RKHS).  For
  each $v \in \mc{V}$, $ \mc{R}_v$ is then the collection of upper
  level-sets
  \begin{equation*}
   \left\{ \{x \in \mc{X} \mid v(x) \ge a \}
    ~\mbox{s.t.}~ a \in \R \right\}.
  \end{equation*}
  Were $\mc{V}$ all measurable functions, this would guarantee coverage
  under any covariate shift; practically, $\mc{V}$ is a (much)
  smaller collection.
\end{example}

Given $\delta \in (0, 1)$, we define the \emph{worst coverage}
for a confidence set mapping $C : \mc{X} \toto \mc{Y}$
over $\mc{R}$-sets of size $\delta$ by
\begin{align}
  \label{eqn:worst-global-coverage}
  \wcoverage(C, \mc{\mc{R}}, \delta; \, Q) \defeq
  \inf_{R \in \mc{R}}
  \left\{Q\left( Y \in C(X) \mid X \in R \right)
  ~ \mbox{s.t.}~
  Q(X \in R) \ge \delta \right\}
\end{align}
Our goal is to find a (tight) confidence set
$\what{C}$ such that
$\wcoverage(\what{C}, \mc{R}, \delta; Q_0) \ge 1 - \alpha$, which,
in the setting of Section~\ref{sec:robustpredinference},
corresponds to
choosing $\rho > 0$ such that
\begin{equation*}
  \wcoverage(\what{C}_{n,f,\rho},
  \mc{R}, \delta; \, Q_0) \ge 1-\alpha.
\end{equation*}
That is, we seek $1 - \alpha$ coverage over all large enough subsets
of $X$-space. 

\citet{BarberCaRaTi19a} show that one can theoretically construct such a
confidence set when the collection of sets $\mc{R}$ is not too large,
e.g.\ if it has finite VC-dimension.  Unfortunately, the computation of the
worst coverage~\eqref{eqn:worst-global-coverage} is usually challenging when
the dimension $d$ of the problem grows (as in
Example~\ref{ex:slab-euclideanballs}), as it typically involves
minimizing a non-convex function over a $d$-dimensional domain.  This makes
the estimation of quantity~\eqref{eqn:worst-global-coverage} intractable for
large $d$ and hints that requiring such coverage to hold uniformly
over all directions $v \in \mc{V}$ may be too stringent for practical
purposes.  However, for a fixed $v \in \R^d$, both sets $\mc{R}_v$ in
Example~\ref{ex:slab-euclideanballs} admit $O(d \cdot n)$-time algorithms
for computing $\wcoverage(C, \mc{R}_v, \delta; \what{Q}_n)$ for any
empirical distribution $\what{Q}_n$ with support on $n$ points, which in the
slab case is the maximum density segment problem~\cite{ChungLu03}. Thus
instead of the full worst coverage~\eqref{eqn:worst-global-coverage}, we
typically resort to a slightly weaker notion of robust coverage, where we
require coverage to hold for ``most'' distributions of the
form~\eqref{eqn:distribution-shifts-covariate}.  In the next two sections,
we therefore consider two approaches: one that samples directions $v \in
\mc{V}$, seeking good coverage with high probability, and the other that
proposes surrogate convex optimization problems to find the worst direction
$v$, which we can show under (strong) distributional assumptions is optimal.


\subsection{High-probability coverage over specific classes of shifts}
\label{sec:coverage-high-probability-over-shifts}

Our first approach is to let $\Pv$ be a distribution on $v \in \mc{V}$ that models plausible future shifts. 
A natural desiderata here is to provide
coverage with high probability, that is, conditional on $\what{C}$, to
guarantee that for a hyperparameter $0 < \levelv < 1$ and
for $v \sim \Pv$,
\begin{align}
  \label{eqn:worst-quantiled-coverage}
  \Pv \left[ \wcoverage(\what{C}, \mc{R}_v, \delta; \, Q_0) \geq 1 - \alpha \right]
  \geq 1-\levelv.
\end{align}
Thus with $\Pv$-probability $1 - \levelv$ over the direction $v$ of shift,
the confidence set $\what{C}(X)$ provides $1-\alpha$ coverage over all $R
\in \mc{R}_v$ satisfying $Q_0(X\in R) \ge \delta$.  The
coverage~\eqref{eqn:worst-quantiled-coverage} becomes more conservative as
$\levelv$ decreases to $0$, reducing to
condition~\eqref{eqn:worst-global-coverage} when $\levelv = 0$.

Before presenting the procedure, we index the confidence sets by the
threshold $q$ for the score function $\score$, providing a complementary
condition via the robust prediction set~\eqref{eqn:robust-prediction-set}.

\begin{definition}
  \label{def:smallest-rho}
  For $q \in \R$, the \emph{prediction set at level} $q$ is
  \begin{align*}
    C^{(q)}(x) \defeq \{ y \in \mc{Y} \mid \score(x,y) \le q \}.
  \end{align*}
  For a distribution $P$ on $\R$, the
  value $\rho$ provides
  \emph{sufficient divergence for threshold $q$} if
  \begin{equation*}
    C_{f,\rho}(x; P) \supset C^{(q)}(x) ~ \mbox{for~all~} x \in \mc{X}.
  \end{equation*}
\end{definition}
\noindent
By the definition~\eqref{eqn:robust-prediction-set} of
$C_{f,\rho}$ and Proposition~\ref{proposition:rho-vs-alpha}, we see that $\rho$ 
gives sufficient divergence for threshold $q$ if and only if
\begin{equation*}
  \WCQuantile_{f,\rho}(1 - \alpha; P)
  = \Quantile(g_{f,\rho}^{-1}(1 - \alpha; P)) \ge q.
\end{equation*}

To output a confidence set $\what{C}$ satisfying the high probability
worst-coverage~\eqref{eqn:worst-quantiled-coverage}, we wish to find $q \in
\R$ such that $\Pv[ \wcoverage(C^{(q)}, \mc{R}_v, \delta; \, Q_0) \geq 1 -
  \alpha] \geq 1-\levelv$. Notably, any choice of $\rho$ satisfying
$\WCQuantile_{f,\rho}(1 - \alpha; P_0) \ge q$ yields a prediction set
$C_{f,\rho}(\cdot; P_0)$ that both provides coverage for covariate shifts
$Q_R$ of the form~\eqref{eqn:distribution-shifts-covariate} across most
directions $v \in \mc{V}$, in agreement
with~\eqref{eqn:worst-quantiled-coverage}, and enjoys the protection against
distribution shift we establish in Section~\ref{sec:robustpredinference} for
the given value $\rho$ (including against more than covariate shifts).
Algorithm~\ref{alg:rho-selection-procedure} performs this using plug-in
empirical estimators for $P_0$, $Q_0$ and $\Pv$.

\begin{algorithm}
  \caption{Worst-subset validation procedure}
  \label{alg:rho-selection-procedure}
  \begin{algorithmic}
    \STATE {\bf Input:} sample $\{(X_i,Y_i) \}_{i=1}^n$ with empirical
    distribution $\empQ$; score function $\score: \mc{X} \times \mc{Y}
    \to \R$ with empirical distribution $\empP$ on $\{\score(X_i,
    Y_i)\}_{i=1}^n$; levels $\alpha, \levelv \in (0,1)$; divergence function
    $f : \R_+ \to \R$; smallest subset size $\delta \in (0,1)$;
    number of sampled directions $k \ge 1.$


    \STATE {\bf Do:} Sample $\{ v_j \}_{j=1}^k \simiid \Pv$, and let
    $\Pvemp$ be their empirical distribution and set
    \begin{align}
      \label{eqn:rhohat-criterion}
      \what q_\delta
      \defeq \inf \Bigg\{ q \in \R : \Pvemp \Big( \wcoverage(C^{(q)}, \mc{R}_v, \delta; \, \what{Q}_{n}) \geq 1-\alpha \Big) \geq 1-\levelv \Bigg\}.
    \end{align}

    \STATE Set $\hat{\rho}_\delta$ to be any sufficient
    divergence level for threshold $\what{q}_\delta$.

    \STATE \textbf{Return:}
    confidence set mapping $\what{C} : \mc{X} \toto \mc{Y}$ with
    $\what C(x)  \defeq C^{(\what q_\delta ) }(x)$ or
    $\what{C}(x) \defeq C_{f,\hat \rho_\delta}(x; \empP)$.
  \end{algorithmic}
\end{algorithm}


We show that procedure~\ref{alg:rho-selection-procedure}
approaches uniform $1 - \alpha$ coverage if the subsets
in $\mc{R}$ have finite VC-dimension in Appendix~\ref{sec:adaptive_procedure_shifts_high_prob_coverage}.

\subsection{Finding directions of maximal shift}

In this section, we revisit worst potential shifts, designing a
methodology to estimate the worst direction and protect
against it, additionally providing sufficient conditions for consistency.
For a confidence
set mapping $C : \mc{X} \toto {Y}$, we define the worst shift direction
\begin{equation}
  v\subopt(C) \defeq \argmin_{v \in \mc{V}} \wc(C, \mc{R}_v, \delta; \, Q_0),
  \label{eqn:worst-shift-def}
\end{equation}
which evidently satisfies
\begin{equation*}
  \wc(C, \mc{R}_{v\subopt(C)}, \delta; \, Q_0) 
  = \wc(C, \mc{R}, \delta; \, Q_0) 
  \defeq \inf_{v \in \mc{V}} \wc(C, \mc{R}_{v}, \delta; \, Q_0).
\end{equation*}
If we could identify such a worst direction, and it is consistent across
thresholds $q$ in our typical definition $C(x) = \{y \in \mc{Y} \mid
\score(x, y) \le q\}$ (a strong condition), then the procedures in the
preceding sections allow us to choose thresholds to guarantee coverage.
The intuition here is that there may exist a direction with
higher variance in predictions, for example, time in a temporal system.
A more explicit example comes from heteroskedastic regression:

\begin{example}[Heteroskedastic regression]
  \label{example:heterogeneous-regression}
  Let the data $(X, Y) \in \R^d \times \R$ follow the model
  \begin{equation*}
    Y = \mu\opt(X) + h(\vvar^T X) \noise
  \end{equation*}
  where $h : \R \to \R_+$ is non-decreasing, $\noise \sim \normal(0,1)$
  independent of $X$, which generalizes the standard regression model
  to have heteroskedastic noise, with the noise increasing in the
  direction $\vvar$. Evidently the oracle (smallest length) conditional
  confidence set for $Y \mid X = x$ is the
  interval $[\pm z_{1 - \alpha/2} \sqrt{h(\vvar^T x)}]$ where $z_{1 - \alpha}$
  is the standard normal quantile. The standard split
  conformal methodology (Section~\ref{sec:split-conformal-intro})
  will undercover for those $x$ such that $\vvar^T x$ is large: shifts
  of $X$ in the direction $v\subopt = \vvar$ may decrease coverage.
\end{example}

With this example as motivation, we propose identifying challenging
directions for dataset shift by separating those datapoints $(X_i, Y_i)$
with large nonconformity scores $\score(X_i, Y_i)$ from those with lower
scores.  In principle, one can use any M-estimator to find such a
discriminator. 
\begin{definition}
  \label{def:smallest-rho-s}
  For $q \in \R$ and a score  $\score : \mc{X} \times
  \mc{Y} \to \R$, the \emph{$\score$-prediction set at level $q$} is
  \begin{equation}
    \label{eqn:confidence-set-from-score}
    C^{(q,\score)}(x) \defeq \{ y \in \mc{Y} \mid \score(x,y) \le q \}.
  \end{equation}
\end{definition}

We assume in this section that $\mc{V} \subset L^2(Q_{0,X})$ is an RKHS, or a
subset thereof, with associated Hilbert norm $\norm{\cdot}_{\mc{V}}$, and
each collection $\mc{R}_v$ is as in Example~\ref{ex:level-sets}.  The case
where $\mc{R}$ is the collection of all half-spaces corresponds to $\mc{V} =
\left\{ x \mapsto v^T x \mid v\in \R^d \right\}$.  Additionally, for every
$v \in \mc{V}$ we let $F_v$ be the cumulative distribution function of
$v(X)$ when $X \sim Q_{0,X}$ and $F_v^-(t) \defeq \P(v(X) < t)$
its left-continuous version.

\begin{algorithm}
  \caption{Worst-direction validation given a score function}
  \label{alg:worst-direction-validation}
  \begin{algorithmic}
    \STATE {\bf Input:} sample $\{(X_i,Y_i) \}_{i=1}^n$; score function
    $\scoreest: \mc{X} \times \mc{Y} \to \R$ independent of the sample;
    coverage rate $1-\alpha \in (0,1)$; divergence function $f : \R_+ \to
    \R$; smallest subset size $\delta \in (0,1)$, worst direction estimation procedure $\mc{M} :(\R \times \mc{X})^\star \to \mc{V}$.

    \STATE {\bf Initialize:} Split sample $\{ (X_i, Y_i) \}_{i=1}^n$ into
    $\{ (X_i,Y_i) \}_{i=1}^{n_1}$, $\{ (X_i,Y_i) \}_{i=n_1+1}^{n_1+n_2}$
    with empirical distributions $\empQone$, and $\empQtwo$
    (resp.\ $\empPone$ and $\empPtwo$ for the scores).
    
    \STATE {\bf Do}: Estimate the worst direction of shift on the first sample distribution $\empQone$:
    \begin{align*}
    \what{v}_n \defeq \mc{M}(\{ \scoreest(X_i, Y_i), X_i \}_{i=1}^{n_1}).
    \end{align*}

    \STATE Use the second subsample to set the threshold
    $\what{q}_\delta$ to
    \begin{align}
      \label{eqn:rhohat-criterion-worst-direction-after-learning-scores}
      \what{q}_\delta
      \defeq \inf \left\{ q \in \R : \textup{WC}(C^{(q,\scoreest)}, \mc{R}_{\hat{v}_n}, \delta; \, \empQtwo) \geq 1-\alpha \right\}.
    \end{align}

    \STATE Set $\hat{\rho}_\delta \defeq \rho_{f,\alpha}( \what q_\delta;
    \empPtwo) = \sup\{\rho \ge 0 \mid
    \WCQuantile_{f,\rho}(1 - \alpha; \empPtwo) \le q\}$
    as in Lemma~\ref{lemma:equivalence-rho-threshold}.

    \STATE \textbf{Return:} the confidence set mapping $\what{C}_n(x)
    = C^{(\what q_\delta, \scoreest)}(x) = C_{f,\hat{\rho}_\delta}(x;
    \empPtwo)$.
  \end{algorithmic}
\end{algorithm}

The intuition behind Algorithm~\ref{alg:worst-direction-validation} is
simple: we seek a direction $v$ in which shifts in $X$ make the given
nonconformity score $\scoreest$ large, then guarantee coverage for
shifts in that direction and, via the distributionally robust
confidence set $C_{f,\hat{\rho}}$ the procedure returns, any
future distributional shift for which the distribution
$P_{\textup{new}}$ of scores $\score(X, Y)$ satisfies
$\fdiv{P_{\textup{new}}}{P_0} \le \hat{\rho}$. Because we need only
solve a single M-estimation problem---rather than sample a large
number of directions $v$ as in Alg.~\ref{alg:rho-selection-procedure}---the estimation methodology is more computationally efficient.

In Appendix~\ref{subsec:population-level-consistency}, we study different worst direction estimation procedures,  for instance the non-parametric estimator
\begin{align}
\label{eqn:penalized-linear-loss-finite-sample-worst-direction-estimator}
\hat{v}_{n,  \lambda_n} \defeq \argmin_{v \in \mc{V}} \left\{ \frac{1}{n(n-1)} \sum_{1\le i \neq j \le n} \left( v(X_i) - \indic{\scorerv_i \ge \scorerv_j} \right)^2  + \lambda_n \norm{v}_{\mc{V}}^2 \right\},
\end{align}
whose consistency to an oracle worst direction depends on stochastic order assumptions. 

In our subsequent experiments, with a high-dimensional feature space, we use
simpler least-squares and SVM estimators of the scores as a fitting
procedure for the worst direction of shift, considering linear shifts only.
This parametric approach is admittedly more restrictive, and obtaining
consistency requires even stronger distributional assumptions; we present one example of such in Appendix~\ref{subsec:consistency-linear-shift-estimator}.


\newcommand{\predsetthresh}{t}
\newcommand{\infdiv}[2]{D_{\infty}\left({#1} |\!| {#2}\right)}
\newcommand{\Aug}{\textrm{A}}
\newcommand{\nBatch}{B}
\newcommand{\batch}{b}
\newcommand{\qfunc}{\mc{Q}}

\newcommand{\Pcovset}{\mc{P}_{\textup{cov},I}}

\section{Coverage sensitivity under covariate shifts}
\label{sec:sensitivity}

To this point in the paper, the approaches we take for robustness to
distribution shifts may often be conservative. Here, we take a complementary
and exploratory viewpoint to identify ways in which a predictor may be
sensitive. While coverage guarantees of standard predictive inference
methods may fail when new data comes from a shifted distribution (recall
Section~\ref{sec:motivation-exp}), protecting against all possible shifts
can lead to conservative predictive sets.  It is thus of practical interest
to identify the particular directions in which a predictive model is indeed
distributionally unstable.  We therefore propose a measure that evaluates
coverage sensitivity under distribution shifts of interest, and we study
this measure's convergence properties by building on a recent line of work
distribution shift sensitivity~\citep{JeongNa20, SubbaswamyAdSa21,
  GuptaRo21}.

%

For a choice of threshold $\predsetthresh \in \R$, we wish to understand the
sensitivity of (mis)-coverage of the predictive set $C^{(\predsetthresh)}$
(as in Eq.~\eqref{eqn:confidence-set-from-score}) under covariate specific
distribution shifts.  In distinction from
Section~\ref{sec:robustpredinference}, where we consider general shifts on
the score distribution, we now focus on covariate shift.
For an index set $I \subset [d]$,
this consists
of allowing only the distribution of $X_I$ to
vary while the conditional distribution of $\score(X,Y)$ given $X_I$
remains invariant.  Thus, if we let $P_{0,I}$ be the distribution
of $(\score(X,Y), X_I)$ when $(X,Y) \sim Q_0$, we consider shifts
of measures on $(S, X_I)$ belonging to
\begin{align*}
  \Pcovset(\rho, P_{0,I}) \defeq \left\{ P
  = \law(S, X_I)
  ~ \mid ~
  P(S \in \cdot \mid X_I) = P_{0,I}(S \in \cdot \mid X_I) \text{ and } \fdiv{P}{P_{0,I}} \le \rho \right\}.
\end{align*}
Assuming (as we will show is possible) that we can accurately
evaluate coverage under such shifts, if a given scoring function $\score$ is
insensitive, then we gain confidence in the performance of $\score$,
while scoring functions sensitive to such covariate shifts should give
us pause.

The challenge of calibrating the expected distribution shift, denoted as $\rho$, is akin to calibrating sensitivity parameters in causal inference \citep{HsuSm13,VeitchZa20}. Our methods identify and assess the sensitivity of coverage to shifts in specific covariate subsets. While training data alone can not provide such calibration, access to relevant test covariate subsets can help us approximate these shifts using techniques like \citep{NguyenWaJo10}, requiring only subset data rather than full labeling—a practical advantage in many cases.

\subsection{Covariate-specific sensitivity analysis}

\newcommand{\SFcov}{{\SF}_{\textup{cov}, I}^{(\predsetthresh)} }
\newcommand{\estSFcov}{{\what{\SF}}_{\textup{cov}, I}^{(\predsetthresh)} }
\newcommand{\MCov}{{\MC}_{P_{0,I}}^{(\predsetthresh)}}

Our goal here is to estimate scoring model's \emph{sensitivity}, which we
take to be the mis-coverage of the predictive set function
$C^{(\predsetthresh)}$ as the distribution of $(S, X_I)$ varies within
$\Pcovset(\rho, P_{0,I})$.
For shift amounts $\rho \ge 0$ and probability distributions
(indexed by $I$) $P_{0,I}$ on $\R \times \R^I$,
we therefore define the covariate specific sensitivity function
\begin{align*}
  \SFcov (\rho,P_{0,I}) \defeq \sup_P \left\{  \E_{P}[\indic{S >
  \predsetthresh}] \mid P \in
  \Pcovset(\rho, P_{0,I}) \right\}.
\end{align*}
Define the conditional miscoverage function on $\R^I$ by
\begin{equation}
  \label{eqn:conditional-miscoverage-func}
  \MCov (x) \defeq \E_{P_{0,I}}[\indic{S>\predsetthresh} \mid X_I=x],
\end{equation}
so we can express the sensitivity function as
\begin{align}
  \label{eqn:covsens-fcn}
  \SFcov (\rho, P_{0,I}) = \sup \left\{ \E_{P}[ \MCov(X_I) ] \mid  P \in \Pcovset(\rho, P_{0,I}) \right\},
\end{align}
as the covariate shift only affects the marginal distribution of $X_I \in
\R^I$ by assumption.

The goal is to leverage equation~\eqref{eqn:covsens-fcn} to build a
consistent estimator of the sensitivity function.  Given a sample $\{
\scorerv_i, X_{I,i} \}_{i=1}^n \simiid P_{0,I}$, a natural approach is to
follow a two step procedure, by first computing an estimate
$\what{\MC}_{n_1}^{(\predsetthresh)}$ of the miscoverage function using the
first $n_1$ points, and then approximating the sensitivity function with the
remaining $n_2 = n-n_1$ data points, forming the naive estimate
\begin{equation*}
  \what{\SF}_{\textup{naive},I}(\rho)
  \defeq \sup_{Q} \left\{ \E_{X_I \sim Q}\big[\what{\MC}_{n_1}^{(\predsetthresh)}(X_I)\big]
  \text{ s.t. } \fdivs{Q}{\hat{Q}_{n_2, I}}  \leq \rho \right\},
\end{equation*}
where $\hat{Q}_{n_2, I}$ is the empirical distribution of $\{ X_{I,i}
\}_{n_1 < i \le n}$.

Unfortunately, if
$\what{\MC}_{n_1}^{(\predsetthresh)}$ converges to
$\MC_{P_{0,I}}^{(\predsetthresh)}$ at a slower rate than $\sqrt{n}$, we
expect the same behavior from $\what\SF_{\textup{naive}, I}$, so we take a
different tack.
In the next section, we show instead how, given an additional (large)
sample of \emph{unlabeled} data $\{X_i\}_{i = 1}^N$, we can achieve a
$\sqrt{n}$-consistent asymptotically normal estimate of $\SFcov$ using a
debiasing correction~\citep{ChernozhukovChDeDuHaNeRo16,
  JeongNa20,SubbaswamyAdSa21}.\footnote{\label{footnote:hong-wrong}
Notably, \citet{JeongNa20} and
\citet{SubbaswamyAdSa21} perform sensitivity analyses to distribution shift
for various semiparametric functionals related to that here. We present
alternative results and proofs as their results appear to have incorrect
proofs. \citet[Thm.~1]{SubbaswamyAdSa21} builds off of \citet[Lemma
  14]{JeongNa20}, whose proof~\cite[Appendix C.3]{JeongNa20} appears to have
a mistake: in the final line of the proof, they use that their functionals
$\mu : \mc{X} \to \R$ of interest have densities uniformly bounded away from
0, but nowhere do they assume this or argue that it must hold.}
A trade-off is that our debiasing typically leads to a loss
of monotonicity of the
estimate $\what{\SF}_{\text{cov}, I, n}^{(\predsetthresh)}$ in the parameter
$\rho \ge 0$. For
clarity we focus on a particular limiting divergence, the
R\'{e}nyi $\infty$-divergence
\begin{align*}
  \infdiv{P}{Q} \defeq \lim_{k \to \infty} \frac{1}{k-1} \log \left\{ \int \left( \frac{dP(z)}{dQ(z)} \right)^k dQ(z) \right\} = \log \esssup_Q \left\{ \frac{dP}{dQ} \right\}.
\end{align*}
A quick calculation shows this corresponds to distribution balls of the form  
\begin{align*}
  \{ P : \infdiv{P}{P_{0,I}} \le \rho \} = \{P \mid
  \text{ there exists } P_{1, I}  \text{ s.t. } P_{0,I} = e^{-\rho} P +(1-e^{-\rho})P_{1,I} \},
\end{align*}
which offers a simpler dual representation for the sensitivity
function~\eqref{eqn:covsens-fcn}:
\begin{lemma}[Example 3, \citet{DuchiNa21}]
  \label{lemma:cvar-calc}
  Let $\Pcovset$ be defined via the
  R\'{e}nyi divergence $D_\infty$.
  Then the sensitivity function $\SFcov (\rho, P_{0,I})$ satisfies
  \begin{align*}
    \SFcov (\rho, P_{0,I}) =\inf_{\eta \in \R} \bigg\{
    e^{\rho}\E_{P_{0,I}}
    \left[
      \hinge{\MCov(X_I)-\eta}
      \right] +\eta
    \bigg\}.
  \end{align*}
\end{lemma}

\subsection{Cross-fit dual estimation of the sensitivity function}
\label{subsec:sens-f-inf-divergence}

In the general shift case, the finite sample estimator $\SF_\text{gen}(\rho,Q,\hat{P}_n)$ is $\sqrt{n}$-consistent for $\SF_\text{gen}(\rho,Q,P_0)$,  hence we wish to construct an estimator with an analogous guarantee for the covariate specific sensitivity function $\SFcov(\rho, P_{0,I})$.

For any pair of functions $h : \R^+ \times \R^I \to
\R$ and $m: \R^I \to \R$, define the augmentation function
$\Aug^{(\predsetthresh)}_{h,m} : \R_+ \times \R \times \R^I \to \R$ by
\begin{align*}
  \Aug^{(\predsetthresh)}_{h,m}(\rho,  s,  x)
  \defeq h(\rho, x) \left(\indic{s>\predsetthresh}- m(x) \right).
\end{align*}
Let $\qfunc_0(m,\rho) \defeq \argmin_{\eta \in \R}
\big\{e^{\rho}\E[\hinge{m(X_I)-\eta}] +\eta \big\}$ be the $1-e^{-\rho}$
quantile of $m(X)$ under $P_{0,I}$.  For shorthand, omit the subscript on
the miscoverage~\eqref{eqn:conditional-miscoverage-func} to write
${\MC}^{(\predsetthresh)} \equiv \MCov$, define $\qfunc^{(\predsetthresh)}(\rho)
\defeq \qfunc_0({\MC}^{(\predsetthresh)},\rho)$, and choose
$h^{(\predsetthresh)}(\rho, x) \defeq
e^{\rho}\indics{{\MC}^{(\predsetthresh)}(x)>
  \qfunc^{(\predsetthresh)}(\rho)}$.  Then
$\E_{P_{0,I}}[\Aug^{(\predsetthresh)}_{h^{(\predsetthresh)},
    {\MC}^{(\predsetthresh)}}(\rho, \scorerv, X_I)] = 0$, so for all $\rho
>0$, Lemma~\ref{lemma:cvar-calc} shows that
\begin{align}
  \label{eqn:covsens-fcn-aug}
  \SFcov (\rho, P_{0,I}) &= e^{\rho} \E\left[
    \hinge{{\MC}^{(\predsetthresh)}(X_I)- \qfunc^{(\predsetthresh)}(\rho)}\right]
  +\qfunc^{(\predsetthresh)}(\rho) +
  \E[\Aug^{(\predsetthresh)}_{h^{(\predsetthresh)}, {\MC}^{(\predsetthresh)}}(\rho, \scorerv,  X_I)].
\end{align}

Algorithm~\ref{alg:sensitivity1} proceeds by first estimating
$\MC^{(\predsetthresh)}$, $\qfunc^{(\predsetthresh)}$ and
$h^{(\predsetthresh)}$ successively, before leveraging
equation~\eqref{eqn:covsens-fcn-aug} to form a ``debiased" cross-fit
estimator of $\SFcov (\rho, P_{0,I})$.  As mentioned above, it assumes
access to a set of unlabeled examples $\{ X_{I,j} \}_{1\le j \le N}$ where
$N \gg n$, which we use to estimate $\qfunc^{(\predsetthresh)}$ from
${\MC}^{(\predsetthresh)}$. Intuitively, this allows us to accurately
estimate properties of $\sigma(X_I)$-measurable variables, and it is
reasonable in semi-supervised regimes where unsupervised examples are cheaper than labeled data.  Appendix~\ref{sec:cross-fit-sketch-proof-intuition} provides additional intuition on the introduction of the augmentation term $\Aug^{(\predsetthresh)}_{h,m}$.

\begin{algorithm}
  \caption{Covariate sensitivity estimation}
  \label{alg:sensitivity1}
  \begin{algorithmic}
    \STATE {\bf Input:} $\nBatch$-fold partition $\cup_{\batch=1}^\nBatch \mc{I}_\batch=[n]$ of $\{(S_i,X_{I,i})\}_{i=1}^n$ s.t. $\left| \mc{I}_\batch \right| =\frac{n}{\nBatch}$, unlabeled samples $\{ X_{I, j} \}_{1\le j \le N}$, fitting procedure $\mc{A} : \left( \R \times \R^I \right)^\star \to \{ \R^I \to \R\}$.

 \FOR{$\batch \in [\nBatch]$}

  \STATE Fit an estimator
$
  \what {\MC}^{(\predsetthresh)}_\batch \defeq \mc{A}\left( (S_i,X_{I,i})_{i \in \mc{I}_\batch^c} \right)
$ of the miscoverage function ${\MC}^{(\predsetthresh)}$.

  \STATE Compute the $e^{-\rho}$-approximate quantile of $\what {\MC}^{(\predsetthresh)}_\batch$ as
  \begin{equation}
    \label{eqn:quantile-est-unlabeled}
    \what \qfunc_\batch^{(\predsetthresh)}(\rho) \defeq
    \argmin_{\eta \in \R} \left\{ \sum_{j = 1}^N
    \frac{e^{\rho}}{N}
    \hinge{\what {\MC}^{(\predsetthresh)}_\batch(X_{I,j})-\eta} + \eta \right\}.
  \end{equation}

  \STATE Set $\what h^{(\predsetthresh)}_\batch(\rho, x) \defeq e^\rho \indics{ \what {\MC}^{(\predsetthresh)}_\batch(x) > \what \qfunc_\batch^{(\predsetthresh)}(\rho)}$.

    \STATE Compute the $b$-th fold augmented estimator
    \begin{equation}
      \label{eqn:sens-func-aug-estimator}
      \what{\SF}^{(q)}_{\batch,n}(\rho) \defeq  \frac{1}{\left| \mc{I}_\batch \right|} \sum_{i \in \mc{I}_\batch}  \left\{ e^{\rho}
      \hinge{\what{\MC}^{(\predsetthresh)}_\batch(X_{I,i})- \what \qfunc_\batch^{(\predsetthresh)}(\rho)} + \Aug^{(\predsetthresh)}_{\what h^{(\predsetthresh)}_\batch, \what {\MC}^{(\predsetthresh)}_\batch}(\rho, \scorerv_i, X_{I,i})  \right\} +  \what \qfunc_\batch^{(\predsetthresh)}(\rho).
    \end{equation}

   \ENDFOR
   \RETURN
    \begin{align}
    \label{eqn:final-sens-aug-estimator}
    \what{\SF}^{(\predsetthresh)}_n(\rho) \defeq \frac{1}{\nBatch} \sum_{\batch=1}^\nBatch \what{\SF}^{(\predsetthresh)}_{\batch,n}(\rho)
    \end{align}

  \end{algorithmic}
\end{algorithm}

\section{Empirical analysis}
\label{sec:exps}

Given the challenges arising in the practice of machine learning and
statistics, this paper argues that methodology equipping models with a
notion of validity in their predictions---\eg, conformalization procedures
as in this paper---is essential to any modern prediction pipeline.  Section
\ref{sec:motivation-exp} illustrates the need for these sorts of procedures,
showing that the standard conformal methodology is sensitive to even small
shifts in the data, through (semi-synthetic) experiments on data from the
UCI repository.  In Section~\ref{sec:robustpredinference}, we propose
methods for robust predictive inference, giving methodology that estimates
the amount of shift to which we should be robust.  Fuller justification
requires a more careful empirical study that highlights both the failures of
non-robust prediction sets on real data as well as the potential to handle
such shifts using the methodology here.  To that end, we turn to
experimental work:

\begin{itemize}
\item Section~\ref{sec:real-experiments} shows
  evaluation centered around the new MNIST, CIFAR-10, and ImageNet test
  sets. These datasets exhibit real-world distributional
  shifts, and we test whether our methodology of estimating
  plausible shifts is sufficient to provide coverage in these real-world shifts.
  \item In Appendix~\ref{sec:uci-experiments},
  we resume the evaluation of our own methodology on the
  semi-synthetic data from Section~\ref{sec:motivation-exp}.
\item In Appendix~\ref{sec:real-experiments-covid}, we consider a time
  series where the goal is to predict the fraction of people testing
  positive for COVID-19 throughout the United States.
\item In Appendix~\ref{sec:covariate-sensitivity},
  we apply Algorithm~\ref{alg:sensitivity1} to evaluate the sensitivity
  of predictive methods to individual covariate shifts.
\end{itemize}

\subsection{CIFAR-10, MNIST, and ImageNet datasets}
\label{sec:real-experiments}

We evaluate our procedures on the CIFAR-10,
ImageNet, and MNIST datasets~\cite{KrizhevskyHi09,
  RussakovskyDeSuKrSaMaHuKaKhBeBeFe15, LeCunCoBu98}, which
continue to play a central role in the evaluation of machine learning
methodology.
Concerns about overfitting to these benchmarks
motivate \citet*{RechtRoScSh19} to create new test sets for both
CIFAR-10 and ImageNet by carefully following the original dataset creation
protocol. Though these new test sets strongly resemble the original
datasets, as \citeauthor{RechtRoScSh19} observe, the natural variation
arising in the creation of the new test sets yield evidently significant
differences, giving organic dataset shifts on which to
evaluate our procedures.  Our goal here is to show that even when we do not know the actual amount of shift, our methodology from Section~\ref{sec:coverage-high-probability-over-shifts} can still give reasonable estimates of it that translate into marginal coverage on these datasets.

\begin{figure}[ht]
    \centering
    \includegraphics[scale=0.5]{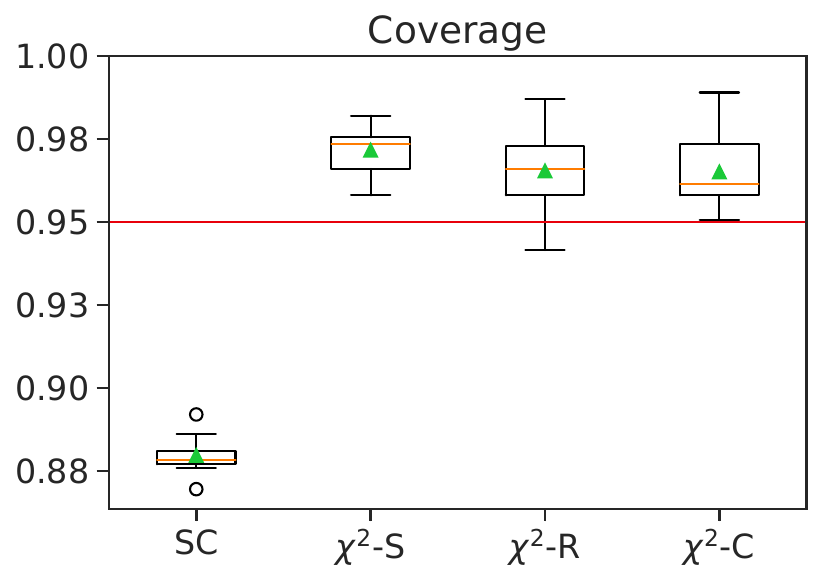}
    \hfill
    \includegraphics[scale=0.5]{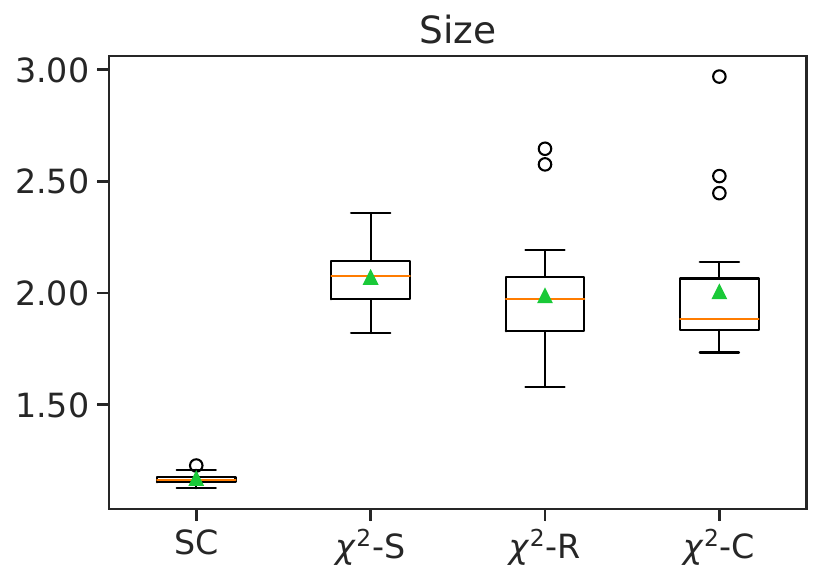}
    \caption{Empirical coverage and average size for the prediction sets
    generated by the standard conformal methodology (``SC'') and the chi-squared
    divergence, across 20 random splits of the CIFAR-10 data.  We set $\rho$
    according to the sampling (``$\chi^2$-S''), regression (``$\chi^2$-R''), and
    classification-based (``$\chi^2$-C'') strategies for estimating the amount
    of shift that we describe in Section in \ref{sec:futureshiftestimation}.
    The horizontal red line marks the marginal coverage $.95$.}
    \label{fig:cifar-10}
\end{figure}
\begin{figure}[ht]
    \centering
    \includegraphics[scale=0.5]{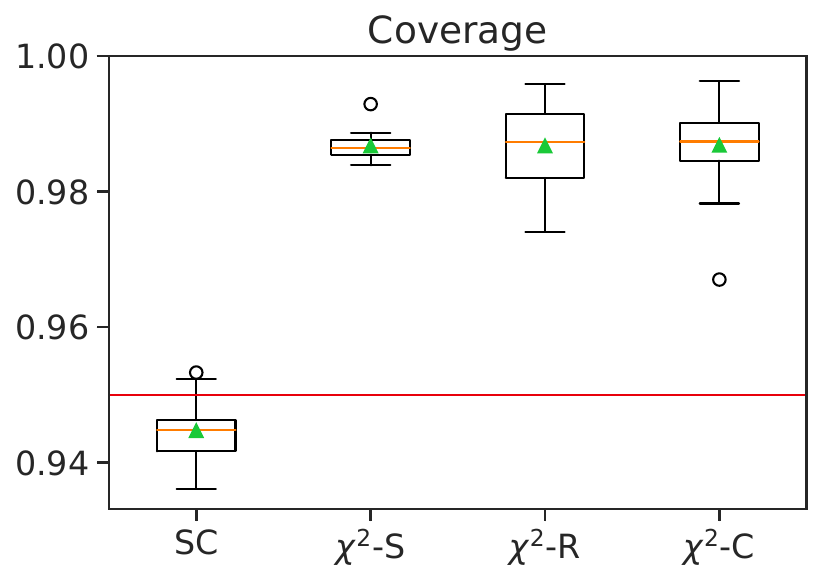}
    \hfill
    \includegraphics[scale=0.5]{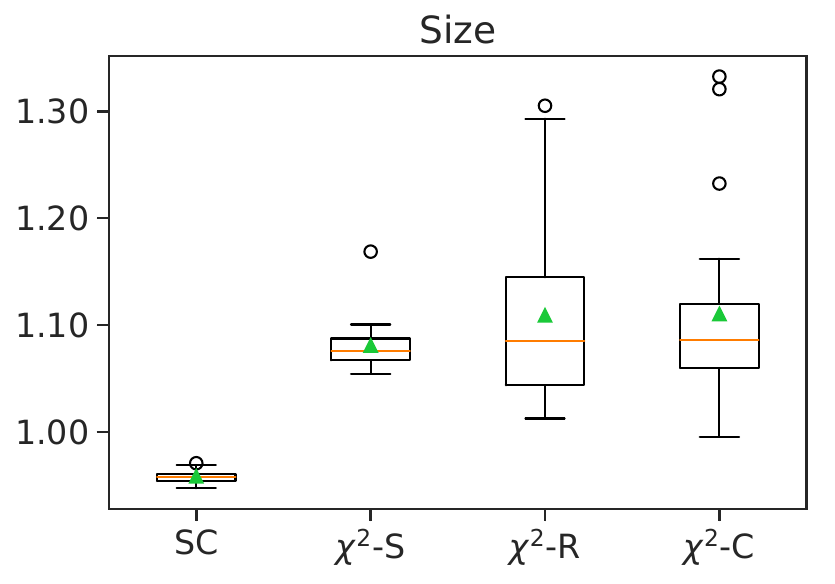}
    \caption{Empirical coverage and average size for the prediction sets
    generated by the standard conformal methodology (``SC'') and the chi-squared
    divergence, across 20 random splits of the MNIST data.  We set $\rho$
    according to the sampling (``$\chi^2$-S''), regression (``$\chi^2$-R''), and
    classification-based (``$\chi^2$-C'') strategies for estimating the amount
    of shift that we describe in Section in \ref{sec:futureshiftestimation}. The
    horizontal red line marks the marginal coverage $.95$.}
    \label{fig:mnist}
\end{figure}
\begin{figure}[ht]
    \centering
    \includegraphics[scale=0.5]{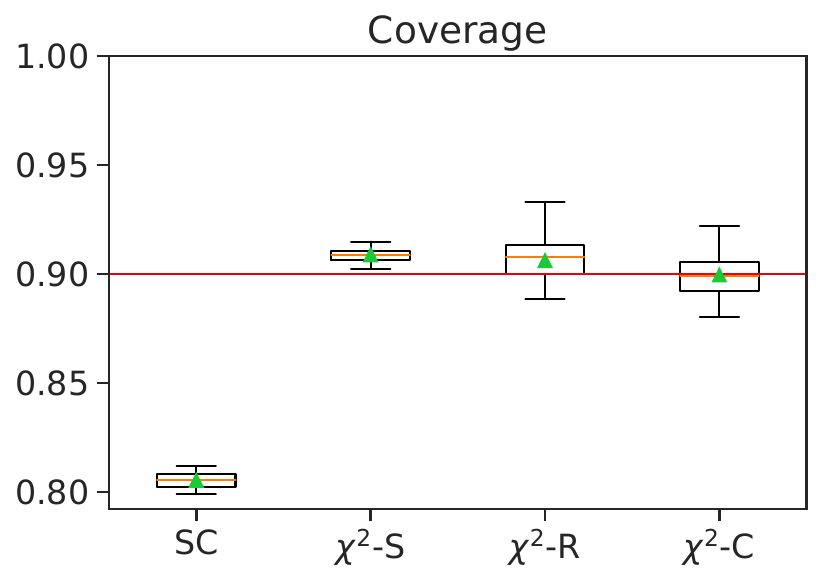}
    \hfill
    \includegraphics[scale=0.5]{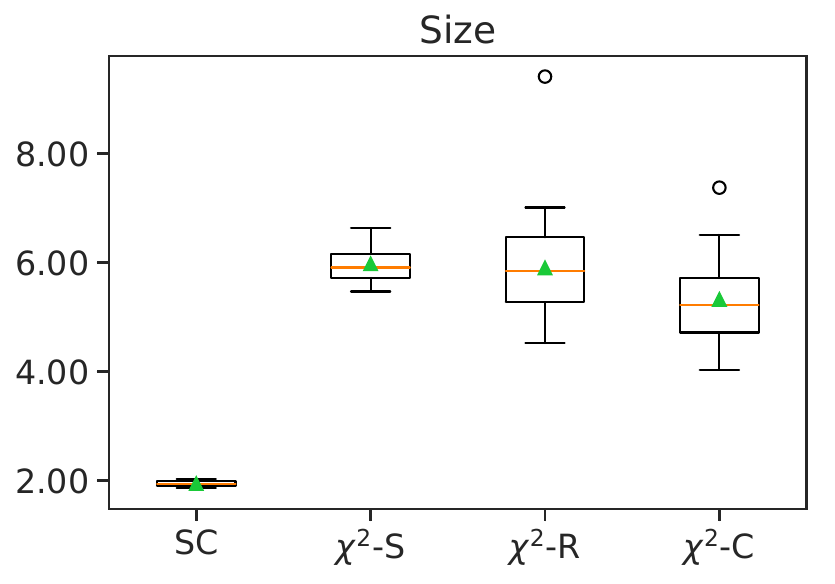}
    \caption{Empirical coverage and average size for the prediction sets
    generated by the standard conformal methodology (``SC'') and the chi-squared
    divergence, across 20 random splits of the ImageNet data.  We set $\rho$
    according to the sampling (``$\chi^2$-S''), regression (``$\chi^2$-R''), and
    classification-based (``$\chi^2$-C'') strategies for estimating the amount
    of shift that we describe in Section in \ref{sec:futureshiftestimation}. The
    horizontal red line marks the marginal coverage $.9$.}
    \label{fig:imagenet}
\end{figure}

We evaluate on the three datasets as follows.  We use 70\% of the
original CIFAR-10, MNIST, and ImageNet datasets for training, and treat the
remaining 30\% as a validation set.  We fit a standard
ResNet-50~\cite{HeZhReSu16} to the training data, and use the negative
log-likelihood $\score(x, y) = -\log p_\theta(y \mid x)$, where $p_\theta(y
\mid x)$ is the output of the (top) sigmoid layer of the network, as the
scoring function on the validation data for our conformalization procedures.
We compare our procedures to the split conformal methodology on three new
datasets nominally generated identically to the initial datasets: the
CIFAR-10.1~v4 dataset~\cite{RechtRoScSh19}, which consists of 2,000 
32$\times$32 images from 10 different classes; the QMNIST50K data,
which extends MNIST to consist of 50,000 28$\times$28 images
from 10 classes~\cite{YadavBo19}; and the ImageNetV2 Threshold0.7
data~\cite{RechtRoScSh19}, consisting of 10,000 images from 200 classes.  In
each test of robust predictive inference, we set the level of robustness to achieve the nominal coverage $\alpha = .05$ for the CIFAR-10 and MNIST datasets and $\alpha = .1$ for the ImageNet dataset, by using the data-driven strategies that we detail in
Section~\ref{sec:futureshiftestimation}: sampling directions of shift from
the uniform distribution on the unit sphere
(Alg.~\ref{alg:rho-selection-procedure}), estimating the shift direction via
regression (Alg.~\ref{alg:worst-direction-validation}) or via
classification, which replaces the regression step in
Alg.~\ref{alg:worst-direction-validation} with a support vector machine
(SVM) to separate the largest 50\% of scores $\score(X_i, Y_i)$ from the
smallest.
In contrast to our experiments from Section~\ref{sec:uci-experiments} with semi-synthetic data, we cannot compute the exact level of shift here; the
question is whether the provided methodology provides marginal coverage.

Figures \ref{fig:cifar-10}, \ref{fig:mnist}, and \ref{fig:imagenet} present
the results for each setup over 20 random splits of the data. As is apparent
from the figures, we see that the standard conformal methodology fails to
correctly cover.  As both the new CIFAR-10 and ImageNet test sets exhibit
larger degradations in classifier performance (increased error) than does
the MNIST test set~\cite{RechtRoScSh19}, we expect the failure of standard
conformal to be pronounced on these two datasets.  Indeed, the split
conformal method (Sec.~\ref{sec:split-conformal-intro}) provides especially
poor coverage on these datasets, where it yields average coverage .88
(instead of the nominal .95) and .8 (instead of the nominal .9) on the new
CIFAR-10 and ImageNet test sets, respectively.  On the other hand, our
inferential methodology consistently gives more coverage regardless of the
strategy used to estimate the amount of divergence $\rho$, with the sampling
strategy notably consistently delivering marginal coverage without
over-covering.  The uniformity in coverage across the three strategies is
notable, as our procedures for estimating the amount of shift assume some
structure for the underlying shift, which is unlikely to be consistent with
the provenance of the new test sets.

In our experiments, estimating the direction of shift using either
regression or classification (Alg.~\ref{alg:worst-direction-validation}) is
faster than sampling directions (Alg.~\ref{alg:rho-selection-procedure});
the former takes time $O(nd \min\{n, d\})$ and
the latter $O(k n d)$, where $k$ is the number
of sampled directions $v$, using a linear-time implementation for
computing the worst coverage (maximum density segment) along
a direction $v$~\cite{ChungLu03}.
The difference of course depends on the desired sampling frequency
$k$.


Finally, the aforementioned validity does not (apparently) come
with a significant loss in statistical efficiency:
Figures~\ref{fig:cifar-10}, \ref{fig:mnist}, and \ref{fig:imagenet} show
that our confidence sets are not substantially larger than those coming from
standard conformal inference---which may be somewhat surprising, given the
relatively large number of classes present in the ImageNet dataset.

\section{Discussion and conclusions}

We have presented methods and motivation for robust predicitve inference,
seeking protection against distribution shift. Our arguments and perspective
are somewhat different from the typical approach in distributional
robustness~\cite{ DuchiNa21, BlanchetKaMu19,
  SagawaKoHaLi20}, as we wish to maintain validity in prediction.  
A number of future directions and questions remain unanswered. Perhaps
the most glaring is to fully understand the ``right'' level of
robustness. While this is a longstanding problem~\cite{DuchiNa21}, we
present approaches to leverage the available validation data.  Alternatives
might be compare new covariates and test data
$X$ to the available validation data. \citet{TibshiraniBaCaRa19}
suggest an importance-sampling approach for this, reweighting
data based on likelihood ratios, which may sometimes be feasible
but is likely impossible in high-dimensional scenarios. It would be
interesting, for example, to use projections of the data
to match $X$-statistics on new test data, using this to
generate appropriate distributional robustness sets.
We hope that the perspective here inspires renewed consideration of
predictive validity.

\section{Disclosure statement:}

The authors report there are no competing interests to declare.



\newpage

\setcounter{page}{1}
\appendix

\begin{center}
{\large\bf SUPPLEMENTARY MATERIAL for Robust Validation: Confident Predictions Even When Distributions Shift}
\end{center}
\section{Theoretical developments on procedures for estimating future distribution shift}

\subsection{High-probability coverage over specific classes of shifts}
\label{sec:adaptive_procedure_shifts_high_prob_coverage}

\begin{assumption}[Score continuity]
  \label{assumption:continuity-scores-v}
  The distribution of the scores under $P_0$ is continuous.
\end{assumption}

\begin{theorem}
  \label{theorem:high-probability-coverage}
  Let $\what C$ be the prediction set
  Alg.~\ref{alg:rho-selection-procedure} returns. Assume
  that $\mc{R} = \bigcup_{v
    \in \mc{V}} \mc{R}_v$ has VC-dimension $\VC(\mc{R}) < \infty$.
  Then there exists a universal constant $c < \infty$ such that the
  following holds. For
  all $t > 0$, defining
  \begin{equation*}
    \alpha_{t,n}^\pm \defeq \alpha \pm c
    \sqrt{\frac{\VC(\mc{R}) \log n + t}{\delta n}},
    \; \textrm{ and }\;
    \delta_{t,n}^\pm = \delta \pm c \sqrt{\frac{\VC(\mc{R}) \log n + t}{n}},
  \end{equation*}
  then with probability at least $1 - e^{-t}$ over
  $\{X_i, Y_i\}_{i = 1}^n \simiid Q_0$ and
  $\{v_i\}_{i = 1}^k \simiid \Pv$,
  \begin{equation*}
    \Pv \Big( \wcoverage(\what C, \mc{R}_v, \delta_{t,n}^+; \, Q_0)
    \ge 1 - \alpha_{t,n}^+ \Big) \ge 1 - \levelv - c \sqrt{\frac{1 + t}{k}}.
  \end{equation*}
  If additionally Assumption~\ref{assumption:continuity-scores-v}
  holds, then
  \begin{align*}
    \Pv  \Big( \wcoverage(\what C, \mc{R}_v, \delta_{t,n}^-; \, Q_0)
    \le 1 - \alpha_{t,n}^-  \Big)
    \ge \levelv - \frac{1}{k} - c\sqrt{\frac{1 + t}{k}}.
  \end{align*}
\end{theorem}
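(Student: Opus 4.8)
The plan is to control two sources of randomness separately: the empirical distribution $\empQ$ (equivalently $\empP$) built from the $n$ samples, and the empirical distribution $\Pvemp$ over the $k$ sampled directions. The key structural fact I would exploit is Proposition~\ref{proposition:rho-vs-alpha}: because $\WCQuantile_{f,\rho}(1-\alpha;P) = \Quantile(g_{f,\rho}^{-1}(1-\alpha);P)$, the robust prediction set $C_{f,\hat\rho_\delta}(\cdot;\empP)$ is just a plain level set $C^{(\hat q_\delta)}$ for the data-dependent threshold $\hat q_\delta$ defined in~\eqref{eqn:rhohat-criterion}; so it suffices to prove coverage statements for $\hat q_\delta$ directly.

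First I would establish a uniform (Vapnik--Chervonenkis) deviation bound over $\mc{R}$. Since $\VC(\mc{R})<\infty$, standard empirical-process theory gives, with probability $\ge 1-e^{-t}/2$, that $\sup_{R\in\mc{R}}|\empQ(X\in R)-Q_0(X\in R)|\lesssim\sqrt{(\VC(\mc{R})\log n+t)/n}$, which handles the $\delta_{t,n}^\pm$ terms. For the coverage itself I would apply the same kind of bound to the richer class $\{R\times(-\infty,q] : R\in\mc{R}, q\in\R\}$ (still finite VC-dimension, since thresholds on one coordinate add $O(1)$ to the VC-dimension); this controls $\sup_{R,q}|\empQ(Y\in C^{(q)}(X),X\in R) - Q_0(\cdots)|$. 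Dividing through by $Q_0(X\in R)\ge\delta$ (for the relevant sets) converts the additive error into the $\alpha_{t,n}^\pm$ slack: on this good event, for any fixed $v$, $\wcoverage(C^{(q)},\mc{R}_v,\delta^+_{t,n};\empQ)\ge1-\alpha$ implies $\wcoverage(C^{(q)},\mc{R}_v,\delta;Q_0)\ge1-\alpha^+_{t,n}$, and the reverse containment direction holds under Assumption~\ref{assumption:continuity-scores-v} (continuity is needed so that $\Quantile$ and the events $\{Y\in C^{(q)}(X)\}$ have no atoms at the boundary, making the inequalities tight rather than merely one-sided). At this point, by the defining property~\eqref{eqn:rhohat-criterion} of $\hat q_\delta$, the empirical event $\Pvemp(\wcoverage(C^{(\hat q_\delta)},\mc{R}_v,\delta;\empQ)\ge1-\alpha)\ge1-\levelv$ holds, so the population-level event $\Pvemp(\wcoverage(C^{(\hat q_\delta)},\mc{R}_v,\delta^+_{t,n};Q_0)\ge1-\alpha^+_{t,n})\ge1-\levelv$ holds as well.

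Second I would transfer from $\Pvemp$ to $\Pv$. The quantity $\wcoverage(\what C,\mc{R}_v,\delta^+_{t,n};Q_0)$ is, for fixed $\what C$, a bounded measurable function of $v$, so $\Pvemp(\text{coverage}\ge1-\alpha^+_{t,n})$ is an average of i.i.d.\ indicators with mean $\Pv(\text{coverage}\ge1-\alpha^+_{t,n})$; a DKW-type or Bernstein bound gives $|\Pvemp(\cdot)-\Pv(\cdot)|\lesssim\sqrt{(1+t)/k}$ with probability $\ge1-e^{-t}/2$. Combining with the previous paragraph yields the first claimed bound. For the lower-tail (second) claim, I would run the same argument in the other direction: under Assumption~\ref{assumption:continuity-scores-v}, $\hat q_\delta$ is (up to the $1/k$ discretization coming from the $k$ atoms of $\Pvemp$ — one does not get exactly $1-\levelv$ but rather within $1/k$) the smallest threshold at which the empirical high-probability event holds, so $\Pvemp(\wcoverage(C^{(\hat q_\delta)},\mc{R}_v,\delta;\empQ)<1-\alpha) > \levelv - 1/k$; pushing this to the population with the two deviation bounds produces the stated $\levelv - 1/k - c\sqrt{(1+t)/k}$.

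The main obstacle, I expect, is bookkeeping the discretization and one-sidedness carefully: the infimum in~\eqref{eqn:rhohat-criterion} over a step function of $q$ means $\hat q_\delta$ satisfies the empirical constraint but only barely, and the reverse direction ($\hat q_\delta$ is not too large) requires continuity of the score distribution to rule out flat stretches of the empirical quantile function — this is exactly where Assumption~\ref{assumption:continuity-scores-v} enters and why the second bound, unlike the first, is conditional on it. A secondary subtlety is that the relevant VC class for the coverage bound is the class of products $R\times(-\infty,q]$ rather than $\mc{R}$ itself; I would note that this only changes the VC-dimension by an additive constant and absorb it into the universal $c$. One also has to be slightly careful that the division by $Q_0(X\in R)$ is legitimate precisely on the sets with $Q_0(X\in R)\ge\delta$ (or $\ge\delta^\pm_{t,n}$), which is why the $\delta$ in the worst-coverage definition gets perturbed to $\delta^\pm_{t,n}$ in the conclusion.
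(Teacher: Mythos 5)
Your overall plan matches the paper's: (i) a uniform VC deviation bound over the $n$ data points to convert the algorithm's empirical criterion into population coverage at the perturbed levels $(\delta^\pm_{t,n},\alpha^\pm_{t,n})$, (ii) a uniform deviation bound over the $k$ sampled directions to pass from $\Pvemp$ to $\Pv$, and (iii) invoking Assumption~\ref{assumption:continuity-scores-v} and the $1/k$ atom size of $\Pvemp$ for the second, anti-overcoverage bound. Two points need tightening before the argument actually closes.

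First, a sign error in the perturbation bookkeeping: the implication you state, ``$\wcoverage(C^{(q)},\mc{R}_v,\delta^+_{t,n};\empQ)\ge1-\alpha$ implies $\wcoverage(C^{(q)},\mc{R}_v,\delta;Q_0)\ge1-\alpha^+_{t,n}$,'' is false as written, since there can be $R\in\mc{R}_v$ with $Q_0(X\in R)\ge\delta$ but $\empQ(X\in R)<\delta^+_{t,n}$, so the empirical constraint is silent about such $R$. What you actually need (and what your very next sentence uses) is the opposite placement: $\wcoverage(C^{(q)},\mc{R}_v,\delta;\empQ)\ge1-\alpha$ implies $\wcoverage(C^{(q)},\mc{R}_v,\delta^+_{t,n};Q_0)\ge1-\alpha^+_{t,n}$, valid because $\{R:Q_0(X\in R)\ge\delta^+_{t,n}\}\subset\{R:\empQ(X\in R)\ge\delta\}$ on the VC good event. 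Second, and more substantively: the transfer from $\Pvemp$ to $\Pv$ cannot be made by treating $\what C$ as ``fixed'' and the indicators as i.i.d.\ with the stated mean, because $\hat q_\delta$ is a function of $\{v_j\}_{j=1}^k$; conditioning on $\what C$ destroys the i.i.d.\ structure, and a pointwise Bernstein/Hoeffding bound on a single data-dependent indicator is circular. The repair is the one you gesture at with ``DKW-type,'' but it must be made explicit: for each $v$, $q\mapsto\indic{\wcoverage(C^{(q)},\mc{R}_v,\delta^+_{t,n};Q_0)\ge1-\alpha^+_{t,n}}$ is nondecreasing, so the class $\{f^+_q\}_{q\in\R}$ has VC dimension at most one, giving a \emph{uniform} bound $\sup_q\lvert\Pvemp f^+_q - \Pv f^+_q\rvert\lesssim\sqrt{(1+t)/k}$, into which one then substitutes $q=\hat q_\delta$ (and analogously for $f^-_q$). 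This is the load-bearing step in the paper's proof and the one your write-up glosses over.
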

\noindent
See Appendix~\ref{sec:proof-high-probability-coverage} for a proof of
the theorem.

Theorem~\ref{theorem:high-probability-coverage} shows that
Procedure~\ref{alg:rho-selection-procedure} approaches uniform $1-\alpha$
coverage if the subsets in $\mc{R}$ have finite VC-dimension. More
precisely, the estimate $\hat{\rho}_\delta$ almost achieves the randomized
worst-case coverage~\eqref{eqn:worst-quantiled-coverage}: with probability
nearly $1-\levelv$ over the direction $v \sim \Pv$, $\what{C}$ provides
coverage at level $1-\alpha - O(1/\sqrt{n})$ for all shifts $Q_R$ (as in
Eq.~\eqref{eqn:distribution-shifts-covariate}) satisfying $R \in \mc{R}_v$
and $Q_0(X \in R) \ge \delta$. The second statement in the theorem
is an insurance against drastic overcoverage:
while we cannot guarantee that the
worst coverage is always no more than $1 - \alpha$, we can guarantee
that---if the scores are continuous---then the empirical set $\what{C}$
has worst coverage \emph{no more} than $1 - \alpha + O(1/\sqrt{n})$
for at least a fraction $\levelv$ of directions $v \sim \Pv$.
In a sense, this is unimprovable: if the worst coverage
$W = \wcoverage(C, \mc{R}_v, \delta; Q_0)$ is continuous
in $v$, the best we could expect is that
$\Pv(W \ge 1 - \alpha) = 1 - \levelv$ while
$\Pv(W < 1 - \alpha) = \levelv$.

As a last remark, we note that when the scores are distinct,
there is a complete equivalence between thresholds $q$ and
divergence levels $\rho$ in Algorithm~\ref{alg:rho-selection-procedure};
see Appendix~\ref{sec:proof-equivalence-rho-threshold} for a proof.
\begin{lemma}
  \label{lemma:equivalence-rho-threshold}
  Assume that the scores $\score(X_i, Y_i)$ are all distinct.
  Define $\rho_{f, \alpha}(q; P) \defeq \sup\{\rho \ge 0 \mid
  \WCQuantile_{f,\rho}(1 - \alpha; P) \le q\}$ and let
  $\what{\rho}_\delta = \rho_{f,\alpha}(\what{q}_\delta, \empP)$. Then
  $C^{(\what{q}_\delta)} = C_{f, \what{\rho}_\delta}(\cdot; \empP)$.
\end{lemma}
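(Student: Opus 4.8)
\emph{Reduction to a scalar identity.}
By the definition~\eqref{eqn:robust-prediction-set}, $C_{f,\rho}(x;P)=C^{(\WCQuantile_{f,\rho}(1-\alpha;P))}(x)$, so it suffices to prove the scalar identity $\WCQuantile_{f,\what\rho_\delta}(1-\alpha;\empP)=\what q_\delta$; then the two maps coincide term by term. (One cannot hope for less: since $\what q_\delta$ will be shown to be one of the distinct observed scores, distinct thresholds produce distinct maps $x\mapsto C^{(\cdot)}(x)$ on $\mc X$, as the point attaining $\what q_\delta$ is separated by them.) So the whole argument comes down to this one equation.

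\emph{Locating $\what q_\delta$.}
For any direction $v$ and empirical measure $\what Q_n$, the quantity $\wcoverage(C^{(q)},\mc R_v,\delta;\what Q_n)$ depends on $q$ only through the set $\{i:\score(X_i,Y_i)\le q\}$ of validation points contained in $C^{(q)}$; hence it is nondecreasing and right-continuous in $q$ and constant between consecutive observed scores, and so is the event inside~\eqref{eqn:rhohat-criterion} after averaging over $\Pvemp$. Thus the infimum defining $\what q_\delta$ is attained at an observed score. Writing $\score_{(1)}<\cdots<\score_{(n)}$ for the order statistics and noting that for each $v$ the class $\mc R_v$ of Example~\ref{ex:slab-euclideanballs} contains a set covering all validation points --- on which the conditional coverage of $C^{(q)}$ equals the empirical c.d.f.\ $\empP(\score\le q)$ --- feasibility of $q$ in~\eqref{eqn:rhohat-criterion} forces $\empP(\score\le q)\ge 1-\alpha$. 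Hence $\what q_\delta=\score_{(j^\star)}$ for some $j^\star$ with $\lceil n(1-\alpha)\rceil\le j^\star\le n$, i.e.\ $\Quantile(1-\alpha;\empP)\le\what q_\delta\le\score_{(n)}$.

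\emph{The worst-case quantile as a function of $\rho$.}
Since $\empP$ puts mass $1/n$ on the $n$ distinct scores, Proposition~\ref{proposition:rho-vs-alpha} gives $\psi(\rho):=\WCQuantile_{f,\rho}(1-\alpha;\empP)=\Quantile(g_{f,\rho}^{-1}(1-\alpha);\empP)=\score_{(\lceil n\,g_{f,\rho}^{-1}(1-\alpha)\rceil)}$. The scalar map $\rho\mapsto g_{f,\rho}^{-1}(1-\alpha)$ is nondecreasing (the divergence balls grow with $\rho$), equals $1-\alpha$ at $\rho=0$, and increases to $1$ as $\rho\to\infty$: in the one-dimensional characterization of $g_{f,\rho}^{-1}$ recorded after Proposition~\ref{proposition:rho-vs-alpha}, $1$-coercivity of $f$ forces the term $(1-\beta)f(\alpha/(1-\beta))$ to diverge as $\beta\uparrow 1$, so the constraint holds up to $\beta$ arbitrarily close to $1$ once $\rho$ is large, while it is finite for every fixed $\beta<1$. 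Combined with continuity of $\rho\mapsto g_{f,\rho}^{-1}(1-\alpha)$ --- which I would read off from the structure of $g_{f,\rho}$ there (it is immediate when $f$ is strictly convex, as for the running examples) --- the map $\psi$ is nondecreasing, lower semicontinuous, and attains \emph{every} order statistic $\score_{(j)}$ with $\lceil n(1-\alpha)\rceil\le j\le n$; in particular $\what q_\delta$ lies in its range.

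\emph{Conclusion, and the main obstacle.}
By definition $\what\rho_\delta=\rho_{f,\alpha}(\what q_\delta;\empP)=\sup\{\rho\ge 0:\psi(\rho)\le\what q_\delta\}$. This set is downward closed, nonempty (it contains $\rho=0$ since $\psi(0)=\Quantile(1-\alpha;\empP)\le\what q_\delta$), and closed by lower semicontinuity of $\psi$; hence its supremum $\what\rho_\delta$ belongs to it, so $\psi(\what\rho_\delta)\le\what q_\delta$. Conversely $\what q_\delta=\psi(\rho_1)$ for some $\rho_1\le\what\rho_\delta$ and $\psi$ nondecreasing give $\psi(\what\rho_\delta)\ge\what q_\delta$. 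Thus $\WCQuantile_{f,\what\rho_\delta}(1-\alpha;\empP)=\what q_\delta$, and the first paragraph yields $C^{(\what q_\delta)}=C_{f,\what\rho_\delta}(\cdot;\empP)$. The step I expect to be the real work --- and where the "distinct scores" hypothesis does more than bookkeeping --- is the claim in the third paragraph that $\psi$ sweeps the order statistics without gaps and that the supremum defining $\what\rho_\delta$ is attained; both reduce to continuity of $\rho\mapsto g_{f,\rho}^{-1}(1-\alpha)$, which is where any regularity of $f$ beyond the standing $1$-coercivity enters and which should be extracted from the analysis underlying Proposition~\ref{proposition:rho-vs-alpha}.
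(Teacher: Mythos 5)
Your argument is essentially the paper's, merely wrapped in the step function $\psi(\rho) = \WCQuantile_{f,\rho}(1-\alpha; \empP)$ rather than expressed directly through $g^{-1}_{f,\rho}(1-\alpha)$; your extra ``Locating $\what{q}_\delta$'' paragraph makes explicit something the paper silently assumes (the paper only proves the claim ``if $\what{q} \in \{\scorerv_i\}_{i \ge \ceil{n(1-\alpha)}}$''), though note that your verification relies on each $\mc{R}_v$ containing a set of full empirical mass, which holds for slabs and balls but is an implicit hypothesis on $\mc{R}$ in general. The one step you flag at the end --- continuity of $\rho \mapsto g^{-1}_{f,\rho}(1-\alpha)$, which is what makes $\psi$ sweep the order statistics $\{\score_{(j)}\}_{j \ge \ceil{n(1-\alpha)}}$ without gaps and makes $\{\rho \ge 0 : \psi(\rho) \le \what{q}_\delta\}$ closed so that its supremum is attained --- is real but short, and the paper closes it with the observation you anticipate, by concavity rather than strict convexity: since $(\rho, \beta) \mapsto g_{f,\rho}(\beta)$ is jointly convex (Lemma~\ref{lemma:properties-robust-cdf}(a)), the sublevel set $\{(\rho, \beta) : g_{f,\rho}(\beta) \le \tau\}$ is convex, so $g^{-1}_{f,\rho}(\tau)$, being its upper boundary in $\beta$, is a concave nonnegative function of $\rho$, finite on all of $\R_+$ by $1$-coercivity of $f$ and therefore continuous on $\R_{++}$ and strictly increasing there. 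With this in hand, your two-sided squeeze on $\psi(\what{\rho}_\delta)$ goes through exactly as written.
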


\subsection{Population-level consistency of the worst direction}
\label{subsec:population-level-consistency}
The consistency of Algorithm~\ref{alg:worst-direction-validation} with the adequate worst-direction estimation procedure $\mc{M}$ requires strong
assumptions, somewhat oppositional to the distribution-free coverage we seek
(though again we still have the distributionally robust protections).  Yet
it is still of interest to understand conditions under which Alg.~\ref{alg:worst-direction-validation} is consistent; as we show here,
in examples such as heteroskedastic regression
(Ex.~\ref{example:heterogeneous-regression}), this holds.
We turn to our assumptions.


A challenge is that the worst direction $v\subopt(C^{(q,\score)})$ may vary substantially in $q$.  One condition sufficient to ameliorate this reposes on stochastic orders, where for random variables $U$ and $V$ on
$\R^d$, we say $U$ stochastically dominates in the \emph{upper orthant order} $V$, written $U \stocuo V$,
if $\P(U \ge t) \ge \P(V \ge t)$ for all $t \in \R^d$ (see~\cite[Ch.~6]{ShakedSh07}, where this is called the
\emph{usual stochastic order}).  
Letting $\mc{L}$ denote the law of a random variable, we write $\law(U) \stocuo \law(V)$ if $U \stocuo V$.

\begin{assumption}
  \label{assumption:stochastic-dominance}
  There is a direction $v\opt \in \mc{V}$ such that $v\opt(X)$ has a continuous distribution and,  for all $v \in \mc{V}$, 
  \begin{equation*}
    \left( \score(X,Y), F_{v\opt}(v\opt(X)) \right) \stocuo  \left( \score(X,Y), F_{v}^-(v(X)) \right).
  \end{equation*}
\end{assumption}
\noindent
The intuition is that covariate shifts in direction
$v\opt$ not only increase nonconformity,  but that $v\opt$ is the worst such direction.
Assumption~\ref{assumption:stochastic-dominance} focuses on the dependence (copula) between $\score(X,Y)$ and $v(X)$, when $v$ ranges over all potential directions of shift in $\mc{V}$, and states that $v\opt(X)$ and $\score(X,Y)$ are more likely to take on larger values together. 
It only characterizes $v\opt$ up to an increasing transformation,  which is desirable as any such transformation leaves the collection $\mc{R}_v$ of upper-level sets invariant.

Under Assumption~\ref{assumption:stochastic-dominance}, confidence sets share the same worst shift $v\opt$:
\begin{lemma}
  \label{lemma:stochastic-domination-direction}
  Let Assumption~\ref{assumption:stochastic-dominance} hold. Then $v\opt$ is a worst shift~\eqref{eqn:worst-shift-def} for the confidence set~\eqref{eqn:confidence-set-from-score}, i.e. $v\opt \in v\subopt(C^{(q, \score)})$ for all $q \in \R$.
\end{lemma}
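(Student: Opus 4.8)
The plan is to show that for each fixed threshold $q$, the worst-coverage functional $v \mapsto \wc(C^{(q,\score)}, \mc{R}_v, \delta; Q_0)$ is minimized at $v\opt$. Fix $q$ and write $C = C^{(q,\score)}$, so $C(x) = \{y \mid \score(x,y) \le q\}$ and $Q_0(Y \in C(X) \mid X \in R) = Q_0(\score(X,Y) \le q \mid X \in R)$. Because $\mc{R}_v = \{\{x : v^T x \ge a\}\}_{a \in \R}$ consists of half-spaces, each constraint $Q_0(X \in R) \ge \delta$ translates into $Q_0(v^T X \ge a) \ge \delta$, i.e.\ $a$ is at most the $(1-\delta)$-quantile of $v^T X$. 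Since $Q_0(\score(X,Y) \le q \mid v^T X \ge a)$ is (by the first part of Assumption~\ref{assumption:stochastic-dominance} applied with $v\opt$, and more generally we will only need it along the optimal direction) non-increasing in $a$ whenever the conditioning direction is $v\opt$, the infimum over $R \in \mc{R}_v$ with $Q_0(X\in R)\ge\delta$ is attained at the largest feasible $a$, namely $a_v := \Quantile(1-\delta; \law(v^T X))$. Thus
\begin{equation*}
  \wc(C, \mc{R}_v, \delta; Q_0) = Q_0\big(\score(X,Y) \le q \mid v^T X \ge a_v\big),
\end{equation*}
where the continuity of $\law(v\opt{}^T X)$ guarantees $Q_0(v\opt{}^T X \ge a_{v\opt}) = \delta$ exactly, so the constraint is active with no slack at $v\opt$.

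Next I would invoke the second part of Assumption~\ref{assumption:stochastic-dominance}: for any $u \in \mc{V}$ and the threshold $a_u$ with $\P(u^T X \ge a_u) \ge \delta = \P(v\opt{}^T X \ge a_{v\opt})$, we have the stochastic-dominance relation $\law(\score(X,Y) \mid v\opt{}^T X \ge a_{v\opt}) \succeq \law(\score(X,Y) \mid u^T X \ge a_u)$. Stochastic dominance $U \succeq V$ means $\P(U \ge t) \ge \P(V \ge t)$ for all $t$, equivalently $\P(U \le t) \le \P(V \le t)$ for all $t$; applying this at $t = q$ gives
\begin{equation*}
  Q_0\big(\score(X,Y) \le q \mid v\opt{}^T X \ge a_{v\opt}\big)
  \le Q_0\big(\score(X,Y) \le q \mid u^T X \ge a_u\big),
\end{equation*}
i.e.\ $\wc(C, \mc{R}_{v\opt}, \delta; Q_0) \le \wc(C, \mc{R}_u, \delta; Q_0)$ for every $u \in \mc{V}$. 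Hence $v\opt$ minimizes the worst coverage, and since this holds for every $q$, we conclude $v\subopt(C^{(q,\score)}) = v\opt$. (If the minimizer is not unique one takes $v\subopt$ to be any minimizer; the statement should be read as ``$v\opt$ attains the minimum,'' which is what Assumption~\ref{assumption:stochastic-dominance} delivers.)

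The main obstacle is the first reduction step: justifying that the infimum over $R \in \mc{R}_v$ is attained at the extreme threshold $a_v = \Quantile(1-\delta; \law(v^T X))$ for a \emph{general} direction $v$, not just $v\opt$. Assumption~\ref{assumption:stochastic-dominance} as stated only asserts monotonicity of the conditional law along $v\opt$, so for an arbitrary $u$ one does not immediately know that $Q_0(\score \le q \mid u^T X \ge a)$ is monotone in $a$. The clean way around this is to observe that we never actually need that monotonicity for general $u$: for the comparison we only need a \emph{lower bound} on $\wc(C, \mc{R}_u, \delta; Q_0)$, i.e.\ an upper bound on $\inf_{a} Q_0(\score \le q \mid u^T X \ge a)$ over feasible $a$, and any single feasible choice of $a$ — in particular $a_u = \Quantile(1-\delta;\law(u^T X))$, which is feasible since it gives mass $\ge \delta$ — furnishes such an upper bound; combined with the displayed dominance inequality this already yields $\wc(C,\mc{R}_{v\opt},\delta;Q_0) \le Q_0(\score \le q \mid u^T X \ge a_u)$, and taking the infimum over $a$ on the right (hence over $R \in \mc{R}_u$) preserves the inequality only if we are careful: we want $\wc(C,\mc{R}_{v\opt},\delta;Q_0) \le \wc(C,\mc{R}_u,\delta;Q_0) = \inf_a(\cdots)$, so we in fact need the dominance to hold against \emph{every} feasible $a$ for $u$, which is exactly the ``for all $u \in \mc{V}, t \in \R$'' quantifier in the assumption. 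So the assumption is tailored precisely to close this gap; the proof is then a matter of carefully matching the quantifiers, and the continuity hypothesis on $\law(v\opt{}^T X)$ is what makes the constraint $Q_0(X \in R) \ge \delta$ active at $v\opt$ with equality so the two sides are genuinely comparable.
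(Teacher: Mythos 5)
Your proof is correct and lands on the same core step as the paper's: feed the second half of Assumption~\ref{assumption:stochastic-dominance} the fact that any feasible halfspace $\{u^T x \ge a\}$ has mass at least $\delta$, compare it against a feasible halfspace along $v\opt$, and take the infimum. The paper's write-up is a bit leaner in one respect: it never invokes the first part of Assumption~\ref{assumption:stochastic-dominance} (the monotonicity in $\tau$), and hence never needs to argue that the worst-case along $v\opt$ is \emph{attained} at the extreme threshold $a_{v\opt}$. Instead, for each comparator $(u,t)$ it uses the continuity of $\law(v\opt{}^T X)$ to pick a $\tau$ with $\P(v\opt{}^T X \ge \tau) = \P(u^T X \ge t)$, so the halfspace $\{v\opt{}^T x \ge \tau\}$ is automatically feasible and the dominance applies directly; this delivers $\wcoverage(C^{(q)}, \mc{R}_{v\opt}, \delta; Q_0) \le Q_0(\score \le q \mid u^T X \ge t)$ without ever identifying the minimizer along $v\opt$. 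Your version fixes $\tau = a_{v\opt}$ once and for all (so that $\P(v\opt{}^T X \ge a_{v\opt}) = \delta$ is the least mass compatible with the constraint), which is also valid but makes the first part of the assumption look load-bearing when it isn't --- you only need $\wcoverage(C^{(q)}, \mc{R}_{v\opt}, \delta; Q_0) \le Q_0(\score \le q \mid v\opt{}^T X \ge a_{v\opt})$, i.e.\ feasibility of $a_{v\opt}$, not equality. Also, one small slip in your closing discussion: a lower bound on $\wcoverage(C, \mc{R}_u, \delta; Q_0) = \inf_a Q_0(\score \le q \mid u^T X \ge a)$ is a lower bound on that infimum, not an upper bound as you wrote; you correct course immediately afterwards by noting the dominance must hold against \emph{every} feasible $a$, which is indeed what the ``for all $u, t$'' quantifier buys you, and that is exactly how the paper closes the argument.
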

\noindent
We present the (nearly immediate) proof of
Lemma~\ref{lemma:stochastic-domination-direction} in
Appendix~\ref{sec:proof-stochastic-domination-direction}.
While Assumption~\ref{assumption:stochastic-dominance}
is admittedly strong, the next lemma (whose proof we provide in
Appendix~\ref{sec:proof-heterogeneous-regression-verifies}) shows
that it holds for linear shifts in the heteroskedastic regression
case of Example~\ref{example:heterogeneous-regression}.
\begin{lemma}
  \label{lemma:heterogeneous-regression-verifies}
  Assume the regression model of
  Example~\ref{example:heterogeneous-regression},
  $Y = \mu\opt(X) + h(X^T \vvar) \noise$,
  with noncomformity score
  \begin{equation*}
    \score(x, y) = (y - \mu\opt(x))^2
    ~~ \mbox{or} ~~
    \score(x, y) = |y - \mu\opt(x)|,
  \end{equation*}
  and let $\mc{V} = \{ x \mapsto v^T x \mid v \neq 0\}$ be the set
  of linear functions.  If $v^T X$ has a continuous distribution whenever $v
  \neq 0$, then $v\opt = \vvar$ satisfies
  Assumption~\ref{assumption:stochastic-dominance}.
\end{lemma}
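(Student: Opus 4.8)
The plan is to exploit the explicit product structure of the nonconformity score in Example~\ref{example:heterogeneous-regression}, which reduces both parts of Assumption~\ref{assumption:stochastic-dominance} to one-dimensional comparisons. First I would normalize so that $\ltwo{\vvar} = 1$ by absorbing the scale of $\vvar$ into $h$: if $\vvar = 0$ the score does not depend on $X$ and every statement is trivial, and otherwise replacing $\vvar$ by $\vvar/\ltwo{\vvar}$ and $h$ by $h(\ltwo{\vvar}\,\cdot)$ keeps $h$ nonnegative and nondecreasing. Writing $W \defeq X^T\vvar$, each of the two score choices then has the form
\begin{equation*}
  \score(X, Y) = \phi(W)\,\eta,
  \qquad (\phi, \eta) = (h^2,\, \noise^2)
  \ \text{ or }\ (h,\, |\noise|),
\end{equation*}
according to whether $\score$ is the squared or absolute residual, with $\phi : \R \to \R_+$ nondecreasing (for $\phi = h^2$ using $h \ge 0$) and $\eta \ge 0$ independent of $X$. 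Since any event $\{X^T u \ge t\}$ is $\sigma(X)$-measurable and $\noise \perp X$, conditioning on it leaves $\eta$ independent of $W$ and with its original (unconditional) law; hence $\law(\score(X,Y) \mid X^T u \ge t)$ is exactly the law of $\phi(W_{u,t})\,\eta$, where $W_{u,t} \eqdist (W \mid X^T u \ge t)$ and $\eta$ is an independent copy, and likewise $\law(\score(X,Y) \mid W \ge \tau)$ is the law of $\phi(W_\tau)\,\eta$ with $W_\tau \eqdist (W \mid W \ge \tau)$.

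The single structural fact I would isolate is that stochastic dominance is preserved by these operations: if $A \succeq B$, $\phi$ is nondecreasing, and $\eta \ge 0$ is independent of $A$ and $B$, then $\phi(A)\eta \succeq \phi(B)\eta$. Indeed $\{x : \phi(x) \ge c\}$ is a half-line for each $c$, so $\phi(A) \succeq \phi(B)$; and for fixed $y \ge 0$ the set $\{x : \phi(x)\,y \ge c\}$ is again a half-line (or all of $\R$, or empty), so $\P(\phi(A)\,y \ge c) \ge \P(\phi(B)\,y \ge c)$, and integrating this over $y \sim \law(\eta)$ gives $\P(\phi(A)\eta \ge c) \ge \P(\phi(B)\eta \ge c)$. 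With this in hand, both displays of Assumption~\ref{assumption:stochastic-dominance} follow once I prove the corresponding dominance relations between the truncations $W_\tau$ and $W_{u,t}$, since $\eta$ enters both sides with the same law.

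For the monotonicity display, take $\tau_1 \le \tau_2$. The survival function of $W_{\tau_i}$ is $s \mapsto \P(W \ge \max(s,\tau_i))/\P(W \ge \tau_i) = \min\{1,\ \P(W\ge s)/\P(W\ge\tau_i)\}$, which is nondecreasing in $\tau_i$ because $\tau \mapsto \P(W\ge\tau)$ is nonincreasing; hence $W_{\tau_1} \preceq W_{\tau_2}$, and the structural fact shows $\law(\score(X,Y) \mid X^T\vvar \ge \tau)$ is stochastically increasing in $\tau$. For the worst-direction display, fix $u \in \sphere^{d-1}$, $t \in \R$, and $\tau$ with $\P(X^T u \ge t) \ge \P(W \ge \tau)$; under $X \sim \normal(0,I)$ both $W$ and $X^T u$ are standard Gaussian, so these probabilities are positive. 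Then
\begin{align*}
  \P(W \ge s \mid X^T u \ge t)
  & = \frac{\P(W \ge s,\ X^T u \ge t)}{\P(X^T u \ge t)} \\
  & \le \min\!\left\{1,\ \frac{\P(W \ge s)}{\P(X^T u \ge t)}\right\}
  \ \le\ \min\!\left\{1,\ \frac{\P(W \ge s)}{\P(W \ge \tau)}\right\}
  \ =\ \P(W \ge s \mid W \ge \tau),
\end{align*}
where the first inequality bounds $\{W \ge s,\ X^T u \ge t\}$ by each of its two defining events and the second uses $\P(X^T u \ge t) \ge \P(W \ge \tau)$. Thus $W_{u,t} \preceq W_\tau$, and the structural fact upgrades this to $\law(\score(X,Y) \mid X^T u \ge t) \preceq \law(\score(X,Y) \mid X^T\vvar \ge \tau)$, which is the asserted domination with $v\opt = \vvar$.

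The part requiring the most care is the independence bookkeeping in the first paragraph: one must check that $\noise$ remains independent of $W$ after conditioning on an event in $\sigma(X)$, so that the conditional law of the score genuinely factors as $\phi(W_{u,t})$ times an independent copy of $\eta$; the remaining steps are elementary one-dimensional estimates. Note also that Gaussianity of $X$ is used only to make $X^T\vvar$ and $X^T u$ continuously distributed with comparable (here identical) marginal survival functions; the stochastic-order inequalities themselves are distribution-free, and the argument goes through verbatim whenever $X^T\vvar$ has a continuous distribution.
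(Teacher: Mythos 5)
Your proof is correct, and it takes a genuinely different---and substantially more elementary---route than the paper's. The paper proves the worst-direction dominance via a geometric halfspace argument specific to rotationally invariant Gaussians: it decomposes $\vvar^T X \eqdist \beta Z_1 + \sqrt{1-\beta^2}Z_2$ with $\beta = \vvar^T u$ and $Z_1 = u^T X$, shows $\law(Z_1 \mid Z_1 \ge t) \succeq \law(\beta Z_1 + \sqrt{1-\beta^2}Z_2 \mid Z_1 \ge t)$ by comparing Gaussian measures of two halfspaces of equal mass intersected with $\{z_1 \ge t\}$, and then chains through $\law(\vvar^T X \mid \vvar^T X \ge \tau) \succeq \law(\vvar^T X \mid \vvar^T X \ge t) \succeq \law(\vvar^T X \mid u^T X \ge t)$, using the implication $\P(u^T X \ge t) \ge \P(\vvar^T X \ge \tau) \Rightarrow t \le \tau$ that also depends on the identical Gaussian marginals. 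You instead observe that conditioning $W$ on \emph{any} event $E$ with $\P(E) \ge \P(W \ge \tau)$ produces a stochastically smaller law than conditioning on $\{W \ge \tau\}$, by the one-line bound $\P(W \ge s \mid E) \le \min\{1, \P(W\ge s)/\P(E)\} \le \min\{1, \P(W\ge s)/\P(W\ge\tau)\} = \P(W\ge s \mid W\ge\tau)$. This is distribution-free: it needs no rotational invariance, no Gaussian geometry, and not even the identity of the marginals of $X^T u$ and $W$, only that the relevant conditioning probabilities are positive. You also explicitly isolate the ``structural fact'' (stochastic dominance is preserved by composition with a nondecreasing nonnegative $\phi$ and multiplication by an independent nonnegative factor $\eta$) and do the independence bookkeeping under $\sigma(X)$-measurable conditioning, both of which the paper invokes only implicitly with ``as $h$ is increasing.'' The net effect is a shorter, self-contained argument that makes clear Gaussianity is used only for well-posedness, not for the dominance itself.
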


We also suggest potential procedures for identifying the worst direction of
shift under limited computational and statistical power.  Ideally, a worst
shift direction should allow ranking examples by difficulty, with larger
values of $v\opt(X)$ corresponding to larger values of $\score(X,Y)$.  The
following lemma, whose proof we provide in
Appendix~\ref{sec:proof-worst-direction-order-consistency}, formalizes this
intuition, stating that the function $v\opt$ maximizes the correspondence of
the ranks of $n$ samples $(\scorerv_1, \dots, \scorerv_n)$ and $(v\opt(X_1),
\dots, v\opt(X_n))$.  For ease of notation, we denote $\scorerv_i \defeq
\score(X_i,Y_i)$ when appropriate.

\begin{lemma}
  \label{lemma:worst-direction-order-consistency}
  Let Assumption~\ref{assumption:stochastic-dominance} hold.  Given three
  i.i.d.\ samples $(X_1,Y_1)$, $(X_2, Y_2)$ and $(X_3, Y_3)$, the worst
  direction $v\opt$ satisfies
  \begin{align}
    \label{eqn:worst-direction-order-consistency}
    v\opt \in \argmax_{v \in \mc{V}} \left\{ \P\left( \scorerv_1 \ge \scorerv_2,  v(X_1) > v(X_3) \right) \right\}.
  \end{align}
\end{lemma}

While the natural empirical (finite-sample and non-convex) approximation of
the problem~\eqref{eqn:worst-direction-order-consistency} enjoys
$\sqrt{n}$-consistency, and \citet{Sherman94} characterizes its asymptotic
distribution, such statistical consistency often comes at the cost of
computational tractability, necessitating alternative
approaches~\cite{ClemenconLuVa08, DuchiMaJo13}.  Thus, we reframe our
problem as a binary classification problem with label $\indic{\scorerv_1 \ge
  \scorerv_2} \in \{0,1\}$ and feature vector $X_1 \in \mc{X}$, and consider
the following least squares problem:
\begin{align}
  \label{eqn:penalized-linear-loss-worst-direction-estimator}
  \minimize_{\substack{v \in \mc{V}}} \left\{  \E\left[ 
    \left( v(X_1) - \indic{\scorerv_1 \ge \scorerv_2} \right)^2  \right] \right\}.
\end{align}
The following lemma, whose proof we provide in
Appendix~\ref{sec:proof-penalized-linear-loss-expression}, shows that the
minimization
problem~\eqref{eqn:penalized-linear-loss-worst-direction-estimator} amounts
to projecting the function
\begin{align*}
  \eta_S (x) \defeq \P( \score(x,Y) \ge \score(X',Y') \mid X=x),
\end{align*}
where $(X, Y)$ and $(X', Y')$ are independent,
onto $\mc{V} \subset L^2(Q_{0,X})$.

\begin{lemma}
  \label{lemma:penalized-linear-loss-expression}
  The minimization
  problem~\eqref{eqn:penalized-linear-loss-worst-direction-estimator} is
  equivalent to
  \begin{align*}
    \minimize_{\substack{v \in \mc{V}}} \E\left[ \left( v(X) - \eta_S(X) \right)^2 \right].
  \end{align*}
  Additionally, if $\eta_S(X)$ has a continuous distribution, and if
  $(X_1,Y_1)$, $(X_2, Y_2)$ and $(X_3, Y_3)$ are i.i.d.\ and $\mc{F} = \{ f:
  \mc{X} \to \R \text{ measurable } \}$, then
  \begin{align*}
    \eta_S \in \argmax_{f \in \mc{F}} \left\{\P\left[ \scorerv_1 \ge \scorerv_2,  f(X_1) > f(X_3) \right] \right\} .
  \end{align*}
\end{lemma}

The function $\eta_S$ quantifies the ``hardness" of an instance $x \in
\mc{X}$ by comparing the score $\score(x,Y)$ to an independent sample
$\scorerv' = \score(X', Y')$: if $F_S$ is the c.d.f.\ of $\scorerv$, then
$\eta_S(x) = \E \left[ F_{\scorerv}(\score(X,Y)) \mid X=x \right]$.  At the
same time, it is the nonparametric
analogue of the the maximizers in
definition~\eqref{eqn:worst-direction-order-consistency}.

Moving to the finite-sample case, with a sample $\{(X_i,Y_i) \}_{i=1}^n$,  we solve the following convex minimization problem~\eqref{eqn:penalized-linear-loss-finite-sample-worst-direction-estimator} with a penalty $\lambda_n > 0$:
\begin{align*}
\hat{v}_{n,  \lambda_n} \defeq \argmin_{v \in \mc{V}} \left\{ \frac{1}{n(n-1)} \sum_{1\le i \neq j \le n} \left( v(X_i) - \indic{\scorerv_i \ge \scorerv_j} \right)^2  + \lambda_n \norm{v}_{\mc{V}}^2 \right\}.
\end{align*}
Under appropriate conditions on the RKHS $\mc{V}$, this method is provably consistent, in the sense that $\hat{v}_{n,  \lambda_n}$ converges towards $\eta_S$ as $n \to \infty$.
This also entails that, if Assumption~\ref{assumption:stochastic-dominance} holds for a vector space $\mc{V}$ sufficiently large,  then $v\opt$ must be a non-decreasing function of $\eta_S$. 
We summarize these results in the next proposition,  which essentially states that we can asymptotically recover the worst direction up to a non-decreasing function, and whose proof we provide in Appendix~\ref{sec:proof-consistency-worst-direction-rkhs}.

\begin{proposition}
  \label{prop:consistency-worst-direction-rkhs}
  Assume that $\mc{X}$ is a closed measurable space, and that $\mc{V}\subset
  L^2(Q_{0,X})$ is a dense, separable RKHS with a bounded measurable kernel.
  For any sequence $\lambda_n \to 0$ such that $n^{1/4}\lambda_n \to
  \infty$, we have
  \begin{align*}
    \int_{x \in \mc{X}} \left( \hat{v}_{n,  \lambda_n}(x) - \eta_S(x) \right)^2 dQ_{0,X}(x) \cas 0.
  \end{align*}
  Additionally, let Assumption~\ref{assumption:stochastic-dominance} hold
  and $\eta_S(X)$ have continuous distribution. Then there exists a
  non-decreasing function $F = F_{v\opt}^{-1} \circ F_{\eta_S}$ such that
  $v\opt(X)= F(\eta_S(X))$ almost surely.
\end{proposition}

\subsubsection{Consistency of linear shifts estimators}
\label{subsec:consistency-linear-shift-estimator}

Even if the parametric approach we adopt in our experiments is more restrictive than the above estimators, we can
still show that various M-estimators can identify the direction $v\opt$ when
Assumption~\ref{assumption:stochastic-dominance} holds.  We present one such
plausible result here, assuming (i)
Assumption~\ref{assumption:stochastic-dominance} holds for $\mc{V}$
consisting of linear shifts indexed by unit-norm vectors,
(ii) that for
some $\Sigma \succ 0$, $\Sigma^{-1/2} X$ is
rotationally invariant and has finite second moments,
and (iii) that
$\score(X, Y)$ is nonnegative and satisfies
$\E\left[ \score(X,Y)X\right] \neq 0$, and
$\E[ \score(X,Y)^2] < \infty$.

\begin{proposition}
  \label{proposition:vopt-least-squares}
  Let conditions (i)--(iii) above hold. Then $v\opt$ is proportional to the
  least-squares solution
  \begin{align}
    \label{eqn:vopt-least-sq}
    v\opt \propto \argmin_{v \in \R^d}
    \E\left[ \left( \score(X,Y) - v^T X  \right)^2 \right].
  \end{align} 
\end{proposition}
\noindent
See
Appendix~\ref{sec:proof-vopt-least-squares} for a proof.

Example~\ref{example:heterogeneous-regression} with $X \sim
\normal(0,\Sigma)$
and typical nonconformity scores
satisfies the conditions of
Proposition~\ref{proposition:vopt-least-squares}.  While in more general
models least squares estimation need not find the worst shift
direction, Proposition~\ref{proposition:vopt-least-squares} suggests
it may be a reasonable heuristic. We present the asymptotic estimation of the worst direction in Appendix~\ref{sec:asymptotic_est_worst_direction}.

\subsection{Asymptotic estimation of the worst direction}
\label{sec:asymptotic_est_worst_direction}

To enable our coming
analysis, we elaborate slightly and modify notation to reflect that the
scoring function $\score_n$ may change with sample size $n$.  We also refine
Definition~\ref{def:smallest-rho} of the confidence sets to explicitly
depend on both the threshold $q$ and score function $\score$.

With the population level recovery guarantees we establish in
Propositions~\ref{prop:consistency-worst-direction-rkhs} and~\ref{proposition:vopt-least-squares}, it
is now of interest to understand when we may recover the optimal worst
direction and corresponding confidence set $C$ using
Algorithm~\ref{alg:worst-direction-validation}, which has access only to samples from the population $Q_0$.  An immediate corollary of
Theorem~\ref{theorem:high-probability-coverage} ensures that, under
the same conditions,
Algorithm~\ref{alg:worst-direction-validation}
returns a confidence set mapping $\what{C}_n$ that satisfies, conditionally
on $\scoreest$ and $\hat{v}_n$ and with probability $1-e^{-t}$ over the second
half of the validation data,
\begin{align}
  \label{eqn:worst-coverage-v-hat}
  \wc( \what C_n, \mc{R}_{\hat{v}}, \delta_{t,n_2}^+; \, Q_0)
  \ge 1 - \alpha_{t,n_2}^+ ~ \text{and} ~
  \wc( \what C_n, \mc{R}_{\hat{v}}, \delta_{t,n_2}^-; \, Q_0)
  \le 1 - \alpha_{t,n_2}^-.
\end{align}
Recalling the definition~\eqref{eqn:worst-global-coverage}, it remains to
understand how close we can expect the uniform quantity $\wc( \what{C}_n,
\mc{R},\delta; \, Q_0)$ to be to $1-\alpha$.  By the
bounds~\eqref{eqn:worst-coverage-v-hat}, if the worst coverage is continuous
in $v \in \mc{V}$ and $\scoreest$ and $\hat{v}_n$ are appropriately
consistent, we should expect a uniform $1-\alpha$ coverage guarantee in the
limit as $n \to \infty$.

To present such a consistency result, we require a few additional
assumptions.
\begin{assumption}[Consistency of scores and directions]
  \label{assumption:consistency-of-scores-and-vhat}
  As $n \to \infty$, we have
  \begin{align*}
    \ltwopxs{\scoreest - \score}^2
    &\defeq \int_{\mc{X} \times \mc{Y}} \left(\scoreest(x,y) - \score(x,y) \right)^2
    dQ_0(x,y)  = o_{P}(1)
    ~~\text{and} ~~
    \norm{\what{v}_n-v^*}_{L^2(Q_{0,X})}  = o_{P}(1).
  \end{align*}
\end{assumption}

\begin{assumption}[Continuous distributions]
  \label{assumption:continuity-worst-coverage-1}
  For $(X,Y) \sim Q_0$, the random variables
  $\score(X,Y)$ and $v\opt(X)$ have continuous
  distributions. 
  Additionally,  $\hat v_n(X)$ has a continuous distribution with probability tending to $1$ as $n \to \infty$.
\end{assumption}

\begin{assumption}[Distinct scores]
  \label{assumption:estimated-scores-as-distinct}
  The scores are asymptotically distinct in probability,
  \begin{align*}
    Q_{0}^n \left[ \mbox{there~exist} ~ i,j \in [n], i \neq j ~ \mbox{s.t.}
      ~ \scoreest(X_i,Y_i) = \scoreest(X_j,Y_j) \right] \cp 0.
  \end{align*}
\end{assumption}
\noindent
Assumption~\ref{assumption:estimated-scores-as-distinct} is a technical
assumption that will typically hold whenever
Assumption~\ref{assumption:continuity-worst-coverage-1} holds, for example,
if $\scoreest$ belongs to a parametric family.

Under these assumptions, Theorem~\ref{theorem:uniform-asymptotic-coverage}
proves that we asymptotically provide uniform coverage at level $1-\alpha$
over all shifts $Q_R$, $R \in \mc{R}$.  See
Appendix~\ref{sec:proof-uniform-asymptotic-coverage} for a proof.

\begin{theorem}
  \label{theorem:uniform-asymptotic-coverage}
  Let Assumptions~\ref{assumption:stochastic-dominance},
  \ref{assumption:consistency-of-scores-and-vhat},
  and
  \ref{assumption:continuity-worst-coverage-1} hold.  Then
  Algorithm~\ref{alg:worst-direction-validation}
  returns a confidence set mapping $\what{C}_n$ that satisfies
  \begin{align*}
    \wc( \what{C}_n, \mc{R}, \delta; \,  Q_{0}) = 1 - \alpha + u_n +
    \varepsilon_n
  \end{align*}
  where $u_n \ge 0$ and $\varepsilon_n \cp 0$ as $n \to \infty$.
  If additionally Assumption~\ref{assumption:estimated-scores-as-distinct}
  holds, then $u_n \cp 0$.
\end{theorem}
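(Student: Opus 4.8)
\emph{Reduction.} Write $\what C_n = C^{(\what q_\delta, \scoreest)}$ for the mapping Algorithm~\ref{alg:worst-direction-validation} returns and $\Delta_n \defeq \wc(\what C_n, \mc R, \delta; Q_0) - (1-\alpha)$; setting $u_n \defeq (\Delta_n)_+ \ge 0$ and $\varepsilon_n \defeq -(\Delta_n)_-$ gives $\Delta_n = u_n + \varepsilon_n$, so the theorem reduces to two one-sided bounds: (a) $\wc(\what C_n, \mc R, \delta; Q_0) \ge 1-\alpha - o_P(1)$ (forcing $\varepsilon_n \cp 0$) always, and (b) $\wc(\what C_n, \mc R, \delta; Q_0) \le 1-\alpha + o_P(1)$ (forcing $u_n \cp 0$) under Assumption~\ref{assumption:estimated-scores-as-distinct}. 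Condition on the first subsample throughout, so $\scoreest$ and $\hat v$ are fixed and Assumption~\ref{assumption:consistency-of-scores-and-vhat} supplies $\ltwopxs{\scoreest-\score}^2 = o_P(1)$ and $\norm{\hat v - v\opt} = o_P(1)$. The key structural input is Lemma~\ref{lemma:stochastic-domination-direction}: under Assumption~\ref{assumption:stochastic-dominance} and continuity of $X^T v\opt$, the worst shift direction for the \emph{oracle} set $C^{(q,\score)}$ is $v\opt$ for \emph{every} $q$, so $W(q) \defeq \inf_{v\in\mc V}\wc(C^{(q,\score)}, \mc R_v, \delta; Q_0) = \wc(C^{(q,\score)}, \mc R_{v\opt}, \delta; Q_0)$. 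Since $\score(X,Y)$ has a continuous (hence uniformly continuous) c.d.f.\ $F_0$ and conditioning on a halfspace of $Q_0$-mass $\ge\delta$ divides probabilities by at most $1/\delta$, the conditional c.d.f.\ $q \mapsto Q_0(\score(X,Y)\le q \mid v^Tx\ge a)$ has modulus of continuity $\le \omega_{F_0}(\cdot)/\delta$ uniformly in $(v,a)$; hence $W$ is continuous and nondecreasing, and I set $q^\star \defeq \inf\{q : W(q)\ge 1-\alpha\}$, so $W(q^\star) = 1-\alpha$.

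\emph{Lower bound.} I would establish, uniformly in $q$, the identity $\wc(C^{(q,\scoreest)}, \mc R_{\hat v}, \delta; \empQtwo) = W(q) + o_P(1)$ by chaining three stability estimates. First, a \emph{score swap}, uniform in $(q,v)$: on $\{|\scoreest-\score|\le\xi\}$ one has $\indic{\score\le q-\xi}\le\indic{\scoreest\le q}\le\indic{\score\le q+\xi}$, so on any halfspace $A$ with $Q_0(A)\ge\delta$ the two conditional coverages differ by at most $\omega_{F_0}(\xi)/\delta + Q_0(|\scoreest-\score|>\xi)/\delta \le \omega_{F_0}(\xi)/\delta + \ltwopxs{\scoreest-\score}^2/(\delta\xi^2)$, and a deterministic $\xi = \xi_n\downarrow 0$ chosen slowly enough (possible since $\ltwopxs{\scoreest-\score} = o_P(1)$) makes this $o_P(1)$ uniformly, surviving $\inf_v$. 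Second, a \emph{direction swap}: $\norm{\hat v - v\opt} = o_P(1)$ with $v^TX$ continuous makes halfspace masses and conditional coverages along $\hat v$ converge to those along $v\opt$, uniformly in $q$. Third, \emph{empirical concentration}: the sets $\{(x,y) : \scoreest(x,y)\le q,\ \hat v^Tx\ge a\}$ form a VC class of dimension $O(d)$, so the uniform-in-$q$ Glivenko--Cantelli control behind Theorem~\ref{theorem:high-probability-coverage} and Eq.~\eqref{eqn:worst-coverage-v-hat} gives $\wc(C^{(q,\scoreest)}, \mc R_{\hat v}, \delta; \empQtwo) = \wc(C^{(q,\scoreest)}, \mc R_{\hat v}, \delta; Q_0) + o_P(1)$ uniformly in $q$. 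From the identity: for each fixed $\eta > 0$, $W(q^\star - \eta) < 1-\alpha$ by definition of $q^\star$, so the criterion defining $\what q_\delta$ cannot be met below $q^\star - \eta$ with probability $\to 1$; hence $\what q_\delta \ge q^\star - o_P(1)$ and $W(\what q_\delta) \ge 1-\alpha - o_P(1)$ by continuity of $W$. Finally, applying the uniform-in-$(q,v)$ score swap and Lemma~\ref{lemma:stochastic-domination-direction} to $\what C_n$ itself, $\wc(\what C_n, \mc R, \delta; Q_0) = \inf_v \wc(C^{(\what q_\delta,\score)}, \mc R_v, \delta; Q_0) + o_P(1) = W(\what q_\delta) + o_P(1) \ge 1-\alpha - o_P(1)$, which is (a).

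\emph{Upper bound.} Here $\wc(\what C_n, \mc R, \delta; Q_0) \le \wc(\what C_n, \mc R_{\hat v}, \delta; Q_0)$. A short computation shows $\delta \mapsto \wc(C, \mc R_v, \delta; Q_0)$ is $1/\delta^2$-Lipschitz uniformly over $C$ and $v$ --- its increments only involve $Q_0$-masses of thin slabs $\{a\le v^Tx<a'\}$, which do not depend on $C$ --- so $\wc(\what C_n, \mc R_{\hat v}, \delta; Q_0) = \wc(\what C_n, \mc R_{\hat v}, \delta_{t,n_2}^-; Q_0) + o_P(1)$. Under Assumption~\ref{assumption:estimated-scores-as-distinct} the second-subsample scores $\scoreest(X_i,Y_i)$ are distinct with probability $\to 1$; this is precisely the empirical analogue of the continuity hypothesis (Assumption~\ref{assumption:continuity-scores-v}) underlying the \emph{upper} half of Eq.~\eqref{eqn:worst-coverage-v-hat}, which then gives $\wc(\what C_n, \mc R_{\hat v}, \delta_{t,n_2}^-; Q_0) \le 1-\alpha_{t,n_2}^-$. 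Combining, $\wc(\what C_n, \mc R, \delta; Q_0) \le 1-\alpha_{t,n_2}^- + o_P(1) = 1-\alpha + o_P(1)$, which is (b).

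\emph{Main obstacle.} The crux is the uniform-in-$(q,v)$ score swap: because Assumption~\ref{assumption:consistency-of-scores-and-vhat} provides only rate-free $L^2$-consistency, one must balance the anti-concentration scale $\xi$ of the continuous $F_0$ against the random $L^2$ error through a deterministic vanishing sequence $\xi_n$, and verify that passing to the infimum over directions $v$ and over halfspaces inside each $\mc R_v$ does not spoil uniformity --- the $1/\delta$ factors from conditioning on mass-$\ge\delta$ sets are exactly what make this go through. A secondary but essential point is that the VC concentration behind Eq.~\eqref{eqn:worst-coverage-v-hat} be uniform over thresholds $q$, so that it may legitimately be evaluated at the data-dependent $\what q_\delta$; and it is the \emph{two-sided} form of that bound, unlocked by Assumption~\ref{assumption:estimated-scores-as-distinct}, that yields (b).
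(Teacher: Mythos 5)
Your reduction to two one-sided bounds is legitimate, and your lower bound, while structured differently from the paper's proof, is essentially sound. Where the paper chains equalities directly from $\wc(\what C_n,\mc R,\delta;Q_0)$ back to the empirical worst coverage $\wc(C^{(\hat q_\delta,\scoreest)},\mc R_{\hat v},\delta;\what Q_{n_2})$, you introduce an ideal threshold $q^\star$ with $W(q^\star)=1-\alpha$ and argue $\what q_\delta \ge q^\star - o_P(1)$ by showing the defining criterion cannot be met below $q^\star-\eta$. The three stability estimates you invoke (score swap, direction swap, Glivenko--Cantelli) are precisely Lemmas~\ref{lem:uniform-convergence-over-v-for-scores}, \ref{lem:consistency-of-empirical-worst-direction}, and \ref{lem:consistency-of-empirical-worst-coverages}, so the machinery is the same; your route through $q^\star$ is a valid reorganization, just a touch more circuitous.

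The upper bound has a genuine gap. You invoke the second half of Eq.~\eqref{eqn:worst-coverage-v-hat}, asserting that Assumption~\ref{assumption:estimated-scores-as-distinct} (distinctness of $\scoreest(X_i,Y_i)$ in probability) serves as the ``empirical analogue'' of Assumption~\ref{assumption:continuity-scores-v} and thereby unlocks $\wc(\what C_n,\mc R_{\hat v},\delta^-_{t,n_2};Q_0)\le 1-\alpha^-_{t,n_2}$. But that bound, as established in Lemma~\ref{lemma:empirical-q-once-coverage}, requires the \emph{population} continuity of the score distribution under $Q_0$ --- it is used precisely to assert the equality $\wcoverage(C^{(q_n^{\sup}(v))},\mc R_v,\delta_n^-(t);Q_0)=1-\alpha_n^-(t)$, since otherwise the quantile overshoots. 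In the setting of Algorithm~\ref{alg:worst-direction-validation} the relevant score is $\scoreest$, and neither Assumption~\ref{assumption:continuity-worst-coverage-1} (which concerns only $\score$) nor Assumption~\ref{assumption:estimated-scores-as-distinct} implies $\scoreest(X,Y)$ is atomless under $Q_0$: the latter only controls finite-sample collision probabilities, and since $\scoreest$ may depend on $n$, it is consistent with atoms of mass shrinking slowly enough to survive. The paper avoids this entirely: Lemma~\ref{lem:consistency-of-empirical-worst-coverage-for-alpha} establishes the \emph{empirical} bound $\wc(\what C_n,\mc R_{\hat v},\delta;\what Q_{n_2})\le 1-\alpha+1/(n_2\delta)$ directly on the distinctness event (the empirical worst coverage jumps in increments of at most $1/(n_2\delta)$), and then transfers to $Q_0$ via the Glivenko--Cantelli Lemma~\ref{lem:consistency-of-empirical-worst-coverages}. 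You should replace your appeal to Eq.~\eqref{eqn:worst-coverage-v-hat} by this empirical-then-transfer route; as written, the step does not follow. (Separately, your claimed $1/\delta^2$-Lipschitz constant in $\delta$ overstates the bound~\eqref{eqn:worst-coverage-lipschitz-in-delta}, which gives $1/\delta$, but this is harmless.)
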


To conclude, we see that the M-estimation-based
Procedure~\ref{alg:worst-direction-validation} to find the worst shift
direction can be consistent.  Yet even without the (strong) assumptions
Theorem~\ref{theorem:uniform-asymptotic-coverage} requires, we contend the
methodology in Algorithm~\ref{alg:worst-direction-validation} (and
Alg.~\ref{alg:rho-selection-procedure}) is valuable: it is important to look
for variation in coverage within a dataset and to protect against
possible future dataset shifts.
In particular,  Assumption~\ref{assumption:stochastic-dominance} only ensures that the function $\eta_S$ is the worst shift independently of the threshold $q \in \R$, i.e that $ \wc( C^{(q,\score)}, \mc{R}_{\eta_S}, \delta; \, Q_0) = \wc( C^{(q,\score)},  \mc{R}, \delta; \, Q_0)$ for all $q \in \R$,  but in general, the function $\eta_S$ remains a reasonable estimation target in itself: one can view it as the ``average" worst direction over a random choice of threshold $\scorerv' \sim P_0$.

\section{Consistency results for Algorithm~\ref{alg:sensitivity1}}

\subsection{Intuition and sketch proof for the cross-fit augmented estimator~\eqref{eqn:final-sens-aug-estimator}}
\label{sec:cross-fit-sketch-proof-intuition}
We develop a debiased cross-fit estimator of $\SFcov$
using the representation in Lemma~\ref{lemma:cvar-calc}
and an additional unlabeled sample of covariates $X$, which helps to
estimate expectations of the form $\E[\hinge{M(X) - \eta}]$.
To build intuition, consider an (abstract)
functional of the form in Lemma~\ref{lemma:cvar-calc},
so that for a function $M : \mc{X} \to \R$ and $\eta \in \R$ we wish
to estimate
\begin{equation*}
  F_\rho(M, \eta) \defeq \E\left[e^\rho \hinge{M(X) - \eta}\right] + \eta.
\end{equation*}
Consider a first-order expansion of $F_\rho$ around $M_0, \eta_0$, where
$M_0(x) = \P(S > \predsetthresh \mid X = x)$ (recall
Eq.~\eqref{eqn:conditional-miscoverage-func}) and $\eta_0$ minimizes
$\E[e^\rho\hinge{M_0(X) - \eta}] + \eta$ (as in Lemma~\ref{lemma:cvar-calc})
and is thus the $1 - e^{-\rho}$ quantile of $M_0$. Then using that the
subdifferential $\frac{\partial}{\partial t} \hinge{t - \eta} = \indic{t >
  \eta}$, we heuristically (ignoring interchanges of differentiation and
integration) write
\begin{align*}
  F_\rho(M, \eta)
  & \approx F_\rho(M_0, \eta_0)
  + \E\left[ e^\rho \indic{M_0(X) > \eta_0}
    \left(M(X) - M_0(X) + \eta_0 - \eta\right) \right]
  + \eta - \eta_0,
\end{align*}
and rearranging,
\begin{align*}
  F_\rho(M_0, \eta_0)
  & \approx F_\rho(M, \eta)
  - \E\left[ e^\rho \indic{M_0(X) > \eta_0}
    \left(M(X) - M_0(X)\right)\right] \\
  & \qquad - (\eta - \eta_0)
  \left(1 - e^\rho \P(M_0(X) > \eta_0)\right) \\
  & = F_\rho(M, \eta) - e^\rho \E\left[\indic{M_0(X) > \eta_0}
    \left(M(X) - M_0(X)\right)\right],
\end{align*}
where we used that $\P(M_0(X) > \eta_0) = e^{-\rho}$ by construction.
For $(M, \eta)$ ``near enough'' to $M_0, \eta_0$, we
have $\E[\indic{M_0 > \eta_0}(M - M_0)]
\approx \E[\indic{M > \eta}(M - M_0)]
\approx \E[\indic{M > \eta} (M - \indic{S > \predsetthresh})]$.
In short, we have sketched that
\begin{equation}
  F_\rho(M_0, \eta_0)
  \approx F_\rho(M, \eta)
  + \E\left[e^\rho
    \indic{M(X) > \eta} \left(\indic{S > \predsetthresh}
    - M(X) \right) \right],
  \label{eqn:intuition-for-augmentation}
\end{equation}
for $(M, \eta)$ appropriately near to the population quantities $M_0,
\eta_0$.  Our idea, then, is to use the first-order term in
Eq.~\eqref{eqn:intuition-for-augmentation}
to correct an empirical calculation of $F_\rho(M, \eta)$.

We first split the data $\{ (\scorerv_i, X_{I, i}) \}_{i=1}^n \simiid
P_{0,I}$ into $\nBatch \ge 2$ batches $\mc{I}_1, \dots, \mc{I}_\nBatch$ of
size $\frac{n}{\nBatch}$.  For each batch $\batch \in [\nBatch]$, we form an
estimate $\what {\MC}^{(\predsetthresh)}_\batch$ of
${\MC}^{(\predsetthresh)}$ using all samples from $[n] \setminus
\mc{I}_\batch$.  Using the pool of unlabeled samples, we then compute an
estimate $\what \qfunc_\batch^{(\predsetthresh)}(\rho)$ of the quantile
$\qfunc_0(\what {\MC}^{(\predsetthresh)}_\batch,\rho)$ (see
step~\eqref{eqn:quantile-est-unlabeled}), which then gives the $\batch$th
augmented estimator~\eqref{eqn:sens-func-aug-estimator}.  Average over all
$\nBatch$ batches gives the final estimator
$\what{\SF}^{(\predsetthresh)}_n(\rho)$.  (The augmentation term
$\Aug^{(\predsetthresh)}_{\what h^{(\predsetthresh)}_\batch, \what
  {\MC}^{(\predsetthresh)}_\batch}$ makes
$\what{\SF}^{(\predsetthresh)}_n(\rho)$ potentially non-monotonic in
$\rho$.) We study the consistency results of the sensitivity estimator~\eqref{eqn:final-sens-aug-estimator} in Appendix~\ref{sec:sensitivity_consistency}.

\subsection{Consistency and convergence rate of the augmented estimator $ \what{\SF}^{(q)}_n$}
\label{sec:sensitivity_consistency}
We study the consistency and rate of covergence of the sensitivity
estimator~\eqref{eqn:final-sens-aug-estimator} as a path function of $\rho
\in \R_+$, for which we require a few assumptions below.
Assumption~\ref{ass:miscov-estimator-consistent} states that the fitted
estimator $\what \MC_\batch$ needs to be appropriately consistent for
$\MCov$, while Assumption~\ref{ass:miscov-quantile-estimator-consistent}
basically ensures the pool of unlabeled samples is large enough to provide a
good estimate of the quantiles of $\what \MC_\batch$.
Assumption~\ref{ass:miscov-density} is technical and prevents
the random variable $\MCov(X_I)$ from being too concentrated,
thus allowing quantile estimation.
%

\begin{assumption}[Miscoverage estimation]
  \label{ass:miscov-estimator-consistent}
  For each batch $\batch \in [\nBatch]$,
  we have
  \begin{align*}
    \norm{\what {\MC}^{(\predsetthresh)}_\batch- {\MC}^{(\predsetthresh)}}_{L^\infty(P_{0,I})} = o_p(n^{-1/4}).
  \end{align*}
\end{assumption}

\begin{assumption}[Quantile estimation]
  \label{ass:miscov-quantile-estimator-consistent}
  For each $\batch \in [\nBatch]$ and every
  compact $K \subset \R_+$,
  the quantile estimator $\what \qfunc_\batch$ satisfies
  \begin{align*}
    \sup_{\rho \in K} \left| \what \qfunc_\batch(\rho) - \qfunc_0(\what \MC_\batch, \rho)\right| = o_p(n^{-1/4}).
  \end{align*}
\end{assumption}

\begin{assumption}
\label{ass:miscov-density}
The random variable $\MCov(X_I)$ has a bounded density $f_{\MC}$ on $[0,1]$.
\end{assumption}

Under these assumptions, we have the following
theorem, which shows that for every compact set $K \subset \R$, the sequence of
processes $\{ \sqrt{n}(\what{\SF}^{(\predsetthresh)}_n(\rho)-
\SFcov(\rho)) \}_{\rho \in K}$ converges in distribution in $L^\infty(K)$ to
a tight Gaussian process, whose covariance is tedious to specify
though we characterize it in the proof of the theorem in
Appendix~\ref{proof-thm-unif-conv-sens}.
\begin{theorem}
  \label{thm:unif-conv-sens}
  Let
  Assumptions~\ref{ass:miscov-estimator-consistent}--\ref{ass:miscov-density}
  hold. Then there exists a tight Gaussian process $\mathbb{G}$ such that,
  for every compact set $K \subset \R^+$, we have
  \begin{align*}
    \{ \sqrt{n}(\what{\SF}^{(\predsetthresh)}_n(\rho)- \SFcov(\rho)) \}_{\rho \in K}
    \cd
    \{ \mathbb{G}(\rho) \}_{\rho \in K}
    ~~~ \mbox{as~elements~of~~}L^\infty(K).
  \end{align*}
\end{theorem}

A few consequences of Theorem~\ref{thm:unif-conv-sens} are immediate.
First, we have $\sqrt{n}$-consistency:
\begin{align*}
  \sqrt{n} \cdot
  \sup_{\rho \in K}|\what{\SF}^{(\predsetthresh)}_n(\rho)- \SFcov(\rho))|
  \cd \sup_{\rho \in K} \mathbb{G}(\rho)
\end{align*}
as the supremum mapping is continuous in $L^\infty$, and so
$\linfs{\what{\SF}^{(t)}_n - \SFcov} = O_P(1/\sqrt{n})$. As another
immediate consequence, we see that under the assumptions of
Theorem~\ref{thm:unif-conv-sens},
for every $\rho >0$, there exists $\sigma^2(\rho) < \infty$ such that
\begin{align*}
  \sqrt{n}(\what{\SF}^{(\predsetthresh)}_n(\rho)- \SFcov(\rho)) \cd
   \normal(0,\sigma^2(\rho)).
\end{align*}
(This is similar to the result \cite[Thm.~1]{SubbaswamyAdSa21},
but as we note in footnote~\ref{footnote:hong-wrong},
the papers~\cite{SubbaswamyAdSa21,JeongNa20} may have technical
mistakes.)

\section{Further empirical results}

\subsection{UCI datasets}
\label{sec:uci-experiments}

We revisit the experiments in Section~\ref{sec:motivation-exp}, focusing on
evaluating our methodology for robust predictive inference.  Our goal here
is to show that when our estimate of the amount of shift is comparable to
the actual amount of shift, our methodology delivers coverage without
inflating prediction sets too much.  Accordingly, throughout these
experiments, we fix the desired robustness level $\rho = .01$, corresponding
(approximately) to the median chi-squared divergence between the natural and
tilted empirical distributions across the nine data sets and values of the
tilting parameter $a$.  We therefore expect
Algorithm~\ref{alg:rho-selection-procedure}, which emphasizes robustness to
worst-case shifts, to restore the coverage level for the tiltings from
Section~\ref{sec:motivation-exp} that possess (roughly) this level of shift.

Figure \ref{fig:cvgs_only_chisq} presents the results for the chi-squared
divergence (the results for the Kullback-Leibler divergence are similar).
Although not perfect, we see that the methodology often restores validity
for the shifts from Section~\ref{sec:motivation-exp}.  We see clearly
improved performance over the standard conformal methodology on the abalone,
delta ailerons, kinematics, puma, and airfoil datasets (compare to
Figure~\ref{fig:cvgs_only_std}).  On all of these datasets, the robust
methodology consistently yields average coverage above the nominal level,
while standard conformal fails to cover on each of the datasets.  Treating
the test sample as truth, we evaluate the median chi-squared divergence
between these natural and shifted distributions across values of the tilting
parameter $a$; the divergence values are .03, .02, .04, .05, and 3.65,
respectively, while the level of divergence for the remaining datasets
(ailerons---which still covers---banking, and Boston and California housing)
is roughly twice as large, which explains the loss in coverage.  We note in
passing that in other experiments we omit for brevity, the trends above hold
for other types of shifts.

\begin{figure}[h!]
    \centering
    \includegraphics[width=0.3\linewidth]{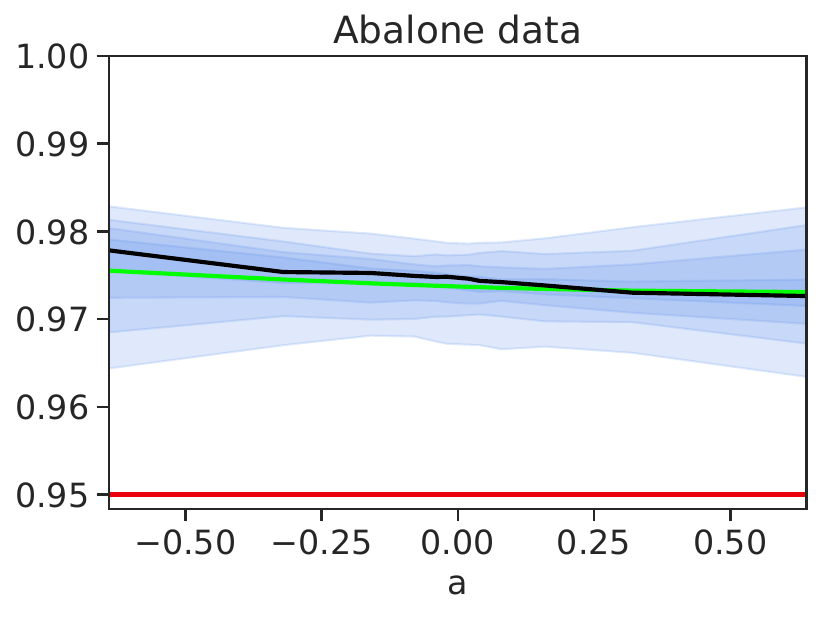}
    \hfill
    \includegraphics[width=0.3\linewidth]{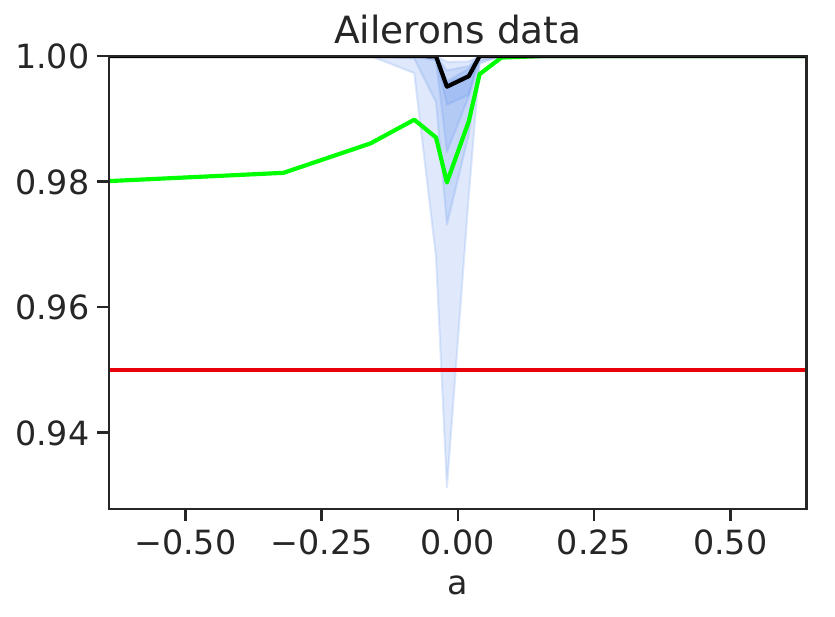}
    \hfill
    \includegraphics[width=0.3\linewidth]{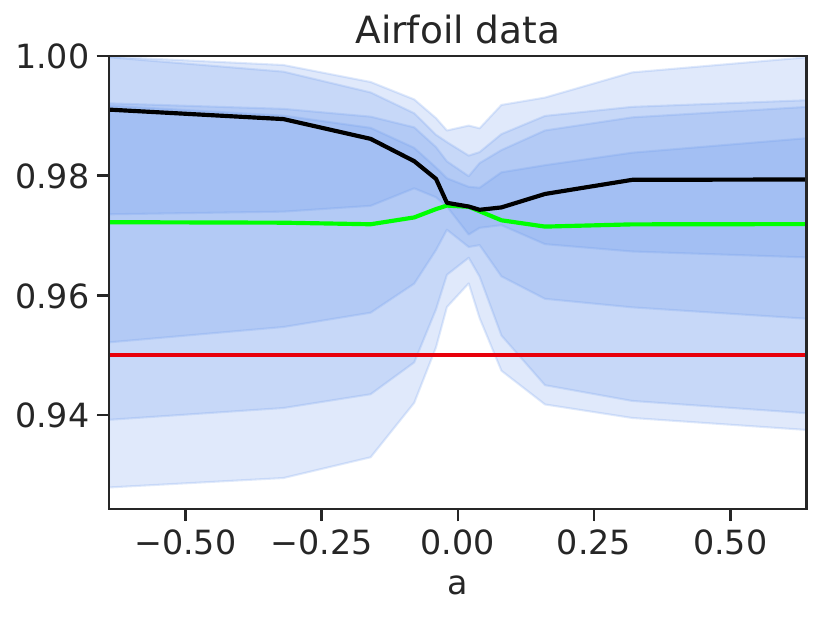}
    \\
    \includegraphics[width=0.3\linewidth]{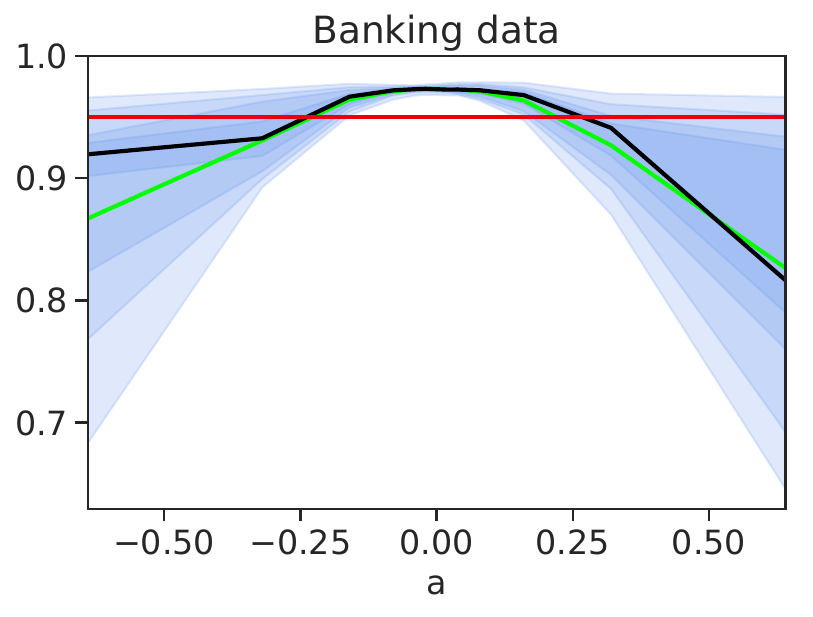}
    \hfill
    \includegraphics[width=0.3\linewidth]{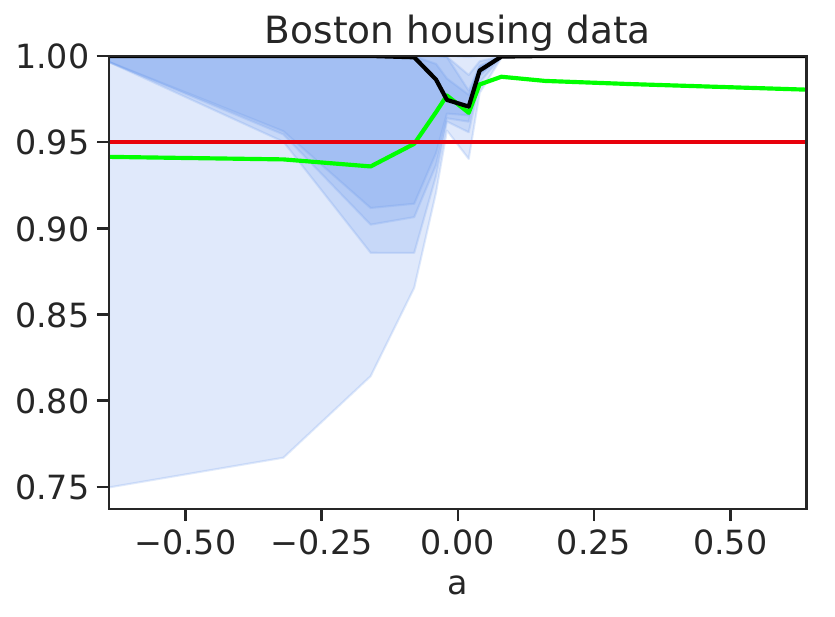}
    \hfill
    \includegraphics[width=0.3\linewidth]{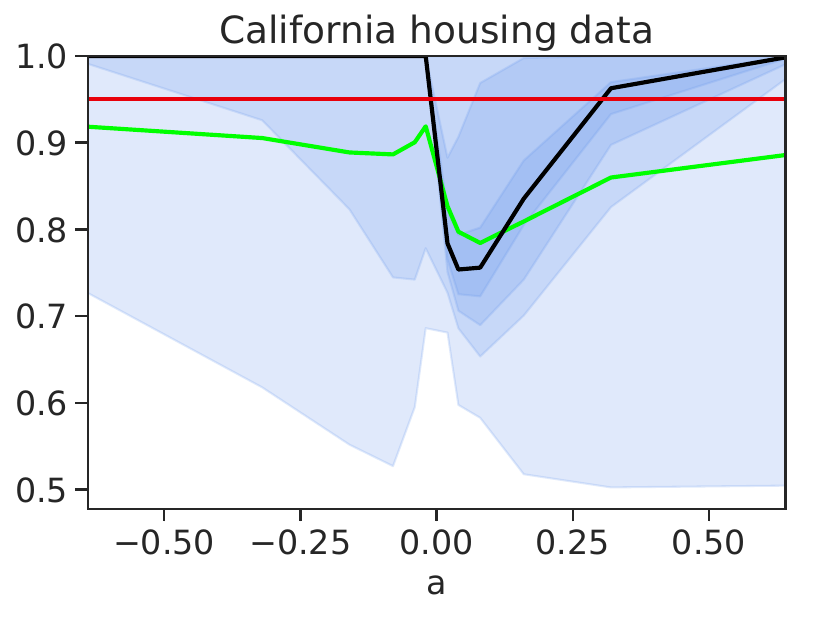}
    \\
    \includegraphics[width=0.3\linewidth]{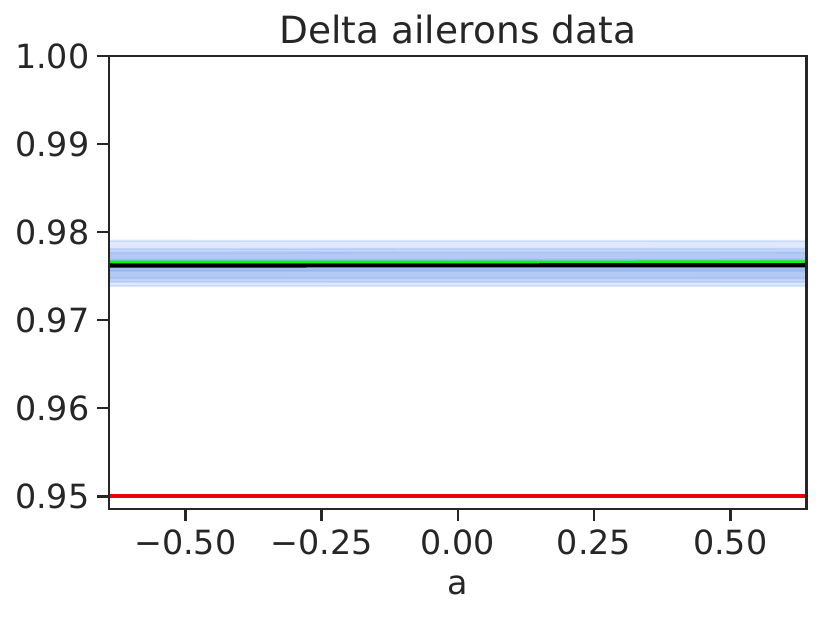}
    \hfill
    \includegraphics[width=0.3\linewidth]{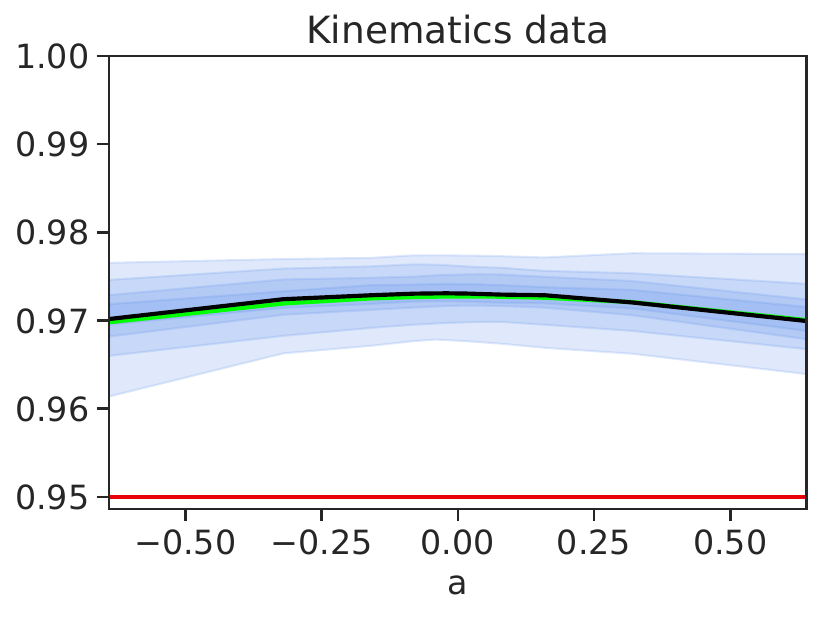}
    \hfill
    \includegraphics[width=0.3\linewidth]{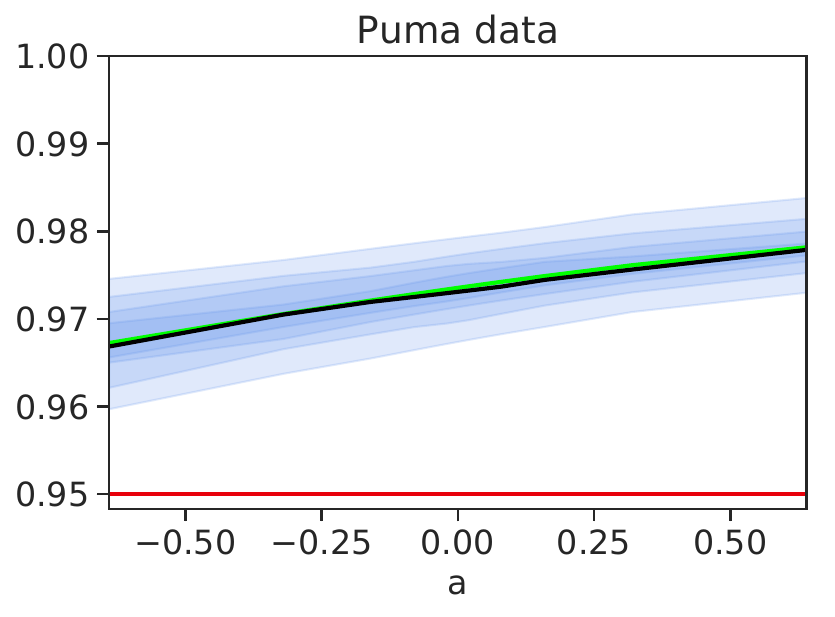}  
    \caption{Empirical coverage for the prediction sets generated by the
    chi-squared divergence, following the same experimental setup from Section
    \ref{sec:motivation-exp}. The horizontal axis gives the value of the tilting
    parameter $a$; the vertical the coverage level. A green line marks the
    average coverage, a black line marks the median coverage, and the horizontal
    red line marks the nominal coverage $.95$. The blue bands show the coverage
    at various deciles.}
    \label{fig:cvgs_only_chisq}
\end{figure}

\subsection{COVID-19 forecasting}
\label{sec:real-experiments-covid}

Our final evaluation of prediction accuracy under shifts is to predict test
positivity rates for COVID-19, in each of $L = 3{,}140$ United States
counties in a time series over $T = 34$ weeks from January through the
beginning of August 2021, using demographic features.  As a
non-stationary time series, robustness is essential here as a fixed model of
course cannot adapt to the underlying distributional changes.

Our prediction task is as follows.  For each of $t=1,\ldots,T$ weeks, and at
each of $\ell=1,\ldots,L$ locations (counties), we observe a
real-valued response $Y_{\ell, t} \in [0,1]$, $\ell=1,\ldots,L$,
$t=1,\ldots,T$, measuring the fraction of people with COVID-19.
We use data from the DELPHI group at Carnegie Mellon University
\citep{ArnoldBiBrCoFaGrMaReTi21,Tibshirani20} and consider a similar
featurization, using the following
trailing average features within each county: (1)
the number of COVID-19 cases per 100{,}000 people; (2) the number of doctor
visits for COVID-like symptoms; and (3) the number of responses
to a Facebook survey indicating respondents have COVID-like symptoms.  We
standardize both the features and responses so that they lie in $[0,1$], and
collect the features into vectors $X_{\ell, t} \in \R^3$, $\ell=1,\ldots,L$,
$t=1,\ldots,T$.

At each week $t = 1, 4, 7, \ldots$, 
we fit a simple logistic model where for a fixed $t$, we compute
\begin{equation}
  \label{eqn:logistic-goof}
  (\hat \alpha^{(t)}, \hat \beta^{(t)}) \in
  \argmin_{\alpha \in \R, \beta \in \R^d}
  ~
  \sum_{\ell = 1}^L
  \left[\log(1 + e^{\alpha + X_{\ell,t}^T \beta}) -
    Y_{\ell,t+1} (\alpha + X_{\ell,t}^T \beta)\right]
\end{equation}
We treat the data at the weeks $t=2,5,8,\ldots$ as the validation set, the
data at the remaining weeks $t=3,6,9,\ldots$ as the test set, so that at
each time $t$ we fit the single most recent time period's data. We make
predictions on a new example $x$ at time $t$ via the logistic link
\begin{equation*}
  \what{y} = \frac{e^{\hat{\alpha}^{(t)} + x^T \hat{\beta}^{(t)}}}{1 +
    e^{\hat{\alpha}^{(t)} + x^T \hat{\beta}^{(t)}}}.
\end{equation*}


For our robust conformalization procedures, we consider the Kullback-Leibler
divergence and estimate the divergence $\rho$ between weeks $1,4,7,\ldots$
and $2,5,8,\ldots$ via regression
(Alg.~\ref{alg:worst-direction-validation}), as well as with a nonparametric
divergence estimator~\citep{NguyenWaJo10}; given this $\rho$ we then make
robust predictions at the test times $t = 3, 6, \ldots$.  We compare to the
standard split conformal methodology---which is of course not robust to
departures from the validation distribution---but also consider the standard
conformal methodology with the more conservative miscoverage level
$\alpha/2$ to attain robustness to a variation distance shift of $\alpha/2$
(recall Corollary \ref{corollary:total-variation}).  We set $\alpha = .1$
throughout these experiments.


Figures \ref{fig:covid-coverage}--\ref{fig:covid-hi-lo} present the results.  From Figure \ref{fig:covid-coverage}, we can see that the standard conformal methodology (once again) fails to cover, whereas our (two) robust conformalization procedures retain validity.  These results are in line with our expectations: we expect the standard methodology to undercover as it is not robust to distributional changes, and we expect both Alg.~\ref{alg:worst-direction-validation} as well as the nonparametric divergence estimator of \citet{NguyenWaJo10} to deliver reasonably accurate estimates of the divergence level $\rho$ given the low ambient dimension of the feature space (recall that $d=3$), translating into generally good coverage here.  We can also see that the standard conformal methodology with the conservative miscoverage level $\alpha/2$ gives coverage at roughly the right level, though it is does not adapt the miscoverage level to the problem at hand (as estimating an appropriate level of divergence is an important component).  Along these lines, Figure \ref{fig:covid-size} reveals a more complete picture: the heuristic also gives rise to (slightly) longer confidence intervals than most of the other methods---which is intuitive as again we have no guarantee that $\alpha/2$ corresponds to the true amount of divergence between the validation and test distributions.  Overall, our robust conformalization procedure combined with Nguyen et al.'s  nonparametric divergence estimator~\citep{NguyenWaJo10} appears to strike the best balance between coverage and confidence interval length in this instance.

\begin{figure}[ht]
  \centering
  \includegraphics[width=0.9\linewidth]{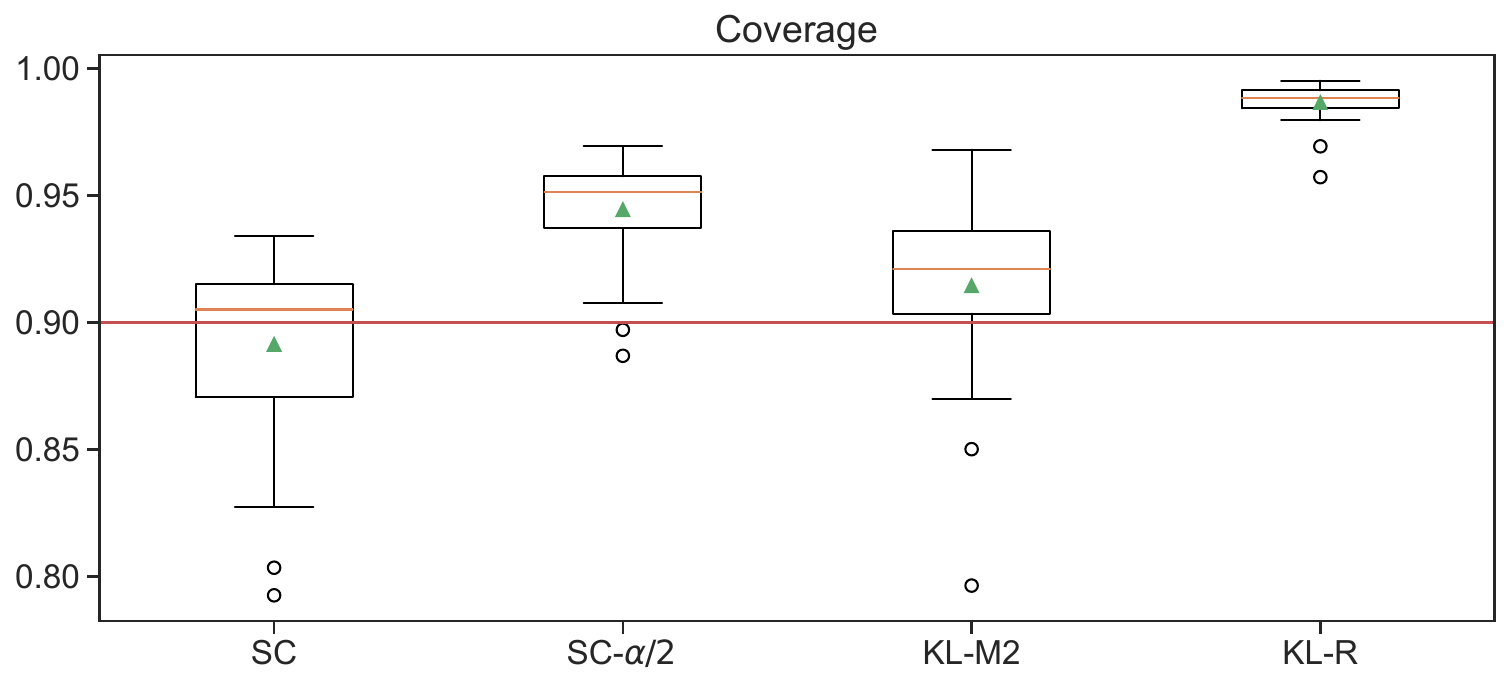}
  \caption{Empirical coverage for the prediction sets generated by the standard conformal methodology (``SC''), the standard conformal methodology where we simply set $\alpha/2$ (``SC-$\alpha/2$''), and the Kullback-Leibler divergence on the COVID-19 time series.  We set $\rho$ according to the regression-based strategy (``KL-R'') for estimating the amount of shift that we describe in Section~\ref{sec:coverage-high-probability-over-shifts}, as well as via the nonparametric divergence estimator due to \citet{NguyenWaJo10} (``KL-M2'').  The horizontal red line marks the marginal coverage $.9$.}
  \label{fig:covid-coverage}
\end{figure}

\begin{figure}[ht]
  \centering
  \includegraphics[width=0.9\linewidth]{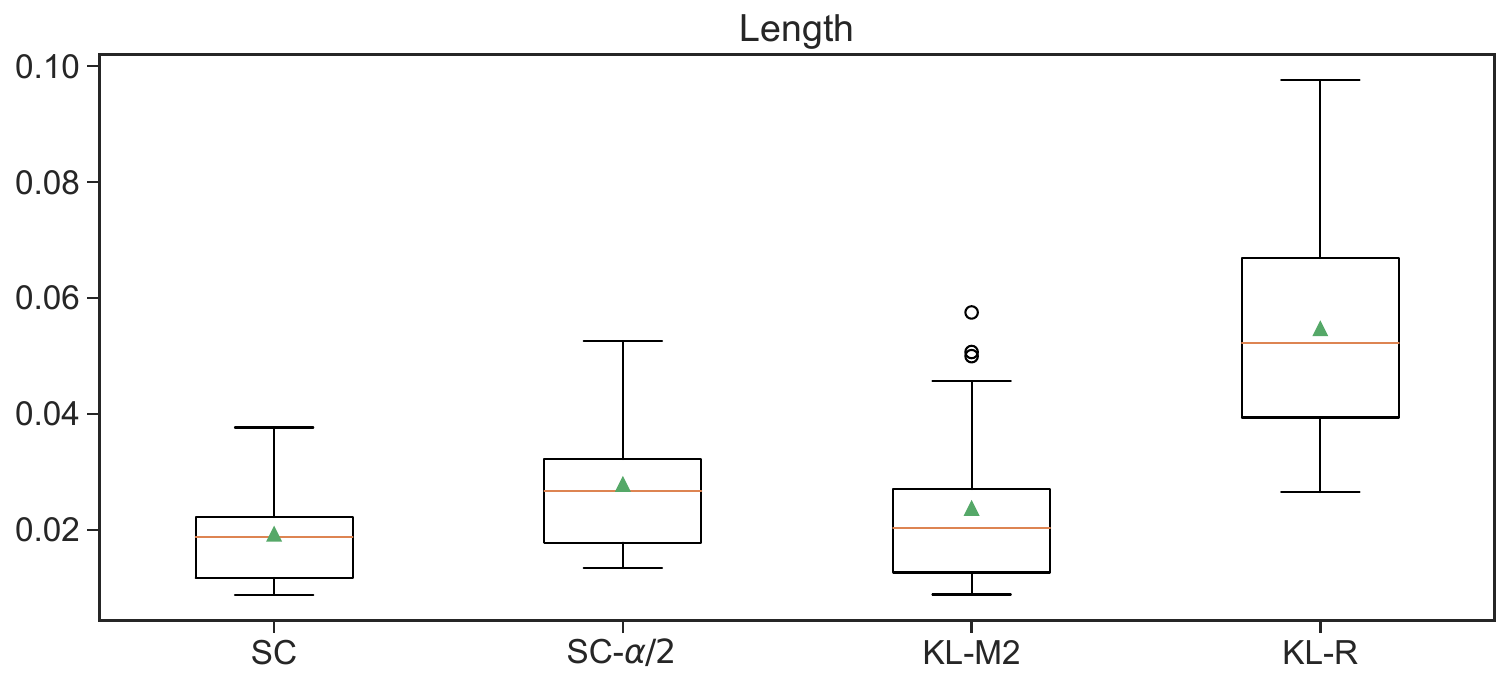}
  \caption{Average length for the prediction intervals generated by the standard conformal methodology (``SC''), the standard conformal methodology where we simply set $\alpha/2$ (``SC-$\alpha/2$''), and the Kullback-Leibler divergence on the COVID-19 time series.  We set $\rho$ according to the regression-based strategy (``KL-R'') for estimating the amount of shift, as well as via the nonparametric divergence estimator due to \citet{NguyenWaJo10} (``KL-M2'').}
  \label{fig:covid-size}
\end{figure}

We view these results from a more qualitative perspective in Figures \ref{fig:covid-true} and \ref{fig:covid-hi-lo}.  In Figure \ref{fig:covid-true}, we show the actual number of COVID-19 cases on April 16, 2021, when the state of Michigan saw a sudden spike in the incidence of COVID-19 after several weeks of implementing precautionary measures.  As an especially pronounced example of distributional shift, it is natural to ask whether our procedures might offer any kind of protection in this instance.  Figure \ref{fig:covid-hi-lo} shows the upper and lower endpoints of the confidence intervals that our robust conformalization procedure generates at this point in time.  By comparing the colors in the figures, we can see that our robust prediction intervals generally contain the true response value both across the United States as well as in Michigan, in particular---despite the presence of such a severe distributional shift.

\begin{figure}[ht!]
  \centering
  \includegraphics[width=0.9\linewidth]{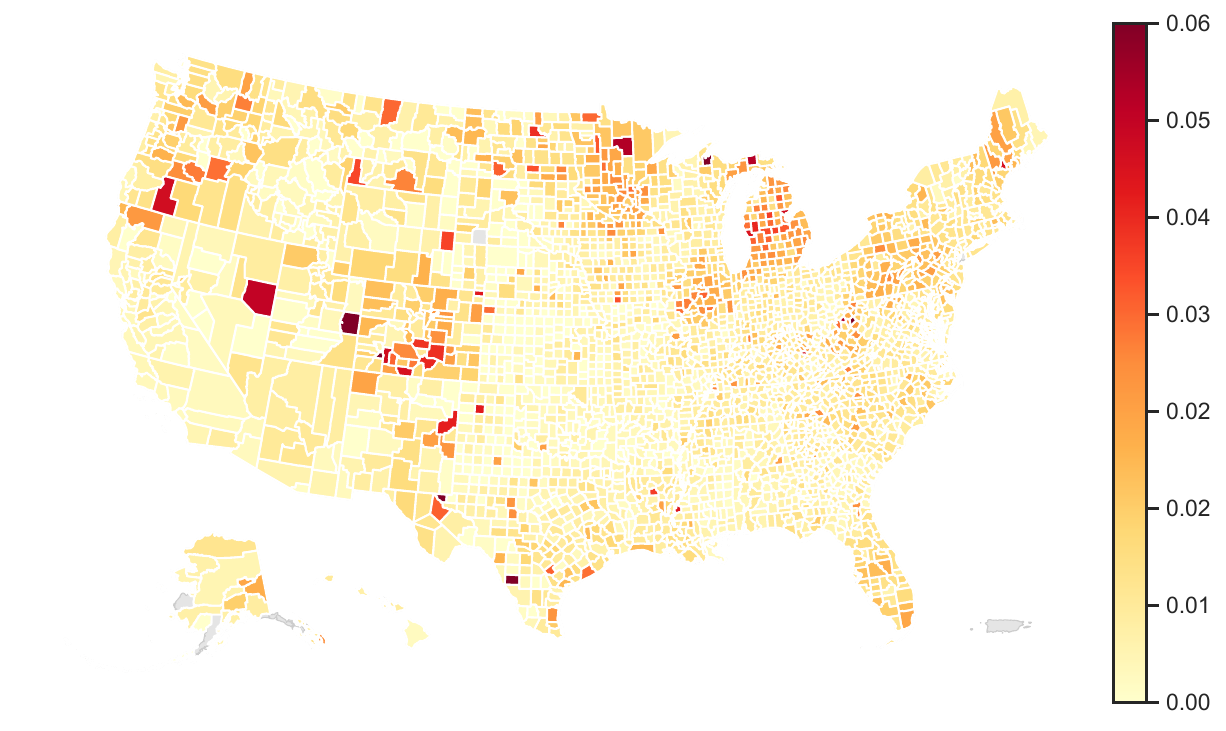}
  \caption{The true (normalized) number of COVID-19 cases per 100{,}000 people, smoothed over the previous week, across the United States on April 16, 2021.}
  \label{fig:covid-true}
\end{figure}

\clearpage
\begin{figure}[h!]
  \centering
  \includegraphics[width=0.9\linewidth]{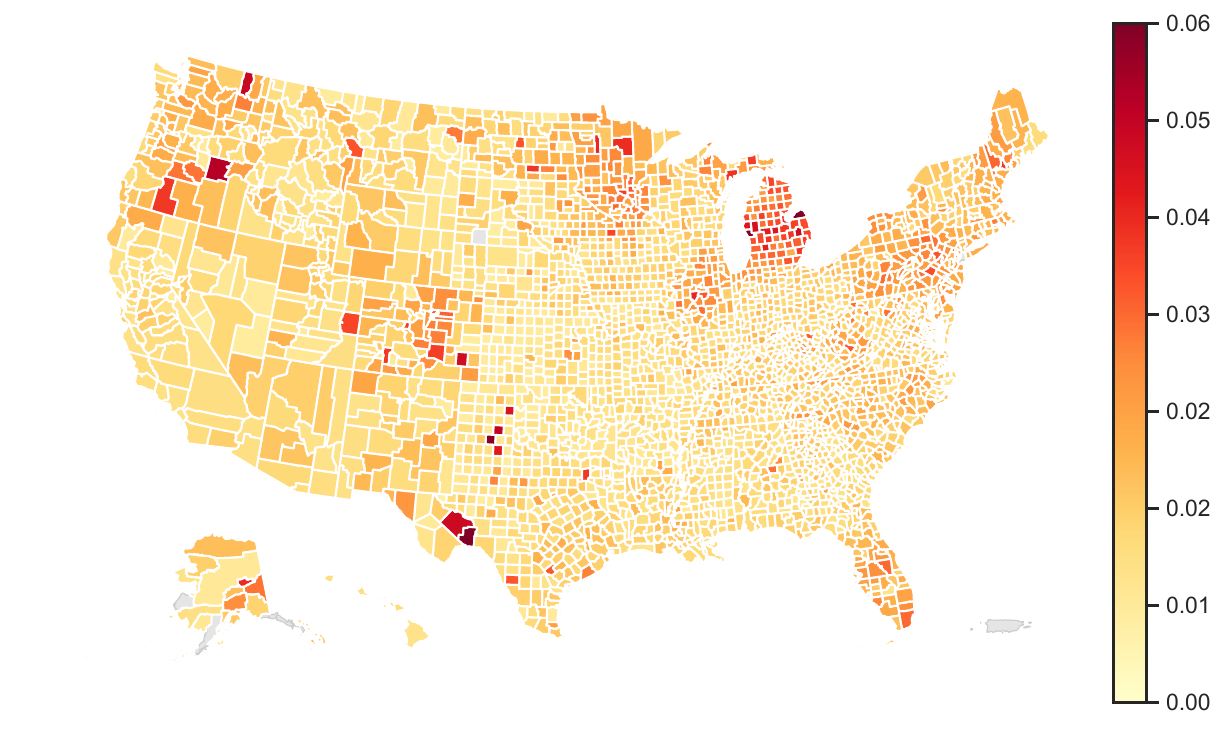} \\
  \includegraphics[width=0.9\linewidth]{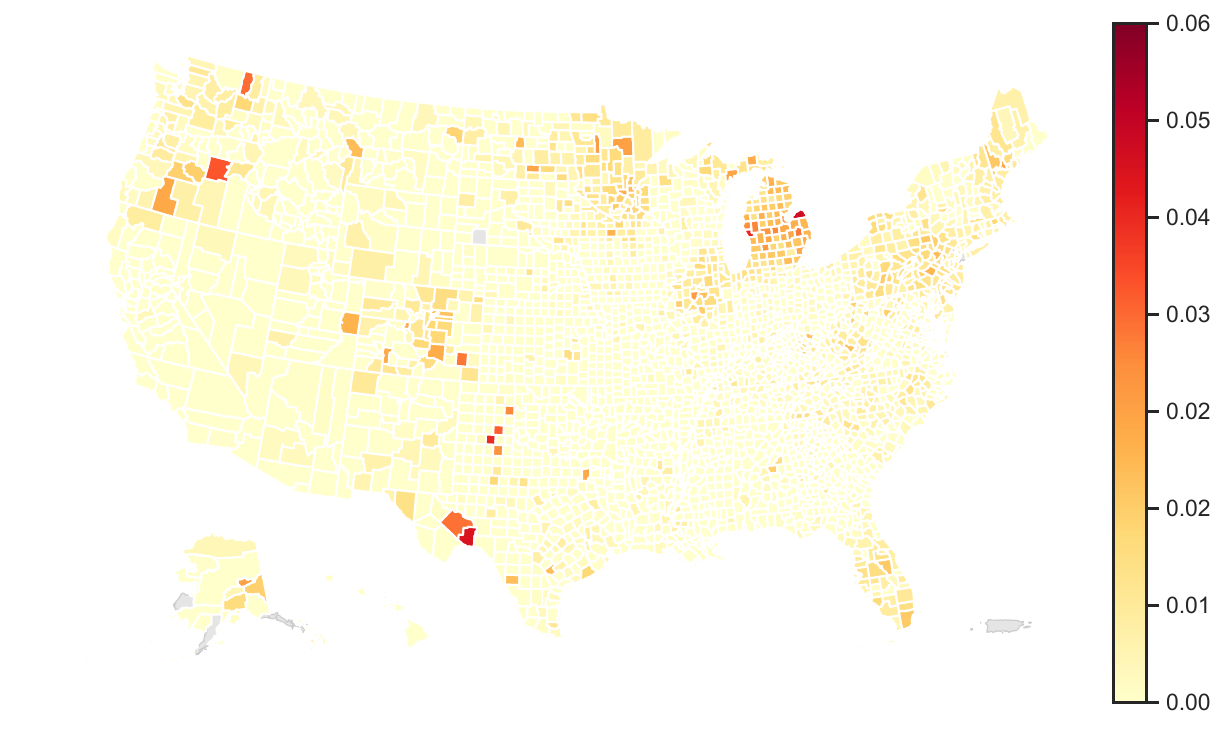}
  \caption{The upper (top panel) and lower (bottom panel) endpoints of the confidence intervals that our robust conformal methodology generates across the United States on April 16, 2021.}
  \label{fig:covid-hi-lo}
\end{figure}
\clearpage

\subsection{Experiments on covariate sensitivity}
\label{sec:covariate-sensitivity}

Our final experiment is to evaluate our sensitivity predictions for
covariate shift, as in Sec.~\ref{sec:sensitivity}. The point is twofold: we
(i) identify covariates for which covarage may be sensitive, then (ii) test
whether these putative sensitivities are indeed present in data. To do so,
we consider three datasets from the UCI repository \citep{DuaGr17}:
real-estate data, weather history data, and wine quality data.

We repeat the following experiment 25 times:
we randomly partition each
dataset into disjoint sets $D_\train, D_\val, D_{\text{sens}}, D_\test$ each
containing respectively $40\%, 10\%,30\%,10\%$ of the data, then fit a
linear regression model $\mu$ using $D_\train$ and construct conformal
intervals of the form~\eqref{eqn:confidence-set} with $\score(x, y) =
|\mu(x)- y|$, so that $\what{C}_n(x) = \{y \in \R \mid |\mu(x) - y| \le
\hat{t}\}$, setting the threshold $\hat{t}$ so that we achieve coverage at
nominal level $\alpha = .1$ on $D_\val$.  We estimate the sensitivity
function using $D_{\text{sens}}$ as in Algorithm~\ref{alg:sensitivity1}
for each singleton covariate (i.e.\ the covariate set $I = \{i\}$ for
each of $i = 1, 2, \ldots$),
where we estimate the conditional probabilities of miscoverage using default
tuning parameeters in R's version of random forests.

Figure~\ref{fig:sens-fig} shows the results. The plot is somewhat complex:
for each of the three datasets, we estimate sensitivity (as a function of
shift $\rho$) for each covariate in the dataset (e.g.\ \texttt{House age} in
the real estate data). Then for an individual covariate, we plot (estimated)
maximum miscoverage as a function of the radius $\rho$ of potential shift in
that covariate (the estimated sensitivity function~\eqref{eqn:covsens-fcn},
where $I = \{i\}$ is the covariate of interest); this is the red solid line
in each plot. As we are curious about coverage losses under covariate
shifts, we plot miscoverage (dashed lines) on the subset of the test data
$D_\test$ containing examples either from the upper or lower $e^{-\rho}$
quantiles of each covariate, which corresponds to R\'{e}nyi
$\infty$-divergence $\rho$, as in Lemma~\ref{lemma:cvar-calc}.  We expect
that these miscoverages to fall below the maximum miscoverage line,
which we observe across all three datasets.
Specifically, we see that for real estate data, coverage of the
corresponding confidence sets drops most when the marginal distribution of
the covariate ``House age'' shifts while that for weather history data, the
coverage drops most for shifts in the ``Pressure'' covariate.  For the wine
quality dataset, coverage seems almost equally sensitive to all covariates.
An interesting question for future work is to identify those directions
which \emph{are} sensitive---as opposed to the approach here, which
identifies potentially sensitive covariates.

\begin{figure}
  \centering
\begin{overpic}[
  				scale=0.28]{%
     sensitivity/real-estate.pdf}
  \put(40, -1){
        \small Real estate data}
 \put(-1,10){
      \tikz{\path[draw=white, fill=white] (0, 0) rectangle (.3cm, 6cm)}
    }
   \put(-1,25){\rotatebox{90}{
      $\rho$}
    }
  \end{overpic}

    \centering
  \begin{overpic}[
  				scale=0.28]{%
     sensitivity/weather-history.pdf}
  \put(40, -1){
        \small Weather history data }
 \put(-1,10){
      \tikz{\path[draw=white, fill=white] (0, 0) rectangle (.3cm, 6cm)}
    }
   \put(-1,20){\rotatebox{90}{
      $\rho$}
    }
  \end{overpic}

    \centering
    \begin{overpic}[
  				scale=0.28]{%
     sensitivity/wine-quality.pdf}
  \put(40, -1){
        \small Wine quality data }
 \put(-1,10){
      \tikz{\path[draw=white, fill=white] (0, 0) rectangle (.3cm, 6cm)}
    }
   \put(-1,20){\rotatebox{90}{
      $\rho$}
    }
  \end{overpic}

  \vspace{0.2in}
 
  \caption{Sensitivity of (mis)-coverage for three datasets. Red line shows maximum miscoverage possible within a given shift in marginal distribution of a covariate with respect to limiting $f$-divergences. Dashed lines show miscoverage on a subset of test data that contains samples for which the corresponding covariate takes values in the upper or lower $e^{-\rho}$ quantiles of that covariate.}
  \label{fig:sens-fig}
\end{figure}

\section{Proofs of results on robust inference}

\subsection{Proof of Proposition~\ref{proposition:rho-vs-alpha}}
\label{sec:proof-rho-vs-alpha}

We provide several properties of $g_{f,\rho}(\beta) = \inf\{z \in [0, 1] :
\beta f(\frac{z}{\beta}) + (1 - \beta) f(\frac{1-z}{1 - \beta}) \le \rho\}$,
deferring their proof to Sec.~\ref{sec:proof-properties-robust-cdf}.
\begin{lemma}[Properties of $g_{f,\rho}$]
  \label{lemma:properties-robust-cdf}
  Let $f$ be a closed convex function such that $f(1) = 0$ and $f(t) < \infty$ for all $t > 0$. Then the function
  $g_{f,\rho}$ satisfies the following.
  \begin{enumerate}[(a)]
  \item $(\beta, \rho) \mapsto g_{f,\rho}(\beta)$ is a convex
    function.
  \item \label{item:gf-rho-monotonicity}
    $g_{f,\rho}$ is non-increasing in $\rho$ and non-decreasing in
    $\beta$. Moreover, for all $\rho >0$, there exists $\beta_0(\rho) \defeq
    \sup \{\beta \in (0,1) \mid g_{f,\rho}(\beta) = 0 \}$, and
    $g_{f,\rho}$ is strictly increasing for $\beta > \beta_0(\rho)$.
  \item \label{item:gf-rho-continuity}
    $(\beta, \rho) \mapsto g_{f,\rho}(\beta)$ is continuous for $\beta
    \in [0, 1]$ and $\rho \in (0,\infty)$.
  \item For $\beta \in [0,1]$ and $\rho > 0 $, $g_{f,\rho}(\beta) \le
    \beta$.  For $\rho > 0$, equality holds for $\beta = 0$,
    strict inequality holds for $\beta \in (0,1)$ and $\rho > 0$,
   and $g_{f,\rho}(1)=1$ if and only if $f'(\infty) = \infty$.
  \item Let $g_{f,\rho}^{-1}(t) = \sup\{\beta : g_{f,\rho}(\beta) \le t\}$
    as in the statement of Proposition~\ref{proposition:rho-vs-alpha}.
    Then for $\beta \in (0, 1)$, $g_{f,\rho}(\tau) \ge \beta$ if and only
    if $g_{f,\rho}^{-1}(\beta) \le \tau$.
  \end{enumerate}
\end{lemma}

We now define the worst-case cumulative distribution function, which
generalizes the c.d.f.\ of a distribution in the same way the worst-case
quantile generalizes standard quantiles.
\begin{definition}[$f$-worst-case c.d.f.]
  Let $\rho > 0$ and consider any distribution $P$ on the real line.
  The \emph{$(f,\rho)$-worst-case cumulative distribution function} is
  \begin{align}
    \label{eqn:robust-f-div-cdf-def}
    \cdfrob_{f,\rho}(t; \, P) \defeq 
    \inf \big\{ P_1(S \le t) \mid S \sim P_1, \; \fdivs{P_1}{P}
    \le \rho \big\}.
  \end{align}
\end{definition}

Proposition~\ref{proposition:rho-vs-alpha} will then follow from the coming
lemma.
\begin{lemma}
  \label{lemma:effective-cdf}
  Let $P$ be a distribution on $\R$ with c.d.f.\ $F$.
  Then
  \begin{align}
    \cdfrob_{f,\rho}(t; \, P) = g_{f,\rho}(F(t)).
  \end{align}
\end{lemma}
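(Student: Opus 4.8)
The plan is to reduce the infimum defining $\cdfrob_{f,\rho}(t;P)$ to a one-dimensional problem by coarsening the real line into the two cells $A\defeq(-\infty,t]$ and $A^c$. Set $\beta\defeq P(A)=F(t)$, and for a candidate distribution $P_1$ set $z\defeq P_1((-\infty,t])$; write $h(\beta,z)\defeq\beta f(z/\beta)+(1-\beta)f((1-z)/(1-\beta))$ for the quantity appearing in the definition of $g_{f,\rho}$. Let $T:\R\to\{0,1\}$ be $T(s)\defeq\indic{s\le t}$, so the law of $T(S)$ is $\mathrm{Ber}(\beta)$ when $S\sim P$ and $\mathrm{Ber}(z)$ when $S\sim P_1$, and in particular $\fdiv{\mathrm{Ber}(z)}{\mathrm{Ber}(\beta)}=h(\beta,z)$ whenever $\beta\in(0,1)$. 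The two ingredients are then the data processing inequality for $f$-divergences applied to $T$, and an explicit two-cell reweighting realizing the optimum.

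For the lower bound $\cdfrob_{f,\rho}(t;P)\ge g_{f,\rho}(F(t))$, fix any $P_1$ with $\fdiv{P_1}{P}\le\rho$ and first suppose $\beta\in(0,1)$. The data processing inequality gives $h(\beta,z)=\fdiv{\mathrm{Ber}(z)}{\mathrm{Ber}(\beta)}\le\fdiv{P_1}{P}\le\rho$, so $z$ is feasible for the problem defining $g_{f,\rho}(\beta)$ and hence $z\ge g_{f,\rho}(\beta)$. Taking the infimum over $P_1$ proves the bound in this case. If $\beta\in\{0,1\}$, then since $f$ is $1$-coercive the finiteness of $\fdiv{P_1}{P}$ forces $P_1\ll P$, which makes $z=\beta$; and $g_{f,\rho}(0)=0$, $g_{f,\rho}(1)=1$ by Lemma~\ref{lemma:properties-robust-cdf}(b), so again $z\ge g_{f,\rho}(\beta)$.

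For the matching upper bound, when $\beta\in(0,1)$ I would note that $\{z\in[0,1]:h(\beta,z)\le\rho\}$ is closed, because $z\mapsto h(\beta,z)$ is a closed convex function (a perspective transform of the closed convex $f$); hence the infimum defining $g_{f,\rho}(\beta)$ is attained at some $z^\star$. The reweighting $P_1$ with $\frac{dP_1}{dP}=\frac{z^\star}{\beta}\indic{\cdot\in A}+\frac{1-z^\star}{1-\beta}\indic{\cdot\notin A}$ is then a probability measure with $\fdiv{P_1}{P}=h(\beta,z^\star)\le\rho$ and $P_1((-\infty,t])=z^\star=g_{f,\rho}(\beta)$, witnessing $\cdfrob_{f,\rho}(t;P)\le g_{f,\rho}(F(t))$. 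When $\beta\in\{0,1\}$ every feasible $P_1$ already has $P_1((-\infty,t])=\beta=g_{f,\rho}(\beta)$, so the bound is immediate. Combining the two inequalities proves the lemma.

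The only mildly delicate points are the absolute-continuity reduction at the boundary values of $\beta$ (precisely where $1$-coercivity of $f$ is used) and the attainment of the infimum in the definition of $g_{f,\rho}$; the conceptual core---that restricting attention to reweightings that are constant on $A$ and on $A^c$ is lossless---is exactly the data processing inequality together with the explicit construction, and is not technically hard. As a byproduct, this yields Proposition~\ref{proposition:rho-vs-alpha} through the usual quantile/c.d.f.\ duality: $\WCQuantile_{f,\rho}(\alpha;P)=\inf\{t:\cdfrob_{f,\rho}(t;P)\ge\alpha\}=\inf\{t:g_{f,\rho}(F(t))\ge\alpha\}$, which by Lemma~\ref{lemma:properties-robust-cdf}(d) equals $\inf\{t:F(t)\ge g_{f,\rho}^{-1}(\alpha)\}=\Quantile(g_{f,\rho}^{-1}(\alpha);P)$.
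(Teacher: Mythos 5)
Your proof is correct and is essentially the same argument as the paper's: both directions reduce the worst-case probability to the two-cell Bernoulli problem, with the lower bound following from the data-processing inequality and the upper bound from the explicit constant-on-each-cell reweighting, and both handle the boundary values $F(t)\in\{0,1\}$ via $1$-coercivity forcing $P_1\ll P$. The only (cosmetic) difference is that you apply data processing directly to the deterministic pushforward $T(s)=\indic{s\le t}$, whereas the paper phrases the same step through a Markov kernel $K$ on $\R$ that leaves $P$ invariant and maps $P_1$ to a two-cell reweighting $\tilde P$; these are two packagings of the same inequality.
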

\noindent Deferring the proof of this lemma as well (see
Sec.~\ref{sec:proof-effective-cdf}), let us see how it implies
Proposition~\ref{proposition:rho-vs-alpha}. Observe that for all
$\beta \in (0,1)$, and any real distribution $P$ with c.d.f.\ $F$, we have
\begin{align*}
  \WCQuantile_{f,\rho}(\beta; \, P) 
  &=  
  \inf \big\{ q \in \R \mid \cdfrob_{f,\rho}(q, P) \geq \beta \big\} \\
  & \stackrel{(i)}{=}
  \inf \big\{ q \in \R \mid g_{f,\rho}(F(q)) \ge \beta \big\} \\
  & \stackrel{(ii)}{=} \inf
  \big\{ q \in \R \mid F(q) \ge g_{f,\rho}^{-1}(\beta) \big\}
  = \Quantile(g_{f,\rho}^{-1}(\beta); \, P),
\end{align*}
where equality~$(i)$ uses Lemma~\ref{lemma:effective-cdf} and~$(ii)$
follows because by Lemma~\ref{lemma:properties-robust-cdf}, as
$g_{f,\rho}(\tau) \ge \beta$ if and only if $g_{f,\rho}^{-1}(\beta) \le
\tau$.

\subsubsection{Proof of Lemma~\ref{lemma:properties-robust-cdf}}
\label{sec:proof-properties-robust-cdf}
\providecommand{\persp}{_{\textup{per}}}

It is no loss of generality to assume that $f'(1) = 0$ and $f \ge 0$,
as replacing $f$ by $f_0(t) \defeq f(t) -f'(1)(t - 1)$ generates the same
$f$-divergence and evidently $\inf_t f_0(t) = f_0(1) = 0$.

\begin{enumerate}[(a)]
\item Let $f\persp(t, \beta) = \beta f(t / \beta)$ be the perspective
  transform of $f$, which is convex, with the understanding that
  \begin{itemize}
  \item $f\persp(0, \beta) = f(0) = f(0^+)$ for $\beta >0$,
  \item $f\persp(0, 0) = 0 f(0/0) = 0$,
  \item $f\persp(t, \beta) = 0 f(t/0) = t f'(\infty)$ for all $t>0$, where $f'(\infty) = \lim_{a \to \infty} f'(a) \in (0,\infty]$.
  \end{itemize} 
  Then $g_{f,\rho}(\beta)$ is the partial
  minimization of the convex function $(\rho, \beta, z) \mapsto z +
  \mathbf{I}(f\persp(z, \beta) + f\persp(1-z, 1 - \beta) \le \rho)$ and hence
  convex, where $\mathbf{I}(\cdot)$ is the convex indicator function,
  $+\infty$ if its argument is false
  and $0$ otherwise.
  (See \cite[Ch.~IV]{HiriartUrrutyLe93} for proofs of each of
  these claims and that the limits indeed exist.)
	
\item That $\rho \mapsto g_{f,\rho}(\beta)$ is non-increasing is evident.
  As $g$ is nonnegative, convex, and $g_{f,\rho}(0) = 0$, it must therefore
  be non-decreasing.  That $g_{f,\rho}(\beta) > 0$ is strictly increasing in
  $\beta > \beta_0(\rho)$ is again immediate by convexity as $g_{f,\rho}(0)
  = 0$.
  
  \item Any convex function is continuous on the interior of its domain,
    thus $g$ is continuous on $(0,1) \times (0,\infty)$.  To see that
    $g_{f,\rho}$ is continuous from the left at $\beta = 1$, first observe
    that $\beta \mapsto g_{f,\rho}(\beta)$ is non-decreasing by
    \eqref{item:gf-rho-monotonicity} (which
    only uses convexity and the fact that $g_{f,\rho}(0) = 0 f(0/0) = 0$),
    so we only need to prove that
 \begin{align*}
 \limsup_{\beta \uparrow 1} g_{f,  \rho}(\beta) \ge g_{f,\rho}(1) &= \inf \left\{ z\in [0,1]: f(z) + f'(\infty)(1-z) \le \rho \right\} \\  &= \sup \left\{ z \in (0,1) : f(z) + f'(\infty)(1-z) > \rho \right\},
 \end{align*}
 where the last equality follows from the fact that $z \mapsto f(z) + f'(\infty)(1-z)$ is decreasing on $[0,1]$.
However,  for any $z \in (0,1)$ such that $f(z) + f'(\infty)(1-z) > \rho$,  the continuity of $f$ in $z$ and the fact that $t f((1-z)/t) \underset{t \to 0}{\to} f'(\infty) (1-z)$ ensure the existence of $\beta_0 \in [z,1)$ such that $\beta_0 f(z / \beta_0) + (1-\beta_0) f((1-z)/(1-\beta_0)) > \rho$. 
Since $\tilde{z} \mapsto f\persp(\tilde{z}, \beta_0) + f\persp(1-\tilde{z}, 1-\beta_0)$ is non-increasing on $[0,\beta_0]$, this implies that $g_{f,\rho}(\beta_0) \ge z$, hence that $\limsup_{\beta \to 1} g_{f,\rho}(\beta) \ge z$, which concludes the proof.

  That $g_{f,\rho}$ is right continuous at $\beta = 0$ is immediate
  because $g_{f,\rho}$ is non-decreasing and convex.

\item
  The non-strict inequality is immediate by considering $z = \beta$ and
  using that $f(1) = 0$. The strict inequality is immediate
  because $f$ is continuous near $1$,  the equality for $\beta=0$ is trivial since $0 \le g_{f,\rho}(\beta) \le \beta$, and $g_{f,\rho}(1) = \inf \left\{ z\in [0,1]: f(z) + f'(\infty)(1-z) \le \rho \right\}$ equals $1$ if and only if $f'(\infty) = \infty$.
  
\item
  Let $g = g_{f,\rho}$ for shorthand.
  Suppose that $g(\tau) \ge \beta > 0$. Then
  as $g$ is strictly increasing when it is positive, we have
  $g(t) > g(\tau) \ge \beta$ for all $t > \tau$, so that
  $g^{-1}(\beta) \le t$ for any $t > \tau$, or $g^{-1}(\beta) \le \tau$.

  Now, assume the converse, that is, that $g^{-1}(\beta) \le \tau$, and
  assume for the sake of contradiction that $g(\tau) < \beta$.  By
  part~\eqref{item:gf-rho-monotonicity}, we must therefore have $\tau < 1$.
  As $g$ is continuous by part~\eqref{item:gf-rho-continuity}, we have
  $g(\tau + \epsilon) \le \beta$ for all sufficiently small $\epsilon > 0$,
  contradicting that $g^{-1}(\beta) \le \tau$.  Thus we must have $g(\tau)
  \ge \beta$.
\end{enumerate}

\subsubsection{Proof of Lemma~\ref{lemma:effective-cdf}}
\label{sec:proof-effective-cdf}

Recall that $P$ is a real distribution with c.d.f.\ $F$.  
We treat the cases $F(t) = 0$, $F(t) \in (0,1)$ and $F(t) = 1$ separately.

\begin{itemize}
\item If $F(t) = 0$, the result is immediate, since we have $0 \le \cdfrob_{f,\rho}(t; P) \le F(t)$.
\item Suppose now that $0 < F(t)
= P( S \le t) < 1$.
The inequality $\cdfrob_{f,\rho}(t; \, P) \le g_{f,\rho}(F(t))$ is
immediate:
\begin{align*}
  \lefteqn{
    \inf\left\{P_1(S \le t) \mid \fdiv{P_1}{P} \le \rho \right\}
  } \\
  & \le
  \inf\left\{P_1(S \le t) \mid \fdiv{P_1}{P} \le \rho,
  ~ \frac{dP_1}{dP} ~ \mbox{is~constant~on~}
  \{S \le t\} ~ \mbox{and} ~ \{S > t\} \right\}.
\end{align*}
The reverse inequality is a consequence of the data processing
inequality~\cite{LieseVa06}. Fix $t \in \R$. Let $P_1$
be a distribution satisfying $\fdiv{P_1}{P} \le \rho$. We show how
to construct $\tilde{P}$ with $\fdivs{\tilde{P}}{P}
\le \fdivs{P_1}{P}$ and $\tilde{P}(S \le t) = P_1(S \le t)$.
Indeed, define the Markov kernel $K$ by
\begin{align*}
  K \left(ds^\prime \mid s \right) \propto              
  \left\{
  \begin{array}{lr}
    dP(s^\prime) \indic{s^\prime \le t}, &\text{if } s \le t\\
    dP(s^\prime) \indic{s^\prime > t},   &\text{if } s > t. \\
  \end{array}\right.
\end{align*}
Then $P = K \cdot P$, while $\tilde{P} \defeq K \cdot P_1$
satisfies
\begin{align*}
  \fdivs{\tilde{P}}{P}
  = \fdivs{K \cdot P_1}{K \cdot P}
  \le \fdivs{P_1}{P} \le \rho
\end{align*}
by the data processing inequality. Now we observe that
\begin{align*}
  d\tilde{P}(s) = \left(
  \frac{P_1(S \le t)}{P(S \le t)}\indic{S \le t}
  +  \frac{P_1(S > t)}{P(S > t)}\indic{S > t} 
  \right) dP(s).
\end{align*}
By construction, $\tilde{P}(S \le t) = P_1(S \le t)$,
and it is immediate that
\begin{align*}
  \fdivs{\tilde{P}}{P}
  = P(S\le t)f\left(\frac{P_1(S \le t)}{P(S \le t)}\right)
  + P(S> t)f\left(\frac{P_1(S > t)}{P(S > t)}\right).
\end{align*}
Matching the expression of $\fdivs{\tilde{P}}{P}$ to the definition of
$g_{f,\rho}$ gives $g_{f,\rho}(F(t)) \le P_1(S \le t)$.  Taking the infimum
over all possible distributions $P_1$ concludes the proof.

\item Finally, if $F(t) = P(S \le t) = 1$, we have $\cdfrob_{f,\rho}(t; \, P) \le g_{f,\rho}(1)$ since for any $z \in (g_{f,\rho}(1),1]$, the distribution $P_{z,1} \defeq (1-z) \delta_{t+1} + z P$ satisfies $\fdiv{P_{z,1}}{P} \le \rho$ and $P_{z,1}(S\le t) = z$.
The proof of the other inequality is similar to the case where $F(t) \in (0,1)$, except a valid Markov kernel $K$ is now
\begin{align*}
  K \left(ds^\prime \mid s \right) \propto              
  \left\{
  \begin{array}{lr}
    dP(s^\prime) \indic{s^\prime \le t}, &\text{if } s \le t\\
    \delta_{s^\prime = t+1},   &\text{if } s > t,\\
  \end{array}\right.
\end{align*}
to account for the fact that $P(S>t) = 0$.
\end{itemize}


\subsection{Proof of Proposition~\ref{proposition:cvg-only-test}}
\label{sec:proof-cvg-only-test}

Since $\rho\opt = \fdivs{P_\textup{test}}{P_0} < \infty$, the definition of
$\cdfrob_{f,\rho}$ and Lemma~\ref{lemma:effective-cdf}
imply that for all $q \in \R$,
\begin{align*}
  F_\textup{test}(q) \ge \cdfrob_{f,\rho^\star}(q, P_0) = g_{f,\rho^\star}(F_0(q)).
\end{align*}
Applying this inequality with $q \defeq \WCQuantile_{f,\rho}(1-\alpha; \hat
P_n) = \Quantile(g_{f,\rho}^{-1}(1-\alpha); \hat P_n)$, we obtain
\begin{align*}
  \P \Big( Y_{n+1} \in \hat C_{n,f, \rho}(X_{n+1}) \mid \{(X_i,Y_i)\}_{i=1}^n
  \Big) & \stackrel{(i)}{=} F_\textup{test}( \WCQuantile_{f,\rho}(1-\alpha; \hat
  P_n) ) \\ &\ge g_{f,\rho^\star}(F_0(\WCQuantile_{f,\rho}(1-\alpha; \hat
  P_n))) \\ &\stackrel{(ii)}{=}
  g_{f,\rho^\star}(F_0(\Quantile(g_{f,\rho}^{-1}(1-\alpha); \hat P_n))),
\end{align*}
where equality~$(i)$ uses that $\score(X_{n+1},Y_{n+1}) \sim \Ptest$ is
independent of $\{(X_i, Y_i) \}_{i=1}^n$
and~$(ii)$ is Proposition~\ref{proposition:rho-vs-alpha}.

\subsection{Proof of Theorem~\ref{theorem:robust-coverage-marginal}}
\label{sec:proof-robust-coverage-marginal}

We require the following lemma to prove the theorem.
\begin{lemma}[Quantile coverage~\cite{VovkGaSh05,
      LeiGSRiTiWa18, BarberCaRaTi19a}]
  \label{lemma:conformal-inference-coverage-cdf}
  Assume that $\{ \scorerv_i \}_{i=1}^n \simiid P_0$ with c.d.f.\ $F_0$, and
  let $\hat P_n$ be their empirical distribution.  Then for all $\beta \in
  (0,1)$,
  \begin{align*}
    \E \left[ F_0\left( \Quantile(\beta; \, \hat P_n) \right) \right] \ge \frac{\ceil{n\beta}}{n+1}.
  \end{align*}
\end{lemma}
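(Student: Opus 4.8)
The plan is to identify the empirical quantile with an order statistic, recognize $\E[F_0(\cdot)]$ as a coverage probability for a fresh independent draw, and then invoke the rank-uniformity argument underlying split conformal prediction. Since $\hat P_n$ puts mass $1/n$ on each $S_i$, we have $\hat P_n(S \le s) = \frac{1}{n}\#\{i \le n : S_i \le s\}$, so $\Quantile(\beta; \hat P_n) = \inf\{s : \#\{i \le n : S_i \le s\} \ge \ceil{n\beta}\} = S_{(k)}$, the $k$-th order statistic of $S_1, \ldots, S_n$, where $k \defeq \ceil{n\beta} \in \{1, \ldots, n\}$. Introducing $S_{n+1} \sim P_0$ independent of $S_1, \ldots, S_n$ and using $F_0(s) = \P(S_{n+1} \le s)$ together with the tower property gives
\[
  \E\big[F_0(\Quantile(\beta;\hat P_n))\big]
  = \E\big[\P(S_{n+1} \le S_{(k)} \mid S_1, \ldots, S_n)\big]
  = \P(S_{n+1} \le S_{(k)}).
\]

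Next I would convert the event $\{S_{n+1} \le S_{(k)}\}$ into a statement about ranks among all $n+1$ variables. The elementary, tie-aware equivalence is that $S_{n+1} \le S_{(k)}$ holds if and only if at least $n-k+1$ of $S_1, \ldots, S_n$ are $\ge S_{n+1}$, equivalently $\#\{i \le n : S_i < S_{n+1}\} \le k-1$. By exchangeability of $S_1, \ldots, S_{n+1}$, for each $j$ the indicator $A_j \defeq \indic{\#\{i \ne j : S_i < S_j\} \le k-1}$ has the same expectation, which therefore equals $\P(S_{n+1} \le S_{(k)})$. Sorting the full sample $S_{[1]} \le \cdots \le S_{[n+1]}$, the index attaining $S_{[m]}$ has at most $m-1$ strictly smaller companions, so $A_j = 1$ for each of the $k$ distinct indices realizing the $k$ smallest values; hence $\sum_{j=1}^{n+1} A_j \ge k$. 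Taking expectations,
\[
  (n+1)\,\P(S_{n+1} \le S_{(k)}) = \sum_{j=1}^{n+1}\E[A_j] \ge k,
\]
so $\E[F_0(\Quantile(\beta;\hat P_n))] = \P(S_{n+1} \le S_{(k)}) \ge \frac{k}{n+1} = \frac{\ceil{n\beta}}{n+1}$, which is the claim.

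The only delicate point is the tie bookkeeping: one must consistently use ``strictly less than'' in the definition of $A_j$ and in the sorting bound so that the inequalities in the equivalence and in the counting step point the right way. An alternative route notes that $F_0(S) \succeq \mathrm{Unif}[0,1]$ in the stochastic order and runs the order-statistic computation on uniforms, but the rank argument above avoids any continuity hypothesis; when $F_0$ is continuous, ties vanish almost surely and $F_0(S_{(k)}) \sim \mathrm{Beta}(k, n-k+1)$, so $\E[F_0(S_{(k)})] = k/(n+1)$ exactly and the bound is tight. Everything else is routine.
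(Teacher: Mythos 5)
Your proof is correct and rests on the same core idea as the paper's: introduce a fresh draw $S_{n+1} \sim P_0$, observe that $\E[F_0(\Quantile(\beta;\hat P_n))] = \P(S_{n+1} \le \Quantile(\beta;\hat P_n))$, and then bound this probability via the exchangeability of $S_1,\ldots,S_{n+1}$. Where you differ is in the handling of ties. The paper breaks ties \emph{uniformly at random}, which makes the rank of $S_{n+1}$ among $\{S_i\}_{i=1}^{n+1}$ exactly uniform on $\{1,\ldots,n+1\}$, so the bound follows in one line. You instead give a deterministic counting argument: each indicator $A_j = \indic{\#\{i\ne j : S_i < S_j\} \le k-1}$ has common expectation by exchangeability, and a pointwise sort of the $n+1$ values shows $\sum_j A_j \ge k$ surely. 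Both are standard variants of the conformal rank lemma; yours is slightly more self-contained (it avoids introducing an auxiliary randomization and the assertion that the resulting rank is uniform), at the cost of a longer tie-bookkeeping step. One small point: in Step 1 you write $\Quantile(\beta;\hat P_n) = S_{(k)}$ with $k = \ceil{n\beta}$; since $\beta \in (0,1)$ we have $k \in \{1,\ldots,n\}$, so the order statistic exists, which is worth stating. The rest is airtight.
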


We include the brief proof of
Lemma~\ref{lemma:conformal-inference-coverage-cdf} below for completeness,
giving the proof of Theorem~\ref{theorem:robust-coverage-marginal} here.
By Proposition~\ref{proposition:cvg-only-test}, for
$\rho\opt = \fdivs{P_{\textup{test}}}{P_0} < \infty$,
we have
\begin{align*}
  \P \Big( Y_{n+1} \in \hat C_{n,f, \rho}(X_{n+1}) \mid  \{(X_i,Y_i)\}_{i=1}^n \Big)  
  \ge g_{f,\rho\opt}(F_0(\Quantile(g_{f,\rho}^{-1}(1-\alpha); \hat P_n))).
\end{align*}
Marginalizing over $(X_i, Y_i)$,
this implies that
\begin{align*}
  \P \Big( Y_{n+1} \in \hat C_{n,f, \rho}(X_{n+1}) \Big)  
  & \ge \E \left[ g_{f,\rho\opt}(F_0(\Quantile(g_{f,\rho}^{-1}(1-\alpha); \hat P_n))
    \right] \\
  & \stackrel{(i)}{\ge}  g_{f,\rho\opt}\left(
  \E\left[ F_0(\Quantile(g_{f,\rho}^{-1}(1-\alpha); \hat P_n) \right] \right) \\
  & \stackrel{(ii)}{\ge}
  g_{f,\rho\opt} \left(\frac{\ceil{n g_{f,\rho}^{-1}(1-\alpha)}}{n+1}\right),
\end{align*}
where inequality $(i)$ is a consequence of Jensen's inequality applied to
$g_{f,\rho\opt}$ (recall Lemma~\ref{lemma:properties-robust-cdf}(a)),
while inequality $(ii)$ uses
Lemma~\ref{lemma:conformal-inference-coverage-cdf} and that $\beta
\mapsto g_{f,\rho}(\beta)$ is non-decreasing.

\begin{proof-of-lemma}[\ref{lemma:conformal-inference-coverage-cdf}]
  Let $ S_{n+1} \sim P_0$ independent of $\{ S_i \}_{i=1}^n$. Then
  \begin{align*}
    \E \left[ F_0\left( \Quantile(\beta; \, P_n) \right) \right] 
    &= \P\left( S_{n+1} \le  \Quantile(\beta; \, P_n) \right) \\
    &\ge \P( \text{Rank of } S_{n+1} \text{ in } \{S_i\}_{i=1}^{n+1} \le \ceil{n \beta})
    = \frac{\ceil{n \beta}}{n+1},
  \end{align*}
  where we break ties uniformly at random to define the rank of $S_{n+1}$ in
  $\{S_i\}_{i=1}^{n+1}$, ensuring by exchangeability that it is uniform on
  $\{1,\ldots, n+1\}$.
\end{proof-of-lemma}


\subsection{Proof of Corollaries~\ref{corollary:almost-alpha-coverage}
  and~\ref{corollary:corrected-alpha-coverage}}
\label{sec:proof-alpha-coverages}

When $\rho\opt = \fdivs{\Ptest}{P_0} \ge \rho$,
Lemma~\ref{lemma:properties-robust-cdf} guarantees that $g_{f,\rho} \ge
g_{f,\rho\opt}$, so Theorem~\ref{theorem:robust-coverage-marginal}
gives
\begin{equation}
  \label{eqn:no-burritos-in-portland}
  \P(Y_{ n +1} \in \hat{C}_{n,f,\rho})
  \ge
  g_{f,\rho}\left(\frac{\ceil{n g_{f,\rho}^{-1}(1 - \alpha)}}{n+1}\right).
\end{equation}
To prove Corollary~\ref{corollary:almost-alpha-coverage},
note that as $g_{f,\rho}$ in convex, it has (at least) a left
derivative $g'_{f,\rho}$, which satisfies
\begin{align*}
g_{f,\rho}\Biggr( \frac{\ceil{n g_{f,\rho}^{-1}(1-\alpha)}}{n+1}\Biggr) \ge g_{f,\rho}\Biggr( \frac{n g_{f,\rho}^{-1}(1-\alpha)}{n+1}\Biggr) \ge 1 - \alpha - \frac{g_{f,\rho}^{-1}(1-\alpha) g'_{f,\rho} (g_{f,\rho}^{-1}(1-\alpha))}{n+1}.
\end{align*}
This gives the first corollary.

For the second corollary, replacing $\hat{C}$ in
Eq.~\eqref{eqn:no-burritos-in-portland}
with $\hat{C}^{\textup{corr}}$ gives
\begin{align*}
  \P(Y_{ n +1} \in \hat{C}^{\textup{corr}}_{n,f,\rho})
  & \ge
  g_{f,\rho}\left(
  \frac{\ceil{n g_{f,\rho}^{-1}
      \left(g_{f,\rho}\left((1 + 1/n) g_{f,\rho}^{-1}(1 - \alpha)\right)\right)}
  }{n+1}\right) \\
  & = g_{f,\rho}\left(\frac{\ceil{n(1 + 1/n) g_{f,\rho}^{-1}(1 - \alpha)}}{
    n + 1}\right)
  \ge g_{f,\rho}\left(g_{f,\rho}^{-1}(1 - \alpha)\right)
  \ge 1 - \alpha.
\end{align*}


\section{Proof of Theorem~\ref{theorem:high-probability-coverage}}
\label{sec:proof-high-probability-coverage}

Throughout the proof, we will typically not assume that the scores
$\score(X_i, Y_i)$ are distinct, and thus will not make
Assumption~\ref{assumption:continuity-scores-v}. Some inequalities will
require the assumption, which implies the distinctness
of the scores, and we will highlight those.

Recall that $\{(X_i,Y_i)\}_{i=1}^n \simiid Q_0$ and 
$\{\score(X_i,Y_i)\}_{i=1}^n \simiid P_0$, that for all $q \in \R$
\begin{align*}
  C^{(q)}(x) = \left\{ y \in \mc{Y} \mid \score(x,y) \le q \right\},
\end{align*}
and that we use $P_0(\cdot \mid X \in R)$ as shorthand for the law
of $\score(X, Y)$ for $(X, Y) \sim Q_0(\cdot \mid X \in R)$.
We also use $\empQ$ and $\empP$ for the empirical distributions f $Q$ and $P$,
respectively.
Observe that for all $q \in \R$ and $0<\delta < 1$, 
$\wcoverage(C^{(q)}, \mc{R}_v, \delta; \, Q_0) \ge 1 - \alpha$ 
if and only if
\begin{align*}
  \sup_{R \in \mc{R}_v: Q_0(R) \ge \delta}
  \Quantile(1-\alpha;P_0(\cdot \mid X \in R)) \le q.
\end{align*}

By a VC-covering argument (cf.~\cite[Sec.~A.4]{CauchoisGuDu21} or
\cite[Thm.~5]{BarberCaRaTi19a}), there exists a universal constant
$C_\varepsilon < \infty$ such that the following holds.  For $t>0$, define
$\epsilon_n(t) \defeq C_\varepsilon \sqrt{\dfrac{\text{VC}(\mc{R})\log(n) +
    t}{n}}$. Then with probability at least $1-\half e^{-t}$ over $\{X_i,Y_i
\}_{i =1}^n$, the following equations hold simultaneously for all $v \in
\mc{V}$:
\begin{align}
  \label{eqn:uniform-robust-estimation-cdf}
  \sup_{s \in \R} \left|
  \inf_{\substack{R \in \mc{R}_v \\ \empQ (R) \ge \delta }}
  \empP\left( \score(X,Y) \le s \mid X \in R \right) -  
  \inf_{\substack{R \in \mc{R}_v \\ \empQ(R) \ge \delta }}
  P_0\left( \score(X,Y) \le s \mid X \in R \right) \right|
  \le \frac{\varepsilon_n(t)}{\sqrt{\delta}}
\end{align} 
and
\begin{align}
  \label{eqn:uniform-probability-events}
  \sup_{R \in \mc{R}_v} \left|  \empQ( X \in R) - Q_0(X \in R) \right| \le  \varepsilon_n(t).
\end{align}
We assume for the remainder of the proof that
inequalities~\eqref{eqn:uniform-robust-estimation-cdf}
and~\eqref{eqn:uniform-probability-events} hold.

Define the 
empirical quantile
\begin{align*}
  \what{q}_n(v, \delta) \defeq \sup_{R \in \mc{R}_v}
  \left\{\Quantile(1-\alpha; \empP( \cdot \mid X \in R))
  ~ \mbox{s.t.}~
  \empQ( X \in R) \ge \delta \right\}.
\end{align*}
We first give a lemma on its coverage.
\begin{lemma}
  \label{lemma:empirical-q-once-coverage}
  Let the bounds~\eqref{eqn:uniform-robust-estimation-cdf}
  and~\eqref{eqn:uniform-probability-events} hold. Then
  \begin{equation}
    \label{eqn:quantile-comparison-over-v}
    \begin{split}
      \wcoverage(C^{(\what{q}_n(v, \delta))}, \mc{R}_v, \delta_{n}^{+}(t); \, Q_0)
      & ~ \ge ~ 1 - \alpha_{n}^{+}(t)  \\
      \wcoverage(C^{(\what{q}_n(v, \delta))}, \mc{R}_v, \delta_{n}^{-}(t); \, Q_0)
      & \stackrel{(\textup{\ref{assumption:continuity-scores-v}})}{\le}
      1 - \alpha_{n}^{-}(t)
    \end{split}
  \end{equation}
  simultaneously for all $v \in \mc{V}$,
  where the second inequality requires
  Assumption~\ref{assumption:continuity-scores-v}.
\end{lemma}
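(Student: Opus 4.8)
The plan is to prove both inequalities deterministically on the event that the concentration estimates~\eqref{eqn:uniform-robust-estimation-cdf} and~\eqref{eqn:uniform-probability-events} hold, using the identity $Q_0(Y \in C^{(q)}(X) \mid X \in R) = P_0(\score(X,Y) \le q \mid X \in R)$, which turns $\wcoverage(C^{(q)}, \mc{R}_v, \gamma; Q_0)$ into the infimum of the conditional score c.d.f.\ $R \mapsto P_0(\score \le q \mid X \in R)$ over $R \in \mc{R}_v$ with $Q_0(R) \ge \gamma$ (and symmetrically for the empirical versions). The threshold $\what{q}_n(v,\delta)$ is, by construction, a supremum of empirical conditional $(1-\alpha)$-quantiles, and I would use the elementary fact that $\empP(\score \le \Quantile(1-\alpha; \empP(\cdot \mid X \in R)) \mid X \in R) \ge 1-\alpha$ to tie it to these conditional c.d.f.s.

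First I would record the monotonicity of the feasible region in the size parameter: on the event~\eqref{eqn:uniform-probability-events}, $Q_0(R) \ge \delta_{n}^{+}(t)$ forces $\empQ(R) \ge \delta$ and $\empQ(R) \ge \delta$ forces $Q_0(R) \ge \delta_{n}^{-}(t)$, so that
\[
  \{R \in \mc{R}_v : Q_0(R) \ge \delta_{n}^{+}(t)\}
  \ \subseteq\ \{R \in \mc{R}_v : \empQ(R) \ge \delta\}
  \ \subseteq\ \{R \in \mc{R}_v : Q_0(R) \ge \delta_{n}^{-}(t)\},
\]
and the infimum of any fixed function of $R$ over these families is ordered the reverse way. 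This is why the two thresholds $\delta_{n}^{\pm}(t)$ enter, and it is the device for passing between the population and empirical feasible sets.

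For the lower bound, fix $v$ and abbreviate $\what{q} = \what{q}_n(v,\delta)$. Every $R$ with $\empQ(R) \ge \delta$ satisfies $\what{q} \ge \Quantile(1-\alpha; \empP(\cdot \mid X \in R))$, hence $\empP(\score \le \what{q} \mid X \in R) \ge 1-\alpha$; taking the infimum over such $R$ and invoking~\eqref{eqn:uniform-robust-estimation-cdf} at $s = \what{q}$ gives $\inf_{R : \empQ(R) \ge \delta} P_0(\score \le \what{q} \mid X \in R) \ge 1 - \alpha - \varepsilon_n(t)/\sqrt{\delta}$. Shrinking the index set to $\{R : Q_0(R) \ge \delta_{n}^{+}(t)\}$ only raises the infimum, and the left-hand side then equals $\wcoverage(C^{(\what{q})}, \mc{R}_v, \delta_{n}^{+}(t); Q_0)$, so choosing the universal constant $c \ge C_\varepsilon$ yields $\wcoverage(C^{(\what{q})}, \mc{R}_v, \delta_{n}^{+}(t); Q_0) \ge 1 - \alpha_{n}^{+}(t)$; since the concentration events are simultaneous over $v$, this is uniform in $v$.

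For the overcoverage bound I would first note that as $R$ ranges over $\mc{R}_v$ the intersection $R \cap \{X_i\}_{i=1}^n$ takes only finitely many values, so $\empP(\cdot \mid X \in R)$ ranges over a finite family and the supremum defining $\what{q}$ is attained at some $R^\star$ with $\empQ(R^\star) \ge \delta$, hence $|R^\star \cap \{X_i\}| \ge \delta n$. Under Assumption~\ref{assumption:continuity-scores-v} the $n$ scores are almost surely distinct, so every atom of $\empP(\cdot \mid X \in R^\star)$ has mass at most $1/(\delta n)$; with the elementary overshoot bound $\mu(S \le \Quantile(\beta;\mu)) \le \beta + \max_s \mu(\{s\})$ for discrete $\mu$, this gives $\empP(\score \le \what{q} \mid X \in R^\star) \le 1 - \alpha + 1/(\delta n)$, and therefore $\inf_{R : \empQ(R) \ge \delta} \empP(\score \le \what{q} \mid X \in R) \le 1 - \alpha + 1/(\delta n)$. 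Applying~\eqref{eqn:uniform-robust-estimation-cdf} once more transfers this to $P_0$, and the first inclusion above passes to the coarser family $\{R : Q_0(R) \ge \delta_{n}^{-}(t)\}$, so $\wcoverage(C^{(\what{q})}, \mc{R}_v, \delta_{n}^{-}(t); Q_0) \le 1 - \alpha + 1/(\delta n) + \varepsilon_n(t)/\sqrt{\delta} \le 1 - \alpha_{n}^{-}(t)$ after enlarging $c$. The two delicate points, and where I expect the real work to lie, are: (i)~\eqref{eqn:uniform-robust-estimation-cdf} controls a \emph{difference of infima} over $R$ rather than individual conditional c.d.f.s, so every estimate must be routed through the nested feasible families above instead of a fixed $R$; and (ii) the overshoot control in the upper bound, where Assumption~\ref{assumption:continuity-scores-v} is essential---without it a single conditional empirical distribution could be nearly a point mass (e.g.\ if all scores landing in $R^\star$ coincide), pushing the worst coverage toward $1$---so continuity is used precisely to cap the atoms of $\empP(\cdot \mid X \in R^\star)$ by $1/(\delta n)$.
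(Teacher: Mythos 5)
Your proof is correct, and while the lower bound tracks the paper's argument, your upper (overcoverage) bound takes a genuinely different and arguably more transparent route.

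For the lower bound you and the paper do essentially the same thing: show that at the threshold $\what q_n(v,\delta)$ the empirical worst conditional coverage over $\{R : \empQ(R)\ge\delta\}$ is at least $1-\alpha$, transfer to $P_0$ via~\eqref{eqn:uniform-robust-estimation-cdf}, and tighten the feasibility constraint to $Q_0(R)\ge\delta_n^+(t)$ using the nested families~\eqref{eqn:uniform-probability-events}. (Incidentally, the inclusions as printed in the paper's proof have the subset symbols reversed; your version, with $\{Q_0(R)\ge\delta_n^+\}\subseteq\{\empQ(R)\ge\delta\}\subseteq\{Q_0(R)\ge\delta_n^-\}$, is the one that is actually used.)

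For the upper bound the paper introduces the auxiliary population threshold $q_n^{\sup}(v)\defeq\sup\{\Quantile(1-\alpha_n^-(t);P_0(\cdot\mid X\in R)) : Q_0(R)\ge\delta_n^-\}$, proves $\what q_n(v,\delta)\le q_n^{\sup}(v)$, and then invokes Assumption~\ref{assumption:continuity-scores-v} to argue $\wcoverage(C^{(q_n^{\sup}(v))},\mc{R}_v,\delta_n^-;Q_0)=1-\alpha_n^-(t)$ exactly, finishing by monotonicity of $q\mapsto\wcoverage(C^{(q)},\cdot)$. You instead stay entirely on the empirical side: since $\empP(\cdot\mid X\in R)$ ranges over a finite family as $R$ varies, the supremum defining $\what q$ is attained at some $R^\star$ with $|R^\star\cap\{X_i\}|\ge\delta n$, and continuity is used only to ensure the scores are almost surely distinct, so each atom of $\empP(\cdot\mid X\in R^\star)$ has mass at most $1/(\delta n)$. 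The quantile overshoot bound $\mu(S\le\Quantile(\beta;\mu))\le\beta+\max_s\mu(\{s\})$ then gives $\empP(\score\le\what q\mid X\in R^\star)\le 1-\alpha+1/(\delta n)$, which you transfer and relax exactly as in the lower bound. This costs you an extra $1/(\delta n)$ that must be absorbed into the constant $c$ in $\alpha_n^-(t)$, which works so long as $\delta n\gtrsim 1$ (a mild implicit requirement since the empirical worst coverage over sets of mass $\ge\delta$ is vacuous otherwise). In exchange you avoid the paper's slightly delicate limiting argument showing that $\wcoverage$ at $q_n^{\sup}(v)$ equals $1-\alpha_n^-(t)$ even when that supremum is not attained. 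Both routes are valid; yours makes the role of Assumption~\ref{assumption:continuity-scores-v} (controlling empirical atom sizes) more concrete, while the paper's yields a cleaner intermediate identity.
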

\begin{proof}
  Applying the bounds~\eqref{eqn:uniform-robust-estimation-cdf},
  we can bound the worst-case quantiles via
  \begin{equation}
    \begin{split}
      \lefteqn{
        \sup_{R \in \mc{R}_v}
        \left\{ \Quantile(1-\alpha^+_n(t);
        P_0( \cdot \mid X \in R)) ~ \mbox{s.t.} ~
        \empQ(X \in R) \ge \delta \right\}
      }
      \\
      & \quad \le
      \what{q}_n(v, \delta)
      \le 
      \sup_{R \in \mc{R}_v}
      \left\{
      \Quantile(1-\alpha^-_n(t); P_0( \cdot \mid X \in R))
      ~ \mbox{s.t.}~
      \empQ( X \in R) \ge \delta \right\}.
    \end{split}
    \label{eqn:bound-empirical-quantile}
  \end{equation}
  The inclusions
  \begin{align*}
    \{ R \in \mc{R} \mid Q_0(X \in R) \ge \delta + \varepsilon_n(t) \}
    & \subset
    \{ R \in \mc{R} \mid \empQ(X \in R) \ge \delta \} \\
    & \subset \{ R \in \mc{R}
    \mid Q_0(X\in R) \ge \delta - \varepsilon_n(t) \}
  \end{align*}
  are an immediate consequence of
  inequality~\eqref{eqn:uniform-probability-events}, and, in turn, imply that
  for all $\alpha \in (0,1)$,
  \begin{align*}
    \sup_{\substack{R \in \mc{R}_v \\ Q_0( X \in R) \ge \delta^+_n(t)}} \Quantile(1-\alpha; P_0( \cdot \mid X \in R))
    & \le
    \sup_{\substack{R \in \mc{R}_v \\ \empQ( X \in R) \ge \delta}} \Quantile(1-\alpha; P_0( \cdot \mid X \in R)) \\
    &\le
    \sup_{\substack{R \in \mc{R}_v \\ Q_0( X \in R) \ge \delta^-_n(t)}} \Quantile(1-\alpha; P_0( \cdot \mid X \in R)).
  \end{align*}
  Combining these inclusions with the
  inequalities~\eqref{eqn:bound-empirical-quantile}, we thus obtain
  \begin{align}
   \label{eqn:quantile-comparison}
    \lefteqn{q_n^{\inf}(v)
      \defeq
      \sup_{R \in \mc{R}_v}
      \left\{\Quantile(1-\alpha^+_n(t); P_0( \cdot \mid X \in R))
      ~ \mbox{s.t.} ~ Q_0( X \in R) \ge \delta^+_n(t) \right\} } \nonumber \\
    & \le
    \what{q}_n(v, \delta) \\
    & \le 
    \sup_{R \in \mc{R}_v}
    \left\{
    \Quantile(1-\alpha^-_n(t); P_0( \cdot \mid X \in R))
    ~ \mbox{s.t.}~
    Q_0( X \in R) \ge \delta^-_n(t)\right\}
    \eqdef q_n^{\sup}(v) \nonumber.
  \end{align}
  The infimum and supremum quantiles satisfy
  \begin{equation*}
    \begin{split}
      \wcoverage(C^{(q_n^\text{inf}(v))}, \mc{R}_v,
      \delta_{n}^{+}(t); \, Q_0)
      & ~ \ge ~ 1 - \alpha_{n}^{+}(t) \\
      \wcoverage(C^{(q_n^\text{sup}(v))}, \mc{R}_v, \delta_n^{-}(t); \, Q_0)
      & \stackrel{(\textup{\ref{assumption:continuity-scores-v}})}{=}
      1 - \alpha_{n}^{-}(t),
    \end{split}
  \end{equation*}
  where the inequality always holds and the equality requires
  Assumption~\ref{assumption:continuity-scores-v}.

  We now observe that for any fixed $(v, \delta) \in \mc{V} \times (0,1)$,
  the function $q \mapsto \wcoverage(C^{(q)}, \mc{R}_v, \delta; \, Q_0)$ is
  non-decreasing, since the confidence sets $C^{(q)}(x)$ increase as $q$
  increases.  Recalling inequalities~\eqref{eqn:quantile-comparison}, we
  conclude that
\begin{align*}
\wcoverage(C^{(\what{q}_n(v, \delta)}, \mc{R}_v,
      \delta_{n}^{+}(t); \, Q_0) \ge 
      \wcoverage(C^{(q_n^\text{inf}(v))}, \mc{R}_v,
      \delta_{n}^{+}(t); \, Q_0) 
      \ge 1 - \alpha_n^+(t) 
      \end{align*}
and
\begin{align*}
  \wcoverage(C^{( \what{q}_n(v, \delta))}, \mc{R}_v,
      \delta_{n}^{-}(t); \, Q_0) \le  
      \wcoverage(C^{(q_n^\text{sup}(v))}, \mc{R}_v,
      \delta_{n}^{-}(t); \, Q_0) 
      = 1 - \alpha_n^-(t),
\end{align*}
simultaneously for all $v \in \mc{V}$, with the second inequality requiring Assumption~\ref{assumption:continuity-scores-v}.
\end{proof}


Recall that $\what{q}_\delta$ in
Algorithm~\ref{alg:rho-selection-procedure} is the $(1-\levelv)$-empirical
quantile of $\{\what{q}_n(v_i, \delta)\}_{i = 1}^k$.  Then
inequalities~\eqref{eqn:quantile-comparison-over-v} in
Lemmma~\ref{lemma:empirical-q-once-coverage} and that $\wcoverage(C^{(q)},
\mc{R}_v, \delta; \, Q_0)$ is non-decreasing in $q$ imply
\begin{align*}
  \Pvemp\left[ 
    \wcoverage(C^{( \what q_\delta)}, \mc{R}_v, \delta^{+}_n(t); \, Q_0) 
    \ge 1 - \alpha^{+}_{n}(t) 
    \right] 
  \ge 
  \Pvemp \left[ \what q_\delta \ge \what{q}_n(v_i, \delta) \right] 
  \ge 1-\levelv,
\end{align*}
while under Assumption~\ref{assumption:continuity-scores-v},
we have the converse lower bound
\begin{align*}
  \Pvemp\left[ 
    \wcoverage(C^{( \what q_\delta)}, \mc{R}_v, \delta^{-}_n(t); \, Q_0) 
    \le 1 - \alpha^{-}_{n}(t) \right]
  \ge
  \Pvemp\left[\what{q}_\delta \le \what{q}_n(v, \delta)\right]
  \ge \levelv - \frac{1}{k},
\end{align*}
using the second inequality of Lemma~\ref{lemma:empirical-q-once-coverage}.

For $q \in \R$, define the functions $f^+_q(v) \defeq
\indic{\wcoverage(C^{(q)}, \mc{R}_v, \delta_n^{+}(t); \, Q_0) \geq 1-
  \alpha_n^{+}(t)} \in \{ 0, 1\}$ for all $q \in \R$.  The set of functions
$\{ f_q^+ \}_{q \in \R}$ is uniformly bounded (by $1$) and each is
non-decreasing in $q \in \R$ so that its VC-dimension cannot exceed 1.
Thus, there exists a universal constant $C < \infty$ such that, with
probability $1- 4^{-1}e^{-t}$~\cite[e.g.][Thm.~4.10,
  Ex.~5.24]{Wainwright19},
\begin{align*}
  \sup_{q \in \R} \left| \Pvemp f^{+}_q - \Pv f^{+}_q \right| 
  \le C\sqrt{\frac{1+t}{k}}.
\end{align*}
Similarly, if we define $f^-_q(v) \defeq \indic{\wcoverage(C^{(q)},
  \mc{R}_v, \delta_n^{-}(t); \, Q_0) \le 1- \alpha_n^{-}(t)} \in \{ 0, 1\}$, then with probability at least $1-4^{-1}e^{-t}$,
we have $\sup_{q \in \R} | \Pvemp f^{-}_q - \Pv f^{-}_q| \le
C\sqrt{\frac{1+t}{k}}$.  Combining the statements, we see that with
probability $1-e^{-t}$ over the
draw $(X_i,Y_i)_{i=1}^n \simiid Q_0$ and $\{v_i \}_{i=1}^k \simiid \Pv$,
we have
\begin{align*}
  \Pv f^{+}_{\what q_\delta} = \Pv \left[
    \wcoverage(C^{(\what q_\delta)}, \mc{R}_v, \delta^{+}_n(t); \, Q_0)
    \ge 1- \alpha^{+}_{n}(t) \right]
  \ge 1- \levelv - C\sqrt{\frac{1+t}{k}},
\end{align*}
and under Assumption~\ref{assumption:continuity-scores-v},
\begin{align*}
  \Pv f^{-}_{\what q_\delta}  = \Pv \left[
    \wcoverage(C^{(\what q_\delta)}, \mc{R}_v, \delta^{-}_n(t); \, Q_0)
    \le 1- \alpha^{-}_{n}(t)
       \right]
  \stackrel{(\textup{\ref{assumption:continuity-scores-v}})}{\ge}
  \levelv - \frac{1}{k} - C\sqrt{\frac{1+t}{k}}.
\end{align*}

\subsection{Proof of Lemma~\ref{lemma:equivalence-rho-threshold}}
\label{sec:proof-equivalence-rho-threshold}

Let $\scorerv_i = \score(X_i, Y_i)$ for shorthand, and assume
w.l.o.g.\ that $\scorerv_1 \le \cdots \le \scorerv_n$.
We will show that if $\what{q} \in \{\scorerv_i\}_{i \ge
  \ceil{n(1 - \alpha)}}$, then if
\begin{equation}
  \label{eqn:equivalence-rho-threshold}
  \what{\rho} = \rho_{f,\alpha}(\what{q}; \empP)
  ~~ \mbox{then} ~~
  \what{q} = \WCQuantile_{f,\what{\rho}}(1 - \alpha; \empP).
\end{equation}
Evidently this implies that $C^{(\what{q})}(x) = C_{f,\what{\rho}}(x;
\empP)$ for all $x \in \mc{X}$, giving the lemma, so for the remainder, we
show the equivalence~\eqref{eqn:equivalence-rho-threshold}.

Recall the definition $g_{f,\rho}^{-1}(\tau) = \sup\{\beta \in [\tau, 1]
\mid \beta f(\frac{\tau}{\beta}) + (1 - \beta) f(\frac{1 - \tau}{1 -
  \beta}) \le \rho\}$ in the discussion following
Proposition~\ref{proposition:rho-vs-alpha}.  Suppose that $\what{q} =
\scorerv_j$, where $j \in [n]$, which immediately implies that, for all
$(j - 1) / n < \beta \le j / n$, $\what{q} = \Quantile(\beta; \empP)$. By
Proposition~\ref{proposition:rho-vs-alpha}, we therefore see that if $\rho
\ge 0$ satisfies $(j - 1) / n <
g_{f,\rho}^{-1}(1-\alpha) \le j / n$,
then
\begin{align*}
  \WCQuantile_{f, \rho}(1-\alpha; \hat P_n) = \what q.
\end{align*}
In addition, as the scores $\scorerv_i$ are all distinct, $\WCQuantile_{f,
  \rho}(1-\alpha; \hat P_n) > \what{q}$ if $g_{f,\rho}^{-1}(1-\alpha) >
j / n$, making $\rho_{f,\alpha}$ in this case equal to
\begin{align*}
  \rho_{f,\alpha}(\what{q} ; \empP)
  = \sup\{ \rho \ge 0 \mid g_{f,\rho}^{-1}(1-\alpha) \le j/n \}.
\end{align*}

The mapping $\rho \mapsto g_{f,\rho}^{-1}(\tau)$ is concave and
nonnegative.  As $f$ is $1$-coercive by assumption, we also have that it
is defined on $\R_+$, and it is continuous strictly increasing on
$\R_{++}$.  Its inverse (as a function of $\rho$) is therefore continuous,
which implies in particular that $g_{f,\rho_{f,\alpha}(\what{q}
  ;\empP)}^{-1}(1-\alpha) = j/n$, and hence
equality~\eqref{eqn:equivalence-rho-threshold} holds.


\section{Proofs related to finding worst shift directions}
\label{sec:worst-shift-proofs}

\subsection{Proofs on worst direction recovery}

\subsubsection{Proof of Lemma~\ref{lemma:stochastic-domination-direction}}
\label{sec:proof-stochastic-domination-direction}

Fix $q \in \R$, $\delta \in (0,1)$, and consider $v \in \mc{V},  t \in \R$ such that $\P( v(X) \ge t) \ge \delta$, i.e $F_v^-(t) \le 1-\delta$.  We then have
\begin{align*}
\P( \score(X,Y) > q \mid v(X) \ge t )  &= \dfrac{ \P( \score(X,Y) > q,   F_v^-(v(X)) \ge F_v^-(t))}{1 - F_v^-(t)} \\
&\overset{(i)}{\le} \dfrac{ \P( \score(X,Y) > q,   F_{v\opt}(v\opt(X)) \ge F_v^-(t))}{1 - F_v^-(t)} \\
&\overset{(ii)}{=}  \P( \score(X,Y) > q \mid  F_{v\opt}(v\opt(X)) \ge F_v^-(t))) \\
&= \P( \score(X,Y) > q \mid  v\opt(X)) \ge {F_{v\opt}}^{-1}(F_v^-(t)))
\end{align*}
where $(i)$ and $(ii)$ comes from Assumption~\ref{assumption:stochastic-dominance},  and from the continuity of the distribution of $v\opt(X)$,  which guarantees that $  F_{v\opt}^-(v\opt(X)) = F_{v\opt}(v\opt(X)) \sim \uniform[0,1]$.

Since $\P(v\opt(X)) \ge {F_{v\opt}}^{-1}(F_v^-(t))) =  \P( F_{v\opt}(v\opt(X)) \ge F_v^-(t)) = 1 - F_v^-(t) \ge \delta$, this implies that
\begin{align*}
  \wc(C^{(q)}, \mc{R}_{v\opt}, \delta; \, Q_0) \le
  \P( \score(X,Y) \le q \mid v(X) \ge t ).
\end{align*}
The result follows by taking the infimum over all $(v, t) \in \mc{V}
\times \R$ such that $\P( v(X) \ge t) \ge \delta$.

\subsubsection{Proof of Lemma~\ref{lemma:heterogeneous-regression-verifies}}
\label{sec:proof-heterogeneous-regression-verifies}

Let $(t,u) \in \R^2$,  and assume for simplicity that $\score(x,y) = \left| y - \mu^\star(x) \right|$ (the squared error case is similar).
We then have for all $v \in \mc{V} = \R^d \setminus \{0\}$, 
\begin{align*}
\P\left[ \score(X,Y) \ge t, F_v^-(v^T X) \ge u \right]
&= \E \left[  \P \left( X^T \vvar \ge h^{-1}(t/|\varepsilon|), F_v^-(X^T v) \ge u \mid \varepsilon \right) \right]  \\
&\overset{(i)}{\le} \E \left[ \min\left(  \P \left( X^T \vvar \ge h^{-1}(t/|\varepsilon|) \mid \varepsilon \right),  \P( F_v^-(X^T v) \ge u) \right) \right] \\
&\overset{(ii)}{=}  \E \left[ \min\left(  \P \left( X^T \vvar \ge h^{-1}(t/|\varepsilon|) \mid \varepsilon \right),  \P( F_{\vvar}^-(X^T \vvar) \ge u) \right) \right] \\
&= \E \left[ 
	\P\left( X^T \vvar \ge \max(  h^{-1}(t/|\varepsilon|),  F_{\vvar}^{-1}(u)) \mid \varepsilon \right) 
\right] \\
&= \E \left[ 
	\P\left( |\varepsilon| h(X^T \vvar) \ge t, F_{\vvar}^-(X^T \vvar) \ge u \mid \varepsilon \right) 
\right] \\
&= \P\left( \score(X,Y) \ge t, F_{\vvar}^-(X^T \vvar) \ge u \right).
\end{align*} 
Inequality $(i)$ is simply a restatement of the elementary fact $\P( A \cap B) \le \min(\P(A), \P(B))$, while equality $(ii)$ is due to the fact that, since every linear combination $X^T v$ has a continuous distribution for all $v \neq 0$,  $F_v^-(X^T v)$ has an uniform distribution on $[0,1]$.

\subsubsection{Proof of Lemma~\ref{lemma:worst-direction-order-consistency}}
\label{sec:proof-worst-direction-order-consistency}
\newcommand{\orthantdominates}{\succeq_{\mathsf{uo}}}

Assumption~\ref{assumption:stochastic-dominance} ensures the following upper orthant stochastic order:
\begin{align*}
(\score(X,Y),  F_{v\opt}(v\opt(X)) \stocuo (\score(X,Y),  F_v^-(v(X))) \text{ for all } v \in \mc{V}.
\end{align*}
Letting $F_S(t) \defeq \P( \scorerv \le t)$, we first observe by conditioning on $(X_1, Y_1)$ that
\begin{align*}
&\P\left[ \score(X_1,  Y_1) > \score(X_2, Y_2), \, v(X_1) > v(X_3) \right] \\
&\quad =
\E\left[ \P(\score(X_1,  Y_1) > \score(X_2, Y_2) \mid X_1, Y_1) \P(v(X_1) > v(X_3) \mid X_1) \right] \\
&\quad = \E\left[ F_S^-(\score(X_1, Y_1)) F_v^-(v(X_1)) \right].
\end{align*}
We then have the following lemma on upper orthant ordering.
\begin{lemma}
  \label{lemma:orders}
  Let $U, V \in \R^2$. Then $U \orthantdominates V$ if and only if for all
  non-negative and non-decreasing functions $f, g$,
  \begin{equation}
    \label{eqn:domination-by-fg}
    \E[f(V_1) g(V_2)]
    \le \E[f(U_1) g(U_2)].
  \end{equation}
  If additionally  $U_1 \eqdist V_1$ and $\E[|f(V_1) g(V_2)|]$
  and $\E[|f(U_1) g(U_2)|] < \infty$,
  then $\E[f(V_1) g(V_2)] \le \E[f(U_1) g(U_2)]$ for all
  non-negative and non-decreasing $f$ and non-decreasing $g$.
\end{lemma}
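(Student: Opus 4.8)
The plan is to handle the ``if and only if'' by a routine layer-cake argument and then obtain the refined statement by truncation. For the reverse implication of the first claim, note that for fixed $(t_1,t_2)\in\R^2$ the maps $x\mapsto\indic{x\ge t_1}$ and $y\mapsto\indic{y\ge t_2}$ are non-negative and non-decreasing, so specializing the hypothesis~\eqref{eqn:domination-by-fg} to this pair gives $\P(V_1\ge t_1,V_2\ge t_2)\le\P(U_1\ge t_1,U_2\ge t_2)$ for all $t_1,t_2$, which is precisely $U\orthantdominates V$.

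For the forward direction I would first record that $U\orthantdominates V$ implies $\P(V_1\in A,V_2\in B)\le\P(U_1\in A,U_2\in B)$ for every pair of Borel \emph{upper sets} $A,B\subseteq\R$: the upper sets of $\R$ are exactly $\emptyset,\R,(a,\infty),[a,\infty)$, and writing $(a,\infty)=\bigcup_n[a+1/n,\infty)$ and $\R=\bigcup_n[-n,\infty)$ as increasing unions, continuity of measure upgrades the defining inequality one coordinate at a time. Then, since $f,g\ge 0$ are non-decreasing, their superlevel sets $\{f>s\}$ and $\{g>u\}$ are upper sets of $\R$, so the layer-cake identity $h(v)=\int_0^\infty\indic{h(v)>s}\,ds$ (valid for $h\ge 0$) and Tonelli yield
\begin{align*}
  \E[f(V_1)g(V_2)]
  &= \int_0^\infty\! \int_0^\infty \P\big(f(V_1)>s,\, g(V_2)>u\big)\,ds\,du \\
  &\le \int_0^\infty\! \int_0^\infty \P\big(f(U_1)>s,\, g(U_2)>u\big)\,ds\,du
  = \E[f(U_1)g(U_2)],
\end{align*}
establishing~\eqref{eqn:domination-by-fg} as an inequality in $[0,\infty]$.

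For the refinement the extra hypotheses $U_1\eqdist V_1$ and finiteness of $\E[|f(V_1)g(V_2)|]$, $\E[|f(U_1)g(U_2)|]$ enter through a double truncation. Given $f\ge 0$ non-decreasing and $g$ non-decreasing, set $f_N\defeq f\wedge N$ and $g_M\defeq g\vee(-M)$; then $f_N$ and $g_M+M$ are non-negative, non-decreasing and bounded, so the first part gives $\E[f_N(V_1)(g_M(V_2)+M)]\le\E[f_N(U_1)(g_M(U_2)+M)]$. Since $f_N$ is bounded and $U_1\eqdist V_1$, the terms $M\E[f_N(V_1)]$ and $M\E[f_N(U_1)]$ are equal and finite, so they cancel, leaving $\E[f_N(V_1)g_M(V_2)]\le\E[f_N(U_1)g_M(U_2)]$. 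I would then let $N\to\infty$ and afterwards $M\to\infty$: because $|\max(g(v_2),-M)|\le|g(v_2)|$ and $0\le f_N\le f$, one has $|f_N(v_1)g_M(v_2)|\le f(v_1)|g(v_2)|$ uniformly in $N,M$, and $f(V_1)|g(V_2)|=|f(V_1)g(V_2)|$ is integrable under both $U$ and $V$ by hypothesis; dominated convergence (with $f_N\uparrow f$, then $g_M\downarrow g$) gives $\E[f(V_1)g(V_2)]\le\E[f(U_1)g(U_2)]$.

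The first part is entirely routine. I expect the only real care to lie in the refinement: one must (a) truncate $f$ precisely so that $\E[f_N(V_1)]=\E[f_N(U_1)]$ is finite and the additive-constant terms genuinely cancel (rather than producing $\infty-\infty$), and (b) carry a single integrable dominating function through both limits. This is bookkeeping rather than a conceptual obstacle; the open-versus-closed-ray point in extending $\orthantdominates$ to general upper sets is likewise a one-line limiting argument.
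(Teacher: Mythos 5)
Your argument is correct and, for the refinement, follows essentially the same route as the paper: shift $g$ up to make it nonnegative, apply the first part, cancel the additive term via $U_1 \eqdist V_1$, and remove the shift by dominated convergence. Two differences are worth noting. First, the paper simply cites Shaked and Shanthikumar for the equivalence in the first part, whereas you prove it from scratch with the layer-cake identity; this is a harmless substitution. Second, and more substantively, you truncate $f$ at level $N$ before shifting $g$, which the paper does not do. The paper's version cancels $m\E[f(U_1)]$ against $m\E[f(V_1)]$, which tacitly requires $\E[f(U_1)] < \infty$; but the stated hypotheses (finiteness of $\E[|f(U_1)g(U_2)|]$ and $\E[|f(V_1)g(V_2)|]$) do not imply this---consider, for instance, $g \equiv 0$. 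Your double truncation with $f_N = f\wedge N$ makes $M\E[f_N(U_1)] = M\E[f_N(V_1)]$ finite unconditionally, and the dominating function $f(\cdot_1)\lvert g(\cdot_2)\rvert$ is exactly what the hypotheses render integrable, so both limits go through cleanly. This is a genuine (if small) tightening of the paper's argument; the gap does not bite in the paper's only use of the lemma (Proposition~\ref{proposition:vopt-least-squares}, where $\E[\score(X,Y)^2]<\infty$ gives $\E[\score(X,Y)]<\infty$), but your version proves the lemma exactly as stated. One trivial imprecision: you describe $g_M + M = \max(g+M,0)$ as bounded, which it need not be, but since your argument only uses boundedness of $f_N$ nothing breaks.
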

\begin{proof}
  The equivalence of inequality~\eqref{eqn:domination-by-fg} and
  $U \orthantdominates V$ is~\cite[Eq.~(6.B.4)]{ShakedSh07}.
  For the second result, consider the sequence
  $g_m(x) \defeq \hinge{g(x) + m} - m$
  for $m = 1, 2, \ldots$. Then $g_m \downarrow g$, while
  \begin{equation*}
    \E[f(U_1) g_m(U_2)]
    \ge \E[f(V_1) \hinge{g(V_2) + m}] - m \E[g(V_1)]
    = \E[f(V_1) g_m(V_2)].
  \end{equation*}
  Dominated convergence gives the result.
\end{proof}

Applying Lemma~\ref{lemma:orders} with the non-decreasing functions $f=F_S^-$ and $g=\text{id}$, we obtain
\begin{align*}
\E\left[ F_S^-(\score(X_1, Y_1)) F_v^-(v(X_1)) \right] \le \E\left[ F_S^-(\score(X_1, Y_1)) F_{v\opt}^-(v\opt(X_1)) \right],
\end{align*}
which is equivalent to
\begin{align*}
\P\left[ \score(X_1,  Y_1) > \score(X_2, Y_2), \, v(X_1) > v(X_3) \right] \le \P\left[ \score(X_1,  Y_1) > \score(X_2, Y_2), \, v\opt(X_1) > v\opt(X_3) \right]
\end{align*}
The same argument with $f=F_S$ also proves that:
\begin{align*}
 \P\left[ \scorerv_1 \ge \scorerv_2,  v(X_1) > v(X_3) \right] \le  \P\left[ \score(X_1,  Y_1) \ge \score(X_2, Y_2), \,  v\opt(X_1) >  v\opt(X_3) \right],
\end{align*}
which allows us to conclude that
\begin{align*}
v\opt \in \argmax_{v \in \mc{V}} \left\{ \P\left( \scorerv_1 > \scorerv_2,  v(X_1) > v(X_3) \right) + \P\left( \scorerv_1 \ge \scorerv_2,  v(X_1) > v(X_3) \right) \right\}.
\end{align*}

\subsubsection{Proof of Lemma~\ref{lemma:penalized-linear-loss-expression}}
\label{sec:proof-penalized-linear-loss-expression}

By definition of $\eta_S$, we have
\begin{align*}
\E \left[ \sign(S_1 - S_2) \mid X_1 \right] &= \P( S_1 > S_2 \mid X_1) -  \P(S_1 < S_2 \mid X_1) \\
&= 2 \P(S_1 > S_2 \mid X_1) - 1 + \P(S_1 = S_2 \mid X_1) \\
&= 2 \eta_S(X_1) - 1,
\end{align*}
which shows that for all $v \in \mc{V}$, 
\begin{align*}
\E \left[ (v(X_1) - \sign(\scorerv_1 - \scorerv_2))^2 \right] = \E\left[ \left( v(X_1) - (2\eta_S(X_1) -1) \right)^2 \right] + \E\left[ \var(\sign(\scorerv_1 - \scorerv_2) \mid X_1) \right],
\end{align*}
and proves our first result.

Additionally,  for any measurable function $f : \mc{X} \to \R$,  let $F_f$ be the c.d.f.\ of $f(X)$  which satisfies $F_f(X) \succeq U \sim \uniform[0,1] \succeq F_f^{-}(X)$, where the latter is the left-continuous version.
By conditioning respectively, and in order,
on $(X_1,Y_1)$ and $X_3$, and then only on $(X_1, Y_1)$,  we see that
\begin{align*}
\P\left( \scorerv_1 > \scorerv_2 , f(X_1) > f(X_3) \right) &+ \half \P\left( \scorerv_1 = \scorerv_2 , f(X_1) > f(X_3) \right) \\
&= \E\left[ \eta_S(X_1) \indic{ f(X_1) > f(X_3) } \right] \\
&= \E\left[ \eta_S(X_1) F_f^{-}(f(X_1)) \right]  \\
&= \int_{(0,1)^2} \P\left[ \eta_S(X_1) \ge u,  F_f^{-}(f(X_1)) \ge v \right] du dv \\
&\le \int_{(0,1)^2}  \min \left\{ \P(\eta_S(X_1) \ge u),  \P(F_f^{-}(f(X_1)) \ge v) \right\} du dv \\
&\overset{(i)}{\le} \int_{(0,1)^2}  \min \left\{ \P(\eta_S(X_1) \ge u),  \P(F_{\eta_S}^{-}(\eta_S(X_1)) \ge v) \right\} du dv \\
&\overset{(ii)}{=} \E\left[ \eta_S(X_1) F_{\eta_S}^{-}(\eta_S(X_1)) \right] \\
&= \P\left( \scorerv_1 > \scorerv_2 , \eta_S(X_1) > \eta_S(X_3) \right) +  \half \P\left( \scorerv_1 = \scorerv_2 , \eta_S(X_1) > \eta_S(X_3) \right).
\end{align*}
Equality $(i)$ comes from the fact that for any measurable function $f$, the
function $v \mapsto \P(F_f^{-}(f(X_1)) \ge v)$ is less than $1-v =
\P(F_{\eta_S}^{-}(\eta_S(X_1)) \ge v)$, as by assumption, $\eta_S(X)$ has a
continuous distribution, which entails that $F_{\eta_S}^{-}(\eta_S(X))
\overset{d}{=} \uniform[0,1]$.  Equality $(ii)$ uses that $F_{\eta_S} =
F_{\eta_S}^-$ is non-decreasing, so
\begin{align*}
\min \left\{ \P(\eta_S(X_1) \ge u),  \P(F_{\eta_S}^{-}(\eta_S(X_1)) \ge v) \right\} =
\P\left[ \eta_S(X_1) \ge u,  F_{\eta_S}^{-}(\eta_S(X_1)) \ge v \right]
\end{align*}
for all $(u,v) \in (0,1)^2$.
 
\subsubsection{Proof of Proposition~\ref{prop:consistency-worst-direction-rkhs}}
\label{sec:proof-consistency-worst-direction-rkhs}
For each $i \in [n]$,  define
\begin{align*}
\tilde{Y}_i =  \half \E\left[ \sign\left( \score(X_i,Y_i) - \score(X', Y') \right) \mid X_i,Y_i \right] \in \left[-\half, \half \right],
\end{align*}
and define the ``theoretical'' estimator that $\what v_{\text{pen},
  \lambda_n}$ approximates,
\begin{align*}
  \tilde{v}_{\text{pen}, \lambda_n} \defeq \argmin_{v \in \mc{V}} \left\{ \frac{1}{n}\sum_{i=1}^n \left( v(X_i) - \tilde{Y_i} \right)^2 + \lambda_n \norm{v}_\mc{V}^2 \right\}.
\end{align*}
A direct application of Theorem 9.1 in~\citet{SteinwartCh08} to the dense separable RKHS $\mc{V}$ with bounded measurable kernel $k$ shows that
\begin{align}
\label{eqn:rkhs-erm-convergence}
\int_{x \in \mc{X}} \left( \tilde{v}_{\text{pen}, \lambda_n}(x) - \E\left[ \tilde{Y} \mid X=x \right] \right)^2 dP_X(x) = o_p(1),
\end{align}
where additionally $\E[ \tilde{Y} \mid X=x] = \eta_S(x) - \half$.

It remains to compare the finite sample estimators $\hat{v}_{\text{pen},
  \lambda_n}$ and $\tilde{v}_{\text{pen}, \lambda_n}$.  The key is to notice
that, if we let $\bar{Y}_i^n \defeq \frac{1}{2(n-1)}\sum_{j \neq i}
\sign(\scorerv_i - \scorerv_j)$, then
\begin{align*}
\hat{v}_{\text{pen}, \lambda_n}  \defeq \argmin_{v \in \mc{V}} \left\{ \frac{1}{n}\sum_{i=1}^n \left( v(X_i) - \bar{Y}_i^n \right)^2 + \lambda_n \norm{v}_\mc{V}^2 \right\}, 
\end{align*}
and we expect $\{ \bar{Y}_i^n \}_{i=1}^n$ and $\{ \tilde{Y}_i \}_{i=1}^n$ to
be uniformly close.  Indeed, we have $\tilde{Y}_i = f(\scorerv_i)$, where
$f(s) \defeq \half \E[ \sign(s - \scorerv)]$, and $\bar{Y}_i ^n =
f_n(\scorerv_i)$, with $f_n(s) \defeq \frac{1}{2(n-1)} \sum_{j=1}^n \sign(s
- \scorerv_j)$.

Let $E_n \defeq \max_{1 \le i \le n} \left| \tilde{Y}_i - \tilde{Y}_i^n
\right| \le 1$.

As the class of sign thresholds $\{ x \mapsto \sign(s-x)\}_{s \in \R}$ is
uniformly bounded by $1$ and has VC-dimension at most $2$, Donsker's
theorem implies that
\begin{align}
  \label{eqn:cdf-uniform-conv}
  n^{1/2} \sup_{s \in \R} \left| f_n(s) - f(s) \right| =O_p(1),~ \text{thus} ~
  E_n  =O_p(n^{-1/2}).
\end{align}

To conclude the proof of the first result, define 
\begin{align*}
  R_n(v) \defeq
  \left\{ \frac{1}{n}\sum_{i=1}^n \left( v(X_i) - \bar{Y}_i^n \right)^2 + \lambda_n \norm{v}_\mc{V}^2 \right\}
\end{align*}
and $\tilde{R}_n$ similarly with each $\tilde{Y}_i$ in lieu of $\bar{Y}_i^n$.
The convergence~\eqref{eqn:cdf-uniform-conv} directly implies that uniformly over $v \in \mc{V}$, we have
\begin{align}
\label{eqn:risk-uniform-approx}
\left| R_n(v) - \tilde{R}_n(v) \right| \le 2 E_n \left\{1+ \frac{1}{n} \sum_{i=1}^n \left| v(X_i) - \tilde{Y}_i \right| \right\} = O(E_n) \left( \norm{v}_\mc{V} + 1 \right),
\end{align}
as the kernel $k$ is bounded, so there exists $C_k \defeq \sup_{x \in
  \mc{X}} k(x,x)^{1/2}$ such that $|v(x)| = |\langle k(x,\cdot), v
\rangle_{\mc{V}}| \le C_k \norm{v}_\mc{V}$ for all $x \in \mc{X}$.  The
inequality~\eqref{eqn:risk-uniform-approx}, along with the fact that
$\lambda_n \norm{\hat v_{\text{pen}, \lambda_n}}_\mc{V}^2 \le R_n(0) \le 1$
(and similarly for $\tilde{v}_{\text{pen}, \lambda_n}$), leads us to
\begin{align*}
&R_n(\tilde{v}_{\text{pen}, \lambda_n}) - R_n(\hat v_{\text{pen}, \lambda_n})\\ &= 
\left( R_n(\tilde{v}_{\text{pen}, \lambda_n}) - \tilde{R}_n(\tilde{v}_{\text{pen}, \lambda_n}) \right) +  \left( \tilde{R}_n(\tilde{v}_{\text{pen}, \lambda_n}) - \tilde{R}_n(\hat v_{\text{pen}, \lambda_n}) \right) + 
\left(\tilde{R}_n(\hat v_{\text{pen}, \lambda_n}) - R_n(\hat v_{\text{pen}, \lambda_n}) \right) \\
 &\le 2 \sup_{v \in \mc{V}: \norm{v}_\mc{V} \le \lambda_n^{-1/2}} \left| R_n(v) - \tilde{R}_n(v) \right| = O(\lambda_n^{-1/2} E_n).
\end{align*}

By the strong convexity of $R_n$
(via the regularization term $\lambda \norm{v}_{\mc{V}}^2$),
and as $\hat v_{\text{pen}, \lambda_n}$ is its minimizer, we must have 
\begin{align*}
R_n(v) - R_n(\hat v_{\text{pen}, \lambda_n}) \ge \lambda_n \norm{v - \hat v_{\text{pen}, \lambda_n}}_\mc{V}^2 \text{ for all } v\in \mc{V},
\end{align*}
which, combining the last two inequalities and
substituting $v = \tilde{v}_{\text{pen},\lambda_n}$, implies that
\begin{align*}
\norm{\tilde{v}_{\text{pen}, \lambda_n} - \hat v_{\text{pen}, \lambda_n}}_\mc{V}^2 \le O\left(\lambda_n^{-3/2} E_n \right).
\end{align*}
As the kernel $k$ is bounded, we then have
\begin{align*}
\norm{\tilde{v}_{\text{pen}, \lambda_n} - \hat v_{\text{pen}, \lambda_n}}_{L^2(P_X)} &\le \left(\int_{x} k(x,x)dP_X(x)\right)^{1/2} \norm{\tilde{v}_{\text{pen}, \lambda_n} - \hat v_{\text{pen}, \lambda_n}}_{\mc{V}} \\
&= O(\lambda_n^{-3/4} E_n^{1/2}) = O_p(n^{- 1/16}),
\end{align*}
Recalling equation~\eqref{eqn:rkhs-erm-convergence}, this yields the desired result.

For the second claim,  observe that our first result also entails that
\begin{align}
\label{eqn:v-dense-in-l2}
\inf_{v \in \mc{V}} \int_{x \in \mc{X}} \left( v(x) + \half - \eta_S(x) \right)^2 dP_X(x) = 0.
\end{align}
We claim that a consequence of this fact is that
\begin{align}
\label{eqn:vopt-and-etas-are-same}
\E\left[ \eta_S(X) F_{\eta_S}(\eta_S(X)) \right] = \E \left[ \eta_S(X) F_{v\opt}(v\opt(X)) \right].
\end{align}
Before proving claim~\eqref{eqn:vopt-and-etas-are-same}, we see how this implies that $v\opt$ must be a function of $\eta_S$. 
Observe that we can rewrite the latter equality as
\begin{align*}
\int \P\left( \eta_S(X) \ge u_1,  F_{\eta_S}(\eta_S(X)) \ge u_2 \right) du_1 du_2 = \int \P\left( \eta_S(X) \ge u_1,  F_{v\opt}(v\opt(X)) \ge u_2 \right) du_1 du_2.
\end{align*}
On the other hand, it is straightforward to check that, as the distribution
of $\eta_S(X)$ is continuous by assumption, we have $(\eta_S(X),
F_{\eta_S}(\eta_S(X))) \stocuo (\eta_S(X), F_{v\opt}(v\opt(X))$, and so both
integrands must be identical up to a measure 0 set, as the left is
always larger than the right while the integrals are equal.  By
left-continuity of both functions, the equality must extend to the entire
square $[0,1]^2$, so for all $(u_1,u_2) \in [0,1]^2$ we have
\begin{align*}
 \P\left( \eta_S(X) \ge u_1,  F_{\eta_S}(\eta_S(X)) \ge u_2 \right)  =  \P\left( \eta_S(X) \ge u_1,  F_{v\opt}(v\opt(X)) \ge u_2 \right).
\end{align*}
Taking $u_2 = F_{\eta_S}(u_1)$,  this directly gives
\begin{align*}
 \P\left( \eta_S(X) \ge u \right) =  \P\left( \eta_S(X) \ge u,  F_{v\opt}(v\opt(X)) \ge F_{\eta_S}(u) \right),
\end{align*}
for all $u \in [0,1]$, which in turn implies
\begin{align*}
 \P\left[ F_{\eta_{S}}(\eta_S(X)) < F_{\eta_{S}}(u),  F_{v\opt}(v\opt(X)) \ge F_{\eta_S}(u) \right] = \P\left[ \eta_S(X) < u,  F_{v\opt}(v\opt(X)) \ge F_{\eta_S}(u) \right] = 0.
\end{align*}
This equality holds for any $u \in [0,1]$, so we must have
$F_{v\opt}(v\opt(X)) = F_{\eta_{S}}(\eta_S(X))$ almost surely, which
concludes the proof of the second part of
Proposition~\ref{prop:consistency-worst-direction-rkhs}.

Coming back to the claim~\eqref{eqn:vopt-and-etas-are-same},  we first observe that Lemma~\ref{lemma:penalized-linear-loss-expression} ensures that
\begin{align*}
\E\left[ \eta_S(X) F_{\eta_S}(\eta_S(X)) \right] = \inf_{f: \mc{X} \to \R ~ \text{measurable}} \left[ \eta_S(X) F_f^-(f(X)) \right],
\end{align*}
which immediately yields the inequality $\E\left[ \eta_S(X) F_{v\opt}(v\opt(X))
  \right] \ge \E\left[ \eta_S(X) F_{\eta_S}(\eta_S(X)) \right]$, because
$\mc{V} \subset \{f: \mc{X} \to \R \text{ measurable} \}$ and $F_{v\opt} \ge
F_{v\opt}^-$.


For the reverse inequality,  consider a sequence $v_n \in \mc{V}$ such that $\norm{v_n + \half - \eta_S}_{L^2(P_X)} \to 0$, which is possible from the infimum~\eqref{eqn:v-dense-in-l2}. 
Since $\eta_S(X)$ has a continuous distribution, we must have $F_{v_n + \half} \to F_{\eta_S}$ pointwise, and hence uniformly as they are non-decreasing functions.
This, plus the fact that $v_n(X)+\half \cp \eta_S(X)$,  implies by continuous mapping that
\begin{align*}
F_{v_n}(v_n(X))  = F_{v_n+\half}\left(v_n(X)+\half\right) \cp F_{\eta_S}(\eta_S(X)),
\end{align*} and, since the sequence $\{ \eta_S(X) F_{v_n}(v_n(X)) \}_{n \ge 1}$ is uniformly bounded (by 1) that
\begin{align*}
\E\left[ \eta_S(X) F_{v_n}(v_n(X)) \right] \underset{n \to \infty}{\rightarrow} \E\left[ \eta_S(X) F_{\eta_S}(\eta_S(X)) \right],
\end{align*}
which eventually yields that
\begin{align*}
\inf_{v \in \mc{V}} \left[ \eta_S(X) F_v^-(v(X)) \right] \le \E\left[ \eta_S(X) F_{\eta_S}(\eta_S(X)) \right]
\end{align*}
and concludes the proof, as Lemma~\ref{lemma:worst-direction-order-consistency} ensures that
\begin{align*}
\E\left[ \eta_S(X) F_{v\opt}(v\opt(X)) \right] = \inf_{v \in \mc{V}} \left[ \eta_S(X) F_v^-(v(X)) \right].
\end{align*}

\subsubsection{Proof of Proposition~\ref{proposition:vopt-least-squares}}
\label{sec:proof-vopt-least-squares}

We now show that
\begin{equation}
  \label{eqn:upper-dominance-vopt}
  (\score(X, Y), X^T v\opt) \orthantdominates (\score(X, Y), X^T u)
\end{equation}
for any vector $u$ satisfying $\ltwos{\Sigma^{1/2} v\opt} =
\ltwos{\Sigma^{1/2} u}$.  Without loss of generality, we assume
$\ltwos{\Sigma^{1/2} v\opt} = 1$.  Then for all $q \in \R$ and $t \in \R$,
\begin{align*}
  \P( \score(X,Y) \ge q, X^T v\opt \ge t)
  & = \P( \score(X,Y) \ge q \mid X^T v\opt \ge t) \P(X^T v\opt \ge t) \\
  &\stackrel{(\star)}{\ge}
  \P( \score(X,Y) \ge q \mid  X^T u \ge t) \P(X^T u \ge t) \\
  & = \P( \score(X,Y) \ge q, X^T u \ge t ),
\end{align*}
where inequality $(\star)$ uses
Assumption~\ref{assumption:stochastic-dominance} and that $\tilde{X} \defeq
\Sigma^{-1/2} X$ has an isotropic disitribution, so that $\P(X^T u \ge t) =
\P( \tilde{X}^T \Sigma^{1/2} u \ge t) = \P( \tilde{X}^T \Sigma^{1/2} v\opt
\ge t) = \P( X^T v\opt \ge t)$ and
$X^T u \eqdist X^T v\opt$.
In particular, Lemma~\ref{lemma:orders} yields
\begin{equation*}
  \E \left[ \score(X,Y) X^T u \right] \le
  \E \left[ \score(X,Y) X^T v\opt \right]
\end{equation*}
for all $u \in \R^d$ such that $\ltwos{\Sigma^{1/2}u} =
\ltwos{\Sigma^{1/2}v\opt}$, because
$\E[|\score(X, Y) X^T u|] < \infty$ by Cauchy-Schwarz.
As a result, using the assumption in the proposition that
$\E[\score(X, Y) X] \neq 0$, we have the fixed point
\begin{align*}
  v\opt = \argmin_{u \in \R^d} \left\{ \E \left[ \score(X,Y) X^T u \right]
  \mid \ltwos{\Sigma^{1/2} u} = \ltwos{\Sigma^{1/2} v\opt} \right\}.
\end{align*}
By a direct change of variables via $\tilde{X} = \Sigma^{-1/2} X$, this is
equivalent to
\begin{align*}
  \Sigma^{1/2} v\opt = \argmin_{\tilde{u} \in \R^d} \left\{ \tilde{u}^T \E \left[ \score(X,Y) \tilde{X} \right] \mid \ltwo{\tilde{u}} = \ltwo{\Sigma^{1/2} v\opt} \right\}.
\end{align*}
Rewriting, we obtain
\begin{align*}
  v\opt \propto \Sigma^{-1} \E \left[X \score(X,Y)\right]
  = \E\left[XX^T\right]^{-1} \E \left[ X \score(X,Y) \right]
  = \argmin_u \E[(\score(X, Y) - X^T u)^2].
\end{align*}

\subsection{Proof of Theorem~\ref{theorem:uniform-asymptotic-coverage}}
\label{sec:proof-uniform-asymptotic-coverage}

The proof of the theorem is technical, so we
state and prove several lemmas on worst coverage regularity and convergence
(Section~\ref{sec:lemmes-worst-coverage-and-direction-recovery}), combining
all the pieces in Section~\ref{sec:finalize-uniform-proof}.

\subsubsection{Lemmas on worst coverage estimation}
\label{sec:lemmes-worst-coverage-and-direction-recovery}



\begin{lemma}
  \label{lem:continuous-distribution-vs-worst-coverage-continuity}
  Let Assumption~\ref{assumption:continuity-worst-coverage-1} hold.  Then
  the function $(q, v, \delta) \mapsto \wc( C^{(q,\score)}, \mc{R}_v,
  \delta; \, Q_0) $ is continuous at any tuple $(q,v\opt, \delta)$,  considering $\mc{V}$ as a subset of the Banach space $L^2(P_X)$.
\end{lemma}

\begin{proof}
  We use $C^{(q)}$ as shorthand for $C^{(q,\score)}$, and we consider a
  sequence $\{ (q_n , v_n, \delta_n) \}_{n \ge 1} \to (q, v\opt, \delta) \in \R
  \times \mc{V} \times (0,1)$.  We will show that $\{ \wc(C^{(q_n)},
  \mc{R}_{v_n}, \delta_n; \, Q_0) \}_{n \ge 1}$ converges by proving that
  the sequence has a unique accumulation point.  We therefore assume without
  loss of generality that
  \begin{align}
    \label{eqn:limit-of-wcs}
    \wc(C^{(q_n)}, \mc{R}_{v_n}, \delta_n; \, Q_0)
    \underset{n \to \infty}{\longrightarrow} \ell \in [0,1],
  \end{align}
  and we successively prove that $\ell \le \wc(C^{(q)}, \mc{R}_{v}, \delta;
  \, Q_0)$ and $ \wc(C^{(q)}, \mc{R}_{v}, \delta; \, Q_0) \le \ell$.
  Combining claims~\ref{claim:lower-limit-wcs}
  and~\ref{claim:upper-limit-wcs} immediately gives the continuity claim in
  Lemma~\ref{lem:continuous-distribution-vs-worst-coverage-continuity}.

  \begin{claim}
    \label{claim:lower-limit-wcs}
    The limit $\ell$ in Eq.~\eqref{eqn:limit-of-wcs} satisfies
    $\ell \le \wc(C^{(q)}, \mc{R}_v, \delta; Q_0)$.
  \end{claim}
  \begin{proof}
    Let $\varepsilon > 0$, and consider $t \in \R$ such that $\P( v\opt(X) \ge
    t) \in (\delta,1)$ and
    \begin{align*}
      Q_0 \left(  \score(X,Y) \le q \mid v\opt(X) \ge t \right)
      \le \wc(C^{(q)}, \mc{R}_{v\opt}, \delta; \, Q_0) + \varepsilon.
    \end{align*}
    Next, consider $t_n \in \R$ such that $Q_0( v_n(X) \ge t_n) \ge \delta_n$ and $Q_0( v_n(X) \ge t_n) \to Q_0( v\opt(X) \ge t) \} $. 
    As we may consider a
    subsequence, we assume without loss of generality that $\{ t_n \}_{n \ge
      1}$ converges to $\tilde{t} \in \left[ - \infty, \infty \right]$.  Then
    we have by Slutsky's lemma that $v_n(X) - t_n \cd v\opt(X) -
    \tilde{t}$ (since $v_n(X) \cd v(X)$), and thus
    \begin{align*}
      Q_0( v\opt (X) \ge \tilde{t}) = \lim_{n \to \infty} Q_0(v_n(X) \ge t_n) = Q_0( v\opt(X) \ge t) \ge \delta,
    \end{align*}
    as $v\opt(X)$ has a continuous distribution (the above relation also proves
    that $\tilde{t} \in \R$, since $0 < Q_0( v\opt(X) \ge t) < 1 )$.  Since we
    either have $\{ v\opt(X) \ge \tilde{t} \} \subset \{ v\opt(X) \ge t \}$ or $\{
   v\opt(X) \ge t \} \subset \{ v\opt(X) \ge \tilde{t} \}$, the above relation
    also shows that
    \begin{align*}
      Q_0(  \score(X,Y) \le q \mid v\opt(X) \ge \tilde{t}) = Q_0(  \score(X,Y) \le q \mid v\opt(X) \ge t) \le  \wc(C^{(q)}, \mc{R}_{v}, \delta; \, Q_0) + \varepsilon.
    \end{align*}
    Finally, if we define $\Delta_{n,v} \defeq v_n(X) - v\opt(X) - t_n
    +\tilde{t} \cp 0$, we have
    \begin{align}
      \label{eqn:cvg-prob-xtvscores}
      \begin{split}
        &\big| Q_0( \score(X,Y) \le q_n,  v_n(X)  \ge t_n) - Q_0( \score(X,Y) \le q,  v(X) \ge \tilde{t}) \big| \\
        &\le Q_0\left( |\score(X,Y) - q| \le |q_n - q| \right) + Q_0(\tilde{t} - \Delta_{n,v} \le v\opt(X)  < \tilde{t}) + Q_0( \tilde{t}  \le v\opt(X)  < \tilde{t} - \Delta_{n,v}) \\
        &\underset{n \to \infty}{\longrightarrow} 0,
      \end{split}
    \end{align}
    where the first (resp.\ second and third) term converges to $0$ as the
    distribution of $\score(X,Y)$ (resp. $v\opt(X)$) is continuous under $Q_0$.
    This proves that
    \begin{align*}
      Q_0( \score(X,Y) \le q_n \mid  v_n(X) \ge t_n) \underset{n \to \infty}{\longrightarrow} Q_0( \score(X,Y) \le q \mid v(X) \ge \tilde{t}) \le \wc(C^{(q)}, \mc{R}_{v}, \delta; \, Q_0) + \varepsilon,
    \end{align*}
    and thus $\ell \le \wc(C^{(q)}, \mc{R}_{v}, \delta; \, Q_0) + \varepsilon$.
    As $\varepsilon > 0$ was arbitrary, we have the claim.
  \end{proof}

  \begin{claim}
    \label{claim:upper-limit-wcs}
    The limit $\ell$ in Eq.~\eqref{eqn:limit-of-wcs} satisfies
    $\ell \le \wc(C^{(q)}, \mc{R}_v, \delta; Q_0)$.
  \end{claim}
  \begin{proof}
    By definition of the worst-coverage, we can find $\{ t_n \}_{n \ge 1}$
    such that $Q_0 (v_n(X) \ge t_n) \ge \delta_n$ for all $n \ge 1$, and
    \begin{align*}
      Q_0( \score(X,Y) \le q_n \mid v_n(X) \ge t_n) \underset{n \to \infty}{\longrightarrow} \ell.
    \end{align*}
    As we may always consider a subsequence, we again assume that $t_n
    \to t \in [ -\infty, \infty]$.  Next, observe that, by Slutsky's lemma, $v_n(X) - t_n \cd v\opt(X) - t $
    (where the limit distribution is continuous but potentially infinite if
    $t \in \{ -\infty, \infty\}$), so
    \begin{align}
      \label{eqn:cvg-prob-xtv-lower-semicontinuity}
      Q_0( v\opt(X) \ge t) =  \lim_n Q_0( v_n(X) \ge t_n) \ge  \delta
    \end{align}
    by the Portmanteau theorem,
    which also proves that $t < \infty$.

    If $t = - \infty$, then $Q_0( v_n(X) \ge t_n) \to 1$.  As the
    distribution of $\score(X,Y)$ is continuous under $Q_0$, this ensures
    that
    \begin{align*}
      Q_0( \score(X,Y) \le q_n \mid v_n(X) \ge t_n )\to Q_0( \score(X,Y) \le q) \ge \wc(C^{(q)}, \mc{R}_{v\opt}, \delta; \, Q_0),
    \end{align*}
    and proves that $\ell \ge \wc(C^{(q)}, \mc{R}_{v\opt}, \delta; \, Q_0)$.  If
    $t \in \R$, then with derivation \emph{mutatis mutandis} identical to
    that to develop the convergence~\eqref{eqn:cvg-prob-xtvscores}, we
    obtain that
    \begin{align*}
      Q_0( \score(X,Y) \le q_n,  v_n(X) \ge t_n)  - Q_0( \score(X,Y) \le q,  v\opt(X) \ge t) \underset{n \to \infty}{\rightarrow} 0.
    \end{align*}
    With equation~\eqref{eqn:cvg-prob-xtv-lower-semicontinuity}, this directly
    shows that
    \begin{align*}
      \wc(C^{(q)}, \mc{R}_{v\opt}, \delta; \, Q_0)
      &\le Q_0( \score(X,Y) \le q \mid v\opt(X) \ge t)  \\
      &= \lim_{n \to \infty} Q_0( \score(X,Y) \le q \mid v_n(X) \ge t_n) = \ell
    \end{align*}
    as desired.
  \end{proof}

\end{proof}

\begin{lemma}
  \label{lem:consistency-of-empirical-worst-coverage-for-alpha}
  As $n \to \infty \; \; (n_1,n_2 \to \infty)$, the confidence set mapping
  $\what C_n$ from Alg.~\ref{alg:worst-direction-validation} satisfies
  \begin{equation*}
    1- \alpha \le \wc( \what C_n, \mc{R}_{\hat v}, \delta; \, \what Q_{n_2})
    \le 1 - \alpha + u_n,
  \end{equation*}
  where $u_n \in [0,\alpha]$, and $u_n \cp 0$ if
  Assumption~\ref{assumption:estimated-scores-as-distinct} holds.
\end{lemma}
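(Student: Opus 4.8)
Throughout write $I_2=\{n_1+1,\dots,n_1+n_2\}$ for the index set of the second subsample, recall that here $\mc{V}=\sphere^{d-1}$ with $\mc{R}_v=\{\{x:v^Tx\ge a\}\}_{a\in\R}$, and recall that Alg.~\ref{alg:worst-direction-validation} returns $\what C_n=C^{(\what q_\delta,\scoreest)}$ with $\what q_\delta=\inf\{q\in\R:W(q)\ge 1-\alpha\}$, where
\[
W(q)\defeq\wc\big(C^{(q,\scoreest)},\mc{R}_{\hat v},\delta;\empQtwo\big)
=\inf\big\{M_R(q)/N_R: R\in\mc{R}_{\hat v},\ N_R\ge\delta n_2\big\},
\]
$N_R=\#\{i\in I_2:X_i\in R\}$ and $M_R(q)=\#\{i\in I_2:X_i\in R,\ \scoreest(X_i,Y_i)\le q\}$. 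The plan is to study $W$ directly, since $\wc(\what C_n,\mc{R}_{\hat v},\delta;\empQtwo)=W(\what q_\delta)$. First I would note that, $\empQtwo$ being supported on $n_2$ points, the halfspaces $\{x:\hat v^Tx\ge a\}$ realize only finitely many membership patterns $(\indic{X_i\in R})_{i\in I_2}$, so $W$ is the minimum of finitely many functions $q\mapsto M_R(q)/N_R$, each non-decreasing and right-continuous with jumps only at values $\scoreest(X_i,Y_i)$; hence $W$ is non-decreasing, right-continuous, and has finitely many jump points.

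For the lower bound, right-continuity makes $\{q:W(q)\ge 1-\alpha\}$ a closed half-line, nonempty since $W(q)=1$ for $q\ge\max_{i\in I_2}\scoreest(X_i,Y_i)$, so the defining infimum $\what q_\delta$ is attained and $W(\what q_\delta)\ge 1-\alpha$, i.e.\ $\wc(\what C_n,\mc{R}_{\hat v},\delta;\empQtwo)\ge 1-\alpha$. Set $u_n\defeq W(\what q_\delta)-(1-\alpha)$. Then $u_n\ge 0$ by the line just proved and $u_n\le\alpha$ since $W\le 1$, so $u_n\in[0,\alpha]$ without any assumptions, and $\wc(\what C_n,\mc{R}_{\hat v},\delta;\empQtwo)=1-\alpha+u_n$.

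To control $u_n$, note that $W(q)<1-\alpha$ for all $q<\what q_\delta$, so the left limit $W(\what q_\delta^-)\defeq\lim_{q\uparrow\what q_\delta}W(q)$ satisfies $W(\what q_\delta^-)\le 1-\alpha$ and hence $u_n\le W(\what q_\delta)-W(\what q_\delta^-)$. Since the finitely many step functions $q\mapsto M_R(q)$ have finitely many jump points combined, for $q$ in a small left-neighborhood of $\what q_\delta$ we have $M_R(q)=M_R(\what q_\delta^-)\defeq\#\{i\in I_2:X_i\in R,\ \scoreest(X_i,Y_i)<\what q_\delta\}$ for every valid $R$, so $W(\what q_\delta^-)=\min_R M_R(\what q_\delta^-)/N_R$; let $R_0$ attain this minimum (so $N_{R_0}\ge\delta n_2$). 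Then $W(\what q_\delta)\le M_{R_0}(\what q_\delta)/N_{R_0}$, and therefore
\[
u_n\;\le\;\frac{M_{R_0}(\what q_\delta)-M_{R_0}(\what q_\delta^-)}{N_{R_0}}
\;=\;\frac{\#\{i\in I_2:X_i\in R_0,\ \scoreest(X_i,Y_i)=\what q_\delta\}}{N_{R_0}}
\;\le\;\frac{T_n}{\delta n_2},
\]
with $T_n\defeq\#\{i\in I_2:\scoreest(X_i,Y_i)=\what q_\delta\}$. Under Assumption~\ref{assumption:estimated-scores-as-distinct}, with probability tending to one all $n$ scores $\scoreest(X_i,Y_i)$ — in particular the $n_2$ scores indexed by $I_2$ — are distinct, whence $T_n\le 1$ on that event and $u_n\le 1/(\delta n_2)$; since also $u_n\le\alpha$ always and $n_2\to\infty$, we conclude $u_n\cp 0$.

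The only step that is more than bookkeeping is the stabilization of the minimizing set $R_0$ near $\what q_\delta$: this is what lets me rewrite the jump of the infimum $W$ as the jump of a single empirical conditional c.d.f.\ $M_{R_0}(\cdot)/N_{R_0}$, which equals one observation's mass up to ties, and it is the constraint $N_{R_0}\ge\delta n_2$ that then makes that mass $O(1/(\delta n_2))$. It relies on $\mc{R}_{\hat v}$ having finite trace on the $n_2$ points, so that the left-neighborhood can be taken uniform over $R$.
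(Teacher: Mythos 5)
Your proof is correct and follows essentially the same approach as the paper's (much terser) argument: the paper likewise notes the lower bound is definitional and that distinct scores give the bound $\wc(\what C_n, \mc{R}_{\hat v}, \delta; \what Q_{n_2}) \le 1-\alpha + 1/(n_2\delta)$. Your write-up fills in the details the paper leaves implicit—right-continuity of the empirical worst-coverage $W$, stabilization of the minimizing halfspace $R_0$ on a left-neighborhood of $\what q_\delta$, and the jump bound $u_n \le T_n/(\delta n_2)$—all of which are accurate.
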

\begin{proof}
  The lower bound is immediate by definition of $\what C_n$.
  For the upper bound, we have
  \begin{equation*}
    \wc( \what C_n, \mc{R}_{\hat v}, \delta; \, \what Q_{n_2}) \le 1- \alpha + \frac{1}{n_2 \delta}
  \end{equation*}
  whenever the scores $\{ \scoreest(X_i,Y_i) \}_{i=n_1+1}^n$ are all distinct, which occurs eventually with high probability under Assumption~\ref{assumption:estimated-scores-as-distinct}.
\end{proof}

\begin{lemma}
  \label{lem:consistency-of-empirical-worst-coverages}
  Let
  Assumption~\ref{assumption:continuity-worst-coverage-1} hold.
  Then as $n \to \infty$,
  the worst coverages under $\what Q_{n_2}$ and $Q_0$ satisfy
  the Glivenko-Cantelli result
  \begin{align*}
    \sup_{q\in \R} \left| 
    \wc( C^{(q,\scoreest)}, \mc{R}_{\hat v}, \delta; \, \what Q_{n_2})-\wc( {C}^{(q, \scoreest)}, \mc{R}_{\hat v}, \delta; \, Q_0) \right| 
    \cas 0.
  \end{align*}
\end{lemma}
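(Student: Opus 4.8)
The plan is to fix the first-stage estimates $\hat v$ and $\scoreest$ and argue conditionally on them: by the sample-splitting construction in Algorithm~\ref{alg:worst-direction-validation} they are independent of the second subsample $\{(X_i,Y_i)\}_{i=n_1+1}^{n}\simiid Q_0$, and since the error bounds below will depend only on the fixed $\delta\in(0,1)$, an a.s.\ statement valid for every value of $(\hat v,\scoreest)$ yields the unconditional claim. Throughout, write $M_Q(a)\defeq Q(\hat v^T X\ge a)$ and $N_Q(q,a)\defeq Q(\scoreest(X,Y)\le q,\ \hat v^T X\ge a)$, so that for the halfspace family $\mc{R}_{\hat v}=\{\{x:\hat v^T x\ge a\}\}_{a\in\R}$ we have $\wc(C^{(q,\scoreest)},\mc{R}_{\hat v},\delta;Q)=\inf\big\{N_Q(q,a)/M_Q(a):M_Q(a)\ge\delta\big\}$.

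The first ingredient is an empirical-process bound. For fixed $\hat v,\scoreest$ the classes $\{\{\hat v^T x\ge a\}\}_a$ and $\{\{\scoreest(x,y)\le q\}\}_q$ are each linearly ordered by inclusion, hence have VC dimension $1$, so the class of intersections $\{\{\hat v^T x\ge a\}\cap\{\scoreest(x,y)\le q\}\}_{(a,q)}$ has finite VC dimension; the Glivenko--Cantelli theorem then supplies a sequence $\varepsilon_{n_2}\cas 0$ (depending on $\hat v,\scoreest,Q_0$ but not on $q$) with $\sup_a|M_{\empQtwo}(a)-M_{Q_0}(a)|\le\varepsilon_{n_2}$ and $\sup_{a,q}|N_{\empQtwo}(q,a)-N_{Q_0}(q,a)|\le\varepsilon_{n_2}$ eventually.

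Next I would translate this into a bound on the worst coverage, the one subtlety being that the feasibility constraints $\{a:M_{\empQtwo}(a)\ge\delta\}$ and $\{a:M_{Q_0}(a)\ge\delta\}$ disagree. Since $\hat v\in\mc{V}$, Assumption~\ref{assumption:continuity-worst-coverage-1} makes the law of $\hat v^T X$ under $Q_0$ continuous, so $a\mapsto M_{Q_0}(a)$ is continuous and $M_{Q_0}(\cdot)=t$ on the boundary of $\{a:M_{Q_0}(a)\ge t\}$ for each $t\in(0,1)$; combined with the uniform bound on $M$, this sandwiches $\{a:M_{Q_0}(a)\ge\delta+\varepsilon_{n_2}\}\subseteq\{a:M_{\empQtwo}(a)\ge\delta\}\subseteq\{a:M_{Q_0}(a)\ge\delta-\varepsilon_{n_2}\}$. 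Monotonicity of the infimum in the feasible set, together with the pointwise estimate $N_{\empQtwo}/M_{\empQtwo}=N_{Q_0}/M_{Q_0}+O(\varepsilon_{n_2}/\delta)$ on these sets (all denominators exceed $\delta/2$ eventually), then yields
\[
\wc(C^{(q,\scoreest)},\mc{R}_{\hat v},\delta+\varepsilon_{n_2};Q_0)-O(\varepsilon_{n_2}/\delta)\ \le\ \wc(C^{(q,\scoreest)},\mc{R}_{\hat v},\delta;\empQtwo)\ \le\ \wc(C^{(q,\scoreest)},\mc{R}_{\hat v},\delta-\varepsilon_{n_2};Q_0)+O(\varepsilon_{n_2}/\delta)
\]
uniformly in $q$. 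It remains to show the worst coverage under $Q_0$ is Lipschitz in its subset-size argument, uniformly over $q$. Reparametrizing the feasible $a$'s by their quantile level $\beta=M_{Q_0}(a)$ (which by continuity ranges over $[\delta',1]$), one has $\wc(C^{(q,\scoreest)},\mc{R}_{\hat v},\delta';Q_0)=\inf_{\beta\in[\delta',1]}N_{Q_0}(q,a_\beta)/\beta$ with $M_{Q_0}(a_\beta)=\beta$, and the crucial point is the mass identity $0\le N_{Q_0}(q,a_{\beta'})-N_{Q_0}(q,a_{\beta})\le Q_0(\hat v^T X\in[a_{\beta'},a_{\beta}))=\beta'-\beta$, valid for $\delta'\le\beta\le\beta'$ \emph{uniformly in $q$}, which gives $|\wc(C^{(q,\scoreest)},\mc{R}_{\hat v},\beta';Q_0)-\wc(C^{(q,\scoreest)},\mc{R}_{\hat v},\beta;Q_0)|\le 2(\beta'-\beta)/\delta^2$ for $\beta,\beta'\ge\delta/2$. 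Taking $\beta,\beta'=\delta\pm\varepsilon_{n_2}$ and combining with the previous display gives $\sup_q|\wc(C^{(q,\scoreest)},\mc{R}_{\hat v},\delta;\empQtwo)-\wc(C^{(q,\scoreest)},\mc{R}_{\hat v},\delta;Q_0)|=O(\varepsilon_{n_2}/\delta^2)\cas 0$.

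The main obstacle is precisely this interplay between the random feasibility constraint and the regularity of the worst coverage in $\delta$; the mass identity above is what reconciles the two and, crucially, supplies a modulus of continuity that is uniform in the threshold $q$ without assuming anything about the law of $\scoreest(X,Y)$. One could instead invoke Lemma~\ref{lem:continuous-distribution-vs-worst-coverage-continuity} for the $\delta$-continuity, read with $\scoreest$ in place of $\score$ --- with $q$ held fixed its proof uses only the continuity of $\hat v^T X$ --- but the self-contained estimate above is shorter and makes the uniformity in $q$ transparent.
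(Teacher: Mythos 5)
Your proof is correct and follows essentially the same route as the paper's: a uniform (VC/Glivenko--Cantelli) bound for the empirical masses of halfspace-and-level-set events, a sandwich of the feasibility constraint $\{a : M(a)\ge\delta\}$ between population constraints $\delta\pm\varepsilon_{n_2}$ using continuity of $\hat v^T X$, and a Lipschitz-in-$\delta$ estimate for the population worst coverage, which is exactly the paper's claim~\eqref{eqn:worst-coverage-lipschitz-in-delta} expressed via your ``mass identity.'' The only differences are cosmetic (numerator/denominator notation vs.\ conditional probabilities, and a direct a.s.\ GC statement in place of the paper's high-probability bound plus Borel--Cantelli).
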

\begin{proof}
  Let $\varepsilon > 0$ be arbitrary.  Recalling
  equations~\eqref{eqn:uniform-robust-estimation-cdf}
  and~\eqref{eqn:uniform-probability-events} in the proof of
  Theorem~\ref{theorem:high-probability-coverage}, there exists a universal
  constant $c < \infty$ such that conditionally on $\scoreest$ and the first
  half of the validation set (hence $\hat v$), we have with probability at
  least $1- \varepsilon$ over $\{ (X_i, Y_i) \}_{i=n_1+1}^{n}$ that
  \begin{align*}
    \begin{split}
      \sup_{q \in \R} \Big| \inf_{\substack{R \in \mc{R}_{\hat v} \\ Q_{n_2} (X\in R) \ge \delta }} Q_{n_2}\left( \scoreest(X,Y) \le q \mid X \in R \right) &-  
      \inf_{\substack{R \in \mc{R}_{\hat v} \\ Q_{n_2}(X\in R) \ge \delta }} Q_0\left( \scoreest(X,Y) \le q \mid X \in R \right) \Big|  \\
      &\le c \sqrt{\frac{\log(n_2/\varepsilon)}{n_2\delta}}
    \end{split}
  \end{align*}
  and
  \begin{align*}
    \sup_{q \in \R, R \in \mc{R}_{\hat v}} \lvert Q_{n_2}( X \in R) - Q_0(X \in R) \rvert \le  c \sqrt{\frac{\log(n_2/\varepsilon)}{n_2}}.
  \end{align*}
  Setting $\delta_n^\pm \defeq \delta \pm c
  \sqrt{\frac{\log(n_2/\varepsilon)}{n_2}}$, these two statements ensure
  that with probability $1-\varepsilon$ over $\{ (X_i, Y_i)
  \}_{i=n_1+1}^{n}$, simultaneously for all $q \in \R$,
  \begin{align*}
    \wc( C^{(q,\scoreest)}, \mc{R}_{\hat v}, \delta_n^-; \, Q_0)
    -  c \sqrt{\frac{\log(n_2 /\varepsilon)}{n_2 \delta}} &\le 
    \wc( C^{(q,\scoreest)}, \mc{R}_{\hat v}, \delta; \, \what Q_{n_2}) \\
    &\le \wc( C^{(q,\scoreest)}, \mc{R}_{\hat v}, \delta_n^+; \, Q_0) +
    c \sqrt{\frac{\log(n_2/\varepsilon)}{n_2\delta}}.
  \end{align*}

  To conclude, we claim that for all $q \in \R$, $v \in \mc{V}$ for which $v(X)$ has a continuous distribution, scores
  $\score$, and $0<
  \delta_0 < \delta_1 < 1$, we have
  \begin{align}
    \label{eqn:worst-coverage-lipschitz-in-delta}
    \wc( C^{(q,\score)}, \mc{R}_{v}, \delta_1; \, Q_0) - \frac{\delta_1 -
      \delta_0}{\delta_1}\le \wc( C^{(q,\score)}, \mc{R}_{v}, \delta_0; \,
    Q_0).
  \end{align}
  Temporarily deferring the proof of
  inequality~\eqref{eqn:worst-coverage-lipschitz-in-delta},
  this shows in particular that for all $q \in \R$, so long as $\hat v(X)$ has a continuous distribution (which occurs with probability going to $1$ from Assumption~\ref{assumption:continuity-worst-coverage-1}),
  \begin{align*}
    \wc( C^{(q,\scoreest)}, \mc{R}_{\hat v}, \delta_n^-; \, Q_0) \ge \wc( C^{(q,\scoreest)}, \mc{R}_{\hat v}, \delta; \, Q_0) - \frac{ \delta-\delta_n^-}{\delta},
  \end{align*}
  and that
  \begin{align*}
    \wc( C^{(q,\scoreest)}, \mc{R}_{\hat v}, \delta_n^+; \, Q_0) \le \wc( C^{(q,\scoreest)}, \mc{R}_{\hat v}, \delta; \, Q_0) + \frac{\delta_n^+ - \delta}{\delta}.
  \end{align*}
  We thus have, conditionally on $\scoreest$ and $\hat v$,
  which are
  independent of the sample $\{ (X_i, Y_i) \}_{i=n_1+1}^{n}$,
  that with probability at least $1 - \varepsilon$
  \begin{align*}
    \sup_{q \in \R} \lvert \wc( C^{(q,\scoreest)}, \mc{R}_{\hat v}, \delta; \, \what Q_{n_2}) 
    - \wc( C^{(q,\scoreest)}, \mc{R}_{\hat v}, \delta; \, Q_0 )   \rvert
    & \le
    c \sqrt{\frac{\log(n_2/\varepsilon)}{n_2}}\left(\delta^{-1/2} + \delta^{-1} \right).
  \end{align*}
  The Borel-Cantelli lemma then gives the almost sure convergence.
  
  We return to demonstrate the
  claim~\eqref{eqn:worst-coverage-lipschitz-in-delta}.
  We have by definition that
  \begin{equation*}
    \wc(C^{(q,\score)}, \mc{R}_v, \delta_0; Q_0)
    = \min\left\{
    \begin{array}{l}
      \inf_{R \in \mc{R}_v} \{Q_0(\score(X, Y) \le q \mid X \in R) :
      \delta_1 \le Q_0(X \in R)\}, \\
      \inf_{R \in \mc{R}_v} \{Q_0(\score(X, Y) \le q \mid X \in R) :
      \delta_0 \le Q_0(X \in R) < \delta_1 \}
    \end{array}
    \right\}.
  \end{equation*}
  If the topmost term achieves the minimum, the
  claim~\eqref{eqn:worst-coverage-lipschitz-in-delta} is immediate,
  so we may instead assume that the bottom term achieves it.  
 The fact that $v(X)$ is continuous ensures the
  existence of $a_1 \in \R$ such that $Q_0( v(X)  \ge a_1) = \delta_1$
  satisfying
  \begin{equation*}
    \wc( C^{(q,\score)}, \mc{R}_{v}, \delta_1; \, Q_0)
    \le Q_0( \score(X,Y) \le q \mid v(X) \ge a_1)
  \end{equation*}
  as $\wc$ is an infimum over all such shifts.
  Then for any $a_0 \ge a_1$ such that
  $Q_0( v(X) \ge a_0) \ge \delta_0$, we in turn have
  \begin{align*}
    Q_0( \score(X,Y) \le q \mid v(X) \ge a_1)
    &= \delta_1^{-1} Q_0( \score(X,Y) \le q,  v(X) \ge a_1) \\
    &\le \delta_1^{-1} \left( Q_0(\score(X,Y) \le q,  v(X) \ge a_0) +  Q_0(a_1 \le  v(X) < a_0) \right) \\
    & \le Q_0( \score(X,Y) \le q \mid v(X)  \ge a_0) + \frac{\delta_1 - \delta_0}{\delta_1},
  \end{align*}
  where we have used that $Q_0( v(X) \ge a_0) \le \delta_1$.
  Taking an infimum over all such $a_0$
  gives
  the statement~\eqref{eqn:worst-coverage-lipschitz-in-delta} above.
\end{proof}

\begin{lemma}
  \label{lem:uniform-convergence-over-v-for-scores}
  Let Assumptions~\ref{assumption:consistency-of-scores-and-vhat}
  and~\ref{assumption:continuity-worst-coverage-1} hold. Then the score
  functions $\scoreest$ and $\score$ offer uniformly close worst coverage in
  the sense that
  \begin{equation*}
    \sup_{q,v}
    \left\{ \left| \wc( C^{(q, \scoreest)}, \mc{R}_{ v}, \delta; \,  Q_{0})
    -\wc( {C}^{(q,  \score)}, \mc{R}_{v}, \delta; \, Q_0) \right|
    \mid q \in \R, v \in \mc{V} \right\}
    = o_P(1). 
  \end{equation*}
\end{lemma}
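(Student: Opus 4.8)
The plan is to transfer the $L^2(Q_0)$-consistency of $\scoreest$ for $\score$ from Assumption~\ref{assumption:consistency-of-scores-and-vhat} into closeness of the worst coverages that is uniform over $(q,v)$, by sandwiching the set $C^{(q,\scoreest)}$ between $C^{(q-\eta,\score)}$ and $C^{(q+\eta,\score)}$ for a small fixed $\eta$ and then absorbing the $\pm\eta$ threshold shift using the uniform continuity of the population worst coverage established in Lemma~\ref{lem:continuous-distribution-vs-worst-coverage-continuity}. Write $\epsilon_n \defeq \ltwopxs{\scoreest - \score}$, so that $\epsilon_n = o_P(1)$ by Assumption~\ref{assumption:consistency-of-scores-and-vhat}, and recall that here $\mc{R}_v = \{\{x \in \R^d : v^T x \ge a\}\}_{a \in \R}$, so that for every $v \in \mc{V}$ the feasible family $\{R \in \mc{R}_v : Q_0(X \in R) \ge \delta\}$ is nonempty and depends only on the $X$-marginal of $Q_0$ --- in particular it is the same whether one uses $\scoreest$ or $\score$ to form the conditional coverage.

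First I would record a deterministic sandwich bound, valid for any realization of $\scoreest$ and any $\eta > 0$. For each feasible $R$, Markov's inequality gives $Q_0(|\scoreest(X,Y) - \score(X,Y)| > \eta \mid X \in R) \le \epsilon_n^2/(\delta \eta^2)$, and on the event $\{|\scoreest - \score| \le \eta\}$ one has $\{\scoreest \le q\} \subseteq \{\score \le q + \eta\}$ and $\{\score \le q - \eta\} \subseteq \{\scoreest \le q\}$; splitting the conditional probability on this event and its complement yields
\begin{equation*}
  Q_0(\score \le q - \eta \mid X \in R) - \frac{\epsilon_n^2}{\delta \eta^2}
  \;\le\; Q_0(\scoreest \le q \mid X \in R)
  \;\le\; Q_0(\score \le q + \eta \mid X \in R) + \frac{\epsilon_n^2}{\delta \eta^2}.
\end{equation*}
Taking the infimum over feasible $R \in \mc{R}_v$ on each side (the same feasible set throughout) gives, simultaneously for all $q \in \R$ and $v \in \mc{V}$,
\begin{equation*}
  \wc(C^{(q-\eta,\score)}, \mc{R}_v, \delta; \, Q_0) - \frac{\epsilon_n^2}{\delta \eta^2}
  \;\le\; \wc(C^{(q,\scoreest)}, \mc{R}_v, \delta; \, Q_0)
  \;\le\; \wc(C^{(q+\eta,\score)}, \mc{R}_v, \delta; \, Q_0) + \frac{\epsilon_n^2}{\delta \eta^2}.
\end{equation*}

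Next I would invoke Lemma~\ref{lem:continuous-distribution-vs-worst-coverage-continuity}: under Assumption~\ref{assumption:continuity-worst-coverage-1}, for the fixed $\delta$ the map $(q,v) \mapsto \wc(C^{(q,\score)}, \mc{R}_v, \delta; \, Q_0)$ is uniformly continuous on $\R \times \mc{V}$, so for every $\gamma > 0$ there is $\eta(\gamma) > 0$ with $\sup_{q \in \R,\, v \in \mc{V}} | \wc(C^{(q \pm \eta(\gamma), \score)}, \mc{R}_v, \delta; \, Q_0) - \wc(C^{(q,\score)}, \mc{R}_v, \delta; \, Q_0) | \le \gamma$. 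Combining this with the sandwich bound (and monotonicity of $q \mapsto \wc(C^{(q,\score)}, \mc{R}_v, \delta; \, Q_0)$),
\begin{equation*}
  \sup_{q \in \R, \, v \in \mc{V}}
  \left| \wc(C^{(q,\scoreest)}, \mc{R}_v, \delta; \, Q_0) - \wc(C^{(q,\score)}, \mc{R}_v, \delta; \, Q_0) \right|
  \;\le\; \gamma + \frac{\epsilon_n^2}{\delta \, \eta(\gamma)^2}.
\end{equation*}
Since $\gamma$ and $\delta$ are fixed while $\epsilon_n = o_P(1)$, the probability that the left-hand side exceeds $2\gamma$ is at most $\P(\epsilon_n^2/(\delta\eta(\gamma)^2) > \gamma) \to 0$; as $\gamma > 0$ was arbitrary, the supremum is $o_P(1)$, which is the claim.

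The main obstacle is the passage from $L^2$-closeness of the score functions to uniform closeness of the worst coverages; the two points that make it go through are (i) keeping the Markov estimate uniform over all halfspaces $R$ by paying only the conditioning factor $1/\delta$, so that a single error term $\epsilon_n^2/(\delta\eta^2)$ works for every $R$, and (ii) absorbing the resulting $\pm\eta$ threshold shift via the uniform-in-$(q,v)$ continuity of the population worst coverage supplied by Lemma~\ref{lem:continuous-distribution-vs-worst-coverage-continuity}. The remaining steps --- the set inclusions, the infimum manipulations, and the final probabilistic bound --- are routine.
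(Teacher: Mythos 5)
Your proof is correct, and it takes a genuinely different route from the paper's at one key point. Both arguments share the same skeleton: sandwich the event $\{\scoreest \le q\}$ between $\{\score \le q - \eta\}$ and $\{\score \le q + \eta\}$ on the high-probability set $\{|\scoreest - \score| \le \eta\}$, control the complement by Markov/Chebyshev using $\ltwopxs{\scoreest - \score}^2 = o_P(1)$, and then absorb the residual $\pm\eta$ threshold shift via a uniform-continuity argument. The difference is \emph{where} the $\pm\eta$ absorption happens. The paper works with the joint probabilities $Q_0(\score \le q,\, X^T v \ge a)$ and uses only the elementary fact that the one-dimensional c.d.f.\ $q \mapsto P_0(\scorerv \le q)$ is uniformly continuous (a direct consequence of Assumption~\ref{assumption:continuity-worst-coverage-1}), since $Q_0(q < \score \le q + \eta,\, X^T v \ge a) \le P_0(q < \scorerv \le q + \eta)$ uniformly in $(v, a)$; it then leaves implicit the (routine) passage from joint to conditional probabilities (paying the $1/\delta$ factor) and from uniform closeness of conditionals to uniform closeness of their infima. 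You instead work at the level of conditional probabilities and worst-coverage functionals throughout, paying the $1/\delta$ factor in the Markov bound up front, and absorb the $\pm\eta$ shift by invoking the uniform $(q,v)$-continuity of $\wc(C^{(q,\score)}, \mc{R}_v, \delta; Q_0)$ supplied by Lemma~\ref{lem:continuous-distribution-vs-worst-coverage-continuity}. Your approach makes the infimum manipulations and the role of $\delta$ fully explicit at each step, at the cost of relying on the heavier continuity result (Lemma~\ref{lem:continuous-distribution-vs-worst-coverage-continuity}) rather than on the elementary continuity of a scalar c.d.f.; the paper's approach is more self-contained in this respect but more terse about the final reduction. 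Both are valid under the stated assumptions, and the trade-off is a matter of taste.
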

\begin{proof}
  We need to show
  \begin{align*}
    \sup_{q,v} \left| \inf_{a : Q_{0}(X \in R_{ v,a}) \geq \delta}P_{0}( \scoreest \leq q \mid X \in R_{ v,a})- \inf_{a : Q_{0}(X \in R_{ v,a}) \geq \delta}P_{0}({\scorerv} \leq  q \mid X \in R_{ v,a})
    \right| = o_P(1),
  \end{align*}
  for which it is sufficient to prove that
  \begin{align*}
    \label{eqn:intersect-prob-close-v-2}
    \sup_a \left\{ \lvert Q_{0}(\scoreest(X,Y) \leq q, X \in R_{{v},a})
    -Q_0( \score(X,Y) \leq q  ,X \in R_{{v},a})\rvert   
    \mid Q_0\left( v(X) \ge a\right) \ge \delta  \right\}
    \cp 0.
  \end{align*}

  Fix $\varepsilon > 0$.  Under
  Assumption~\ref{assumption:continuity-worst-coverage-1}, the distribution
  of $\scorerv$ is continuous, so that $q \mapsto P_0( \scorerv \le q)$ is
  continuous, monotone, and has finite limits in $\pm \infty$, so that it is
  uniformly continuous. Thus, there exists $\eta = \eta(\varepsilon) >0$
  such that
  \begin{align*}
    \sup_{q \in \R} P_{0}(q < \scorerv\le q + \eta ) \le \varepsilon.
  \end{align*}
  Now, define
  \begin{equation*}
    B_{n} \defeq \{(x,y) \in \mc{X} \times \mc{Y} \mid |\scoreest(x,y)-\score(x,y)| \ge \eta \},
  \end{equation*}
  and observe that for all $q \in \R$, $v \in \mc{V}$ and $a \in \R$, we have
  \begin{align*}
    Q_0( \scoreest(X,Y) \le q,  v(X) \ge a) &\le Q_0(B_{n}) + Q_0( \score(X,Y) \le q+\eta,  v(X) \ge a) \\
    &\le Q_0(B_{n}) + Q_0( \score(X,Y) \le q+\eta,  v(X) \ge a) \\
    &\le Q_0(B_{n}) +  Q_0( \score(X,Y) \le q,  v(X) \ge a) + \varepsilon,
  \end{align*}
  and similarly
  \begin{align*}
    Q_0( \score(X,Y) \le q,  v(X) \ge a) \le Q_0(B_{n}) +  Q_0( \scoreest(X,Y) \le q,  v(X) \ge a) + \varepsilon.
  \end{align*}
  These imply that
  \begin{align*}
    \sup_a \Big\{ \left| Q_0( \scoreest(X,Y) \le q, v(X) \ge a) - Q_0( \score(X,Y) \le q, v(X) \ge a) \right|
    & \mid 
    Q_0\left( v(X) \ge a\right) \ge \delta
    \Big\} \\
    &\le \varepsilon + Q_0(B_{n}),
  \end{align*}
  and we conclude using Markov's inequality and
  Assumption~\ref{assumption:consistency-of-scores-and-vhat} that
  \begin{align*}
    Q_0(B_n) \le \frac{\ltwopx{\scoreest-\score}^2}{\eta^2} \cp 0,
  \end{align*}
  which gives the result.
\end{proof}

\begin{lemma}
  \label{lem:consistency-of-empirical-worst-direction}
  Let Assumptions~\ref{assumption:consistency-of-scores-and-vhat}
  and~\ref{assumption:continuity-worst-coverage-1} hold.  Then as $n_1 \to
  \infty$,
  \begin{align*}
    \sup_q \lvert \wc( C^{(q,\score)}, \mc{R}_{\hat v}, \delta; \, Q_0) 
    -
    \wc( {C}^{(q, \score)}, \mc{R}_{v\opt}, \delta; \, Q_0) \rvert = o_p(1).
  \end{align*}
\end{lemma}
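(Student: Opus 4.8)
The plan is to deduce the estimate from two facts already in hand: the continuity in the direction $v$ of the worst-coverage map, established in Lemma~\ref{lem:continuous-distribution-vs-worst-coverage-continuity}, and the consistency $\hat v \cp v\opt$ supplied by Assumption~\ref{assumption:consistency-of-scores-and-vhat}. Note that the claim concerns only the population $Q_0$ and the fixed true score $\score$, so neither the finite-sample fluctuation of $\empQtwo$ nor the estimation error of $\scoreest$ plays a role here; only the convergence of $\hat v$ matters. (Since $\mc{R}_{\lambda v} = \mc{R}_v$ for every $\lambda > 0$, the collection $\mc{R}_v$ depends on $v$ only through its direction, so we may freely regard $\hat v$ as a point of $\mc{V} = \sphere^{d-1}$; the population least-squares direction is proportional to $v\opt \neq 0$, so this normalization is harmless.)

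First I would recall from the proof of Lemma~\ref{lem:continuous-distribution-vs-worst-coverage-continuity} that, under Assumption~\ref{assumption:continuity-worst-coverage-1}, the map $(q,v) \mapsto \wc(C^{(q,\score)}, \mc{R}_v, \delta; Q_0)$ extends to a continuous function on $\bar{\R} \times \mc{V}$, where $\bar{\R} = [-\infty,\infty]$, by declaring $\wc(C^{(-\infty,\score)}, \mc{R}_v, \delta; Q_0) = 0$ and $\wc(C^{(+\infty,\score)}, \mc{R}_v, \delta; Q_0) = 1$; the envelope bound $1 - \delta^{-1} P_0(\scorerv > q) \le \wc(C^{(q,\score)}, \mc{R}_v, \delta; Q_0) \le \delta^{-1} P_0(\scorerv \le q)$ pins down the behaviour as $q \to \pm\infty$. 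Since $\sphere^{d-1}$, hence $\bar{\R} \times \mc{V}$, is compact, Heine's theorem makes this extension uniformly continuous; in particular, for every $\varepsilon > 0$ there is $\eta > 0$ such that $\ltwo{v - v'} \le \eta$ implies
\[
  \sup_{q \in \R} \bigl| \wc(C^{(q,\score)}, \mc{R}_v, \delta; Q_0) - \wc(C^{(q,\score)}, \mc{R}_{v'}, \delta; Q_0) \bigr| \le \varepsilon .
\]

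Then I would apply this with $v = \hat v$ and $v' = v\opt$: using the consistency of $\hat v$,
\[
  \P\Bigl( \sup_q \bigl| \wc(C^{(q,\score)}, \mc{R}_{\hat v}, \delta; Q_0) - \wc(C^{(q,\score)}, \mc{R}_{v\opt}, \delta; Q_0) \bigr| > \varepsilon \Bigr)
  \le \P\bigl( \ltwo{\hat v - v\opt} > \eta \bigr) \longrightarrow 0
\]
as $n_1 \to \infty$, and since $\varepsilon > 0$ is arbitrary the supremum is $o_p(1)$, which is the assertion. The only real obstacle here — and it is already resolved inside Lemma~\ref{lem:continuous-distribution-vs-worst-coverage-continuity} — is that the supremum runs over the non-compact index set $q \in \R$; passing to the compactification $\bar{\R}$ and using the envelope bound to control the limits at $\pm\infty$ is precisely what turns the pointwise-in-$q$ continuity into the uniform statement we need. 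Everything else is a direct chaining of uniform continuity with the assumed convergence of $\hat v$.
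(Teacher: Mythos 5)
Your argument is essentially identical to the paper's: both invoke the uniform continuity of $(q,v) \mapsto \wc(C^{(q,\score)}, \mc{R}_v, \delta; Q_0)$ from Lemma~\ref{lem:continuous-distribution-vs-worst-coverage-continuity} and then chain it with $\hat{v} \cp v\opt$ from Assumption~\ref{assumption:consistency-of-scores-and-vhat}. Your added remarks on the compactification of $\bar{\R}$ and on scale-invariance of $\mc{R}_v$ are correct but simply re-expose details already handled inside Lemma~\ref{lem:continuous-distribution-vs-worst-coverage-continuity} and Algorithm~\ref{alg:worst-direction-validation}.
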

\begin{proof}
  
  Let $\varepsilon > 0$.
For each $v \in \mc{V}$,  the function $q \in \R  \mapsto \wc( C^{(q,\score)}, \mc{R}_{v\opt}, \delta; \, Q_0)$ is bounded non-decreasing, hence there exists a $\{ q_i \}_{i=1}^N \subset \R$ a non-decreasing sequence so that 
\begin{align*}
\sup_{0\le i \le N} \left|  \wc( C^{(q_{i+1},\score)}, \mc{R}_{v\opt}, \delta; \, Q_0) -  \wc( C^{(q_i,\score)}, \mc{R}_{v\opt}, \delta; \, Q_0) \right| \le \varepsilon,
\end{align*}
with the convention that $q_0 = -\infty$ and $q_{N+1} = \infty$.

For each fixed $q \in \R$, $v \in \mc{V} \mapsto \wc( C^{(q,\score)}, \mc{R}_{v}, \delta; \, Q_0) $ is continuous,  which implies by continuous mapping (since $\norm{\hat{v} - v}_{L^2(P_X)} \cp 0$) that
\begin{align*}
\sup_{0\le i \le N+1} \left|  \wc( C^{(q_{i},\score)}, \mc{R}_{\hat{v}}, \delta; \, Q_0) -  \wc( C^{(q_i,\score)}, \mc{R}_{v\opt}, \delta; \, Q_0) \right| = o_P(1).
\end{align*}

Finally,  we can use the fact that $q \in \R  \mapsto \wc( C^{(q,\score)}, \mc{R}_{\hat{v}}, \delta; \, Q_0)$ is also non-decreasing to conclude that
\begin{align*}
&\sup_{q \in \R} \left| \wc( C^{(q,\score)}, \mc{R}_{\hat{v}}, \delta; \, Q_0)
- \wc( C^{(q,\score)}, \mc{R}_{v\opt}, \delta; \, Q_0) \right|
\le
\\
 & \sup_{0\le i \le N+1} \left|  \wc( C^{(q_{i},\score)}, \mc{R}_{\hat{v}}, \delta; \, Q_0) -  \wc( C^{(q_i,\score)}, \mc{R}_{v\opt}, \delta; \, Q_0) \right| + \varepsilon,
\end{align*} 
which eventually yields the desired result as $\varepsilon$ is arbitrary.
\end{proof}

\subsubsection{Finalizing the proof of
  Theorem~\ref{theorem:uniform-asymptotic-coverage}}
\label{sec:finalize-uniform-proof}

Lemma~\ref{lem:uniform-convergence-over-v-for-scores} shows that $\what{C}_n
= C^{(\hat q_{\delta}, \scoreest)}$ satisfies
\begin{align*}
  \sup_ {v \in \mc{V}} |\wc( \what{C}_n, \mc{R}_{v}, \delta; \, Q_0)-\wc( {C}^{(\hat q_{\delta},  \score)}, \mc{R}_{v}, \delta; \, Q_0)| = o_p(1),
\end{align*}
which implies
\begin{equation}
  \label{eqn:peanut-butter}
  |\wc( \what{C}_n, \mc{R}, \delta; \, Q_0) -\wc( {C}^{( \hat q_{\delta},  \score)}, \mc{R}, \delta; \, Q_0)| = o_p(1).
\end{equation}
Combining
Lemmas~\ref{lem:consistency-of-empirical-worst-coverages},
\ref{lem:uniform-convergence-over-v-for-scores}
and~\ref{lem:consistency-of-empirical-worst-direction}, we
additionally see that
\begin{align*}
\wc( {C}^{(\hat q_{\delta},  \score)}, \mc{R}_{v\opt}, \delta; \, Q_0)
&\overset{\ref{lem:consistency-of-empirical-worst-direction}}{=}
 \wc( {C}^{(\hat q_{\delta},  \score)}, \mc{R}_{\hat v}, \delta; \, Q_0) +o_{P}(1) \\
&\overset{\ref{lem:uniform-convergence-over-v-for-scores} }{=}
 \wc( {C}^{(\hat q_{\delta},  \scoreest)}, \mc{R}_{\hat v}, \delta; \, Q_0) +o_{P}(1) \\
&\overset{\ref{lem:consistency-of-empirical-worst-coverages}}{=} 
\wc( {C}^{(\hat q_{\delta},  \scoreest)}, \mc{R}_{\hat v}, \delta; \, \what{Q}_{n_2}) +o_{P}(1).
\end{align*}
As $\wc( {C}^{(\hat q_{\delta}, \scoreest)}, \mc{R}_{\hat v}, \delta; \,
\what{Q}_{n_2}) = 1- \alpha + u_n$ for some $u_n \ge 0$ by
Lemma~\ref{lem:consistency-of-empirical-worst-coverage-for-alpha}, where
$u_n \cp 0$ under Assumption~\ref{assumption:estimated-scores-as-distinct},
we have
\begin{align}
  \label{eqn:more-peanut-butter}
  \wc( {C}^{(\hat q_{\delta}, \score)}, \mc{R}_{v\opt}, \delta; \, Q_0)
  = 1 - \alpha + u_n + o_P(1).
\end{align}
With Lemma~\ref{lemma:stochastic-domination-direction},
Assumption~\ref{assumption:stochastic-dominance} ensures that
$\wc( {C}^{(\hat q_{\delta},  \score)}, \mc{R}_{v\opt}, \delta; \, Q_0)= \wc( {C}^{(\hat q_{\delta},  \score)}, \mc{R}, \delta; \, Q_0)$,
so we can conclude that
\begin{eqnarray*}
  \wc( \what{C}_n , \mc{R}, \delta; \, Q_0)
  & \stackrel{\eqref{eqn:peanut-butter}}{=} &
  \wc( {C}^{( \hat q_{\delta},  \score)}, \mc{R}, \delta; \, Q_0) + o_p(1) \\
  & \stackrel{\textup{Lem.~\ref{lemma:stochastic-domination-direction}}}{=} &
  \wc( {C}^{( \hat q_{\delta},  \score)}, \mc{R}_{v\opt}, \delta; \, Q_0) + o_p(1)
  \stackrel{\eqref{eqn:more-peanut-butter}}{=} 1 - \alpha + u_n + o_p(1).
\end{eqnarray*}

\section{Proof of Theorem~\ref{thm:unif-conv-sens}}
\label{proof-thm-unif-conv-sens}

Recall that our goal is to prove that
there exists a Gaussian process $\mathbb{G}$ such that
for every compact $K \subset \R_+$, we have
\begin{equation}
  \label{goal:uniform-convergence}
  \{ \sqrt{n}(\what{\SF}^{(\predsetthresh)}_n(\rho)- \SFcov(\rho)) \}_{\rho \in K}
  \cd \{\mathbb{G}(\rho)\}_{\rho \in K}.
\end{equation}
as elements in $L^\infty(K)$. Fix a compact set $K \subset \R_+$.  We first
set notation.  For simplicity, we omit the threshold superscripts $t$ on
$\MC^{(\predsetthresh)}$, $\qfunc^{(\predsetthresh)}$ and
$h^{(\predsetthresh)}$ as the threshold $\predsetthresh$ remains fixed
throughout.  For shorthand, let $\mc{X} \defeq \R^I$, and for any functions
$m : \mc{X} \to [0,1]$ and $q : K \to [0,1]$ and scalar $\rho>0$, we define
the integrand (recall the expansion~\eqref{eqn:intuition-for-augmentation})
\begin{align*}
  \Phi_{m, q, \rho} (x,  s) \defeq e^\rho \hinge{m(x) - q(\rho)}
  + q(\rho) + e^\rho \indic{ m(x) > q(\rho)} \left[ \indic{s > \predsetthresh}
    - m(x) \right].
\end{align*}
For any $P_{0,I}$-integrable function $f : \mc{X} \times \R \to \R$, we
define the empirical process shorthands
\begin{align*}
  P_n f \defeq \frac{1}{n} \sum_{i=1}^n f(X_{I,i}, S_i), ~ P f = \E_{(X,S) \sim P_{0,I}} \left[ f(X,S) \right],  ~ \text{ and } ~ \mathbb{G}_n f \defeq \sqrt{n} \left( P_n - P \right) f.
\end{align*}
Additionally, for every $\batch \in [\nBatch]$, we
define the subsampled quantities
\begin{align*}
  P_{n,\batch} f \defeq \frac{1}{n/\nBatch} \sum_{i \in \mc{I}_\batch} f(X_{I,i}, S_i) ~ \text{ and } \mathbb{G}_{n,\batch}f \defeq \sqrt{n/\nBatch} \left(P_{n,\batch} - P \right) f.
\end{align*}

By definition of $\what{\SF}^{(\predsetthresh)}_n(\rho)$ and $
\SFcov(\rho)$, we have
\begin{align*}
  \what{\SF}^{(\predsetthresh)}_n(\rho) = \frac{1}{\nBatch} \sum_{\batch=1}^\nBatch P_{n,\batch} \Phi_{\what \MC_\batch, \what \qfunc_\batch, \rho} ~ \text{ and } \SFcov(\rho) = P \Phi_{\MC, \qfunc, \rho},
\end{align*}
so if we define the remainder 
\begin{align*}
  R_{n, \rho} \defeq \frac{1}{\sqrt{\nBatch}}\sum_{\batch=1}^\nBatch \sqrt{n/\nBatch} \left( P_{n,\batch} \Phi_{\what \MC_\batch, \what \qfunc_\batch, \rho} - P_{n,\batch} \Phi_{\MC,  \qfunc, \rho} \right),
\end{align*}
then our empirical process is
\begin{align*}
  \sqrt{n} \left( \what{\SF}^{(\predsetthresh)}_n(\rho) - \SFcov(\rho) \right)
  = \mathbb{G}_n \Phi_{\MC, \qfunc, \rho} + R_{n, \rho}.
\end{align*}
By Slutsky's lemma, it thus suffices to prove that that the collection
$\mc{F}_{\MC, \qfunc}^K \defeq \{ \Phi_{\MC, \qfunc, \rho} \}_{\rho \in K}$
is Donsker, i.e., that there exists a Gaussian process $\mathbb{G}_K$ on
$L^\infty(K)$ such that
\begin{align*}
  \{\mathbb{G}_n \Phi_{\MC, \qfunc, \rho} \}_{\rho \in K}
  \cd \mathbb{G}_K
  ~~ \mbox{in~} L^\infty(K)
\end{align*}
and that the remainder is uniformly negligible, satisfying $\sup_{\rho \in
  K} |R_{n,\rho}| = o_P(1)$.
We now argue that each of these hold.

\paragraph{Donsker properties of $\mc{F}$}:
We first show that $\mc{P}_{\MC, \qfunc}^K$ is Donsker, an immediate
consequence of the following lemma and~\citet[Thm 19.14]{VanDerVaart98}.  In
the statement of the lemma, recall that for a collection of functions
$\mc{F}$, the $L^2(Q)$-covering number $N(\epsilon, \mc{F},
\norm{\cdot}_{L^2(Q)})$ is the size of the smallest $\epsilon$-cover for
$\mc{F}$ in $L^2(Q)$ norm, that is, the smallest $N$ for which there exist
$h_1, \ldots, h_N$ satisfying $\min_{i \le N} \norm{f - h_i}_{L^2(Q)} \le
\epsilon$ for all $f \in \mc{F}$.
\begin{lemma}
  \label{lem:uniform-covering-number}
  Let $m : \mc{X} \to [0,1]$ be measurable and $q : K \to [0,1]$
  non-decreasing. Define
  \begin{align*}
    \mc{F}_{m, q}^K \defeq \left\{ \Phi_{m,  q, \rho} \mid \rho \in K \right\}.
  \end{align*}
  Then there exists a constant $c_K \lesssim 1 + \textup{diam}(K)$ such
  that, for $0<\varepsilon \le 1$, we have
  \begin{align*}
    \sup_Q \log N\left(\varepsilon \sup_{\rho \in K} e^{\rho},  \mc{F}_{m,q}^K, \norm{\cdot}_{L^2(Q)} \right) \le \log(c_K/\varepsilon^2).
  \end{align*}
\end{lemma}
\begin{proof}
  Let $Q$ be a distribution for $X$, and set $a_K \defeq \sup_{\rho \in K}
  e^\rho$, and let $F_{Q}$ be the c.d.f. of $m(X)$ under $Q$.  For any
  $\rho_1 < \rho_2 \in K$, we have
  \begin{align*}
    \big| \Phi_{m,  q, \rho_1}(x,s) &- \Phi_{m,  q, \rho_2}(x,s)\big| \\
    &\le 2 a_K \left( \left| \rho_2 - \rho_1 \right| + \left| q(\rho_2) - q(\rho_1) \right|  + \indic{ q(\rho_1) < m(X) \le q(\rho_2)}\right),
  \end{align*}
  implying that, for some universal constant $C$, 
  \begin{align*}
    \norm{\Phi_{m,  q, \rho_1} - \Phi_{m,  q, \rho_2}}_{L^2(Q)}
    \le C a_K \left( \rho_2 - \rho_1 + q(\rho_2) - q(\rho_1) + \sqrt{F_{Q}(q(\rho_2)) - F_{Q}(q(\rho_1))} \right),
  \end{align*}
  where we used the bound $(a+b+c)^2 \le 3(a^2 + b^2 + c^2)$ for all $a,b,c
  \in \R$.


We can then construct a $3C a_K \varepsilon$-cover of $\mc{F}_{m, q}^K$ by choosing $\rho_1 \defeq \inf K \le \dots \le \rho_N$ such that for each $i \in \{1, \dots N-1\}$ we have
\begin{align*}
\rho_{i+1} = \inf \biggr\{ \rho \in K ~ \text{s.t} ~  \rho - \rho_{i} \ge \varepsilon ~\text{or} ~ q(\rho) - q(\rho_i)\ge \varepsilon  ~ \text{ or } F_{Q}(q(\rho)) - F_{Q}(q(\rho_i)) \ge \varepsilon^2 \biggr\}.
\end{align*}
By convention, if $\rho_{i+1} = \rho_i$,  meaning that $\lim_{\rho \downarrow \rho_i} q(\rho) > q(\rho_i)$,  we choose instead any $\rho_{i+1} > \rho_i$ such that
$
q(\rho_{i+1})\le \lim_{\rho \downarrow \rho_i} q(\rho) + \varepsilon 
~\text{and} ~
F_{Q}(q(\rho_{i+1})) \le \lim_{\rho \downarrow \rho_i} F_{Q}(q(\rho)) + \varepsilon^2,
$
which exists as $F_Q$ is right-continuous and $q$ is non-decreasing.

This cover contains at most $1+ \frac{2+ \textrm{diam}(K)}{ \varepsilon^2}$ such elements,  
since
\begin{align*}
2+ \textrm{diam}(K) &\ge \rho_N - \rho_1 + q(\rho_{N}) - q(\rho_1) + F_{Q}(q(\rho_{N})) - F_{Q,0}(q(\rho_1))  \\
&\ge 
\sum_{i=1}^{N-1} \big\{
\rho_{i+1} - \rho_i + q(\rho_{i+1}) - q(\rho_i) + F_{Q}(q(\rho_{i+1})) - F_{Q}(q(\rho_i)) \big\} \\
&\ge (N-1)(\varepsilon \wedge \varepsilon^2) = (N-1)\varepsilon^2,
\end{align*}
 which then implies that
\begin{align*}
N \left(3Ca_K \varepsilon,  \mc{F}_{m,\eta}^K, \norm{\cdot}_{L^2(Q)} \right) \le 1+\frac{2+ \textrm{diam}(K)}{ \varepsilon^2},
\end{align*}
and concludes the proof.
\end{proof}

It remains to bound the remainder term $R_{n,\rho}$.  To that end,  observe that, for each $b \in \nBatch$, we have
\begin{align*}
\begin{split}
\sqrt{n/\nBatch} \left( P_{n,\batch} \Phi_{\what \MC_\batch, \what \qfunc_\batch, \rho} - P_{n,\batch} \Phi_{\MC,  \qfunc, \rho} \right) 
= \mathbb{G}_{n,\batch}& \left( \Phi_{\what \MC_\batch, \what \qfunc_\batch, \rho} - \Phi_{\MC,  \qfunc, \rho} \right) \\ +& \sqrt{n/\nBatch}\left(P\Phi_{\MC,  \qfunc, \rho} - P \Phi_{\what \MC_\batch, \what \qfunc_\batch, \rho} \right),
\end{split} 
\end{align*}
which motivates Lemmas~\ref{lem:uniform-empiricalprocess-remainder} and~\ref{lem:uniform-expectation-remainder} below. 
The proof of these two lemmas is quite technical, which is why we defer them to Appendix~\ref{sec:technical-lemmas}. 
\begin{lemma}
\label{lem:uniform-empiricalprocess-remainder}
Let $\mc{F}_{n,-\batch} \defeq \sigma \left\{ (\scorerv_i , X_{I,i})_{i \in [n] \setminus \mc{I}_\batch} \right\} $. 
For each $\batch \in [\nBatch]$, we have
\begin{align*}
\E\left[ \sup_{\rho \in K} \left|  \mathbb{G}_{n,\batch}\left( \Phi_{\what \MC_\batch, \what \qfunc_\batch, \rho} - \Phi_{\MC,  \qfunc, \rho} \right) \right| \mid \mc{F}_{n,-\batch} \right] = o_p(1).
\end{align*}
\end{lemma}

\begin{lemma}
\label{lem:uniform-expectation-remainder}
For each $\batch \in [\nBatch]$, we have
\begin{align*}
\sup_{\rho \in K} \left|P(\Phi_{\what \MC_\batch, \what \qfunc_\batch, \rho} - \Phi_{\MC,  \qfunc, \rho}) \right| = o_p(n^{-1/2}).
\end{align*}
\end{lemma}
Lemma~\ref{lem:uniform-empiricalprocess-remainder} provides a bound, conditionally on $(\scorerv_i , X_{I,i})_{i \in [n] \setminus \mc{I}_\batch}$, on the supremum of the empirical process $\left\{ \mathbb{G}_{n,\batch}\left( \Phi_{\what \MC_\batch, \what \qfunc_\batch, \rho} - \Phi_{\MC,  \qfunc, \rho} \right) \right\}_{\rho \in K}$. 
Since conditional convergence in probability implies convergence in probability (see e.g.\ \citet[Lemma 6.1]{ChernozhukovChDeDuHaNeRo16}), an immediate consequence of this lemma is
\begin{align*}
 \sup_{\rho \in K} \left|  \mathbb{G}_{n,\batch}\left( \Phi_{\what \MC_\batch, \what \qfunc_\batch, \rho} - \Phi_{\MC,  \qfunc, \rho} \right) \right|  = o_p(1).
\end{align*}
Combined with Lemma~\ref{lem:uniform-expectation-remainder}, which uniformly controls the difference between the expectations under $P$, this concludes the proof of the theorem since 
\begin{align*}
\sup_{\rho \in K} |R_n(\rho)| &\le B^{-1/2}\sum_{b \in \nBatch} \left[  \sup_{\rho \in K} \left|  \mathbb{G}_{n,\batch}\left( \Phi_{\what \MC_\batch, \what \qfunc_\batch, \rho} - \Phi_{\MC,  \qfunc, \rho} \right) \right| 
+ \sqrt{n/\nBatch}\sup_{\rho \in K} \left|P(\Phi_{\what \MC_\batch, \what \qfunc_\batch, \rho} - \Phi_{\MC,  \qfunc, \rho}) \right|  \right] \\
&= o_p(1).
\end{align*}

\subsection{Proof of technical lemmas}
\label{sec:technical-lemmas}
Before proving Lemmas~\ref{lem:uniform-empiricalprocess-remainder} and~\ref{lem:uniform-expectation-remainder}, we first need to introduce two auxiliary lemmas. 
\begin{lemma}
\label{lem:quantile-infty-norm}
Let $X$ and $Y$ be two bounded random variables on the same probability space, and, for any $\alpha \in [0,1]$, let $\mc{Q}_{\alpha}(X)$ and $\mc{Q}_{\alpha}(Y)$ be their respective $1-\alpha$-quantiles.  We have, for any $\alpha \in [0,1]$,
\begin{align*}
\left| \mc{Q}_{\alpha}(X) - \mc{Q}_{\alpha}(Y)\right| \le \norm{X-Y}_\infty.
\end{align*}
\end{lemma}
\begin{proof}
This is an immediate consequence of the fact that, for any $t \in \R$ we have
\begin{align*}
\P( X \le t - \norm{X-Y}_\infty ) \le \P(Y \le t) \le \P(X \le t + \norm{X-Y}_\infty),
\end{align*}
the left inequality implying that
$
\mc{Q}_\alpha(Y) - \norm{X-Y}_\infty \ge \mc{Q}_\alpha(X) 
$
and the right one that
$
\mc{Q}_\alpha(X) \le \mc{Q}_\alpha(Y) + \norm{X-Y}_\infty.
$
\end{proof}

In particular, this simple lemma yields the following result.
For each $m : \mc{X} \to \R$, $q: K \to [0,1]$ and $\rho \in K$, let $h_{m,q, \rho}(x) \defeq \indic{ m(x) > q(\rho)}$. 
\begin{lemma}
\label{lem:bound-h-indicator}
For any $\batch \in [\nBatch]$, we have
\begin{align*}
\norm{ h_{\what \MC_\batch, \what \qfunc_\batch, \rho} - h_{\MC, \qfunc, \rho}}_{L^1(P_{0,I})} \le O(1)\norm{f_{\MC}}_\infty\left(  \left| \what \qfunc_\batch(\rho) - \qfunc_0(\what{\MC}_\batch, \rho) \right| + \norm{\what{\MC}_\batch - \MC}_{L^\infty(P_{0,I})} \right).
\end{align*}
\end{lemma}
\begin{proof} 
A direction computation shows that
\begin{align*}
\begin{split}
\norm{ h_{\what \MC_\batch, \what \qfunc_b, \rho} - h_{\MC, \eta, \rho}}_{L^1(P_{0,1})} = \P_X&\left[ \MC(X) > \qfunc(\rho), \what \MC_\batch(X) \le \what \qfunc_\batch(\rho) \right] \\
&+ \P_X\left[ \MC(X) \le \qfunc(\rho), \what \MC_\batch(X) > \what \qfunc_\batch(\rho) \right].
\end{split}
\end{align*}
We show how to bound the first term, as the second is similar.
For every $c \ge 0$, we have
\begin{align*}
\P_X &\left[ \MC(X) > \eta(\rho), \what \MC_\batch(X) \le \what \qfunc_\batch(\rho) \right] \\
&\le
 \P_{X} \left[ \qfunc(\rho) < \MC(X) \le \qfunc(\rho) + c \right]
 +  \P_{X} \left[ \MC(X) > \qfunc(\rho) + c, 
\what \MC_\batch(X) \le \what \qfunc_\batch(\rho) \right] 
\\
&\le \norm{f_{\MC}}_\infty c +  \P_{X}\left[ \MC(X) - \what \MC_\batch(X) > \qfunc(\rho) - \what \qfunc_\batch(\rho) + c \right].
\end{align*}
Consider then $c \defeq (\what \qfunc_\batch(\rho) - \qfunc(\rho))_+ +  \norm{ \what \MC_\batch - \MC}_{L^\infty(P_{0,I})}$: the second term becomes $0$,  thus
\begin{align*}
\P_{X} &\left( \MC(X) > \qfunc(\rho),  \what \MC_\batch(X) \le \what \qfunc_\batch(\rho) \right]  \\
&\le \norm{f_{\MC}}_\infty \left[  \norm{ \what \MC_\batch - \MC}_{\infty} + \left|\what \qfunc_\batch(\rho) - \qfunc(\rho) \right| \right) \\
&\le \norm{f_{\MC}}_\infty \left( 2\norm{ \what \MC_\batch - \MC}_{\infty} + \left|\what \qfunc_\batch(\rho) - \qfunc_0(\what \MC_\batch, \rho) \right| \right),
\end{align*}
where the last inequality comes from an application of Lemma~\ref{lem:quantile-infty-norm},  which ensures that for every $\rho > 0$,
\begin{align*}
\left|\what \qfunc_\batch(\rho) - \qfunc(\rho) \right| 
&\le \left|\what \qfunc_\batch(\rho) - \qfunc_0(\what \MC_\batch, \rho) \right| + \left| \qfunc_0( \what \MC_\batch, \rho ) - \qfunc(\rho) \right| \\
&=\left|\what \qfunc_\batch(\rho) - \qfunc_0(\what \MC_\batch, \rho) \right| + \left| \qfunc_0( \what \MC_\batch, \rho ) - \qfunc_0(\MC, \rho) \right| \\ &\le \left|\what \qfunc_\batch(\rho) - \qfunc_0(\what \MC_\batch, \rho) \right| + \norm{ \what \MC_\batch - \MC}_{L^\infty(P_{0,I})},
\end{align*}
as $\qfunc_0(m, \rho )$ is the $1-e^{-\rho}$ population quantile of $m(X)$ for any function $m : \mc{X} \to \R$.
\end{proof}

\subsubsection{Proof of Lemma~\ref{lem:uniform-empiricalprocess-remainder}}

We first need to bound the second moment of each $\Phi_{\what \MC_m, \what \qfunc_\batch, \rho} - \Phi_{\MC,  \qfunc, \rho}$ individually, which is what the following lemma does.
Let $\what \sigma_\batch^2(\rho) \defeq  P\left[ \big( \Phi_{\what \MC_\batch, \what \qfunc_\batch, \rho} - \Phi_{\MC,  \qfunc, \rho} \big)^2 \right]$. 
\begin{lemma}
\label{lem:uniform-second-moment}
We have
\begin{align*}
\what \sigma_{\batch,K}^2 \defeq \max\left( \sup_{\rho \in K} \what \sigma^2_\batch(\rho), n^{-1/2} \right) = o_p(n^{-1/4}).
\end{align*}
\end{lemma}
\begin{proof}
For any $(x, s) \in \mc{X} \times \R$, we have
\begin{align*}
&\left| \Phi_{\what \MC_\batch, \what \qfunc_\batch, \rho}(x,s) - \Phi_{\MC,  \eta, \rho}(x,s)  \right| \\
&\le e^\rho \left( \indic{ s > q} \left| h_{\what \MC_\batch, \what \qfunc_\batch, \rho}(x) - h_{\MC, \qfunc, \rho}(x) \right| 
+ \left|\what \qfunc_\batch(\rho)h_{\what \MC_\batch, \what \qfunc_\batch, \rho}(x) - \qfunc(\rho) h_{\MC, \qfunc, \rho}(x)\right| + \left| \what \qfunc_\batch(\rho) - \qfunc(\rho)\right| \right) \\
&\le 2e^\rho \left( \left| h_{\what \MC_\batch, \what \qfunc_\batch, \rho}(x) - h_{\MC, \qfunc, \rho}(x) \right|  + \left| \what \qfunc_\batch(\rho) - \qfunc(\rho) \right|  \right),
\end{align*}
where we used the fact that $\left| \qfunc(\rho) \right| \le 1$ in the last line.

Since $h_{\what \MC_\batch, \what \qfunc_\batch, \rho} - h_{\MC, \qfunc, \rho} \in \{-1, 0,1\}$, it is immediate that $\norm{ h_{\what \MC_\batch, \what \qfunc_\batch, \rho} - h_{\MC, \qfunc, \rho}}_{L^2(P_{0,I})}^2= \norm{ h_{\what \MC_\batch, \what \qfunc_\batch, \rho} - h_{\MC, \qfunc, \rho}}_{L^1(P_{0,I})}$ and hence that
\begin{align*}
\what \sigma_\batch^2(\rho) &\lesssim e^{2\rho} \left( \norm{ h_{\what \MC_\batch, \what \qfunc_\batch, \rho} - h_{\MC, \qfunc, \rho}}_{L^1(P_{0,I})} + \left| \what \qfunc_\batch(\rho) - \qfunc(\rho) \right|^2 
\right) \\
&\lesssim e^{2\rho} \left( \left| \what \qfunc_\batch(\rho) - \qfunc(\rho) \right|^2  + \left| \what \qfunc_\batch(\rho) - \qfunc_0(\what{\MC}_\batch, \rho) \right| + \norm{\what{\MC}_\batch - \MC}_{L^\infty(P_{0,I})} \right) \\
&\lesssim e^{2\rho} \biggr\{ \psi\left( \norm{\what{\MC}_\batch - \MC}_{L^\infty(P_{0,I})} \right) +  \psi\left( \left|\what \qfunc_\batch(\rho) - \qfunc_0(\what{\MC}_\batch, \rho) \right| \right) \biggr\},
\end{align*}
where $\psi(t) \defeq \max(t,t^2)$ for all $t \in \R$.
By Assumptions~\ref{ass:miscov-estimator-consistent} and~\ref{ass:miscov-quantile-estimator-consistent}, we can conclude that
\begin{align*}
\sup_{\rho \in K} \what \sigma_\batch^2(\rho) = o_p(n^{-1/4}).
\end{align*}
\end{proof}
The proof of Lemma~\ref{lem:uniform-empiricalprocess-remainder} then follows from an application of~\citet[Lemma 6.2]{ChernozhukovChDeDuHaNeRo16}, which we recall below,  to the family of functions
\begin{align*}
\what{\mc{F}}_\batch^K \defeq \left\{ \Phi_{\what \MC_\batch, \what \qfunc_\batch, \rho} - \Phi_{\MC,  \qfunc, \rho} \mid \rho \in K \right\},
\end{align*}
using as envelope function the constant function $(x,s) \mapsto 4\sup_{\rho\in K} e^\rho$, since $\max\left(\norm{\what \qfunc_\batch}_\infty, \norm{\qfunc}_\infty \right) \le 1$, and bounding the uniform covering number of $\what{\mc{F}}_\batch^K$ thanks to Lemma~\ref{lem:uniform-covering-number}.

\begin{lemma}[\citet{ChernozhukovChDeDuHaNeRo16}]
\label{lem:chernozhukov-lemma}
Let $\mc{F} \subset \{ \mc{X} \to \R \}$ be a collection of measurable functions with envelope function $F \ge \sup_{f \in \mc{F}} |f|$ satisfying $\norm{F}_{L^2(P)} < \infty$. Let $\sigma^2 >0$ be any positive constant such that $\sup_{f \in \mc{F}} Pf^2  \le \sigma^2 \le PF^2$, and $M \defeq \max_{1\le i \le n} F(X_{I,i})$.
If there exists constants $a \ge e$ and $v \ge 1$ such that for all $0<\varepsilon \le 1$,
\begin{align*}
\sup_Q \log N \left( \varepsilon \norm{F}_{L^2(Q)},  \mc{F},  \norm{\cdot}_{L^2(Q)} \right) \le v \log(a/\varepsilon),
\end{align*}
then we have
\begin{align*}
\E_P\left[ \sup_{f \in \mc{F}} \mathbb{G}_n f \right] \le O(1)\left( \sqrt{v \sigma^2 \log \left( \frac{a \norm{F}_{L^2(P)}}{\sigma} \right)} + \frac{v \norm{M}_{2}}{\sqrt{n}}\log \left( \frac{a \norm{F}_{L^2(P)}}{\sigma} \right)   \right) .
\end{align*}
\end{lemma}
By Lemma~\ref{lem:uniform-covering-number},  for all $0< \varepsilon \le 1$, we have 
\begin{align*}
\sup_Q \log N \left(4 \varepsilon \sup_{\rho \in K} e^{\rho},  \what{\mc{F}}_\batch^K,  \norm{\cdot}_{L^2(Q)} \right) \le 2 \log(2c_K/\varepsilon),
\end{align*}
since both pairs $(\what \MC_\batch, \what \qfunc_\batch)$ and $(\MC, \qfunc)$ satisfy its conditions of application, allowing us to construct an $\varepsilon$-cover from respective $\varepsilon/2$-covers for $\mc{F}^K_{\what \MC_\batch, \what \qfunc_\batch}$ and $\mc{F}^K_{\MC, \qfunc}$.
Applying Lemma~\ref{lem:chernozhukov-lemma} conditionally on $\mc{F}_{n,-\batch}$,  and letting $a_K \defeq \sup_{\rho\in K} e^\rho$,  we therefore have
\begin{align*}
\E\left[ \sup_{\rho \in K} \left|  \mathbb{G}_{n,\batch}\left( \Phi_{\what \MC_\batch, \what \qfunc_\batch, \rho} - \Phi_{\MC,  \qfunc, \rho} \right) \right| \mid \mc{F}_{n,-\batch} \right] \lesssim \what \sigma_{\batch,K}\sqrt{\log \frac{a_K}{\what \sigma_{\batch,K}}} + \frac{a_K}{\sqrt{n/\nBatch}}\log \frac{a_K}{\what \sigma_{\batch,K}},
\end{align*}
which is $o_p(n^{-1/8}\log(n)) = o_p(1)$ by Lemma~\ref{lem:uniform-second-moment}.

\subsubsection{Proof of Lemma~\ref{lem:uniform-expectation-remainder}}
Only in the proof of Lemma~\ref{lem:uniform-expectation-remainder} does the benefit of augmenting the estimator finally appear, as we shall see that the difference between the population averages of $\Phi_{\what \MC_\batch, \what \qfunc_\batch, \rho}$  and $\Phi_{\MC,  \qfunc, \rho}$ is actually smaller than $n^{-1/2}$ instead of the more naive $n^{-1/4}$.

For any measurable function $m \in L^\infty(Q_{0,I})$,  $\eta \in \R$ and $\rho > 0$, define
\begin{align*}
\Psi_\rho(m, \eta) \defeq e^\rho \E_{X_I \sim Q_{0,I}} \left[ \left( m(X_I) - \eta \right)_+ \right] + \eta.
\end{align*}
First observe that for all $x,y \in \R$,   the function $t \mapsto (x + t(y-x))_+$ is absolutely continuous on $[0,1]$, hence
\begin{align*}
y_+ - x_+ = (y-x) \int_0^1 \indic{ r y + (1-r) x > 0} dr.
\end{align*}
By Fubini's theorem (valid here since every variable is bounded), this implies  that
\begin{align*}
\begin{split}
 \Psi_\rho \left( \what \MC_\batch, \what \qfunc_\batch(\rho) \right) &- \Psi_\rho \left( \MC,  \qfunc(\rho) \right) 
 = e^\rho \int_{0}^1 P\biggr[ h_{r \what \MC_\batch + (1-r) \MC,  r \what \qfunc_\batch + (1-r) \qfunc, \rho} \big(\what \MC_\batch - \MC \big) \biggr] dr \\
 & + \left(\what \qfunc_\batch(\rho) - \qfunc(\rho)\right)\left\{ 1 -e^\rho \int_{0}^1  P\biggr[ h_{r \what \MC_\batch + (1-r) \MC,  r \what \qfunc_\batch + (1-r) \qfunc, \rho} \biggr] dr \right\}.
 \end{split}
\end{align*}
%
Additionally,  using the fact that $\E \left[ \indic{\scorerv > \predsetthresh} \mid X_I \right] = \MC(X_I)$,  and that $P h_{\MC, \qfunc, \rho} = e^{-rho}$ (since $\qfunc(\rho)$ is the $1-e^{\rho}$ quantile of $\MC(X)$,  whose distribution is continuous), we have
\begin{align*}
P \Phi_{\what \MC_\batch, \what \qfunc_\batch, \rho} &- P\Phi_{\MC,  \qfunc, \rho} \\
&= \Psi_\rho \left( \what \MC_\batch, \what \qfunc_\batch(\rho) \right) - \Psi_\rho \left( \MC,  \qfunc(\rho) \right) - P \left[ h_{\what \MC_\batch,  \what \qfunc_\batch, \rho}\left( \what \MC_\batch - \MC \right) \right] 
\\
\begin{split}
 &= e^\rho \int_{0}^1 P\biggr[ \left( h_{r \what \MC_\batch + (1-r) \MC,  r \what \qfunc_\batch + (1-r) \qfunc, \rho} -  h_{\what \MC_\batch,  \what \qfunc_\batch, \rho} \right) \big(\what \MC_\batch - \MC \big) \biggr] dr \\
 & \qquad - e^\rho \left(\what \qfunc_\batch(\rho) - \qfunc(\rho)\right) \int_{0}^1  P\biggr[ \left( h_{r \what \MC_\batch + (1-r) \MC,  r \what \qfunc_\batch + (1-r) \qfunc, \rho} - h_{\MC, \qfunc, \rho} \right) \biggr] dr.
 \end{split}
\end{align*}
Observing that $\left|h_{r \what \MC_\batch + (1-r) \MC,  r \what \qfunc_\batch + (1-r) \qfunc, \rho} - h_{\MC, \qfunc, \rho}\right| \le \left| h_{\what \MC_\batch, \what \eta_\batch, \rho} - h_{\MC, \qfunc, \rho}\right|$, this equality implies that
\begin{align*}
\left| P \Phi_{\what \MC_\batch, \what \qfunc_\batch, \rho} - P\Phi_{\MC,  \qfunc, \rho} \right| 
\le  e^\rho \norm{h_{ \what \MC_\batch,  \qfunc_\batch, \rho} - h_{\MC,  \qfunc, \rho}}_{L^1(P_{0,I})} \left( \left| \what \qfunc_\batch(\rho)  - \qfunc(\rho) \right| + \norm{\what \MC_\batch - \MC}_{\infty} \right).
\end{align*}
As a result, we can conclude that
\begin{align*}
\sup_{\rho \in K} \left|P \Phi_{\what \MC_\batch, \what \qfunc_\batch, \rho} - P\Phi_{\MC,  \qfunc, \rho} \right| &\le \sup_{\rho \in K} \biggr\{ e^\rho \norm{h_{ \what \MC_\batch, \what \qfunc_\batch, \rho} - h_{\MC,  \qfunc, \rho}}_{L^1(P_{0,I})} \left( \left| \what \qfunc_\batch(\rho)  - \qfunc(\rho) \right| + \norm{\what \MC_\batch - \MC}_{\infty} \right) \biggr\}  \\
&\le O(a_K \norm{f_{\MC}}_\infty) \left( \sup_{\rho \in K} \left| \what \qfunc_\batch(\rho)  - \qfunc(\what \MC_\batch, \rho) \right| + \norm{\what \MC_\batch - \MC}_{\infty} \right)^2,
\end{align*}
which is $o_p(n^{-1/2})$ by Assumptions~\ref{ass:miscov-estimator-consistent} and~\ref{ass:miscov-quantile-estimator-consistent}.

\begingroup
\bibliographystyle{apalike}
\bibliography{bib}

\begin{thebibliography}{}

\bibitem[Arnold et~al., 2021]{ArnoldBiBrCoFaGrMaReTi21}
Arnold, T., Bien, J., Brooks, L., Colquhoun, S., Farrow, D., Grabman, J.,
  Maynard-Zhang, P., Reinhart, A., and Tibshirani, R. (2021).
\newblock {\em {covidcast}: Client for {D}elphi's {COVID}cast Epidata {API}}.
\newblock R package version 0.4.2.

\bibitem[Barber et~al., 2021]{BarberCaRaTi19a}
Barber, R.~F., Cand\`{e}s, E.~J., Ramdas, A., and Tibshirani, R.~J. (2021).
\newblock The limits of distribution-free conditional predictive inference.
\newblock {\em Information and Inference}, 10(2):455--482.

\bibitem[Bertsimas et~al., 2018]{BertsimasGuKa18}
Bertsimas, D., Gupta, V., and Kallus, N. (2018).
\newblock Data-driven robust optimization.
\newblock {\em Mathematical Programming, Series A}, 167(2):235--292.

\bibitem[Blanchet et~al., 2019]{BlanchetKaMu19}
Blanchet, J., Kang, Y., and Murthy, K. (2019).
\newblock Robust {W}asserstein profile inference and applications to machine
  learning.
\newblock {\em Journal of Applied Probability}, 56(3):830--857.

\bibitem[Blanchet and Murthy, 2019]{BlanchetMu19}
Blanchet, J. and Murthy, K. (2019).
\newblock Quantifying distributional model risk via optimal transport.
\newblock {\em Mathematics of Operations Research}, 44(2):565--600.

\bibitem[Cauchois et~al., 2022]{CauchoisGuAlDu22}
Cauchois, M., Gupta, S., Ali, A., and Duchi, J. (2022).
\newblock Predictive inference with weak supervision.
\newblock {\em arXiv:2201.08315 [stat.ML]}.

\bibitem[Cauchois et~al., 2021]{CauchoisGuDu21}
Cauchois, M., Gupta, S., and Duchi, J. (2021).
\newblock Knowing what you know: valid and validated confidence sets in
  multiclass and multilabel prediction.
\newblock {\em Journal of Machine Learning Research}, 22(81):1--42.

\bibitem[Chernozhukov et~al., 2018a]{ChernozhukovChDeDuHaNeRo16}
Chernozhukov, V., Chetverikov, D., Demirer, M., Duflo, E., Hansen, C., Newey,
  W., and Robins, J. (2018a).
\newblock Double/debiased machine learning for treatment and structural
  parameters.
\newblock {\em The Econometrics Journal}, 21(1):C1--C68.

\bibitem[Chernozhukov et~al., 2018b]{ChernozhukovWuZh18}
Chernozhukov, V., Wuthrich, K., and Zhu, Y. (2018b).
\newblock Exact and robust conformal inference methods for predictive machine
  learning with dependent data.
\newblock {\em arXiv:1802.06300 [stat.ML]}.

\bibitem[Cl\'emen\c{c}on et~al., 2008]{ClemenconLuVa08}
Cl\'emen\c{c}on, S., Lugosi, G., and Vayatis, N. (2008).
\newblock Ranking and empirical minimization of {$U$}-statistics.
\newblock {\em Annals of Statistics}, 36(2):844--874.

\bibitem[Csisz\'ar, 1967]{Csiszar67}
Csisz\'ar, I. (1967).
\newblock Information-type measures of difference of probability distributions
  and indirect observation.
\newblock {\em Studia Scientifica Mathematica Hungary}, 2:299--318.

\bibitem[Dua and Graff, 2017]{DuaGr17}
Dua, D. and Graff, C. (2017).
\newblock {UCI} machine learning repository.

\bibitem[Duchi et~al., 2024]{DuchiGuJiSu24}
Duchi, J.~C., Gupta, S., Jiang, K., and Sur, P. (2024).
\newblock Predictive inference in multi-environment scenarios.
\newblock {\em arXiv:2403.16336 [stat.ML]}.

\bibitem[Duchi et~al., 2013]{DuchiMaJo13}
Duchi, J.~C., Mackey, L., and Jordan, M.~I. (2013).
\newblock The asymptotics of ranking algorithms.
\newblock {\em Annals of Statistics}, 41(5):2292--2323.

\bibitem[Duchi and Namkoong, 2021]{DuchiNa21}
Duchi, J.~C. and Namkoong, H. (2021).
\newblock Learning models with uniform performance via distributionally robust
  optimization.
\newblock {\em Annals of Statistics}, 49(3):1378--1406.

\bibitem[Gupta, 2022]{Gupta22}
Gupta, S. (2022).
\newblock {\em Reliability and stability in statistical and machine learning
  problems}.
\newblock PhD thesis, Stanford University.

\bibitem[Gupta and Rothenh{\"a}usler, 2021]{GuptaRo21}
Gupta, S. and Rothenh{\"a}usler, D. (2021).
\newblock The s-value: evaluating stability with respect to distributional
  shifts.
\newblock {\em arXiv:2105.03067 [stat.ME]}.

\bibitem[Hastie et~al., 2009]{HastieTiFr09}
Hastie, T., Tibshirani, R., and Friedman, J. (2009).
\newblock {\em The Elements of Statistical Learning}.
\newblock Springer, second edition.

\bibitem[He et~al., 2016]{HeZhReSu16}
He, K., Zhang, X., Ren, S., and Sun, J. (2016).
\newblock Deep residual learning for image recognition.
\newblock In {\em Proceedings of the IEEE Conference on Computer Vision and
  Pattern Recognition}, pages 770--778.

\bibitem[Hiriart-Urruty and Lemar\'echal, 1993]{HiriartUrrutyLe93}
Hiriart-Urruty, J. and Lemar\'echal, C. (1993).
\newblock {\em Convex Analysis and Minimization Algorithms {I}}.
\newblock Springer, New York.

\bibitem[Hsu and Small, 2013]{HsuSm13}
Hsu, J.~Y. and Small, D.~S. (2013).
\newblock Calibrating sensitivity analyses to observed covariates in
  observational studies.
\newblock {\em Biometrics}, 69(4):803--811.

\bibitem[Imbens, 2003]{Imbens03}
Imbens, G. (2003).
\newblock Sensitivity to exogeneity assumptions in program evaluation.
\newblock {\em The American Economic Review}, 93(2):126--132.

\bibitem[Jeong and Namkoong, 2020]{JeongNa20}
Jeong, S. and Namkoong, H. (2020).
\newblock Robust causal inference under covariate shift via worst-case
  subpopulation treatment effects.
\newblock {\em arXiv:2007.02411 [stat.ML]}.

\bibitem[Jordan, 2019]{Jordan19}
Jordan, M.~I. (2019).
\newblock Artificial intelligence---the revolution hasn’t happened yet.
\newblock {\em Harvard Data Science Review}, 1(1).

\bibitem[Krizhevsky and Hinton, 2009]{KrizhevskyHi09}
Krizhevsky, A. and Hinton, G. (2009).
\newblock Learning multiple layers of features from tiny images.
\newblock Technical report, University of Toronto.

\bibitem[LeCun et~al., 1998]{LeCunCoBu98}
LeCun, Y., Cortes, C., and Burges, C. (1998).
\newblock {MNIST} handwritten digit database.
\newblock ATT Labs [Online].

\bibitem[Lei et~al., 2018]{LeiGSRiTiWa18}
Lei, J., {G'S}ell, M., Rinaldo, A., Tibshirani, R.~J., and Wasserman, L.
  (2018).
\newblock Distribution-free predictive inference for regression.
\newblock {\em Journal of the American Statistical Association},
  113(523):1094--1111.

\bibitem[Liese and Vajda, 2006]{LieseVa06}
Liese, F. and Vajda, I. (2006).
\newblock On divergences and informations in statistics and information theory.
\newblock {\em IEEE Transactions on Information Theory}, 52(10):4394--4412.

\bibitem[min Chung and Lu, 2003]{ChungLu03}
min Chung, K. and Lu, H.-I. (2003).
\newblock An optimal algorithm for the maximum-density segment problem.
\newblock In {\em Proceedings of the 11th Annual European Symposium on
  Algorithms}.

\bibitem[Nguyen et~al., 2010]{NguyenWaJo10}
Nguyen, X., Wainwright, M.~J., and Jordan, M.~I. (2010).
\newblock Estimating divergence functionals and the likelihood ratio by convex
  risk minimization.
\newblock {\em IEEE Transactions on Information Theory}, 56(11):5847--5861.

\bibitem[Quionero-Candela et~al., 2009]{Quionero-CandelaSuSc09}
Quionero-Candela, J., Sugiyama, M., Schwaighofer, A., and Lawrence, N.~D.
  (2009).
\newblock {\em Dataset Shift in Machine Learning}.
\newblock The MIT Press.

\bibitem[Recht et~al., 2019]{RechtRoScSh19}
Recht, B., Roelofs, R., Schmidt, L., and Shankar, V. (2019).
\newblock Do {I}mage{N}et classifiers generalize to {I}mage{N}et?
\newblock In {\em Proceedings of the 36th International Conference on Machine
  Learning}.

\bibitem[Rockafellar and Uryasev, 2000]{RockafellarUr00}
Rockafellar, R.~T. and Uryasev, S. (2000).
\newblock Optimization of conditional value-at-risk.
\newblock {\em Journal of Risk}, 2:21--42.

\bibitem[Russakovsky et~al., 2015]{RussakovskyDeSuKrSaMaHuKaKhBeBeFe15}
Russakovsky, O., Deng, J., Su, H., Krause, J., Satheesh, S., Ma, S., Huang, Z.,
  Karpathy, A., Khosla, A., Bernstein, M., Berg, A.~C., and Fei-Fei, L. (2015).
\newblock Image{N}et large scale visual recognition challenge.
\newblock {\em International Journal of Computer Vision}, 115(3):211--252.

\bibitem[Sagawa et~al., 2020]{SagawaKoHaLi20}
Sagawa, S., Koh, P.~W., Hashimoto, T.~B., and Liang, P. (2020).
\newblock Distributionally robust neural networks for group shifts: On the
  importance of regularization for worst-case generalization.
\newblock In {\em Proceedings of the Eighth International Conference on
  Learning Representations}.

\bibitem[Shaked and Shanthikumar, 2007]{ShakedSh07}
Shaked, M. and Shanthikumar, J.~G. (2007).
\newblock {\em Stochastic Orders}.
\newblock Springer Series in Statistics. Springer.

\bibitem[Sherman, 1994]{Sherman94}
Sherman, R.~P. (1994).
\newblock {Maximal Inequalities for Degenerate $U$-Processes with Applications
  to Optimization Estimators}.
\newblock {\em Annals of Statistics}, 22(1):439--459.

\bibitem[Steinwart and Christmann, 2008]{SteinwartCh08}
Steinwart, I. and Christmann, A. (2008).
\newblock {\em Support Vector Machines}.
\newblock Springer Publishing Company, Incorporated, 1st edition.

\bibitem[Subbaswamy et~al., 2021]{SubbaswamyAdSa21}
Subbaswamy, A., Adams, R., and Saria, S. (2021).
\newblock Evaluating model robustness to dataset shift.
\newblock In {\em Proceedings of the 24th International Conference on
  Artificial Intelligence and Statistics}.

\bibitem[Sugiyama et~al., 2007]{SugiyamaKrMu07}
Sugiyama, M., Krauledat, M., and M\"uller, K.-R. (2007).
\newblock Covariate shift adaptation by importance weighted cross validation.
\newblock {\em Journal of Machine Learning Research}, 8:985--1005.

\bibitem[Taori et~al., 2020]{TaoriDaShCaReSc19}
Taori, R., Dave, A., Shankar, V., Carlini, N., Recht, B., and Schmidt, L.
  (2020).
\newblock When robustness doesn't promote robustness: Synthetic vs.\ natural
  distribution shifts on {I}mage{N}et.
\newblock under review.

\bibitem[Tibshirani, 2020]{Tibshirani20}
Tibshirani, R.~J. (2020).
\newblock Can symptoms surveys improve {COVID-19} forecasts?

\bibitem[Tibshirani et~al., 2019]{TibshiraniBaCaRa19}
Tibshirani, R.~J., Barber, R.~F., Cand\`{e}s, E.~J., and Ramdas, A. (2019).
\newblock Conformal prediction under covariate shift.
\newblock In {\em Advances in Neural Information Processing Systems 32}.

\bibitem[van~der Vaart, 1998]{VanDerVaart98}
van~der Vaart, A.~W. (1998).
\newblock {\em Asymptotic Statistics}.
\newblock Cambridge Series in Statistical and Probabilistic Mathematics.
  Cambridge University Press.

\bibitem[Veitch and Zaveri, 2020]{VeitchZa20}
Veitch, V. and Zaveri, A. (2020).
\newblock Sense and sensitivity analysis: Simple post-hoc analysis of bias due
  to unobserved confounding.
\newblock In {\em Advances in Neural Information Processing Systems 33},
  volume~33, pages 10999--11009.

\bibitem[Vovk et~al., 2005]{VovkGaSh05}
Vovk, V., Grammerman, A., and Shafer, G. (2005).
\newblock {\em Algorithmic Learning in a Random World}.
\newblock Springer.

\bibitem[Wainwright, 2019]{Wainwright19}
Wainwright, M.~J. (2019).
\newblock {\em High-Dimensional Statistics: A Non-Asymptotic Viewpoint}.
\newblock Cambridge University Press.

\bibitem[Yadav and Bottou, 2019]{YadavBo19}
Yadav, C. and Bottou, L. (2019).
\newblock Cold case: The lost {MNIST} digits.
\newblock In {\em Advances in Neural Information Processing Systems 32}.

\end{thebibliography}
\endgroup

\end{document}